\pgfplotsset{width=10cm,compat=1.9}
\DeclareMathOperator*{\argmin}{argmin}
\newtheorem{theorem}{Theorem} 
\newtheorem{lemma}[theorem]{Lemma}
\newtheorem{definition}[theorem]{Definition}
\newtheorem{proposition}[theorem]{Proposition}
\newtheorem{assumption}
{Assumption} 
\newtheorem{remark}[theorem]{Remark}
\newtheorem{example}
{Example}
\newcommand{\ZZ}{\mathcal{Z}}
\newcommand{\wt}{\widetilde}
\crefname{algorithm}{Algorithm}{Algorithms}
\crefname{assumption}{Assumption}{Assumptions}
\crefname{equation}{}{}
\crefname{figure}{Fig.}{Figs.}
\crefname{table}{Table}{Tables}
\crefname{section}{Section}{Sections}
\crefname{subsection}{Section}{Sections}
\crefname{theorem}{Theorem}{Theorems}
\crefname{lemma}{Lemma}{Lemmmas}
\crefname{proposition}{Proposition}{Propositions}
\crefname{definition}{Definition}{Definitions}
\crefname{corollary}{Corollary}{Corollaries}
\crefname{remark}{Remark}{Remarks}
\crefname{example}{Example}{Examples}
\crefname{appendix}{Appendix}{Appendices}
\newcommand{\XX}{\mathcal{X}}
\newcommand{\WW}{\mathcal{W}}
\newcommand{\ws}{w^{*}}
\newcommand{\prox}{\texttt{\textup{prox}}}
\newcommand{\hf}{\widehat{F}}
\newcommand{\DD}{\mathcal{D}}
\newcommand{\expec}{\mathbb{E}}
\newcommand{\EPL}{\mathbb{E}F(\widehat{w}_T) - F^*}
\crefname{algorithm}{Algorithm}{Algorithms}
\crefname{assumption}{Assumption}{Assumptions}
\crefname{equation}{Eq.}{Eqs.}
\crefname{figure}{Fig.}{Figs.}
\crefname{table}{Table}{Tables}
\crefname{section}{Section}{Sections}
\crefname{theorem}{Theorem}{Theorems}
\crefname{lemma}{Lemma}{Lemmas}
\crefname{proposition}{Proposition}{Propositions}
\crefname{definition}{Definition}{Definitions}
\crefname{corollary}{Corollary}{Corollaries}
\crefname{remark}{Remark}{Remarks}
\crefname{example}{Example}{Examples}
\crefname{appendix}{Appendix}{Appendices}
\newcommand{\bx}{\mathbf{x}}
\newcommand{\Al}{\mathcal{A}}
\newcommand{\EPLL}{\expec F(w_T) - F^*}
\newcommand{\tilt}{\widetilde{\nabla} F_t}
\newcommand{\tilr}{\widetilde{\nabla} F^0_r}
\newcommand{\hilt}{\widehat{\nabla} F_t}
\newcommand{\Renyi}{R\'enyi }
\newcommand{\var}{\text{Var}}
\newcommand{\eplac}{\expec F(w_T^{ag}) - F^*}
\newcommand{\hw}{\hat{w}}
\newcommand{\till}{\widetilde{\nabla} F_\lambda}
\title{Private Stochastic Optimization with Large Worst-Case Lipschitz Parameter}
\author{Andrew Lowy \hspace{0.5cm} 
Meisam Razaviyayn}
\date{
\texttt{\{lowya, razaviya\}@usc.edu}\\
\vspace{0.1cm}
University of Southern California}
\begin{document}

\maketitle

\begin{abstract}%
We study differentially private (DP) stochastic optimization (SO) with loss functions whose worst-case Lipschitz parameter over all data points may be extremely large or infinite. To date, the vast majority of work on DP SO assumes that the loss is uniformly Lipschitz continuous over data (i.e. stochastic gradients are uniformly bounded  over all data points). While this assumption is convenient, it often leads to pessimistic excess risk bounds. In many practical problems, the worst-case (uniform) Lipschitz parameter of the loss over all data points may be extremely large due to outliers and/or heavy-tailed data. In such cases, the error bounds for DP SO, which scale with the worst-case Lipschitz parameter of the loss, are vacuous. To address these limitations, this work provides improved excess risk bounds that do not depend on the uniform Lipschitz parameter of the loss. Building on a recent line of work (Wang et al., 2020; Kamath et al., 2022), we assume that stochastic gradients have bounded $k$-th order \textit{moments} for some $k \geq 2$. Compared with works on uniformly Lipschitz DP SO, our excess risk scales with the $k$-th moment bound instead of the uniform Lipschitz parameter of the loss, allowing for significantly faster rates in the presence of outliers and/or heavy-tailed data. 

For \textit{smooth} convex and strongly convex loss functions, we provide \textit{linear-time} algorithms with state-of-the-art excess risk. We complement our excess risk upper bounds with novel lower bounds. 
In certain parameter regimes, our linear-time excess risk bounds are \textit{minimax optimal}. Second, we provide the first algorithm to handle \textit{non-smooth} convex loss functions. 
To do so, we develop novel algorithmic and stability-based proof techniques, which we believe will be useful for future work in obtaining optimal excess risk. 
Finally, our work is the first to address \textit{non-convex} non-uniformly Lipschitz loss functions satisfying the \textit{Proximal-PL inequality}; this covers some practical machine learning models. Our Proximal-PL algorithm has \textit{near-optimal} excess risk. 
\end{abstract}

\section{Introduction}
As the use of machine learning (ML) models in industry and society has grown dramatically in recent years, so too have concerns about the privacy of personal data that is used in training such models. It is well-documented that ML models may leak training data, e.g., via model inversion attacks and membership-inference
attacks~\cite{inversionfred, shokri2017membership, korolova2018facebook, nasr2019comprehensive, carlini2021extracting}.
\textit{Differential privacy} (DP)~\cite{dwork2006calibrating} is a rigorous notion of data privacy, and a plethora of work has been devoted to differentially private machine learning and optimization~\cite{chaudhuri2008privacy, duchi13, bst14, ullman2015private, wang2017ermrevisited, bft19, fkt20, lr21fl, cheu2021shuffle, asiL1geo}. Of particular importance is the fundamental problem of DP \textit{stochastic (convex) optimization} (S(C)O): given $n$ i.i.d. samples $X = (x_1, \ldots, x_n) \in \XX^n$ from an unknown distribution $\mathcal{D}$, we aim to privately solve
\begin{equation}
\label{eq:SO}
    \min_{w \in \WW}\big\{F(w) := \expec_{x \sim \DD} [f(w,x)] \big\},
\end{equation}
where  
$f: \WW \times \XX \to \mathbb{R}$ 
is the loss function and  $\WW \subset \mathbb{R}^d$ is the parameter domain. Since finding the exact solution to \eqref{eq:SO} is not generally possible, we measure the quality of the obtained solution via  \textit{excess risk} (a.k.a. excess population loss): The excess risk of a (randomized) algorithm $\Al$ for solving~\cref{eq:SO} is defined as $\expec F(\Al(X)) - \min_{w \in \WW} F(w)$, where the expectation is taken over both the random draw of the data $X$ and the algorithm~$\Al$. 

\vspace{0.2cm}

A large body of literature is devoted to characterizing the optimal achievable differentially private excess risk of~\cref{eq:SO} when the function $f(\cdot, x)$ is uniformly $L_f$-\textit{Lipschitz} for all $x \in \XX$---see e.g.,~\cite{bft19, fkt20, asiL1geo, bassily2021non, lr21fl}. In these works, the gradient of $f$ is assumed to be uniformly bounded  with $\sup_{w \in \WW, x \in \XX} \|\nabla_w f(w,x) \| \leq L_f$, and excess risk bounds scale with $L_f$. While this assumption is convenient for bounding the \textit{sensitivity}~\cite{dwork2006calibrating} of the steps of the algorithm, it is often unrealistic in practice or leads to pessimistic excess risk bounds. 
In many practical applications, data contains outliers, is unbounded or heavy-tailed (see e.g.~\cite{crovella1998heavy, markovich2008nonparametric, woolson2011statistical} and references therein for such applications). 
Consequently, $L_f$ may be prohibitively large. 
For example, even  the linear regression loss  $f(w, x) = \frac{1}{2}(\langle w, x^{(1)} \rangle - x^{(2)})^2$ with compact $\WW$ and data from $\mathcal{X} = \XX^{(1)} \times \XX^{(2)}$, leads to $L_f \geq \text{diameter}(\mathcal{X}^{(1)})^2$, which could be huge or even infinite. 
Similar observations can be made for other useful ML models such as deep neural nets~\cite{lei2021sharper}, and the situation becomes even grimmer in the presence of heavy-tailed data. \ul{In these cases, existing excess risk bounds, which scale with $L_f$, becomes vacuous.}

\vspace{0.2cm}

While $L_f$ can be very large in practice (due to outliers), the $k$-th \textit{moment} of the stochastic gradients is often reasonably small for some $k \geq 2$ (see, e.g.,~\cref{example: gauss lin reg}). This is because the $k$-th moment $\wt{r}_{k}:= \expec \left[\sup_{w\in \WW} \| \nabla_w f(w, x) \|_2^k\right]^{1/k}$ depends on the \textit{average} behavior of the stochastic gradients, while $L_f$ depends on the \textit{worst-case} behavior over all data points. 
Motivated by this observation and building on the prior results~\cite{wx20, klz21}, this work 
makes progress towards answering the following fundamental open questions: 
\begin{itemize}
 \item Question I: What are the minimax optimal rates for (strongly) convex DP SO?
 \item Question II: What utility guarantees are achievable for non-convex DP SO?
\end{itemize}

Prior works have sought to address the first question above:\footnote{\cite{wx20,klz21} consider a slightly different problem class than the class $\wt{r}_{k}$, which we consider: see~\cref{app: lemma scaling factors}. 
However, our results imply improved rates for the problem class considered in \cite{wx20,klz21}, by ~\cref{app: lemma scaling factors} and the precise versions of our linear-time upper bounds given in~\cref{app: linear time}. } The work of~\cite{wx20} provided the first excess risk upper bounds for \textit{smooth}
DP (strongly) convex SO. \cite{klz21} gave improved, yet suboptimal, upper bounds for \textit{smooth} (strongly) convex $f(\cdot, x)$, and lower bounds for (strongly) convex SO. 
In this work, we provide both new lower bounds and new algorithms with improved excess risk for (strongly) convex losses. In certain practical parameter regimes, our excess risk bounds for smooth $F$ are \textit{minimax optimal}, giving a partial answer to Question I. 
We also provide a novel algorithm for \textit{non-smooth} convex $F$. 
Regarding Question II, we give the \textit{first algorithm for DP SO with non-convex loss} functions satisfying the Proximal Polyak-Łojasiewicz (PL) condition~\cite{polyak, karimi2016linear}. We provide a summary of our results for the case $k=2$ in Figure~\ref{table: sum}, and a thorough discussion of related work in~\cref{app: related work}. 

\subsection{Preliminaries}
\label{subsec:Preliminaries}
Let $\| \cdot \|$ be the $\ell_2$ norm. Let $\WW$ be a convex, compact set of $\ell_2$ diameter~$D$. Function $g: \WW \to \mathbb{R}$ is \textit{$\mu$-strongly convex} if $g(\alpha w + (1- \alpha) w') \leq \alpha g(w) + (1 - \alpha) g(w') - \frac{\alpha (1-\alpha) \mu}{2}\|w - w'\|^2$ for all $\alpha \in [0,1]$ and all $w, w' \in \WW$. If $\mu = 0,$ we say $g$ is \textit{convex.} For convex $f(\cdot, x)$, denote any \textit{subgradient} of $f(w,x)$ w.r.t. $w$ by $\nabla f(w,x) \in \partial_w f(w,x)$: i.e. $f(w', x) \geq f(w, x) + \langle \nabla f(w,x), w' - w \rangle$ for all $w' \in \WW$. 
Function 
$g$
is \textit{$\beta$-smooth} if it is differentiable and its derivative 
$\nabla g$
is $\beta$-Lipschitz. For $\beta$-smooth, $\mu$-strongly convex $g$, denote its \textit{condition number} by $\kappa = \beta/\mu$. %
For functions $a$ and $b$ of input parameters, write $a \lesssim b$ if there is an absolute constant $A$ such that $a \leq Ab$ for all feasible values of input parameters. 
Write $a = \widetilde{\mathcal{O}}(b)$ if $a \lesssim \ell b$ for a logarithmic function $\ell$ of input parameters.
We assume that the stochastic gradient distributions have bounded $k$-th moment for some $k \geq 2$: 
\begin{assumption}
\label{ass:tilde}
There exists $k \geq 2$ and $\wt{r}^{(k)} > 0$ such that $\expec \left[\sup_{w \in \WW} \| \nabla f(w, x) \|_2^k\right]\leq \wt{r}^{(k)}$ for all
  $\nabla f(w, x_i) \in \partial_w f(w, x_i)$. Denote $\wt{r}_k := (\wt{r}^{(k)})^{1/k}$.
 \end{assumption}
 
\noindent Clearly, $\wt{r}_k \leq L_f = \sup_{\{\nabla f(w,x) \in \partial_w f(w,x)\}} \sup_{w, x} ~\|\nabla f(w,x)\|$, but this inequality is often very loose: 
\begin{example}
\label{example: gauss lin reg}
For linear regression on a unit ball $\WW$ with $1$-dimensional data $x^{(1)}, x^{(2)}  \in [-10^{10}, 10^{10}]$ having Truncated Normal distributions and $\var(x^{(1)}) = \var(x^{(2)}) \leq 1$, we have $L_f \geq 10^{20}$. On the other hand, $\wt{r}_k$ is much smaller than $L_f$ for $k < \infty$: e.g., $\wt{r}_2 \leq 5, ~\wt{r}_4 \leq 8$, and $\wt{r}_8 \leq 14$.
\end{example}
\noindent \noindent \textbf{Differential Privacy:} 
\textit{Differential privacy}~\cite{dwork2006calibrating} ensures that no adversary---even one with enormous resources---can infer much more 
about any person who contributes training data than if that person's data were absent. If two data sets $X$ and $X'$ differ in a single entry (i.e. $d_{\text{hamming}}(X, X') = 1$), then we say that $X$ and $X'$ are \textit{adjacent}.

\begin{definition}[Differential Privacy]
\label{def: DP}
Let $\varepsilon \geq 0, ~\delta \in [0, 1).$ A randomized algorithm $\Al: \XX^n \to \mathcal{W}$ is \textit{$(\varepsilon, \delta)$-differentially private} (DP) if for all pairs of adjacent data sets $X, X' \in \XX^n$
and all measurable subsets $S \subseteq \WW$, we have
$\mathbb{P}(\Al(X) \in S) \leq e^\varepsilon \mathbb{P}(\Al(X') \in S) + \delta$. 
\end{definition}
\normalsize \noindent In this work, we focus on \textit{zero-concentrated differential privacy}~\cite{bun16}:
\begin{definition}[Zero-Concentrated Differential Privacy (zCDP)]
A randomized algorithm $\mathcal{A}: \XX^n \to \mathcal{W}$ satisfies $\rho$-zero-concentrated differential privacy ($\rho$-zCDP) if for all 
pairs of adjacent data sets $X, X' \in \XX^n$
and all $\alpha \in (1, \infty)$, we have $D_\alpha(\Al(X) || \Al(X')) \leq \rho \alpha$,
where $D_\alpha(\Al(X) || \Al(X'))$ is the $\alpha$-\Renyi divergence\footnote{For distributions $P$ and $Q$ with probability density/mass functions $p$ and $q$, $D_\alpha(P || Q) := \frac{1}{\alpha - 1}\ln \left(\int p(x)^{\alpha} q(x)^{1 - \alpha}dx\right)$~\cite[Eq. 3.3]{renyi}.} between the distributions of $\Al(X)$ and $\Al(X')$.
\end{definition}
zCDP is weaker than $(\varepsilon, 0)$-DP, but stronger than $(\varepsilon, \delta)$-DP ($\delta > 0$) in the following sense: 
\begin{proposition}\cite[Proposition 1.3]{bun16}
\label{prop:bun1.3}
If $\Al$ is $\rho$-zCDP, then $\Al$ is  $(\rho + 2\sqrt{\rho \log(1/\delta)}, \delta)$ for any $\delta > 0$. 
\end{proposition}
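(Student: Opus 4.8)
The plan is to reduce the claim to a standard tail bound on the \emph{privacy loss random variable} and then invoke the elementary fact that such a tail bound implies approximate DP. Fix adjacent data sets $X, X'$, write $P$ and $Q$ for the distributions of $\mathcal{A}(X)$ and $\mathcal{A}(X')$ with densities $p, q$, and define the privacy loss $Z := \log\frac{p(Y)}{q(Y)}$ for $Y \sim P$. The first ingredient is the following: if $\mathbb{P}(Z > \epsilon) \le \delta$, then $\mathbb{P}(\mathcal{A}(X) \in S) \le e^\epsilon\, \mathbb{P}(\mathcal{A}(X') \in S) + \delta$ for every measurable $S \subseteq \WW$. This follows by splitting $S$ into the part where $p \le e^\epsilon q$ (which contributes at most $e^\epsilon Q(S)$) and the part where $p > e^\epsilon q$ (whose $P$-mass is at most $\mathbb{P}(Z > \epsilon) \le \delta$). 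Since the $\rho$-zCDP hypothesis is symmetric in $X$ and $X'$, it then suffices to establish the tail bound $\mathbb{P}(Z > \epsilon) \le \delta$ with $\epsilon = \rho + 2\sqrt{\rho \log(1/\delta)}$.

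Second, I would obtain that tail bound by a Chernoff argument that converts the \Renyi bound of zCDP into control of the moment generating function of $Z$. For any $\alpha > 1$,
\[
\mathbb{E}_{Y \sim P}\!\left[e^{(\alpha - 1) Z}\right] = \int p(y)^{\alpha} q(y)^{1-\alpha}\, dy = e^{(\alpha - 1) D_\alpha(P \,\|\, Q)} \le e^{(\alpha - 1)\rho\alpha},
\]
using the definition of the $\alpha$-\Renyi divergence and the $\rho$-zCDP assumption $D_\alpha(\mathcal{A}(X)\|\mathcal{A}(X')) \le \rho\alpha$. Applying Markov's inequality to the nonnegative random variable $e^{(\alpha-1)Z}$ gives
\[
\mathbb{P}(Z > \epsilon) \le e^{-(\alpha - 1)\epsilon}\, \mathbb{E}_{Y\sim P}\!\left[e^{(\alpha - 1)Z}\right] \le \exp\!\big((\alpha - 1)(\rho\alpha - \epsilon)\big).
\]

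Third, I would optimize the exponent $g(\alpha) = (\alpha - 1)(\rho\alpha - \epsilon) = \rho\alpha^2 - (\rho + \epsilon)\alpha + \epsilon$ over $\alpha$. (Assume $\rho > 0$; if $\rho = 0$ then $D_\alpha(\mathcal{A}(X)\|\mathcal{A}(X')) = 0$ forces $P = Q$ and the conclusion is trivial.) The minimizer is $\alpha^\star = \frac{\rho + \epsilon}{2\rho}$, which satisfies $\alpha^\star > 1$ exactly when $\epsilon > \rho$, true here because $\log(1/\delta) > 0$ for $\delta \in (0,1)$. Substituting yields $g(\alpha^\star) = \epsilon - \frac{(\rho + \epsilon)^2}{4\rho}$, and plugging in $\epsilon = \rho + 2\sqrt{\rho\log(1/\delta)}$ one checks by direct algebra that $g(\alpha^\star) = -\log(1/\delta)$, so $\mathbb{P}(Z > \epsilon) \le e^{-\log(1/\delta)} = \delta$. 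Combined with the first step, this gives $(\epsilon,\delta)$-DP for every adjacent pair, which is the claim.

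The main obstacle is bookkeeping rather than conceptual: one must (i) select the Chernoff parameter $\alpha$ so that the resulting exponent is exactly $-\log(1/\delta)$ and verify the feasibility condition $\alpha^\star > 1$ (equivalently $\epsilon > \rho$), and (ii) handle the measure-theoretic definition of the privacy loss carefully — in particular the events $\{q(Y) = 0\}$ and the fact that the zCDP hypothesis forces $P \ll Q$, which the density-level statement in the footnote glosses over. Neither point requires ideas beyond those sketched above.
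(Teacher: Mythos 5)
Your argument is correct and is essentially the proof from the cited source: the paper itself does not prove this statement but quotes \cite[Proposition 1.3]{bun16}, whose proof is exactly your Chernoff bound on the privacy-loss random variable $Z$ via the moment identity $\mathbb{E}[e^{(\alpha-1)Z}] = e^{(\alpha-1)D_\alpha(P\|Q)}$, the optimizing choice $\alpha^\star = (\rho+\epsilon)/(2\rho)$, and the standard conversion of the tail bound $\mathbb{P}(Z>\epsilon)\le\delta$ into $(\epsilon,\delta)$-DP. The algebra checks out ($g(\alpha^\star) = \epsilon - (\rho+\epsilon)^2/(4\rho) = -\log(1/\delta)$ when $\epsilon = \rho + 2\sqrt{\rho\log(1/\delta)}$), and your handling of the edge cases ($\rho=0$, and absolute continuity being forced by finiteness of the R\'enyi divergence) is fine.
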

\noindent Thus, if $\varepsilon \leq \sqrt{\log(1/\delta)}$, then any $\frac{\varepsilon^2}{2}$-zCDP algorithm is $(2\varepsilon\sqrt{\log(1/\delta)}, \delta)$-DP. \cref{app: privacy prelims} contains more background on differential privacy.
\subsection{Contributions and Related Work}
\label{sec: contributions}
We discuss our contributions in the context of related work. 
See Figure~\ref{table: sum} for a summary of our results when $k=2$, and~\cref{app: related work} for a more thorough discussion of related work. 

\vspace{.2cm}
\noindent \textbf{\ul{Improved Risk for Smooth Convex Losses in Linear Time} (\cref{sec: linear time}):}
For convex, $\beta$-smooth $F$, we provide an accelerated DP algorithm (\cref{alg: ACSA}), building on the work of~\cite{ghadimilan1}.\footnote{In contrast to~\cite{wx20, klz21}, \textit{we do not require $f(\cdot, x)$ to be $\beta_f$-smooth} for all $x$.} 
Our algorithm is \textit{linear time} and attains excess risk that improves over the previous state-of-the-art (\textit{not linear time}) algorithm~\cite[Theorem 5.4]{klz21} in practical parameter regimes (e.g. $d \gtrsim n^{1/6}$). The excess risk of our algorithm is 
\textit{minimax optimal} in certain cases: e.g., $d \gtrsim (\varepsilon n)^{2/3}$ or ``sufficiently smooth'' $F$ (see Remark~\ref{rem: affine optimal}). 

\vspace{.15cm}
For $\mu$-\textit{strongly convex}, $\beta$-smooth losses, acceleration results in excessive bias accumulation, so we propose a simple noisy clipped SGD. Our algorithm builds on~\cite{klz21}, but uses a lower-bias clipping mechanism from~\cite{bd14} and a new, tighter analysis.
We attain excess risk
that is near-optimal up to a \small $\widetilde{\mathcal{O}}((\beta/\mu)^{(k-1)/k})$\normalsize ~factor: see~\cref{thm: strongly convex smooth upper bound}. Our bound strictly improves over the best previous bound of~\cite{klz21}. 

\vspace{.15cm}
We complement our excess risk upper bounds with information-theoretic \textit{lower bounds}. Our lower bounds refine (to describe the dependence on $\wt{r}_k, D, \mu$), extend (to $k \gg 1$), and tighten (for $\mu = 0$) the lower bounds of \cite{klz21}. 

\vspace{.15cm}
Special cases of our main results for smooth (strongly) convex losses are stated below: 
\begin{theorem}[Informal/special cases, see~\cref{thm: convex ACSA one pass,thm: strongly convex smooth upper bound,thm: convex lower bound,thm: strongly convex lower bound,rem: affine optimal}]
\label{thm: main result informal - smooth}
Let $F$ be convex and $\beta$-smooth and assume the constraint set $\WW$ has $\ell_2$-diameter $D$. Grant~\cref{ass:tilde}. If $\beta \lesssim \frac{\wt{r}_k}{D}\left(\frac{\sqrt{d}}{\varepsilon n}\right)^{\frac{k-1}{k}} + \frac{\wt{r}_2}{D}\frac{1}{\sqrt{n}}$, then there is a linear-time $\frac{\varepsilon^2}{2}$-zCDP algorithm $\Al$ such that \small $\expec F(\Al(X)) - F^* = \widetilde{\mathcal{O}}\left(D
\left(\frac{\wt{r}_2}{\sqrt{n}} + \wt{r}_{k}\left(\frac{\sqrt{d}}{\varepsilon n} \right)^{(k-1)/k} \right) \right)$. \normalsize If $F$ is $\mu$-strongly convex and $\beta/\mu = \mathcal{O}(1)$, then there is a linear-time $\frac{\varepsilon^2}{2}$-zCDP algorithm $\Al'$ such that \small $\expec F(\Al'(X)) - F^* = \widetilde{\mathcal{O}}\left(\frac{1}{\mu}\left(\frac{\wt{r}_2^2}{n} + \wt{r}_{k}^2\left(\frac{\sqrt{d}}{\varepsilon n} \right)^{(2k-2)/k} \right) \right)$. \normalsize Moreover, the above excess risk bounds are \textit{minimax optimal} up to logarithmic factors (in the stated parameter regimes).
\end{theorem}

As $k \to \infty$, $\wt{r}_{k} \to L_f$ and~\cref{thm: main result informal - smooth} recovers the known rates for uniformly $L_f$-Lipschitz DP SCO~\cite{bft19, fkt20}. However, when $k < \infty$ and $\wt{r}_{2k} \ll L_f$, the excess risk bounds in~\cref{thm: main result informal - smooth} may be much smaller than the uniformly Lipschitz excess risk bounds, which increase with $L_f$.

\vspace{.1cm}
The works~\cite{wx20,klz21} make a slightly different assumption than~\cref{ass:tilde}: they instead assume that the $k$-th order central moment of each coordinate $\nabla_j f(w,x)$ is bounded by $\gamma_k^{1/k}$ for all $j \in [d], w \in \WW$. We also provide excess risk bounds for the class of problems satisfying the coordinate-wise moment assumption of~\cite{wx20,klz21} and having \textit{subexponential} stochastic subgradients: see~\cref{app: asymptotic}.  

\vspace{.1cm}
The previous state-of-the-art convex upper bound was $\Omega\left(\wt{r}_k D \sqrt{\frac{d}{n}}\right)$
~\cite[Theorems 5.2 and 5.4]{klz21}.\footnote{We write the bound in~\cite[Theorem 5.4]{klz21} in terms of~\cref{ass:tilde}, replacing their $\gamma_k^{1/k} d$ by $\wt{r}_k \sqrt{d}$.}
Their result also required $f(\cdot, x)$ to be $\beta_f$-smooth for all $x \in \XX$ with $\beta_f \leq 10$, which can be restrictive with outlier data: e.g. this implies that $f(\cdot, x)$ is uniformly $L_f$-Lipschitz with $L_f \leq \beta_f D \leq 10 D$ if $\nabla f(\ws(x), x) = 0$ for some $\ws(x) \in \WW$. By comparison, our bounds hold even for $f$ with $L_f, \beta_f \gg 1$ or $\beta_f = L_f = \infty$. 

\vspace{.1cm}
Technically, a key bottleneck in the prior works~\cite{klz21,wx20} is the reliance on \textit{uniform convergence} to bound the gradient estimation error simultaneously for all $w \in \WW$. This approach leads to poor dependence on the dimension $d$. By contrast, our linear-time algorithm avoids uniform convergence arguments and instead uses disjoint batches of data in each iteration to maintain the independence between iterates $w_{t}$ and $w_{t+1}$. This results in improved dependence on $d$ in our bounds compared to the prior works. Additionally, we incorporate \textit{acceleration}~\cite{ghadimilan1} to speed up convergence. To prove our upper bound, we give 
the first analysis of accelerated SGD with biased stochastic gradients.
\vspace{.1cm}
Our linear-time $\mu$-strongly convex bound also improves over the best previous upper bound of~\cite[Theorem 5.6]{klz21}, which required uniform $\beta_f$-smoothness of $f(\cdot, x)$. In fact, ~\cite[Theorem 5.6]{klz21} was incorrect, 
as we explain in~\cref{app: wrong proofs}.\footnote{In short, the mistake is that Jensen's inequality is used in the wrong direction to claim that the $T$-th iterate of their algorithm $w_T$ satisfies $\expec[\|w_T - \ws\|^2] \leq (\expec \|w_T - \ws\|)^2$, which is false.} After communicating with the authors of~\cite{klz21}, they updated and corrected the result and proof in the arXiv version of their paper. The arXiv version of~\cite[Theorem 5.6]{klz21}---which we derive in~\cref{app: wrong proofs} for completeness---is suboptimal by a factor of $\wt{\Omega}((\beta_f/\mu)^{3})$. In practice, the worst-case condition number $\beta_f/\mu$ can be very large, especially in the presence of outliers or heavy-tailed data. Our excess risk bound removes this dependence on $\beta_f/\mu$ and is minimax optimal for constant $\beta/\mu$.

\begin{figure}[t]
 \centering
 \includegraphics[width=\textwidth]
{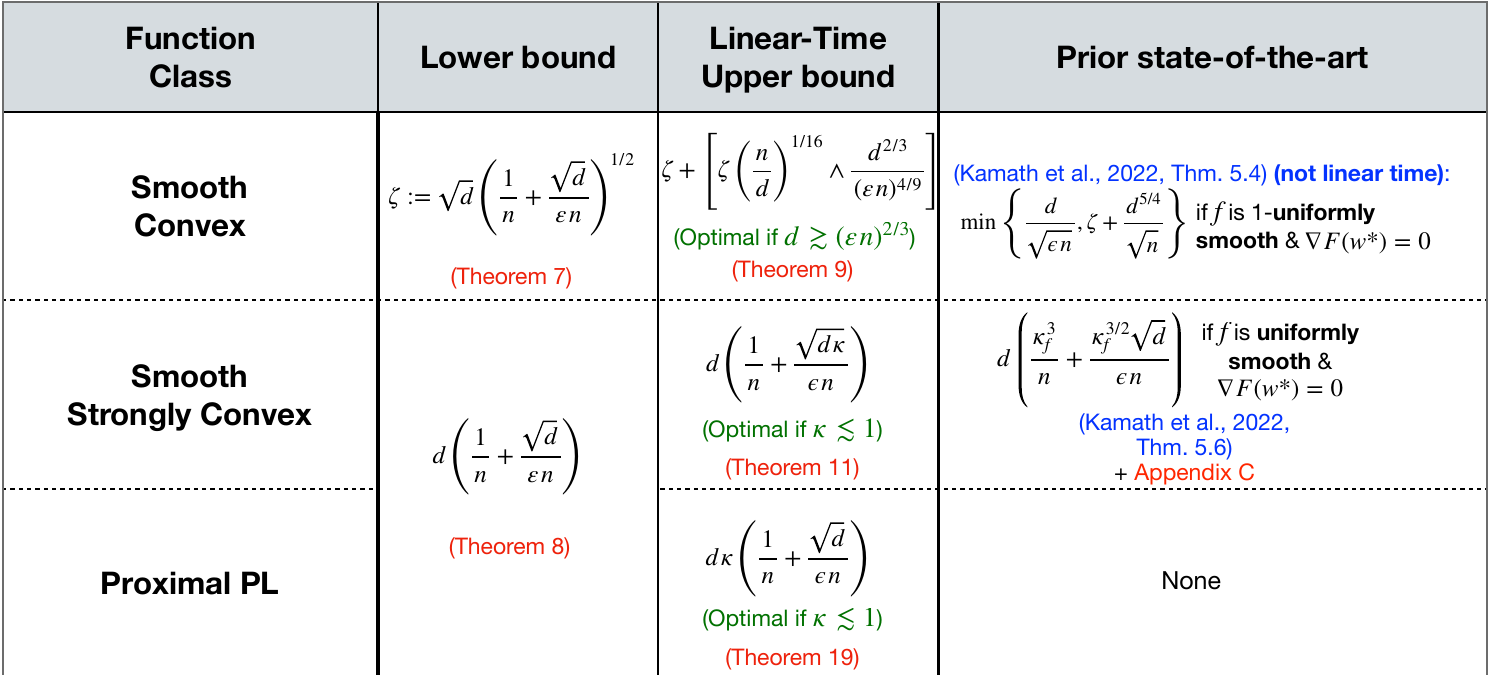}
\vspace{-.25in}
\caption{\footnotesize 
Smooth excess risk for $k=2$, $\wt{r}_2 = \sqrt{d}$; we omit logarithms. $\kappa = \beta/\mu$ is the condition number of $F$; $\kappa_f = \beta_f/\mu$ is the worst-case condition number of $f(\cdot, x)$. See~\cref{thm: localization convex,thm: localization strongly convex} for non-smooth upper bounds. 
}\label{table: sum}
\vspace{-.2in}
\end{figure}

\vspace{.2cm}
\noindent \textbf{\underline{First Algorithm for Non-Smooth (Strongly) Convex Losses} (\cref{sec: optimal rates}):} %
Prior works~\cite{wx20,klz21} required a strict uniform smoothness assumption in order to privately estimate the gradient of the loss function using~\cite{hol19}. Additionally, their uniform convergence argument required a smoothness assumption. 
In this work, to handle non-smooth loss functions, we develop a novel algorithm and analysis that circumvents the bottlenecks of the previous works: To privately estimate the gradient of the loss function, we borrow the method of~\cite{bd14}, which does not require uniform smoothness. Additionally, to prove generalization of our algorithm, we prove and use novel \textit{stability} results for non-smooth loss functions instead of the uniform convergence arguments of~\cite{wx20,klz21}. We summarize our main results for non-smooth loss functions below: 

\begin{theorem}[Informal, see~\cref{thm: localization convex}, \cref{thm: localization strongly convex}]
\label{thm: main result informal}
Let $f(\cdot, x)$ be convex. 
Grant~\cref{ass:tilde}. 
Then, there is a polynomial-time $\frac{\varepsilon^2}{2}$-zCDP algorithm $\Al$ such that \small $\small \expec F(\Al(X)) - F^* = \widetilde{\mathcal{O}}\left(
\wt{r}_{2k} D
\left(\frac{1}{\sqrt{n}} + \left(\frac{\sqrt{d}}{\varepsilon n} \right)^{(k-1)/k} \right) \right)$. \normalsize
If $f(\cdot, x)$ is $\mu$-strongly convex, then \small $\expec F(\Al(X)) - F^* = \widetilde{\mathcal{O}}\left(
\frac{\wt{r}_{2k}^2}{\mu}
\left(\frac{1}{n} + \left(\frac{\sqrt{d}}{\varepsilon n} \right)^{(2k-2)/k} \right) \right)$. \normalsize
\end{theorem}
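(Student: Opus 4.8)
The plan is to get the upper bounds from a differentially private \emph{clipped projected (sub)gradient method} run inside a \emph{localization} (phased) scheme, and the lower bounds by combining a classical non-private statistical lower bound with a private lower bound coming from heavy-tailed mean estimation. The obstacle to directly privatizing subgradient descent is that $\nabla f(w,x)$ is not uniformly bounded, so I would clip: at step $t$ draw a fresh minibatch of size $b$, compute subgradients, project each onto the ball of radius $\lambda$, average to get $\hat g_t$, add isotropic Gaussian noise of standard deviation proportional to $\lambda/b$ times the per-step $\rho$-zCDP budget, and take a projected step. Clipping at level $\lambda$ biases the estimator by at most $\expec\|\hat g_t\|\,\mathbbm{1}\{\|\cdot\|>\lambda\} \le \wt r_k\,(\wt r_k/\lambda)^{k-1}$ (Hölder on the tail event), while the clipped estimator has second moment $\le \wt r_{2}^2 \le \wt r_{2k}^2$ plus the noise variance $\sigma^2 d$; it also caps the $\ell_2$-sensitivity of each released average at $2\lambda/b$, so the Gaussian mechanism plus composition over all steps and phases yields $\tfrac{\epsilon^2}{2}$-zCDP. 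Feeding the bias and second-moment bounds into the standard convex SGD analysis (carrying a bias term) controls the optimization error in terms of $D$, the stepsize, the number of steps, $\lambda$, and $\sigma^2 d$. Since this uses only subgradients, non-smooth $f(\cdot,x)$ is handled directly; when a sub-step benefits from smoothness one may instead run on the Moreau envelope $f_\beta(\cdot,x)=\min_v\{f(v,x)+\tfrac\beta2\|\cdot-v\|^2\}$, which is $\beta$-smooth, preserves (strong) convexity up to constants, changes $F$ by at most $\wt r_{2k}^2/(2\beta)$, and inherits~\cref{ass:tilde} since $\nabla f_\beta(w,x)=\nabla f(\mathrm{prox}(w),x)$.

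To convert optimization error into the optimal \emph{population} error I would use the Feldman--Koren--Talwar phased scheme: $O(\log n)$ phases, phase $i$ approximately minimizing $F(w)+\tfrac{\mu_i}{2}\|w-w_{i-1}\|^2$ over a ball of geometrically shrinking radius, with privacy budget, stepsize, and batch size split geometrically across phases. The added $\mu_i$-regularization makes each phase's minimizer uniformly stable, contributing a generalization term controlled by the second (and, through a refined stability analysis of the clipped regularized risk, the $2k$-th) moment of the gradients, while strong convexity of the regularized subproblem contracts $\|w_i-w^*\|$ geometrically so the telescoped sum collapses. Choosing the clip level $\lambda \asymp \wt r_{2k}\,(\epsilon n/\sqrt d)^{1/k}$ to balance clipping bias against injected noise, and tuning the phase count, stepsizes, batch sizes (and $\beta$, if smoothing is used), gives $\expec F(\Al(X))-F^* = \widetilde{\mathcal{O}}\big(\wt r_{2k} D\,(n^{-1/2}+(\sqrt d/(\epsilon n))^{(k-1)/k})\big)$. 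The strongly convex bound follows either by running the same scheme on $F$ itself (strong convexity already supplies the contraction, so no extra regularization is needed) or by the standard self-consistent reduction that replaces $D$ by $\sqrt{2\alpha_{\mathrm{cvx}}/\mu}$ and solves for the fixed point, which squares the convex rate and divides by $\mu$.

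For the lower bounds, the $n^{-1/2}$ term is non-private: the classical information-theoretic lower bound for (strongly) convex stochastic optimization under a bounded-second-moment gradient assumption already gives $\Omega(\wt r_k D/\sqrt n)$ and $\Omega(\wt r_k^2/(\mu n))$. For the private term I would take a linear hard family $f(w,x)=-\langle w,x\rangle$ on a scaled ball, with $x$ equal to $0$ with probability $1-p$ and to $(\wt r_k/p^{1/k})v$ otherwise for a hard sign/coordinate vector $v$; this makes the $k$-th gradient moment exactly $\wt r_k$ while the problem restricted to the $\approx pn$ informative samples is essentially a Lipschitz-$(\wt r_k/p^{1/k})$ instance, so a fingerprinting/packing DP lower bound in the style of~\cite{bst14} applied to those samples and optimized over $p$ yields $\Omega\big(\wt r_k D\,(\sqrt d/(\epsilon n))^{(k-1)/k}\big)$ (and, combined with a localization-type reduction, the strongly convex analogue); alternatively one may invoke the moment-setting lower bounds of~\cite{klz21}. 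Since the upper bounds carry $\wt r_{2k}$ and the lower bounds carry $\wt r_k$, they match up to $\widetilde{\mathcal{O}}(\wt r_{2k}/\wt r_k)$ and $\widetilde{\mathcal{O}}(\wt r_{2k}^2/\wt r_k^2)$.

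The hard part is making localization coexist with \emph{both} non-smoothness and heavy tails: the clipped subgradients are biased, so the phase-wise contraction and the final telescoping must be redone carrying a bias term that has to stay below the target accuracy in \emph{every} phase; the per-phase generalization bound must hold for the clipped, regularized empirical problem under only a $2k$-th moment assumption rather than a uniform Lipschitz bound; and the clip level, phase count, stepsizes, batch sizes (and smoothing parameter) must be tuned jointly so that no one of the statistical, optimization, privacy-noise, or clipping-bias contributions exceeds the claimed exponent. I expect the bias-versus-noise calibration of $\lambda$, and verifying its consistency across all $O(\log n)$ phases, to be the most delicate step.
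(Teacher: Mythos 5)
Your overall architecture is the same as the paper's: a noisy \emph{clipped} subgradient method run on regularized ERM subproblems inside an FKT-style localization scheme (\cref{alg: localization}), with the clip level calibrated as $C\asymp \wt{r}_{2k}(\epsilon n/\sqrt{d})^{1/k}$ to balance the Hölder-type clipping bias against the Gaussian privacy noise; the strongly convex bound via the self-consistent reduction that replaces $D$ by $\sqrt{\Delta/\mu}$ and iterates; and the lower bounds via a classical non-private construction plus the $p$-spiked linear instance with a private Fano/packing argument, exactly as in \cref{thm: convex lower bound,thm: strongly convex lower bound}.

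There is, however, one step that fails as written: you assert that the $\lambda_i$-regularization makes each phase's empirical minimizer \emph{uniformly stable}. The standard uniform-stability bound for $\lambda$-strongly-convex ERM is of order $L_f^2/(\lambda n)$ and genuinely requires the uniform Lipschitz constant $L_f=\sup_{w,x}\|\nabla f(w,x)\|$, which in this setting may be enormous or infinite --- avoiding that dependence is the whole point of the theorem. The paper's substitute is \emph{on-average model stability} (\cref{def: stability}): it proves that on-average model stability implies generalization for convex losses that are neither smooth nor Lipschitz (\cref{prop: stability implies generalization}, via a Young's-inequality trade-off with a free parameter $\zeta$ and the population second moment $\wt{r}^{(2)}$), and then bounds the on-average stability of regularized ERM by a self-bounding argument that couples the stability parameter $\alpha$ to the generalization gap itself and solves the resulting fixed point, yielding generalization error $2\wt{r}^{(2)}/(\lambda_i n_i)$ (\cref{cor: reg ERM excess risk}). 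You flag "the per-phase generalization bound under only a moment assumption" as the hard part, but naming uniform stability points at a dead end; the weaker stability notion and the new generalization lemma are the missing ideas. Two smaller remarks: the Moreau-envelope smoothing detour is unnecessary (the paper handles non-smooth $f$ directly through subgradients, and smoothing would reintroduce parameter trade-offs you then have to tune away); and the empirical subproblems force you to control \emph{empirical} moments $\frac{1}{n_i}\sum_j\|\nabla f(w,x_j)\|^k$ rather than population ones --- the paper handles this by choosing the localization radii $D_i$ via a Chebyshev argument so that $\hat{w}_i\in\WW_i$ with probability $1-1/\sqrt{n}$, a step your proposal omits but which is needed for the telescoping to be valid.
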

The above upper bounds nearly resemble the lower bounds in~\cref{thm: convex lower bound,thm: strongly convex lower bound} except that $\wt{r}_{2k}$ appears in place of $\wt{r}_k$. We leave it as an open problem to obtain tight bounds with $\wt{r}_{2k} \mapsto \wt{r}_k$. 
We believe that this can be accomplished by building on our~\cref{alg: localization} and our novel proof techniques.

\vspace{.15cm}
Our \cref{alg: localization} combines the iterative localization technique of~\cite{fkt20, asiL1geo} with a noisy \textit{clipped} subgradient method. With clipped (hence \textit{biased}) stochastic subgradients and non-Lipschitz/non-smooth $f(\cdot, x)$, the excess risk analysis of our algorithm is harder than in the uniformly Lipschitz setting. Instead of the uniform convergence analysis used in~\cite{wx20, klz21}, we derive new results about the \textit{stability}~\cite{kearns1999algorithmic, bousquet2002stability} and generalization error of learning with loss functions that are not uniformly Lipschitz or differentiable; prior results (e.g. \cite{shalev2009stochastic, lei2020fine}) were limited to $\beta_f$-smooth and/or $L_f$-Lipschitz $f(\cdot, x)$. Specifically, we show the following \textit{for non-Lipschitz/non-smooth $f(\cdot, x)$}: 
a) \textit{On-average model stability}~\cite{lei2020fine} implies generalization (Proposition~\ref{prop: stability implies generalization}); and b) regularized empirical risk minimization is on-average model stable, hence it generalizes (Proposition~\ref{cor: reg ERM excess risk}).
We combine these 
results with an empirical error bound for 
\textit{biased}, noisy subgradient method to bound the excess risk of our algorithm (\cref{thm: localization convex}). We obtain our strongly convex bound (\cref{thm: localization strongly convex}) by a reduction to the convex case, ala~\cite{hk14, fkt20}.

\vspace{.2cm}
\noindent \textbf{\underline{First Algorithm for Non-Convex (Proximal-PL) Losses} (\cref{sec: PL}):}
We consider losses satisfying the \textit{Proximal Polyak-\L ojasiewicz (PPL) inequality} \cite{polyak, karimi2016linear} (Definition~\ref{def: Prox PL}), an extension of the classical PL inequality to the proximal setting. This covers important models like (some) neural nets, linear/logistic regression, and LASSO~\cite{karimi2016linear, lei2021sharper}. We propose a DP proximal clipped SGD to attain near-optimal excess risk that almost matches the \textit{strongly convex} rate: see~\cref{thm: PL upper bound}. 

\vspace{.2cm}
We also provide (in~\cref{app: SDP mean estimators}) the \textbf{first \textit{shuffle differentially private (SDP)}}~\cite{prochlo, cheu2019distributed} \textbf{algorithms for SO with large worst-case Lipschitz parameter}.
Our SDP algorithms achieve the same risk bounds as their zCDP counterparts \textit{without requiring a trusted curator.}

\section{Private Heavy-Tailed Mean Estimation Building Blocks}
In each iteration of our SO algorithms, we need 
a way 
to privately estimate the mean $\nabla F(w_t) = \expec_{x \sim \DD}[\nabla f(w_t, x)]$. If $f(\cdot, x)$ is uniformly Lipschitz, then one can simply draw a random sample $x^t$ from $X$ and add noise to the stochastic gradient $\nabla f(w_t, x^t)$ to obtain a DP estimator of $\nabla F(w_t)$: the $\ell_2$-sensitivity of the stochastic gradients is bounded by $\sup_{x, x' \in \XX} \| \nabla f(w_t, x) - \nabla f(w_t, x')\| \leq 2L_f$, so the Gaussian mechanism guarantees DP (by Proposition~\ref{prop: gauss}). However, in the setting that we consider, $L_f$ (and hence the sensitivity) may be huge, leading the privacy noise variance to also be huge. Thus, we \textit{clip} the stochastic gradients (to force the sensitivity to be bounded) before adding noise. Specifically, we invoke~\cref{alg: MeanOracle2} on a batch of $s$ stochastic gradients at each iteration of our algorithms.
In~\cref{alg: MeanOracle2}, $\Pi_{C}(z) := \argmin_{y \in B_2(0, C)}\|y - z\|^2$ denotes the projection onto the centered $\ell_2$ ball of radius $C$ in $\mathbb{R}^d$. Lemma~\ref{lem: bias and variance of bd14} bounds the bias and variance of~\cref{alg: MeanOracle2}.
\begin{algorithm}[ht]
\caption{$\texttt{MeanOracle1}(\{z_i\}_{i=1}^s; s; C; \frac{\varepsilon^2}{2})$ \cite{bd14}}
\label{alg: MeanOracle2}
\begin{algorithmic}[1]
\STATE {\bfseries Input:} 
$Z = \{z_i\}_{i=1}^s$, $C>0$, $\varepsilon > 0$. Set $\sigma^2 = \frac{4 C^2}{s^2 \varepsilon^2}$ for $\frac{\varepsilon^2}{2}$-zCDP. 
\STATE  Draw $N \sim \mathcal{N}(0, \sigma^2 \mathbf{I}_d)$ and compute $\widetilde{\nu} := \frac{1}{s}\sum_{i=1}^s \Pi_{C}(z_i) + N$.
\STATE {\bfseries Output:} $\widetilde{\nu}$. 
\end{algorithmic}
\end{algorithm}
\begin{lemma}[\cite{bd14}]
\label{lem: bias and variance of bd14}
Let $\{z_i\}_{i=1}^s \sim \DD^s$ be $\mathbb{R}^d$-valued random vectors with $\expec z_i = \nu$ and $\expec\|z_i\|^k \leq r^{(k)}$ for some $k \geq 2$. Denote the noiseless average of clipped samples by $\widehat{\nu} := \frac{1}{s}\sum_{i=1}^s \Pi_C(z_i)$ and $\wt{\nu}:= \widehat{\nu} + N$. Then, $\|\expec \widetilde{\nu} - \nu \| = \|\expec \widehat{\nu} - \nu \| \leq \expec \|\widehat{\nu} - \nu \| \leq \frac{r^{(k)}}{(k-1)C^{k-1}}$, and $\expec\|\widetilde{\nu} - \expec \widetilde{\nu}\|^2 = \expec \| \widetilde{\nu} - \expec \widehat{\nu}\|^2 \leq d\sigma^2 + \frac{r^{(2)}}{s}$. 
\end{lemma}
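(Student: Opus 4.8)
The plan is to handle the bias and the variance assertions separately, in each case first stripping off the additive Gaussian noise $N$ --- which is independent of the $z_i$ and mean-zero, so it contributes nothing to the bias and only its own second moment to the variance --- and then reducing to a one-sample clipping estimate via the i.i.d.\ structure and Jensen's inequality.

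For the bias, since $\expec N = 0$ and $N$ is independent of the data, $\expec\widetilde{\nu} = \expec\widehat{\nu}$, so $\|\expec\widetilde{\nu} - \nu\| = \|\expec\widehat{\nu} - \nu\|$. By identical distribution $\expec\widehat{\nu} = \expec\Pi_C(z_1)$, hence, using $\nu = \expec z_1$, $\|\expec\widehat\nu - \nu\| = \|\expec[\Pi_C(z_1) - z_1]\| \le \expec\|\Pi_C(z_1) - z_1\|$ by Jensen. The key geometric observation is that projection onto $B_2(0,C)$ leaves vectors of norm $\le C$ unchanged and rescales vectors of larger norm to norm exactly $C$; therefore $\|\Pi_C(z) - z\| = \max(0, \|z\| - C)$. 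I would then use the layer-cake representation $\max(0,\|z_1\| - C) = \int_C^\infty \mathbbm{1}\{\|z_1\| > t\}\, dt$, take expectations, and bound the tail via Markov applied to $\|z_1\|^k$: $\prob(\|z_1\| > t) = \prob(\|z_1\|^k > t^k) \le \expec\|z_1\|^k / t^k \le r^{(k)}/t^k$. Integrating, $\int_C^\infty r^{(k)} t^{-k}\, dt = r^{(k)}/((k-1)C^{k-1})$, the claimed bound. (The quantity $\expec\|\widehat\nu - \nu\|$ in the statement is best read as $\expec\|\widehat\nu - \bar z\|$ with $\bar z := \tfrac{1}{s}\sum_i z_i$ the unclipped empirical mean: that is the intermediate quantity actually controlled, via $\|\expec\widehat\nu - \nu\| = \|\expec(\widehat\nu - \bar z)\| \le \expec\|\widehat\nu - \bar z\| \le \expec\|\Pi_C(z_1) - z_1\|$.)

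For the variance, again $\expec\widetilde\nu = \expec\widehat\nu$, so $\expec\|\widetilde\nu - \expec\widetilde\nu\|^2 = \expec\|(\widehat\nu - \expec\widehat\nu) + N\|^2$; expanding and using $\expec N = 0$, $N \perp \widehat\nu$, and $\expec\|N\|^2 = d\sigma^2$ gives $\expec\|\widetilde\nu - \expec\widetilde\nu\|^2 = \expec\|\widehat\nu - \expec\widehat\nu\|^2 + d\sigma^2$. Since $\widehat\nu$ is the empirical average of the i.i.d.\ vectors $\Pi_C(z_i)$, $\expec\|\widehat\nu - \expec\widehat\nu\|^2 = \tfrac{1}{s}\expec\|\Pi_C(z_1) - \expec\Pi_C(z_1)\|^2 = \tfrac{1}{s}\big(\expec\|\Pi_C(z_1)\|^2 - \|\expec\Pi_C(z_1)\|^2\big) \le \tfrac{1}{s}\expec\|\Pi_C(z_1)\|^2$. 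Finally $\|\Pi_C(z)\| = \min(\|z\|, C) \le \|z\|$, so $\expec\|\Pi_C(z_1)\|^2 \le \expec\|z_1\|^2 \le r^{(2)}$, and we conclude $\expec\|\widetilde\nu - \expec\widetilde\nu\|^2 \le d\sigma^2 + r^{(2)}/s$.

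None of this is deep; the only step requiring genuine care is the bias estimate --- turning the raw moment bound $\expec\|z_1\|^k \le r^{(k)}$ into a clipping-bias bound of the correct order $C^{-(k-1)}$. The tail-integral-plus-Markov argument is what buys the extra factor $k-1$ over the cruder bound $\expec\max(0,\|z_1\|-C) \le \expec[\|z_1\|^k/C^{k-1}] = r^{(k)}/C^{k-1}$ (which already has the right order, so even that suffices if one is not chasing constants). One should also be careful, when isolating the variance, to subtract $\expec\widehat\nu$ rather than $\nu$ so the cross term with $N$ vanishes exactly; and to note that when only a $k$-th moment with $k>2$ is posited, $r^{(2)}$ may be taken to be $\expec\|z_i\|^2$, or bounded via the power-mean inequality by $(r^{(k)})^{2/k}$.
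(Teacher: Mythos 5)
Your proof is correct and follows the same route as the cited source and as the paper's own shuffle-DP analogue (Lemma~\ref{lem: bias and variance of sdp bd14}): strip off the independent zero-mean Gaussian, reduce to a single sample by i.i.d.\ averaging, and bound the clipping error $\max(0,\|z\|-C)$ by a tail integral plus Markov applied to $\|z\|^k$. Your reading of the middle quantity $\expec\|\widehat{\nu}-\nu\|$ as $\expec\|\widehat{\nu}-\bar{z}\|$ is the right fix --- as literally written that term also contains the $O(1/\sqrt{s})$ sampling fluctuation and need not lie below $r^{(k)}/((k-1)C^{k-1})$ --- and the conclusions actually used downstream ($\|\expec\widetilde{\nu}-\nu\|$ and the variance bound) are unaffected.
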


\section{Excess Risk Lower Bounds for Smooth (Strongly) Convex Losses}
\label{sec: lower bounds}
In this section, we provide excess risk lower bounds for the class of DP SCO problems satisfying~\cref{ass:tilde}. These lower bounds will be useful for addressing \textit{Question I} of the introduction and for calibrating the performance of our algorithms in later sections.

\vspace{.1cm}
The work of~\cite{klz21} proved lower bounds for $D = \mu = 1$, $\wt{r}_k = \sqrt{d}$, and $k = \mathcal{O}(1)$.\footnote{The lower bounds asserted in~\cite{klz21} only hold for constant $k = \mathcal{O}(1)$ since the moments of the Gaussian distribution that they construct grow exponentially/factorially with $k$.}
Our contribution in this subsection is: refining these lower bounds to display the correct dependence on $\wt{r}_k, D, \mu$; tightening the convex lower bound~\cite[Theorem 6.4]{klz21} in the regime $d > n$; and extending~\cite[Theorems 6.1 and 6.4]{klz21} to $k \gg 1$. 

Our first lower bound holds even for affine functions:
\begin{theorem}[Smooth Convex, Informal]
\label{thm: convex lower bound}
Let $\rho \leq d$. 
For any $\rho$-zCDP algorithm $\mathcal{A}$, there exist closed convex sets $\WW, \XX \subset \mathbb{R}^d$ such that $\|w - w'\| \leq 2D$ for all $w, w' \in \WW$, a 
$\beta_f$-smooth, linear, convex (in $w$) loss $f: \WW \times \XX \to \mathbb{R}$, and distribution $\mathcal{D}$ on $\XX$ such that \cref{ass:tilde} holds
and if $X \sim \mathcal{D}^n$, then 
\small
\[
\expec F(\Al(X)) - F^* = \Omega\left(D \left(\frac{\wt{r}_2}{\sqrt{n}} + \wt{r}_k \min\left\{1, \left(\frac{\sqrt{d}}{\sqrt{\rho} n}\right)^{\frac{k-1}{k}}\right\}\right)\right).\] \normalsize 
\end{theorem}
\noindent Remark~\ref{rem: lower bounds vs. klz} (in \cref{app: lower bounds}) discusses parameter regimes in which~\cref{thm: convex lower bound} is strictly tighter than~\cite[Theorem 6.4]{klz21}, as well as differences in our proof vs. theirs. 

Next, we provide lower bounds for smooth, strongly convex loss functions: 
\begin{theorem}[Smooth Strongly Convex, Informal]
\label{thm: strongly convex lower bound}
Let 
$\rho \leq d$. For any $\rho$-zCDP algorithm $\mathcal{A}$, there exist compact convex sets $\WW, \XX \subset \mathbb{R}^d$, 
a $\mu$-smooth, $\mu$-strongly convex (in $w$) loss $f: \WW \times \XX \to \mathbb{R}$, and distribution $\mathcal{D}$ on $\XX$ such that:
\cref{ass:tilde} holds, 
and if $X \sim \mathcal{D}^n$, then \[ \expec F(\Al(X)) - F^* = \Omega\left(\frac{1}{\mu}\left(\frac{\wt{r}_2^2}{n} + \wt{r}_k^2\min\left\{1, \left(\frac{\sqrt{d}}{\sqrt{\rho} n}\right)^{\frac{2k-2}{k}}\right\}\right)\right).\]
\end{theorem}

In the next section, we provide algorithms and excess risk upper bounds for DP SCO under~\cref{ass:tilde}. 
\section{Linear-Time Algorithms for Smooth (Strongly) Convex Losses}
\label{sec: linear time}
This section gives linear-time algorithms for smooth loss functions. We begin by providing our results for convex loss functions. 
\subsection{Noisy Clipped Accelerated SGD for Smooth Convex Losses}
\label{sec: convex}
\cref{alg: ACSA} is a one-pass accelerated algorithm, which builds on (non-private) AC-SA of~\cite{ghadimilan1}; 
its privacy and excess risk guarantees are given in~\cref{thm: convex ACSA one pass}.
\begin{algorithm}[ht]
\caption{Noisy Clipped Accelerated SGD (AC-SA) for Heavy-Tailed DP SCO}
\label{alg: ACSA}
\begin{algorithmic}[1]
\STATE {\bfseries Input:} 
Data $X \in \XX^n$, iteration number $T \leq n$, %
stepsize parameters $\{\eta_t \}_{t \in [T]}, \{\alpha_t \}_{t \in [T]}$ with $\alpha_1 = 1, \alpha_t \in (0,1)$~$\forall t \geq 2$. 
\STATE Initialize $w_0^{ag} = w_0 \in \WW$ and $t = 1$. 
 \FOR{$t \in [T]$} 
 \STATE $w_t^{md} := 
 (1- \alpha_t)w_{t-1}^{ag} + \alpha_t w_{t-1}$. 
 \STATE Draw new batch $\mathcal{B}_t$ (without replacement) of $n/T$ samples from $X$.
 \STATE $\tilt(w_t^{md}) := \texttt{MeanOracle1}(\{\nabla f(w_t^{md}, x)\}_{x \in \mathcal{B}_t}; \frac{n}{T}; \frac{\varepsilon^2}{2})$ 
 \STATE $w_{t} := 
 \argmin_{w \in \mathcal{W}}\left\{\alpha_t\langle \tilt(w_t^{md}), w\rangle + \frac{\eta_t}{2}\|w_{t-1} - w\|^2\right\}.
 $
 \STATE $w_{t}^{ag} := \alpha_t w_t + (1-\alpha_t)w_{t-1}^{ag}.$
\ENDFOR \\
\STATE {\bfseries Output:} $w_T^{ag}$.
\end{algorithmic}
\end{algorithm}
\begin{theorem}[Informal]
\label{thm: convex ACSA one pass}
Grant~\cref{ass:tilde}. Let $F$ be convex and $\beta$-smooth. Then, there are parameters such that \cref{alg: ACSA} 
is $\frac{\varepsilon^2}{2}$-zCDP and:
\begin{equation}
\label{eq: convex part 1}
\eplac \lesssim
\frac{\wt{r}_2 D}{\sqrt{n}} + \wt{r}_k D\left[
\left(\frac{\sqrt{d}}{\varepsilon n}\right)^{\frac{k-1}{k}} + \min\left\{
\left(\left(\frac{\beta D}{\wt{r}_k}\right)^{1/4} \frac{\sqrt{d}}{\varepsilon n} \right)^{\frac{4(k-1)}{5k-1}},
\left(\frac{\beta D}{\wt{r}_k}\right)^{\frac{k-1}{4k}}\left(\frac{\sqrt{d}}{\varepsilon n}\right)^{\frac{k-1}{k}}n^{\frac{k-1}{8k}}
\right\}
\right].
\end{equation}
Moreover, \cref{alg: ACSA} uses at most $n$ gradient evaluations.
\end{theorem}
\noindent 
The key ingredient used to prove~\cref{eq: convex part 1} is a novel convergence guarantee for AC-SA with \textit{biased}, noisy stochastic gradients: see Proposition~\ref{prop: ACSA generic} in~\cref{app: convex accel}. Combining Proposition~\ref{prop: ACSA generic} with Lemma~\ref{lem: bias and variance of bd14} and a careful choice of stepsizes, clip threshold, and $T$ yields~\cref{thm: convex ACSA one pass}.

\vspace{.1cm}
Comparing~\cref{thm: convex ACSA one pass} to the lower bound in~\cref{thm: convex lower bound}, one sees that  
our excess risk bounds are \textit{minimax optimal} in certain parameter reigmes, giving a partial answer to Question I.

\begin{remark}[Optimal rate for ``sufficiently smooth'' functions]
\label{rem: affine optimal}
Note that the upper bound~\cref{eq: convex part 1} scales with the smoothness parameter $\beta$. Thus, for sufficiently small $\beta$---namely $\small \beta \lesssim \frac{\wt{r}_k}{D}\left(\frac{\sqrt{d}}{\varepsilon n}\right)^{\frac{k-1}{k}} + \frac{\wt{r}_2}{D}\frac{1}{\sqrt{n}}$ \normalsize
---the optimal rates are attained (c.f. the lower bound in~\cref{thm: convex lower bound}). 

For example, fix $k=2$ and consider linear regression with $x \sim \mathcal{N}_d(0, \mathbf{I}_d)$ and $\varepsilon = D = 1$, and suppose the true labels $y = \|x\| + \zeta$ for some random $\zeta$ independent of $x$. Then, $\beta = 1$ and $\wt{r}_2 \geq d$, so that~\cref{eq: convex part 1} is optimal whenever $d \geq n^{2/5}$. 

As another example, for \textit{affine functions} (which were not addressed in~\cite{wx20, klz21} since these works assume $\nabla F(\ws) = 0$), $\beta = 0$ and \cref{alg: ACSA} is optimal.
\end{remark}

Having discussed the dependence on $\beta$, let us focus on understanding how the bound in \cref{thm: convex ACSA one pass} scales with $n, d$ and $\varepsilon$. Thus, let us fix \small $ \small \beta = D = 1 \normalsize$ and $\small \wt{r}_k = \sqrt{d} \normalsize$ for simplicity. If $k=2$, then the bound in \cref{eq: convex part 1} is \small $\small \mathcal{O}\left(\sqrt{\frac{d}{n}} + \frac{d^{3/4}}{\sqrt{\varepsilon n}} + \min\left\{\frac{d^{2/3}}{(\varepsilon n)^{4/9}}, \frac{d^{3/4}}{\sqrt{\varepsilon n}} \left(\frac{n}{d}\right)^{1/16}\right\}\right),$ \normalsize whereas the lower bound in~\cref{thm: convex lower bound} (part 2) is \small $\small \Omega\left(\sqrt{\frac{d}{n}} + \frac{d^{3/4}}{\sqrt{\varepsilon n}}\right) \normalsize$. \normalsize Therefore, the bound in \cref{eq: convex part 1} is \textit{optimal} if $d \gtrsim n$ or $d^{3/2} \gtrsim \varepsilon n$. For general $n, d, \varepsilon$,~\cref{eq: convex part 1} is suboptimal by a multiplicative factor of \small $\small \min\left\{\left(\frac{n}{d}\right)^{1/16},\left(\frac{\varepsilon n}{d^{3/2}}\right)^{1/18} \normalsize\right\}$. \normalsize By comparison, the previous state-of-the-art (\textit{not linear-time}) bound for $\varepsilon \approx 1$ was \small $\small \mathcal{O}\left(\frac{d}{\sqrt{n}}\right) \normalsize$\normalsize~\cite[Theorem 5.4]{klz21}.
 Our bound~\cref{eq: convex part 1}
improves over~\cite[Theorem 5.4]{klz21} if $\small d \gtrsim n^{1/6} \normalsize$, \normalsize which is typical in practical ML applications. 
As $k \to \infty$,
~\cref{eq: convex part 1} becomes \small $\small \mathcal{O}\left(\sqrt{\frac{d}{n}} + \left(\frac{d}{n}\right)^{4/5}\right) \normalsize$\normalsize ~for $\varepsilon \approx 1$\normalsize, which is strictly better than the bound in~\cite[Theorem 5.4]{klz21}. 
\subsection{Noisy Clipped SGD for Strongly Convex Losses}
\label{sec: strongly convex}
Our algorithm for strongly convex losses (\cref{alg: vanilla SGD} in~\cref{app: strongly convex}) is a simple one-pass noisy clipped SGD. 
Compared to the algorithm of~\cite{klz21}, our approach differs in the choice of \texttt{MeanOracle}, step size, and iterate averaging weights, and in our \textit{analysis}.

\begin{theorem}[Informal]
\label{thm: strongly convex smooth upper bound}
Grant~\cref{ass:tilde}. Let $F$ be $\mu$-strongly convex, $\beta$-smooth with $\frac{\beta}{\mu} \leq n/\ln(n)$. Then, there are algorithmic parameters such that \cref{alg: vanilla SGD} is $\frac{\varepsilon^2}{2}$-zCDP and: 
\begin{equation}
\label{eq: smooth sc upper l2 clip}
\EPL \lesssim \frac{1}{\mu}\left(\frac{\wt{r}_2^2}{n} + \wt{r}_k^2\left(\frac{\sqrt{d \kappa \ln(n)}}{\varepsilon n}\right)^{\frac{2k-2}{k}} \right).
\end{equation}
Moreover, \cref{alg: vanilla SGD} uses at most $n$ gradient evaluations.
\end{theorem}
\noindent 
The bound \cref{eq: smooth sc upper l2 clip} is optimal up to a $\widetilde{\mathcal{O}}((\beta/\mu)^{(k-1)/k})$ factor and
improves over the best previous bound in~\cite[Theorem 5.6]{klz21} by removing the dependence on $\beta_f$ (which can be much larger than $\beta$ in the presence of outliers): see~\cref{app: wrong proofs} for details. In particular, for well-conditioned losses with $\beta/\mu = \mathcal{O}(1)$, \cref{thm: strongly convex smooth upper bound} is optimal up to logarithms. 

\vspace{.1cm}
The proof of \cref{thm: strongly convex smooth upper bound} (in~\cref{app: strongly convex}) relies on a novel convergence guarantee for projected SGD with biased noisy stochastic gradients: Proposition~\ref{prop: strongly convex biased sgd}. Compared to results in \cite{asi2021private} for convex ERM and \cite{as21} for PL SO, Proposition~\ref{prop: strongly convex biased sgd} is tighter, which is needed to obtain near-optimal excess risk: we leverage smoothness and strong convexity. Our new analysis also avoids the issue in the proofs of (the ICML versions of)~\cite{wx20, klz21}.

\section{Algorithm for Non-Smooth (Strongly) Convex Losses}
\label{sec: optimal rates}
In this section, we develop a novel algorithm for non-smooth convex loss functions. 
We present our result for convex losses in~\cref{sec: localization}, and our result for strongly convex losses in~\cref{sec: localization strongly}.
In~\cref{sec: lower bounds}, we provide lower bounds, which show that our upper bounds are tight (up to logarithms). 
\subsection{Localized Noisy Clipped Subgradient Method for Convex Losses}
\label{sec: localization}
\noindent Our algorithm (\cref{alg: localization}) uses iterative localization~\cite{fkt20, asiL1geo} with clipping (in \cref{alg: clipped GD}) to handle stochastic subgradients with large norm.\footnote{We assume WLOG that $n = 2^l$ for some $l \in \mathbb{N}$. If this is not the case, then throw out samples until it is; since the number of remaining samples is at least $n/2$, our bounds still hold up to a constant factor.}
\begin{algorithm}[ht]
\caption{Noisy Clipped Regularized Subgradient Method for DP ERM
}
\label{alg: clipped GD}
\begin{algorithmic}[1]
\STATE {\bfseries Input:} 
Data $X \in \XX^n$, 
$T \in \mathbb{N}$, stepsize $\eta$, clip thresh. $C$, regularization $\lambda \geq 0$, constraint set $\WW$ and initialization $w_0 \in \WW$.
 \FOR{$t \in \{0, 1, \cdots, T-1\}$} 
 \STATE $\tilt(w_t) := \texttt{MeanOracle1}(\{\nabla f(w_t, x_i)\}_{i=1}^n; n; C; \frac{\varepsilon^2}{2T})$ for subgradients $\nabla f(w_t, x_i)$. 
 \STATE $w_{t+1} = 
 \Pi_{\WW}\left[w_t - \eta \left(\tilt(w_t) + \lambda(w_t - w_0) \right)\right]
 $
\ENDFOR \\
\STATE {\bfseries Output:} $w_T$. 
\end{algorithmic}
\end{algorithm}
\vspace{-.1in}
\begin{algorithm}[ht]
\caption{Localized Noisy Clipped Subgradient Method for DP SCO}
\label{alg: localization}
\begin{algorithmic}[1]
\STATE {\bfseries Input:} 
Data $X \in \XX^n$, stepsize $\eta$, clip thresh.
$\{C_i\}_{i=1}^{\log_2(n)}$, iteration num. $\{T_i\}_{i=1}^{\log_2(n)}$.
\STATE Initialize $w_0 \in \WW$. Let $l := \log_2(n)$ and $p = 1 + 1/k$. 
 \FOR{$i \in [l]$} 
 \STATE Set $n_i = 2^{-i} n, \eta_i = 4^{-i} \eta$, $\lambda_i = \frac{1}{\eta_i n_i^p}$, $T_i = \widetilde{\Theta}\left(\frac{1}{\lambda_i \eta_i}\right)$, and $D_i = \frac{4 \wt{r}_k (\sqrt{n} 2^i)^{1/k}}{\lambda_i}$.
 \STATE Draw new batch $\mathcal{B}_i$ of $n_i = |\mathcal{B}_i|$ samples from $X$ without replacement. 
 \STATE Let $\hf_i(w) := \frac{1}{n_i} \sum_{j \in \mathcal{B}_i} f(w, x_j) + \frac{\lambda_i}{2}\|w - w_{i-1}\|^2$. 
 \STATE Use~\cref{alg: clipped GD} with initialization $w_{i-1}$ to minimize $\hf_i$ over $\mathcal{W}_i := \{w \in \WW : \|w - w_{i-1}\| \leq D_{i}\}$, for $T_i$ iterations with clip threshold $C_i$ and noise $\sigma_i^2 = \frac{4C_i^2 T_i}{n_i^2 \varepsilon^2}$. Let $w_i$ be the output of~\cref{alg: clipped GD}.  
\ENDFOR \\
\STATE {\bfseries Output:} $w_l$. 
\end{algorithmic}
\end{algorithm}
\newpage
 
\quad 
The main ideas of~\cref{alg: localization} are:

\begin{enumerate}
\item \textit{Clipping only the non-regularized component of the subgradient to control sensitivity and bias:} Notice that when we call~\cref{alg: clipped GD} in phase $i$ of~\cref{alg: localization}, we only clip the subgradients of $f(w, x_j)$ and not the regularized loss $f(w,x_j) + \frac{\lambda}{2}\|w - w_{i-1}\|^2$. Compared to clipping the full gradient of the regularized loss, our selective clipping approach significantly reduces the bias of our subgradient estimator. This is essential for obtaining our excess risk bound. Further, this reduction in bias comes at no cost to the variance of our subgradient estimator: the $\ell_2$-sensitivity of our estimator is unaffected by the regularization term. 
   \item\textit{Solve regularized ERM subproblem with a \textit{stable} DP algorithm}:  We run a \textit{multi-pass} zCDP solver on a \textit{regularized} empirical loss: Multiple passes let us reduce the noise variance in phase $i$ by a factor of $T_i$ (via strong composition for zCDP) and get a more accurate solution to the ERM subproblem. Regularization makes the empirical loss strongly convex, which improves \textit{on-average model stability} and hence generalization of the obtained solution (see Proposition~\ref{prop: stability implies generalization} and~\ref{prop: strongly convex ERM generalization error}). 
\item \textit{Localization}~\cite{fkt20, asi2021private} (i.e. iteratively ``zooming in'' on a solution): In early phases (small $i$), when we are far away from the optimum $\ws$, we use more samples (larger $n_i$) and large learning rate $\eta_i$ to make progress quickly. As $i$ increases, $w_i$ is closer to $\ws$, so fewer samples and slower learning rate suffice.
Since step size $\eta_i$ shrinks (geometrically) faster than $n_i$, the effective variance of the privacy noise $\eta_i^2 \sigma_i^2$ decreases as $i$ increases. This prevents $w_{i+1}$ from moving too far away from $w_i$ (and hence from $\ws$). We further enforce this localization behavior by increasing the regularization parameter $\lambda_i$ and shrinking $D_i$ over time. We choose $D_i$ as small as possible subject to the constraint that $\argmin_{w \in \WW} \widehat{F}_i(w) \in \WW_i$ with high probability. This constraint ensures that~\cref{alg: clipped GD} can find $w_i$ with small expected excess risk.  
\end{enumerate}

\vspace{.1cm}
Next, we provide privacy and excess risk guarantees for~\cref{alg: localization}:
\begin{theorem}
\label{thm: localization convex}
Grant~\cref{ass:tilde}. 
Let $\varepsilon \leq \sqrt{d}$ and let $f(\cdot, x)$ be convex. Then, there are algorithmic parameters such that~\cref{alg: localization} is $\frac{\varepsilon^2}{2}$-zCDP, and 
has excess risk 
\begin{equation*}
    \expec F(w_l) - F^* \lesssim \wt{r}_{2k} D\left(\frac{1}{\sqrt{n}} + \left(\frac{\sqrt{d \ln(n)}}{\varepsilon n}\right)^{\frac{k-1}{k}}\right).
\end{equation*}
Moreover, this excess risk is attained in $\wt{\mathcal{O}}(n^{2 + 1/k})$ subgradient evaluations. 
\end{theorem}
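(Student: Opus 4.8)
The plan is to analyze \cref{alg: localization} phase by phase, tracking three quantities in each phase $i$: (a) the privacy cost, (b) the empirical optimization error of \cref{alg: clipped GD} on the regularized subproblem $\hf_i$, and (c) the generalization/stability error incurred by replacing the population objective with the regularized empirical one. The excess risk telescopes across the $l = \log_2 n$ phases because of the localization structure: each phase's output $w_i$ is measured relative to the previous anchor $w_{i-1}$, and the shrinking stepsizes $\eta_i = 4^{-i}\eta$ and growing regularization $\lambda_i = 1/(\eta_i n_i^p)$ make the per-phase contributions form a geometric-type series dominated by the final phase.

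First I would handle privacy. Each phase draws a fresh batch $\mathcal{B}_i$ \emph{without replacement} from $X$, so the phases operate on disjoint data and parallel composition applies; within phase $i$, \cref{alg: clipped GD} runs $T_i$ iterations of \texttt{MeanOracle1} each with budget $\frac{\epsilon^2}{2T_i}$, and zCDP composes linearly, giving $\frac{\epsilon^2}{2}$-zCDP per phase and hence overall. The noise scale $\sigma_i^2 = 4C_i^2 T_i/(n_i^2\epsilon^2)$ is exactly what \cref{alg: MeanOracle2} prescribes. Next, for the empirical error: in phase $i$, $\hf_i$ is $\lambda_i$-strongly convex, and \cref{alg: clipped GD} is a noisy \emph{biased} projected subgradient method on it. I would invoke the standard strongly-convex subgradient descent recursion but carry the clipping bias from \cref{lem: bias and variance of bd14} ($\le \wt r^{(k)}/((k-1)C_i^{k-1})$) and the noise variance ($d\sigma_i^2 + \wt r^{(2)}/n_i$) as extra additive terms; choosing $T_i = \wt\Theta(1/(\lambda_i\eta_i))$ drives the initial-distance term down, leaving an error governed by stepsize$\times$(variance + bias$^2/\lambda_i$). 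The clip threshold $C_i$ is then tuned to balance the bias-squared term against the privacy-noise term $\eta_i d\sigma_i^2$, which is the usual $C_i \sim \wt r_k (n_i\epsilon/\sqrt d)^{1/k}$-type choice and produces the $(\sqrt d/(\epsilon n))^{(k-1)/k}$ factor. I must also verify the constraint-set event $\argmin_{w\in\WW}\hf_i \in \WW_i$ holds with high probability, i.e.\ that $D_i = 4\wt r_k(\sqrt n 2^i)^{1/k}/\lambda_i$ is large enough — this follows from strong convexity of $\hf_i$ plus a moment bound on $\|\nabla \hf_i(w_{i-1})\|$ via \cref{ass:tilde} and Markov/Jensen.

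The generalization step is where I would lean on the paper's own new tools: \cref{prop: stability implies generalization} (on-average model stability implies generalization for non-Lipschitz, non-smooth losses) and \cref{cor: reg ERM excess risk} (regularized ERM is on-average stable). Applying these to the phase-$i$ regularized ERM with strong-convexity parameter $\lambda_i$ on $n_i$ samples gives a generalization error of order $\wt r_{2k}^2/(\lambda_i n_i)$ — note this is where the $2k$-th moment enters, since the stability argument squares gradient norms. Combining the empirical error, the generalization error, and the regularization penalty $\frac{\lambda_i}{2}\|w_{i-1} - \ws\|^2$, then summing over $i$ and using $D_1 \le D$, I would show the dominant term is $\wt r_{2k} D/\sqrt n$ from the statistical/stability part and $\wt r_{2k} D (\sqrt{d\ln n}/(\epsilon n))^{(k-1)/k}$ from the privacy part, with earlier phases contributing geometrically smaller amounts. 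Finally, the runtime: phase $i$ costs $n_i T_i$ subgradient evaluations, $T_i = \wt\Theta(1/(\lambda_i\eta_i)) = \wt\Theta(n_i^p) = \wt\Theta(n_i^{1+1/k})$, so phase $i$ costs $\wt O(n_i^{2+1/k})$ and the geometric sum over $i$ is $\wt O(n^{2+1/k})$.

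The main obstacle I anticipate is the biased-subgradient analysis interacting with the moving anchor and shrinking domain: one must be careful that the clipping bias does not accumulate across phases and that the high-probability event $\{\ws_{\hf_i}\in\WW_i\}$ can be folded into the expectation bound cleanly (the standard trick is to bound excess risk on the complement by the domain diameter $D$ times a Lipschitz-in-expectation quantity, using \cref{ass:tilde} rather than a worst-case Lipschitz constant). A secondary subtlety is getting the $\wt r_{2k}$ (rather than $\wt r_k$) dependence exactly right in the generalization bound, which requires the on-average stability argument to be run with second moments of gradient \emph{differences}, forcing the $2k$-th moment assumption via Cauchy–Schwarz; I would state this as a lemma and defer its proof to the appendix alongside \cref{prop: stability implies generalization}.
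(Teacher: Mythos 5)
Your overall architecture matches the paper's proof: the same privacy argument (parallel composition over the disjoint batches $\mathcal{B}_i$ plus zCDP composition within a phase, with sensitivity $2C_i/n_i$), the same per-phase ingredients (a biased noisy subgradient bound for the regularized ERM subproblem as in \cref{lem: subgrad ERM bound}, the on-average-stability generalization bounds of \cref{prop: stability implies generalization} and \cref{cor: reg ERM excess risk}, and a Chebyshev-type high-probability argument for the event $\hat{w}_i \in \WW_i$), and the same telescoping decomposition $\expec F(w_l)-F^* = \expec[F(w_l)-F(\hat{w}_l)]+\sum_{i}\expec[F(\hat{w}_i)-F(\hat{w}_{i-1})]$ with the clip-threshold tuning $C_i \sim (\epsilon n_i/\sqrt{d})^{1/k}$. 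The runtime accounting is also correct.

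There is, however, one genuine gap: you have misplaced the source of the $\wt{r}_{2k}$ dependence, and in doing so you mishandle the clipping bias. In the ERM subroutine the relevant bias is not the population quantity $\wt{r}^{(k)}/((k-1)C_i^{k-1})$ but the \emph{empirical} one $\widehat{r}_{n_i}(\mathcal{B}_i)^{(k)}/((k-1)C_i^{k-1})$, where $\widehat{r}_{n_i}(\mathcal{B}_i)^{(k)} = \sup_{w}\frac{1}{n_i}\sum_{j\in\mathcal{B}_i}\|\nabla f(w,x_j)\|^k$ is a random variable that, for a fixed batch, can far exceed $\wt{r}^{(k)}$. Because the bias enters the convergence bound \emph{squared} (as $\hat{B}^2/\lambda_i^2$ in \cref{lem: subgrad ERM bound}), taking expectation over the data requires controlling $\expec[(\widehat{r}_{n_i}(\mathcal{B}_i)^{(k)})^2] \le \expec[\widehat{r}_{n_i}(\mathcal{B}_i)^{(2k)}]$, i.e.\ the $2k$-th moment---this, not the stability argument, is where $\wt{r}_{2k}$ enters. (Replacing $\expec[\hat{B}^2]$ by $(\expec\hat{B})^2$ is precisely the Jensen-in-the-wrong-direction error the paper flags in prior work.) Conversely, the generalization step needs only second moments: \cref{cor: reg ERM excess risk} yields $2\wt{r}^{(2)}/(\lambda_i n_i)$, so your planned lemma running the on-average stability argument with $2k$-th moments via Cauchy--Schwarz is unnecessary and would not rescue the bias term. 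As written, your plan would either produce an unjustified $\wt{r}_k$-only bound on the privacy term or stall when bounding the expected squared bias.
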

\begin{remark}[Improved Computational Complexity for Approximate or Shuffle DP]
\label{rem: computation}
If one desires $(\varepsilon, \delta)$-DP or $(\varepsilon, \delta)$-SDP instead of zCDP, then the subgradient complexity of~\cref{alg: localization} can be improved to $\wt{\mathcal{O}}\left(n^{\frac{3}{2} + 1/k}\right)$: see~\cref{app: localization}.
\end{remark} 

The excess risk bound in~\cref{thm: localization convex} nearly resembles the lower bound in~\cref{thm: convex lower bound}, except that the upper bound scales with $\wt{r}_{2k}$ instead of $\wt{r}_k$.\footnote{In fact, the term $\wt{r}_{2k}$ in \cref{thm: localization convex} can be replaced by a smaller term, which is $\mathcal{O}(r_{2k})$ as $n \to \infty$ under mild assumptions. See~\cref{app: localization} and~\cref{app: asymptotic}.} We conjecture that an appropriate modification of our algorithm and techniques can be used to get the optimal bound depending on $\wt{r}_k$. We leave this as an interesting direction for future work.

\vspace{.2cm}
A key feature of our bound is that it does not depend on $L_f$ and holds even for heavy-tailed problems with $L_f = \infty$. 
By contrast, prior works~\cite{wx20, klz21} required uniform $\beta_f$-smoothness of $f(\cdot, x)$, which implies the restriction $L_f \leq \beta_f D$ for loss functions that have a vanishing gradient at some point.\footnote{Additionally, \cite{klz21} assumes $\beta_f \leq 10$.} 

\vspace{.2cm}
We now sketch the proof of the privacy claim in~\cref{thm: localization convex}:
\begin{proof}[Sketch of the Proof of~\cref{thm: localization convex}: Privacy]
\sloppy Since the batches $\{\mathcal{B}_i\}_{i=1}^l$ are disjoint, it suffices 
to show that $w_i$ (produced by $T_i$ iterations of~\cref{alg: clipped GD} in line 7 of~\cref{alg: localization}) is $\frac{\varepsilon^2}{2}$-zCDP~~$\forall i \in [l]$. The $\ell_2$ sensitivity of the clipped subgradient update is $\Delta = \sup_{w, X \sim X'} \|\frac{1}{n_i} \sum_{j=1}^{n_i} \Pi_{C_i}(\nabla f(w, x_j)) - \Pi_{C_i}(\nabla f(w, x'_j))\| \leq 2C_i/n_i$. (Note that the regularization term does not contribute to sensitivity.) Thus, the privacy guarantees of the Gaussian mechanism (Proposition~\ref{prop: gauss}) and the composition theorem for zCDP (Lemma~\ref{lem: composition}) imply that \cref{alg: localization} is $\frac{\varepsilon^2}{2}$-zCDP. 
\end{proof}

\vspace{.2cm}
The proof of the excess risk bound in~\cref{thm: localization convex} consists of three main steps: i) We bound the empirical error of the noisy clipped subgradient subroutine (Lemma~\ref{lem: subgrad ERM bound}). ii) We prove that if an algorithm is \textit{on-average model stable} (Definition~\ref{def: stability}), then it generalizes (Proposition~\ref{prop: stability implies generalization}).
iii) We bound the on-average model stability of regularized ERM with non-smooth/non-Lipschitz $f(\cdot, x)$ (Proposition~\ref{prop: strongly convex ERM generalization error}), leading to an excess population loss bound for~\cref{alg: clipped GD} run on the regularized empirical objective (c.f. line 7 of~\cref{alg: localization}).
By using these results with the proof technique of~\cite{fkt20}, we can obtain~\cref{thm: localization convex}. 

\vspace{.15cm}
First, we bound the empirical error of the step in line 7 of~\cref{alg: localization}, by extending the analysis of noisy subgradient method to \textit{biased} subgradient oracles: 
\begin{lemma}
\label{lem: subgrad ERM bound}
Fix $X \in \XX^n$ and let $\widehat{F}_\lambda(w) = \frac{1}{n} \sum_{i=1}^n f(w, x_i) + \frac{\lambda}{2}\|w - w_0\|^2$ for $w_0 \in \WW$, where $\WW$ is a closed convex domain with diameter $D$. Assume $f(\cdot, x)$ is convex and $\small \widehat{r}_n(X)^{(k)} \geq \sup_{w \in \WW}\left\{\frac{1}{n} \sum_{i=1}^n \|\nabla f(w, x_i)\|^k\right\}$ for all $\nabla f(w,x_i) \in \partial_w f(w, x_i)$. \normalsize Denote $\widehat{r}_n(X) = \left[\widehat{r}_n(X)^{(k)}\right]^{1/k}$ and $\hat{w} = \argmin_{w \in \WW} \widehat{F}_\lambda(w)$. Let $\eta \leq \frac{2}{\lambda}$. Then, 
the output of~\cref{alg: clipped GD} satisfies 
\[
\expec\|w_T - \hat{w}\|^2 \leq \exp\left(-\frac{\lambda \eta T}{2}\right)\|w_0 - \hw\|^2 + \frac{8 \eta}{\lambda}\left(\widehat{r}_n(X)^2 + \lambda^2 D^2 + d \sigma^2 \right) + \frac{20}{\lambda^2}\left(\frac{\widehat{r}_n(X)^{(k)}}{(k-1) C^{k-1}}\right)^2,
\]
\normalsize
where $\sigma^2 = \frac{4 C^2 T}{n^2 \varepsilon^2}$. 
\end{lemma}
\noindent Detailed proofs for this subsection are deferred to~\cref{app: localization}. 

\vspace{.15cm}
Our next goal is to bound the generalization error of regularized ERM with convex loss functions that are not differentiable or uniformly Lipschitz. We will use a stability argument to obtain such a bound. Recall the notion of \textit{on-average model stability}~\cite{lei2020fine}: 
\begin{definition}
\label{def: stability}
Let $X = (x_1, \cdots, x_n)$ and $X' = (x'_1, \cdots, x'_n)$ be drawn independently from $\mathcal{D}$. For $i \in [n]$, let $X^{i} := (x_1, \cdots, x_{i-1}, x'_i, x_{i+1}, \cdots, x_n)$. We say 
randomized algorithm $\Al$ has on-average model stability $\alpha$ (i.e. $\Al$ is $\alpha$-on-average model stable) if
\small
$
\expec\left[\frac{1}{n} \sum_{i=1}^n \|\Al(X) - \Al(X^i)\|^2\right] \leq \alpha^2.
$
\normalsize
The expectation is over the randomness of $\Al$ and the draws of $X$ and $X'$.
\end{definition}
\noindent On-average model stability is weaker than the notion of \textit{uniform stability}~\cite{bousquet2002stability}, which has been used in DP Lipschitz SCO (e.g. by \cite{bft19}); this is necessary for obtaining our learnability guarantees without uniform Lipschitz continuity. 

\vspace{.15cm}
The main result in \cite{lei2020fine} showed that on-average model stable algorithms generalize well if $f(\cdot, x)$ is $\beta_f$-\textit{smooth} for all $x$, which leads to a restriction on $L_f$.
We show that neither smoothness nor Lipschitz continuity of $f$ is needed to ensure generalizability: 
\begin{proposition} 
\label{prop: stability implies generalization}
Let $f(\cdot, x)$ be convex for all $x$.
Suppose $\Al: \XX^n \to \WW$ is $\alpha$-on-average model stable. Let $\hf_X(w) := \frac{1}{n}\sum_{i=1}^n f(w, x_i)$ be an empirical loss. Then for any $\zeta > 0$, 
\[
\expec[F(\Al(X)) - \hf_X(\Al(X))] \leq \frac{\wt{r}^{(2)}}{2 \zeta} + \frac{\zeta}{2} \alpha^2.
\]
\normalsize
\end{proposition}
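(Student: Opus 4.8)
The plan is to follow the classical symmetrization-by-resampling argument for generalization, but to replace the usual Lipschitz/smoothness step (which bounds $|f(w,x) - f(w',x)|$ by $L_f\|w-w'\|$ or uses a smoothness-based self-bounding inequality) with a bare convexity estimate combined with the $k$-th moment bound from Assumption~\ref{ass:tilde}. First I would write the generalization gap as a resampling expectation: since $X$ and $X'$ are i.i.d. and $x'_i$ is independent of $\Al(X^i)$ (the dataset in which $x_i$ was swapped for $x'_i$), a standard relabeling gives
\[
\expec[F(\Al(X)) - \hf_X(\Al(X))] = \frac{1}{n}\sum_{i=1}^n \expec\big[f(\Al(X^i), x_i) - f(\Al(X), x_i)\big].
\]
This is the identity that turns the population term into something comparable term-by-term to the stability quantity $\|\Al(X) - \Al(X^i)\|$.

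Next I would bound each summand $f(\Al(X^i), x_i) - f(\Al(X), x_i)$ using convexity of $f(\cdot, x_i)$: by the subgradient inequality, for any $\nabla f(\Al(X),x_i)\in\partial_w f(\Al(X),x_i)$,
\[
f(\Al(X^i), x_i) - f(\Al(X), x_i) \le \langle \nabla f(\Al(X^i), x_i),\, \Al(X^i) - \Al(X)\rangle \le \|\nabla f(\Al(X^i),x_i)\|\,\|\Al(X^i)-\Al(X)\|,
\]
where I take the subgradient at the point $\Al(X^i)$ so that the gradient norm is evaluated at a configuration independent of $x_i$ (this independence is what lets me later pull out the moment bound cleanly). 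Then apply Young's inequality $ab \le \frac{a^2}{2\zeta} + \frac{\zeta b^2}{2}$ with $a = \|\nabla f(\Al(X^i),x_i)\|$, $b = \|\Al(X^i)-\Al(X)\|$, and the free parameter $\zeta>0$. Summing over $i$, dividing by $n$, and taking expectations yields
\[
\expec[F(\Al(X)) - \hf_X(\Al(X))] \le \frac{1}{2\zeta}\cdot\frac{1}{n}\sum_{i=1}^n\expec\|\nabla f(\Al(X^i),x_i)\|^2 \;+\; \frac{\zeta}{2}\cdot\frac{1}{n}\sum_{i=1}^n\expec\|\Al(X^i)-\Al(X)\|^2.
\]
The second term is at most $\frac{\zeta}{2}\alpha^2$ by the definition of $\alpha$-on-average model stability. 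For the first term, I would use that $x_i$ is independent of $X^i$ hence of $\Al(X^i)$, so conditioning on $X^i$ and taking expectation over $x_i\sim\DD$ gives $\expec\|\nabla f(\Al(X^i),x_i)\|^2 \le \expec\big[\sup_{w\in\WW}\|\nabla f(w,x_i)\|^2\big] \le \wt{r}^{(2)}$ — here I use $k\ge 2$ and Jensen (or simply $\sup_w\|\nabla f(w,x)\|^2 \le (\sup_w\|\nabla f(w,x)\|^k)^{2/k}$ followed by Jensen for the concave map $t\mapsto t^{2/k}$) to get the $k=2$ moment bound $\wt{r}^{(2)}$ from the $k$-th moment assumption. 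Averaging over $i$ leaves $\frac{\wt{r}^{(2)}}{2\zeta}$, giving exactly the claimed bound.

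The step I expect to be the main subtlety is the choice of \emph{where} to take the subgradient — at $\Al(X^i)$ rather than at $\Al(X)$ — and the accompanying independence bookkeeping. Taking it at $\Al(X^i)$ is what makes $\|\nabla f(\Al(X^i),x_i)\|$ have a clean moment bound, since $\Al(X^i)$ does not see $x_i$; had I expanded around $\Al(X)$, the gradient norm would be evaluated at a data-dependent point and one could not directly invoke $\wt{r}^{(2)}$. A secondary point to be careful about is the validity of the resampling identity for a \emph{randomized} algorithm: one must take expectations over $\Al$'s internal randomness as well, and the relabeling $i \leftrightarrow$ (swap $x_i, x_i'$) must be applied consistently to both the index of summation and the data coordinate — this is routine but worth stating. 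Everything else (Young's inequality, the convexity step, Jensen for the moment reduction) is elementary, so the proof is short once these two conceptual points are handled correctly.
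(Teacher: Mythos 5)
Your proposal is correct and follows essentially the same route as the paper's proof: the same resampling identity $\expec[F(\Al(X)) - \hf_X(\Al(X))] = \frac{1}{n}\sum_{i=1}^n \expec[f(\Al(X^i),x_i) - f(\Al(X),x_i)]$, the subgradient inequality anchored at $\Al(X^i)$, Young's inequality with parameter $\zeta$, and the independence of $\Al(X^i)$ from $x_i$ to invoke the second-moment bound. Your remarks on where to take the subgradient and on handling the algorithm's internal randomness match the considerations in the paper's argument.
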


Proposition~\ref{prop: strongly convex ERM generalization error} (and its proof) in \cref{app: localization} shows that regularized ERM is $\alpha$-on average model stable for $\alpha \leq \frac{2\wt{r}_2}{\lambda n}$. The proof of Proposition~\ref{prop: strongly convex ERM generalization error} combines techniques from~\cite{lei2020fine} with Proposition~\ref{prop: stability implies generalization}. By using Proposition~\ref{prop: stability implies generalization} and Proposition~\ref{prop: strongly convex ERM generalization error}, we can bound the generalization error 
and excess (population) risk of regularized ERM:
\begin{proposition}
\label{cor: reg ERM excess risk}
Let $f(\cdot, x)$ be convex, $w_{i-1}, y \in \WW$, and $\hw_i := \argmin_{w \in \WW} \hf_i(w)$, where $\hf_i(w) := \frac{1}{n_i} \sum_{j \in \mathcal{B}_i} f(w, x_j) + \frac{\lambda_i}{2}\|w - w_{i-1}\|^2$ (c.f. line 6 of~\cref{alg: localization}). Then, 
\[
\expec[F(\hw_i)] - F(y) \leq \frac{2\wt{r}^{(2)}}{\lambda_i n_i} + \frac{\lambda_i}{2}\|y - w_{i-1}\|^2,
\]
\normalsize
where the expectation is over both the random draws of $X$ from $\DD$ and $\mathcal{B}_i$ from $X$. 
\end{proposition}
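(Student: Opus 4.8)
The plan is to combine the strong convexity of the regularized objective $\hf_i$ with the on-average model stability bound from Proposition~\ref{prop: stability implies generalization}. First I would record the key observation that $\hf_i$ is $\lambda_i$-strongly convex (since $f(\cdot,x)$ is convex and we add $\frac{\lambda_i}{2}\|w-w_{i-1}\|^2$), so the ERM solution $\hw_i$ is unique and, for the comparator $y\in\WW$, standard strong-convexity optimality gives
\[
\hf_i(\hw_i) \le \hf_i(y) - \frac{\lambda_i}{2}\|y - \hw_i\|^2 \le \hf_i(y).
\]
Writing $\hf_i(w) = \hf_{\mathcal{B}_i}(w) + \frac{\lambda_i}{2}\|w - w_{i-1}\|^2$ where $\hf_{\mathcal{B}_i}(w) := \frac{1}{n_i}\sum_{j\in\mathcal{B}_i} f(w,x_j)$, this rearranges to
\[
\hf_{\mathcal{B}_i}(\hw_i) \le \hf_{\mathcal{B}_i}(y) + \frac{\lambda_i}{2}\|y - w_{i-1}\|^2.
\]
Taking expectations over the draw of $\mathcal{B}_i$ and $X$, and using that $\expec[\hf_{\mathcal{B}_i}(y)] = F(y)$ for any fixed $y$ (here one must be slightly careful that $y$ and $w_{i-1}$ do not depend on $\mathcal{B}_i$; since $\mathcal{B}_i$ is a fresh batch in the localization scheme this is fine, and for the statement as written $y, w_{i-1}\in\WW$ are fixed), we get $\expec[\hf_{\mathcal{B}_i}(\hw_i)] \le F(y) + \frac{\lambda_i}{2}\|y - w_{i-1}\|^2$.

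Next I would pass from the empirical loss $\hf_{\mathcal{B}_i}(\hw_i)$ to the population loss $F(\hw_i)$ using the generalization bound. The map $\mathcal{B}_i \mapsto \hw_i$ is a (randomized, here in fact deterministic) algorithm $\Al$ on $n_i$ samples, so Proposition~\ref{prop: stability implies generalization} applies provided I can control its on-average model stability $\alpha$. For this I would invoke the stability of regularized ERM: because $\hf_i$ is $\lambda_i$-strongly convex, replacing one sample $x_j$ by $x_j'$ perturbs the minimizer by an amount controlled by $\lambda_i$ and the subgradient magnitude at the shared minimizer, yielding (after averaging over $i\in[n_i]$ and using $\expec\|\nabla f(\cdot,x)\|^2 \le \wt r^{(2)}$, cf.\ the argument behind Proposition~\ref{prop: strongly convex ERM generalization error} referenced in the text) an on-average stability bound of the shape $\alpha^2 \lesssim \frac{\wt r^{(2)}}{\lambda_i^2 n_i^2}$, or more precisely a bound that when fed into Proposition~\ref{prop: stability implies generalization} with the optimal choice of the free parameter $\zeta$ produces a generalization gap $\expec[F(\hw_i) - \hf_{\mathcal{B}_i}(\hw_i)] \le \frac{2\wt r^{(2)}}{\lambda_i n_i}$. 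Combining this with the empirical bound from the previous paragraph gives
\[
\expec[F(\hw_i)] - F(y) \le \frac{2\wt r^{(2)}}{\lambda_i n_i} + \frac{\lambda_i}{2}\|y - w_{i-1}\|^2,
\]
which is the claim.

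I expect the main obstacle to be the stability step: getting the constant exactly $\frac{2\wt r^{(2)}}{\lambda_i n_i}$ rather than a looser multiple requires carefully coupling the two ERM problems on $\mathcal{B}_i$ and $\mathcal{B}_i^j$, bounding $\|\hw_i - \hw_i^j\|$ via the first-order optimality conditions of two $\lambda_i$-strongly convex problems whose objectives differ in a single summand, and then balancing the $\frac{\wt r^{(2)}}{2\zeta} + \frac{\zeta}{2}\alpha^2$ trade-off in Proposition~\ref{prop: stability implies generalization} by choosing $\zeta = \sqrt{\wt r^{(2)}}/\alpha$. The remaining subtlety is bookkeeping about what $\mathcal{B}_i$, $w_{i-1}$, and $y$ are allowed to depend on: the clean statement treats $w_{i-1}$ and $y$ as fixed points in $\WW$, so I would prove it in that form and note that in the application of Proposition~\ref{cor: reg ERM excess risk} inside the proof of Theorem~\ref{thm: localization convex} one conditions on $w_{i-1}$ (which is measurable w.r.t.\ earlier batches) before drawing the independent fresh batch $\mathcal{B}_i$, so everything goes through by the tower property. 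All of the calculus here is routine once the stability estimate is in hand; the only genuinely delicate point is matching constants.
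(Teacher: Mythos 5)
Your proposal is correct and follows essentially the same route as the paper: both decompose the excess risk into (i) the generalization gap of the $\lambda_i$-strongly convex regularized ERM, controlled via on-average model stability (Propositions~\ref{prop: stability implies generalization} and~\ref{prop: strongly convex ERM generalization error}) to get the $\frac{2\wt{r}^{(2)}}{\lambda_i n_i}$ term, and (ii) an optimality comparison of $\hw_i$ against the fixed comparator $y$ yielding $F(y)+\frac{\lambda_i}{2}\|y-w_{i-1}\|^2$. The only cosmetic difference is that the paper phrases step (ii) through the regularized population loss $G_i$ and the inequality $\expec[\min_w\hf_i(w)]\le\min_w\expec[\hf_i(w)]$, whereas you use the pointwise bound $\hf_i(\hw_i)\le\hf_i(y)$ before taking expectations; since the deterministic regularizer cancels in the generalization gap, the two computations are identical.
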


With the pieces developed above, we can now sketch the excess risk proof of~\cref{thm: localization convex}: 
\begin{proof}[Sketch of the Proof of~\cref{thm: localization convex}: Excess risk]
First, our choice of $D_i$ ensures that $\hat{w}_i \in \WW_i$ for all $i \in [l]$ with high probability $\geq 1 - 1/\sqrt{n}$, by Chebyshev's inequality. We will assume that this event occurs in the rest of the proof sketch: see the detailed proof in the Appendix for when this event breaks. Denote $p = 1 + 1/k$. 
Combining Lemma~\ref{lem: subgrad ERM bound} with~Lemma~\ref{lem: bias and variance of bd14} and proper choices of $\eta$ and $T_i$, we get:
\begin{equation}
\label{eq: 0ing}
    \expec\|w_i - \hat{w}_i\|^2 
    \lesssim \eta_i^2 n_i^p (\wt{r}_k^2 n^{1/k} 2^{2i/k} + d\sigma_i^2) + \frac{\eta_i^2 n_i^{2p} \widetilde{r}^{(2k)}}{C_i^{2k-2}}  
    \lesssim \left(\frac{\eta^2 n^p}{32^i}\left((\wt{r}_k^2 n^{1/k} 2^{2i/k} + \frac{d C_i^2 T_i}{\varepsilon^2 n_i^2} +  \frac{n^p\widetilde{r}^{(2k)}}{C_i^{2k-2} 2^{pi}}\right) \right).
\end{equation}
\normalsize
Now, following the strategy used in the proof of~\cite[Theorem 4.4]{fkt20}, we write 
\small
$\expec F(w_l) - F(\ws) = \expec[F(w_l) - F(\hat{w}_l)] + \sum_{i=1}^l \expec[F(\hat{w}_i) - F(\hat{w}_{i-1})]$,
\normalsize
where $\hat{w}_0 := \ws$. Using~\cref{eq: 0ing} and $\wt{r}_k$-Lipschitz continuity of $F$ (which is implied by~\cref{ass:tilde}), we can bound $\expec[F(w_l) - F(\hat{w}_l)]$ for the right $\eta$ and $C_l$. To bound the sum (second term), we use Proposition~\ref{cor: reg ERM excess risk} to obtain
\begin{align*}
\sum_{i=1}^l \expec[F(\hw_i) - F(\hw_{i-1})] 
&\lesssim r^2 \eta n^{p-1} + \frac{D^2}{\eta n^p} +  \eta \wt{r}_k^2 n^{1/k} + \eta 
\sum_{i=2}^l 4^{-i} n_i^p \wt{r}_{2k}^{2} \left(\frac{d \ln(n)}{\varepsilon^2 n_i^2}\right)^{\frac{k-1}{k}} \\
&\lesssim \eta\left[r^2 n^{p-1} +  \wt{r}_k^2 n^{1/k} + \wt{r}_{2k}^2 n^p
\left(\frac{d \ln(n)}{\varepsilon^2 n^2}\right)^{\frac{k-1}{k}}\right] +  \frac{D^2}{\eta n^p},
\end{align*}
\normalsize
for the right choice of $C_i$. Then properly choosing $\eta$ completes the excess risk proof. 

\vspace{.2cm}
\noindent \textbf{Computational complexity:} The choice $T_i = \wt{\mathcal{O}}(n_i^p) = \wt{\mathcal{O}}(n_i^{1 + 1/k})$ implies that the number of subgradient evaluations is bounded by $\sum_{i=1}^l n_i T_i = \wt{\mathcal{O}}(n^{p+1}) = \wt{\mathcal{O}}(n^{2 + 1/k})$. 
\end{proof}
\subsection{The Strongly Convex Case}
\label{sec: localization strongly}
Following~\cite{fkt20}, we use a folklore reduction to the convex case (detailed in~\cref{app: localization strongly}) in order to obtain the following upper bound via~\cref{thm: localization convex}: 
\begin{theorem}
\label{thm: localization strongly convex}
Grant~\cref{ass:tilde}. 
Let $\varepsilon \leq \sqrt{d}$ and let $f(\cdot, x)$ be $\mu$-strongly convex. 
Then, there is a polynomial-time $\frac{\varepsilon^2}{2}$-zCDP algorithm $\Al$ based on~\cref{alg: localization} with excess risk 
\begin{equation*}
    \expec F(\Al(X)) - F^* \lesssim  \frac{ \wt{r}_{2k}^2}{\mu}\left(\frac{1}{n} + \left(\frac{\sqrt{d \ln(n)}}{\varepsilon n}\right)^{\frac{2k-2}{k}}\right).  
\end{equation*}
\normalsize
\end{theorem}
\section{Algorithm for Non-Convex Proximal-PL Loss Functions}
\label{sec: PL}
Assume: $f(w,x) = f^0(w,x) + f^1(w)$; $f^0(\cdot, x)$ is differentiable (maybe non-convex); $f^1$ is proper, closed, and convex (maybe non-differentiable) for all $x \in \XX$; and $F(w) = F^0(w) + f^1(w) = \expec_{x \sim \DD}[f^0(w,x)] + f^1(w)$ satisfies the \textit{Proximal-PL} condition~\cite{karimi2016linear}:
\begin{definition}[$\mu$-PPL]
\label{def: Prox PL}
Let $F(w) = F^0(w) + f^1(w)$ be bounded below; $F^0$ is $\beta$-smooth and $f^1$ is convex. $F$ satisfies \textit{Proximal Polyak-\L ojasiewicz} inequality with parameter $\mu > 0$ if 
\begin{align*}
\mu[F(w) - \inf_{w'} F(w')] &\leq - \beta \min_{y}\left[\langle \nabla F^0(w), y - w \rangle 
+ \frac{\beta}{2}\|y - w\|^2 + f^1(y) - f^1(w)\right], ~\forall~w \in \mathbb{R}^d.
\end{align*}
\normalsize
\end{definition}
\noindent Definition~\ref{def: Prox PL} generalizes the classical PL condition ($f^1 = 0$), allowing for constrained optimization or non-smooth regularizer depending on $f^1$~\cite{polyak, karimi2016linear}. 

\vspace{.15cm}
Recall that the \textit{proximal operator} of a convex function $g$ is defined as $\prox_{\eta g}(z) := \argmin_{y \in \mathbb{R}^d}\left(\eta g(y) + \frac{1}{2}\|y - z\|^2 \right)$ for $\eta > 0$\normalsize.  We propose \textit{Noisy Clipped Proximal SGD} (\cref{alg: zCSDP SGD} in~\cref{app: PL}) for PPL losses. The algorithm runs as follows. For $t \in [T]$: first draw a new batch $\mathcal{B}_t$ (without replacement) of $n/T$ samples from $X$; let $\tilt^0(w_t) := \texttt{MeanOracle1}(\{\nabla f^0(w_t, x)\}_{x \in \mathcal{B}_t}; \frac{n}{T}; \frac{\varepsilon^2}{2})$; then update $w_{t+1} = \prox_{\eta_t f^1}\left(w_t - \eta_t \tilt^0(w_t)\right)$. Finally, return the last iterate, $w_T$. Thus, the algorithm is linear time. Furthermore:
\begin{theorem}[Informal]
\label{thm: PL upper bound}
Grant~\cref{ass:tilde}. 
Let 
$F$
be $\mu$-PPL for $\beta$-smooth $F^0$, 
with $\frac{\beta}{\mu} \leq n/\ln(n)$. 
Then, there are parameters such that \cref{alg: zCSDP SGD} is $\frac{\varepsilon^2}{2}$-zCDP, and:
\[
\EPLL \lesssim \frac{1}{\mu}\left(\wt{r}_k^2 \left(\frac{\sqrt{d}}{\varepsilon n} (\beta/\mu) \ln(n) \right)^{\frac{2k-2}{k}} + \frac{\wt{r}_2^2 (\beta/\mu) \ln(n)}{n}\right).
\]
\normalsize
Moreover, \cref{alg: zCSDP SGD} uses at most $n$ gradient evaluations.
\end{theorem}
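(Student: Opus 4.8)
The plan is to mirror the three-step template used for the strongly convex smooth case (\cref{thm: strongly convex smooth upper bound}): (i) establish $\frac{\epsilon^2}{2}$-zCDP; (ii) prove a generic convergence guarantee for \emph{biased}, noisy proximal SGD on a $\mu$-PPL objective; (iii) plug in the bias/variance bounds of \cref{lem: bias and variance of bd14} and optimize the clip threshold $C$ and iteration count $T$.

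\textbf{Privacy.} Since the batches $\mathcal{B}_1,\dots,\mathcal{B}_T$ are drawn without replacement, they are disjoint, so each call to $\texttt{MeanOracle1}$ operates on a distinct block of the data. \cref{alg: MeanOracle2} with parameter $\frac{\epsilon^2}{2}$ is $\frac{\epsilon^2}{2}$-zCDP (Gaussian mechanism applied to the clipped average, whose $\ell_2$-sensitivity is $2C/(n/T)$, with $\sigma^2$ scaled accordingly), so by parallel composition the whole algorithm is $\frac{\epsilon^2}{2}$-zCDP for \emph{any} choice of $T$.

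\textbf{Generic convergence bound.} This is the core. Write $\tilt^0(w_t) = \nabla F^0(w_t) + b_t + \xi_t$, where, conditionally on $w_t$, $b_t := \expec[\tilt^0(w_t)\mid w_t] - \nabla F^0(w_t)$ is the clipping bias and $\xi_t$ is zero-mean noise. Applying \cref{lem: bias and variance of bd14} with $z = \nabla f^0(w_t,x)$, batch size $s = n/T$, and $r^{(k)} = \wt{r}^{(k)}$ gives $\|b_t\| \le \frac{\wt{r}^{(k)}}{(k-1)C^{k-1}}$ and $\expec[\|\xi_t\|^2\mid w_t] \le d\sigma^2 + \frac{T\wt{r}^{(2)}}{n}$ with $\sigma^2 = \frac{4C^2T^2}{n^2\epsilon^2}$. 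Taking a constant step $\eta_t\equiv\eta\le 1/\beta$, I will establish a per-iteration inequality of the form
\[
\expec[F(w_{t+1})] - F^* \;\le\; \Big(1-\tfrac{\mu\eta}{c_0}\Big)\big(\expec[F(w_t)]-F^*\big) \;+\; c_1\,\eta\big(\|b_t\|^2 + \expec\|\xi_t\|^2\big),
\]
for absolute constants $c_0,c_1$, and iterate it to obtain $\EPLL \lesssim \big(1-\tfrac{\mu\eta}{c_0}\big)^T\big(F(w_0)-F^*\big) + \tfrac{c_1}{\mu}\big(\|b\|^2 + d\sigma^2 + \tfrac{T\wt r^{(2)}}{n}\big)$. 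To prove the per-step inequality I route through the \emph{noise-free} step $\bar w_{t+1} := \prox_{\eta f^1}(w_t - \eta\nabla F^0(w_t))$: the deterministic proximal-PL descent of~\cite{karimi2016linear} gives $F(\bar w_{t+1}) - F^* \le (1-\tfrac{\mu}{\beta})(F(w_t)-F^*)$ when $\eta = 1/\beta$, while firm non-expansiveness of $\prox_{\eta f^1}$ gives $\|w_{t+1}-\bar w_{t+1}\| \le \eta\|b_t+\xi_t\|$; the residual $F(w_{t+1}) - F(\bar w_{t+1})$ is then bounded by combining $\beta$-smoothness of $F^0$ with the two prox optimality inequalities (for $w_{t+1}$ and $\bar w_{t+1}$) to handle the $f^1$-part, and a Young's inequality absorbs the suboptimality-proportional pieces into the contraction factor. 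This is essentially the proximal analogue of \cref{prop: strongly convex biased sgd}.

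\textbf{Parameter choice and the main obstacle.} With the generic bound in hand, set $\eta = 1/\beta$, so the contraction factor is $1-\tfrac{\mu}{c_0\beta}$; choosing $T = \widetilde{\Theta}(\beta/\mu)$ (the hidden logarithm being $\ln n$, chosen so that $(1-\tfrac{\mu}{c_0\beta})^T$ drives the optimization term below the statistical terms) makes the first term lower order, and the hypothesis $\beta/\mu \le n/\ln n$ ensures $T\le n$ so each batch is nonempty. It remains to balance $\tfrac{c_1}{\mu}\big(\tfrac{(\wt r^{(k)})^2}{(k-1)^2 C^{2k-2}} + \tfrac{dC^2T^2}{n^2\epsilon^2} + \tfrac{T\wt r^{(2)}}{n}\big)$ over $C$: equating the first two terms with $C = \Theta\!\big(\wt{r}_k(\tfrac{\epsilon n}{\sqrt{d}\,T})^{1/k}\big)$ yields $\tfrac{\wt r_k^2}{\mu}\big(\tfrac{\sqrt d\,T}{\epsilon n}\big)^{(2k-2)/k}$, which after substituting $T = \widetilde{\Theta}(\beta/\mu)$ is exactly the first claimed term, while the sampling-variance term is $\tfrac{T\wt r_k^2}{\mu n} = \widetilde{\Theta}\!\big(\tfrac{\wt r_k^2(\beta/\mu)}{\mu n}\big)$, the second term; linear running time is immediate since $\sum_t|\mathcal{B}_t| = n$. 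The routine parts are the privacy accounting and the $C,T$ optimization; the genuinely delicate step is the generic convergence inequality, specifically controlling the \emph{linear} cross-term coming from the bias $b_t$ and the fact that $w_{t+1}$ depends \emph{nonlinearly} on $\xi_t$ (so a conditional expectation does not directly kill the noise cross-terms) — both handled, at the cost of only absolute constants, by the detour through $\bar w_{t+1}$ and non-expansiveness of $\prox_{\eta f^1}$ together with $\beta$-smoothness of $F^0$.
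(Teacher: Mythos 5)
Your privacy accounting, bias/variance bookkeeping via \cref{lem: bias and variance of bd14}, and the final optimization of $C$ and $T$ coincide with the paper's proof of \cref{thm: PL upper bound}. The genuine difference is in how you prove the core per-iteration recursion (the paper's Proposition~\ref{lemma:extendsAS21Thm6}). The paper views the noisy prox step $w_{r+1}$ as the \emph{exact} minimizer of a linearly perturbed surrogate $H_r^{\text{priv}}(y)=H_r(y)+\langle b_r+N_r,y-w_r\rangle$ and invokes a sensitivity lemma for strongly convex minimizers under Lipschitz perturbations (an objective-perturbation argument), yielding $H_r(y_*^{\text{priv}})-H_r(y_*)\le \|N_r+b_r\|^2/(2\beta)$ before applying the PPL inequality to $\min_y H_r(y)$; this avoids any cross-term bookkeeping. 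You instead compare $w_{t+1}$ to the noise-free prox-gradient iterate $\bar w_{t+1}$ via non-expansiveness of $\prox_{\eta f^1}$, black-box the deterministic descent of Karimi et al.\ for $\bar w_{t+1}$, and control $F(w_{t+1})-F(\bar w_{t+1})$ with smoothness plus the two prox optimality conditions. This route does close: the leftover cross-term is of the form $\|\bar w_{t+1}-w_t\|\cdot\|b_t+\xi_t\|$, and after Young's inequality the resulting $\|\bar w_{t+1}-w_t\|^2$ piece is absorbed not into the contraction factor (as you phrase it) but into the strong-convexity slack of the surrogate: since $\bar w_{t+1}$ minimizes the $\beta$-strongly convex surrogate $H$, one has $\min_y H(y)\le -\tfrac{\beta}{2}\|\bar w_{t+1}-w_t\|^2$ as well as $\min_y H(y)\le -\tfrac{\mu}{\beta}(F(w_t)-F^*)$, and averaging these two bounds supplies both half the contraction and the negative quadratic needed for absorption. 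With that one clarification your argument yields the same recursion $\expec[F(w_{t+1})-F^*]\le(1-\tfrac{\mu}{c_0\beta})\expec[F(w_t)-F^*]+\tfrac{c_1}{\beta}(B^2+\Sigma^2)$ up to constants, and the rest of your derivation matches the paper. In short: your approach is more elementary and modular; the paper's is shorter because the perturbation lemma kills the bias and noise cross-terms in one stroke.
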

\noindent The bound in \cref{thm: PL upper bound} nearly matches the smooth \textit{strongly convex} (hence PPL) lower bound in \cref{thm: strongly convex lower bound} up to  $\widetilde{\mathcal{O}}((\beta/\mu)^{(2k-2)/2})$, and is attained without convexity. 

\vspace{.15cm}
To prove~\cref{thm: PL upper bound}, we derive a convergence guarantee for proximal SGD with generic biased, noisy stochastic gradients in terms of the bias and variance of the oracle (see Proposition~\ref{lemma:extendsAS21Thm6}). We then apply this guarantee for \texttt{MeanOracle1} (\cref{alg: MeanOracle2}) with carefully chosen stepsizes, clip threshold, and $T$, using~Lemma~\ref{lem: bias and variance of bd14}. Proposition~\ref{lemma:extendsAS21Thm6} generalizes \cite[Theorem 6]{as21}--which covered the unconstrained classical PL problem--to the 
proximal setting. However, the proof of Proposition~\ref{lemma:extendsAS21Thm6} is very different from the proof of \cite[Theorem 6]{as21}, since \texttt{prox} makes it hard to bound excess risk without convexity when the stochastic gradients are biased/noisy. Instead, our proof builds on the proof of \cite[Theorem 3.1]{lowy2022NCFL}, using techniques from the analysis of \textit{objective perturbation}~\cite{chaud, kifer2012private}. See~\cref{app: PL} for details.

\section{Concluding Remarks and Open Questions}
This paper was motivated by practical problems in which data contains outliers and potentially heavy tails, causing the worst-case Lipschitz parameter of the loss over all data points to be prohibitively large. In such cases, existing bounds for DP SO that scale with the worst-case Lipschitz parameter become vacuous. Thus, we operated under the more relaxed assumption of stochastic gradient distributions having bounded $k$-th moments. The $k$-th moment bound can be much smaller than the worst-case Lipschitz parameter in practice. For (strongly) convex loss functions, we established the 
optimal rates (up to logarithms) in certain parameter regimes, but not in every parameter regime. 
Thus, a natural problem for future work is to provide optimal risk bounds in all regimes. We believe that a promising approach will be to build on our algorithm and techniques in~\cref{sec: localization} in order to replace the $\wt{r}_{2k}$ term by $\wt{r}_k$ in our upper bounds.
We also initiated
the study of non-convex DP SO without uniform Lipschitz continuity, showing that the optimal strongly convex
rates can nearly be attained without convexity, via the proximal-PL condition. We leave the treatment of general non-convex losses for future work.

\section*{Acknowledgements}
We would like to thank John Duchi, Larry Goldstein, and Stas Minsker for very helpful conversations and pointers related to our lower bounds and the proof of Lemma~\ref{lem:4.1}. We also thank the authors of~\cite{klz21} for clarifying some steps in the proof of their Theorem 4.1 and providing useful feedback on the first draft of this manuscript. 

\clearpage
\bibliography{references}
\bibliographystyle{alpha}
\clearpage
\appendix
\section*{Appendix}

\section{Additional Discussion of Related Work}
\label{app: related work}
\textbf{DP SCO Without Uniform Lipschitz Continuity:} The study of DP SCO without uniformly Lipschitz continuous loss functions was initiated by~\cite{wx20}, who provided upper bounds for smooth convex/strongly convex loss. The work of \cite{klz21} provided lower bounds and improved, yet still suboptimal, upper bounds for the convex case. Both of the works~\cite{wx20, klz21} require $f$ to be $\beta_f$-smooth. It is also worth mentioning that~\cite{wx20, klz21} restricted attention to losses satisfying $\nabla F(\ws) = 0$ for $\ws \in \WW$, i.e. $\WW$ is a compact set containing the \textit{unconstrained} optimum $\ws = \argmin_{w \in \mathbb{R}^d} F(w) \in \WW$. By comparison, we consider the more general \textit{constrained} optimization problem $\min_{w \in \WW} F(w)$, where $\WW$ need not contain the global unconstrained optimum.  

Here we provide a brief discussion of the techniques used in~\cite{wx20,klz21}. The work of~\cite{wx20} used a full batch (clipped, noisy) gradient descent based algorithm, building on the heavy-tailed mean estimator of~\cite{hol19}. They bounded the excess risk of their algorithm by using a uniform convergence~\cite{vapnik1999overview} argument, resulting in a suboptimal dependence on the dimension $d$. 
The work of~\cite{klz21} used essentially the same approach as~\cite{wx20}, but obtained an improved rate with a more careful analysis.\footnote{Additionally, ~\cite[Theorem 5.2]{klz21} provided a bound via noisy gradient descent with the clipping mechanism of~\cite{ksu20}, but this bound is inferior (in the practical privacy regime $\varepsilon \approx 1$) to their bound in~\cite[Theorem 5.4]{klz21} that used the estimator of~\cite{hol19}.} However, as discussed, the bound in~\cite{klz21}
is $\mathcal{O}\left(r D \sqrt{\frac{d}{n}}\right)$ when $\varepsilon \approx 1$, which 
is still suboptimal.\footnote{The bound in~\cite[Theorem 5.4]{klz21} for $k=2$ is stated in the notation of~\cref{ass:coordinatewise} and thus has an extra factor of $\sqrt{d}$, compared to the bound written here. We write their bound in terms of our~\cref{ass:tilde}, replacing their $\gamma_k d$ term by $r \sqrt{d}$.} 

More recently, DP optimization with outliers was studied in special cases of sparse learning~\cite{hu2021high}, multi-arm bandits~\cite{tao2021optimal}, and 
$\ell_1$-norm linear regression~\cite{wang2022differentially}. 

\vspace{.2cm}
\noindent \textbf{DP ERM and DP GLMs without Uniform Lipschitz continuity:} 
The work of~\cite{asi2021private} provides bounds for constrained DP \textit{ERM} with arbitrary convex loss functions using a Noisy Clipped SGD algorithm that is similar to our~\cref{alg: vanilla SGD}, except that their algorithm is multi-pass and ours is one-pass. In a concurrent work, \cite{das2022beyond} considered DP \textit{ERM} in the \textit{unconstrained} setting with convex and non-convex loss functions. Their algorithm, noisy clipped SGD, is also similar to~\cref{alg: vanilla SGD} and the algorithm of~\cite{asi2021private}.
The results in \cite{das2022beyond} are not directly comparable to~\cite{asi2021private} since \cite{das2022beyond} consider the unconstrained setting while \cite{asi2021private} consider the constrained setting, but the rates in \cite{asi2021private} are faster. \cite{das2022beyond} also analyzes the convergence of noisy clipped SGD with smooth non-convex loss functions. 

The works of~\cite{song2021evading, arora2022differentially} consider \textit{generalized linear models (GLMs)}, a particular subclass of convex loss functions and provide empirical and population risk bounds for the \textit{unconstrained} DP optimization problem. The unconstrained setting is not comparable to the constrained setting that we consider here: in the unconstrained case, a dimension-independent upper bound is achievable, whereas our lower bounds (which apply to GLMs) imply that a dependence on the dimension $d$ is necessary in the constrained case. 

\vspace{.2cm}
\noindent \textbf{Other works on gradient clipping:}
The gradient clipping technique (and adaptive variants of it) has been studied empirically in works such as~\cite{abadi16, chen2020understanding, andrew2021differentially}, to name a few. The work of~\cite{chen2020understanding} shows that gradient clipping can prevent SGD from converging, and describes the clipping bias with a disparity measure between the gradient distribution and a geometrically symmetric distribution. 

\vspace{.2cm}
\noindent \textbf{Optimization with biased gradient oracles:} The works \cite{as21, asi2021private} analyze SGD with biased gradient oracles. Our work provides a tighter bound for smooth, strongly convex functions and analyzes accelerated SGD and proximal SGD with biased gradient oracles. 

\vspace{.2cm}
\noindent \textbf{DP SO with Uniformly Lipschitz loss functions:}
In the absence of outlier data, there are a multitude of works studying uniformly Lipschitz DP SO, mostly in the convex/strongly convex case. We do not attempt to provide a comprehensive list of these here, but will name the most notable ones, which provide optimal or state-of-the-art utility guarantees. The first  suboptimal bounds for DP SCO were provided in~\cite{bst14}. The work of~\cite{bft19} established the optimal rate for non-strongly convex DP SCO, by bounding the uniform stability of Noisy DP SGD (without clipping). The strongly convex case was addressed by~\cite{fkt20}, who also provided optimal rates in linear times for sufficiently smooth, convex losses. Since then, other works have provided faster and simpler (optimal) algorithms for the non-smooth DP SCO problem~\cite{bassily2020nonsmooth, asiL1geo, kulkarni2021private, bg21} and considered DP SCO with different geometries~\cite{asiL1geo, bassily2021non}. State-of-the-art rates for DP SO with the proximal PL condition are due to~\cite{lowy2022NCFL}. 

\section{Other Bounded Moment Conditions Besides~\cref{ass:tilde}}
\label{app: lemma scaling factors}
In this section, we give the alternate bounded moment assumption made in~\cite{wx20,klz21} and a third bounded moment condition, and discuss the relationships between these assumptions. The notation presented here will be necessary in order to state the sharper versions of our linear-time excess risk bounds and the asymptotically optimal excess risk bounds under the coordinate-wise assumption of~\cite{wx20,klz21} (which our~\cref{alg: localization} also attains). First, we introduce a relaxation of~\cref{ass:tilde}: 

\begin{assumption}
\label{ass:boundednoncentral}
There exists $k \geq 2$ and $r^{(k)} > 0$ such that $\sup_{w \in \WW} \expec \left[\| \nabla f(w, x) \|_2^k\right]\leq r^{(k)}$,
  for all subgradients $\nabla f(w, x_i) \in \partial_w f(w, x_i)$. Denote $r_k := (r^{(k)})^{1/k}$.
\end{assumption}

\noindent 
\cref{ass:tilde} implies~\cref{ass:boundednoncentral} for $r \leq \wt{r}$. Next, we precisely state the coordinate-wise moment bound assumption that is used in~\cite{wx20, klz21} for differentiable $f$: 

\begin{assumption}{\bf(Used by~\cite{wx20, klz21}\footnote{The work of~\cite{klz21} assumes that $L \lesssim \gamma_k^{1/k} = 1$. On the other hand, \cite{wx20} assumes that $F$ is $\beta$-smooth and $\nabla F(\ws) = 0$ for some $\ws \in \WW$, 
which
implies $L \leq 2 \beta D$.}
, but not  in this work)
}
\label{ass:coordinatewise}
There exists $k \geq 2$ and $\gamma_k > 0$ such that 
$\sup_{w \in \WW} \expec | \langle \nabla f(w,x) - \nabla F(w), e_j \rangle |^k \leq \gamma_k$,  for all 
$j \in [d]$, where $e_j$ denotes the $j$-th standard basis vector in $\mathbb{R}^d$. Also, $L \triangleq \sup_{w \in \WW}\| \nabla F(w)\| \leq \sqrt{d} \gamma_k^{1/k}$. 
\end{assumption}

\vspace{.1cm}
Lemma~\ref{lem: comparing assumptions} 
allows us  compare our results in~\cref{sec: linear time} obtained under~\cref{ass:boundednoncentral} to the results in~\cite{wx20, klz21}, which require \cref{ass:coordinatewise}. 
\begin{lemma}
\label{lem: comparing assumptions}
Suppose~\cref{ass:coordinatewise} holds. 
Then,~\cref{ass:boundednoncentral} holds for $r_k \leq 4 \sqrt{d} \gamma_k^{1/k}$.
\end{lemma}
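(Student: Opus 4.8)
The plan is to start from the coordinate-wise central moment bound in Assumption~\ref{ass:coordinatewise} and reconstruct a bound on $\expec \|\nabla f(w,x)\|_2^k$, which is what Assumption~\ref{ass:boundednoncentral} requires. First I would fix $w \in \WW$ and write $\nabla f(w,x) = \nabla F(w) + (\nabla f(w,x) - \nabla F(w))$, so that by the triangle inequality $\|\nabla f(w,x)\|_2 \leq \|\nabla F(w)\|_2 + \|\nabla f(w,x) - \nabla F(w)\|_2$, and therefore $\|\nabla f(w,x)\|_2^k \leq 2^{k-1}\left(\|\nabla F(w)\|_2^k + \|\nabla f(w,x) - \nabla F(w)\|_2^k\right)$ using convexity of $t \mapsto t^k$. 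The first term is deterministic and bounded by $(\sqrt{d}\,\gamma^{1/k})^k = d^{k/2}\gamma$ by the last clause of Assumption~\ref{ass:coordinatewise}. So the work is in bounding $\expec \|\nabla f(w,x) - \nabla F(w)\|_2^k$ in terms of the per-coordinate bounds.

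The key step is to pass from the $\ell_2$ norm of the centered gradient to its coordinates. Write $v := \nabla f(w,x) - \nabla F(w) \in \mathbb{R}^d$, so $\expec v_j = 0$ and $\expec |v_j|^k \leq \gamma$ for each $j$. Then $\|v\|_2^k = \left(\sum_{j=1}^d v_j^2\right)^{k/2}$. Since $k \geq 2$, I would use the power-mean / Jensen inequality $\left(\frac{1}{d}\sum_j v_j^2\right)^{k/2} \leq \frac{1}{d}\sum_j |v_j|^k$, i.e. $\left(\sum_j v_j^2\right)^{k/2} \leq d^{k/2 - 1}\sum_j |v_j|^k$. Taking expectations gives $\expec\|v\|_2^k \leq d^{k/2-1}\sum_{j=1}^d \expec|v_j|^k \leq d^{k/2-1}\cdot d\gamma = d^{k/2}\gamma$. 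Combining, $\expec\|\nabla f(w,x)\|_2^k \leq 2^{k-1}\left(d^{k/2}\gamma + d^{k/2}\gamma\right) = 2^k d^{k/2}\gamma$. Taking $k$-th roots, $r_k := \left(\sup_w \expec\|\nabla f(w,x)\|_2^k\right)^{1/k} \leq 2 \sqrt{d}\,\gamma^{1/k}$, which is even stronger than the claimed $4\sqrt{d}\,\gamma^{1/k}$; since the statement only claims an upper bound of $4\sqrt{d}\gamma^{1/k}$, this suffices (the slack may be there to absorb the subgradient-selection quantifier or a looser constant in a variant argument). Finally I would note that the bound is uniform in $w$ and holds for every measurable selection of subgradients, so the supremum over $w$ (and the "for all subgradients" clause) in Assumption~\ref{ass:boundednoncentral} is satisfied.

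The only mild subtlety — not really an obstacle — is the direction of Jensen's inequality: one needs $t \mapsto t^{k/2}$ to be convex, which holds since $k \geq 2$, so the power-mean inequality applies with the correct orientation; if instead one had $k < 2$ the inequality would reverse and pick up no dimension factor at all. Everything else is a routine application of the triangle inequality, convexity of $t \mapsto t^k$, and linearity of expectation, so I expect no genuine difficulty; the main care is just tracking the powers of $d$ and the factor $2^{k-1}$ cleanly so that the final constant comes out below $4$.
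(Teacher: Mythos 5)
Your proposal is correct and follows essentially the same route as the paper: the same decomposition $\nabla f(w,x)=\nabla F(w)+(\nabla f(w,x)-\nabla F(w))$, the same inequality $\|u\|^k\leq 2^{k-1}(\|u-v\|^k+\|v\|^k)$, and the same Jensen/power-mean step $\bigl(\sum_j v_j^2\bigr)^{k/2}\leq d^{k/2-1}\sum_j|v_j|^k$ to pass from the $\ell_2$ norm to the coordinate-wise moments. Your constant of $2\sqrt{d}\,\gamma^{1/k}$ is in fact marginally tighter than the paper's $2^{1+1/k}\sqrt{d}\,\gamma^{1/k}$, and both comfortably imply the stated bound of $4\sqrt{d}\,\gamma^{1/k}$.
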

\begin{proof}
We use the following inequality, which can easily be %
verified inductively,
using Cauchy-Schwartz and Young's inequalities: for any vectors $u, v \in \mathbb{R}^d$, we have
\begin{equation}
\label{eq: binomial}
\|u\|^k \leq 2^{k-1}\left(\|u - v\|^k + \|v\|^k \right).
\end{equation}
Therefore, \begin{align*}
r^{(k)} &= \sup_{w \in \WW} \expec\|\nabla f(w,x)\|^k \\
&\leq 2^{k-1}\left(\sup_{w \in \WW} \expec\|\nabla f(w,x) - \nabla F(w)\|^k + L^k \right) \\
&= 2^{k-1}\left(\sup_{w \in \WW} \expec\left[\left\{\sum_{j=1}^d |\langle \nabla f(w,x) - \nabla F(w), e_j \rangle |^2 \right\}^{k/2}\right] + L^k \right) \\
&\leq (2L)^k + 2^k d^{k/2} \sup_{w \in \WW}\expec\left[\frac{1}{d} \sum_{j=1}^d |\langle \nabla f(w,x) - \nabla F(w), e_j \rangle|^k \right],
\end{align*}
where we used convexity of the function $\phi(y) = y^{k/2}$ for all $y \geq 0, k \geq 2$ and Jensen's inequality in the last inequality. Now using linearity of expectation and~\cref{ass:coordinatewise} gives us \begin{align*}
r^{(k)} \leq 2^k\left(L^k + d^{k/2} \gamma_k\right) \leq 2^{k+1} d^{k/2} \gamma_k,
\end{align*} 
since $L^k \leq d^{k/2} \gamma_k$ by hypothesis.  
\end{proof}

\begin{remark}
Since~\cref{ass:boundednoncentral} is implied by \cref{ass:coordinatewise}, the upper bounds that we obtain under \cref{ass:boundednoncentral} also hold (up to constants) if we grant \cref{ass:coordinatewise} instead, with $r_k \leftrightarrow \sqrt{d} \gamma_k^{1/k}$. In particular, the upper bounds stated in~\cref{app: linear time} also imply improved upper bounds under~\cref{ass:coordinatewise}. 
Also, in~\cref{app: asymptotic}, we will use Lemma~\ref{lem: comparing assumptions} to show that our non-smooth excess risk bounds under~\cref{ass:tilde} imply similar excess risk bounds under~\cref{ass:coordinatewise}. 
\end{remark}

\section{Correcting the Errors in the Strongly Convex Upper Bounds Claimed in \cite{klz21, wx20}}
\label{app: wrong proofs}
While the ICML 2022 paper~\cite[Theorem 5.6]{klz21} claimed an upper bound for smooth strongly convex losses that is tight up to a factor of $\widetilde{\mathcal{O}}(\kappa_f^2)$---where $\kappa_f = \beta_f/\mu$ is the uniform condition number of $f(\cdot, x)$ over all $x \in \XX$--we identify an issue with their proof that invalidates their result. A similar issue appears in the proof of \cite[Theorems 5 and 7]{wx20}, which \cite{klz21} built upon. We then show how to salvage a correct upper bound within the framework of~\cite{klz21}, albeit at the cost of an additional factor of $\kappa_f$.\footnote{The corrected result of \cite{klz21}, derived here, is also included in the latest arXiv version of their paper. We communicated with the authors of~\cite{klz21} to obtain this correct version.} 

\vspace{.2cm}
The proof of \cite[Theorem 5.6]{klz21} relies on  \cite[Theorem 3.2]{klz21}. The proof of \cite[Theorem 3.2]{klz21}, in turn, bounds $\expec \|w_T - \ws\| \leq \frac{(\lambda + L)(M+1) G}{\lambda L}$ 
in the notation of \cite{klz21}, where $L$ is the smoothness parameter, $\lambda$ is the strong convexity parameter (so $L \geq \lambda$), and $M$ is the diameter of $\WW$. Then, it is \textit{incorrectly} deduced that $\expec[\|w_T - \ws\|^2] \leq \left(\frac{(\lambda + L)(M+1) G}{\lambda L}\right)^2$ (final line of the proof). Notice that $\expec[\|w_T - \ws\|^2] $ can be much larger than $(\expec \|w_T - \ws\|)^2$ in general: for example, if $\|w_T - \ws\|$ has the Pareto distribution with shape parameter $\alpha \in (1, 2]$ and scale parameter $1$, then $(\expec \|w_T - \ws\|)^2 = \left(\frac{\alpha}{\alpha - 1}\right)^2 \ll \expec(\|w_T - \ws \|^2) = \infty$. 

As a first attempt to correct this issue, one could use Young's inequality to instead bound \begin{align*}
\expec[\|w_T - \ws\|^2] &\leq 2\left(1 - \frac{2\lambda L}{(\lambda + L)^2} \right)\expec[\|w_{T-1} - \ws\|^2] + \frac{2G^2}{(\lambda + L)^2} \\
&\leq \left[2\left(1 - \frac{2\lambda L}{(\lambda + L)^2} \right)\right]^T \|w_0 - \ws\|^2 + \frac{2G^2}{(\lambda + L)^2}\sum_{t=0}^{T-1}\left[2\left(1 - \frac{2\lambda L}{(\lambda + L)^2} \right)\right]^t,
\end{align*}
but the geometric series above diverges to $+ \infty$ as $T \to \infty$, since $2\left(1 - \frac{2\lambda L}{(\lambda + L)^2} \right) \geq 1 \iff (\lambda - L)^2 \geq 0$. 
\vspace{.2cm}

Next, we show how to modify the proof of~\cite[Theorem 5.6]{klz21} in order to obtain a correct excess risk upper bound of \begin{equation}
\label{eq:iuj}
\wt{\mathcal{O}}\left(\frac{\gamma_k^{2/k}}{\mu}d\left[\frac{(\beta_f/\mu)^3}{n} + \left(\frac{\sqrt{d (\beta_f/\mu)^3}}{\varepsilon n}\right)^{(2k-2)/k}\right]\right)
\end{equation}
(in our notation). This correction was derived in collaboration with the authors of~\cite{klz21}, who have also updated the arXiv version of their paper accordingly. By waiting until the very of the proof of~\cite[Theorem 3.2]{klz21} to take expectation, we can derive \begin{align}
    \|w_t - \ws\| &\leq \left(1 - \frac{\lambda L}{(\lambda + L)^2} \right)\|w_{t-1} - \ws\| + \frac{\|\wt{\nabla} F(w_{t-1}) - \nabla F(w_{t-1})\|}{\lambda + L}
    \label{eq:iiing}
\end{align}
for all $t$, where we use their $L = \beta_f$ and $\lambda = \mu$ notation but our notation $F$ and $\wt{\nabla} F$ for the population loss and its biased noisy gradient estimate (instead of their $L_{\mathcal{D}}$ notation). By iterating~\cref{eq:iiing}, we can get 
\begin{align*}
\|w_T - \ws\| &\leq \left(1 - \frac{2\lambda L}{(\lambda + L)^2} \right)^T \|w_0 - \ws\| + \sum_{t=0}^{T-1}\left(1 - \frac{2\lambda L}{(\lambda + L)^2} \right)^t\left[\frac{\|\wt{\nabla} F(w_{T-t}) - \nabla F(w_{T-t})\|}{\lambda + L} \right].
\end{align*}
Squaring both sides and using Cauchy-Schwartz, we get \begin{align*}
\|w_T - \ws\|^2 &\leq 2\left(1 - \frac{2\lambda L}{(\lambda + L)^2} \right)^{2T} \|w_0 - \ws\|^2 + T \sum_{t=0}^{T-1}\left(1 - \frac{2\lambda L}{(\lambda + L)^2} \right)^{2t}\left[\frac{\|\wt{\nabla} F(w_{T-t}) - \nabla F(w_{T-t})\|}{\lambda + L} \right]^2.
\end{align*}

Using $L$-smoothness of $F$ and the assumption made in~\cite{klz21} that $\nabla F(\ws) = 0$, and \textit{then} taking expectation yields \begin{equation}
\label{eq:weng}
    \expec F(w_T) - F^* \lesssim L \|w_0 - \ws\|^2 \left(1 - \frac{2 \lambda L}{(\lambda + L)^2} \right)^{2T} + T G^2 \frac{L}{\lambda},
\end{equation}
where $G^2 \geq \expec\left[\|\wt{\nabla} F(w_{T-t}) - \nabla F(w_{T-t})\|^2\right]$ for all $t$. It is necessary and sufficient to choose $T = \wt{\Omega}(L/\lambda)$ to make the first term in the right-hand side of~\cref{eq:weng} less than the second term (up to logarithms). With this choice of $T$, we get \begin{equation}
\label{eq:nfd}
    \expec F(w_T) - F^* = \wt{\mathcal{O}}\left(G^2 \kappa_f^2 \right),
\end{equation}
where $\kappa_f = L/\lambda$. Next, we apply the bound on $G^2$ for the MeanOracle that is used in~\cite{klz21}; this bound is stated in the version of \cite[Lemma B.5]{klz21} that appears in the updated (November 1, 2022) arXiv version of their paper. The bound (for general $\gamma_k$) is $G^2 = \wt{\mathcal{O}} \left(\gamma_k^{2/k}\left[\frac{Td}{n} + d\left(\frac{\sqrt{d}T^{3/2}}{\varepsilon n} \right)^{(2k-2)/k} \right] \right)$. Plugging this bound on $G^2$ into \cref{eq:nfd} yields~\cref{eq:iuj}.

\section{More Differential Privacy Preliminaries}
\label{app: privacy prelims}
We collect some basic facts about DP algorithms that will be useful in the proofs of our results. Our algorithms use the \textit{Gaussian mechanism} to achieve zCDP: 
\begin{proposition}{\cite[Proposition 1.6]{bun16}}
\label{prop: gauss}
Let $q: \XX^n \to \mathbb{R}$ be a query with $\ell_2$-sensitivity $\Delta := \sup_{X \sim X'}\|q(X) - q(X')\|$. Then the Gaussian mechanism, defined by $\mathcal{M}: \XX^n \to \mathbb{R}$, $M(X) := q(X) + u$ for $u \sim \mathcal{N}(0, \sigma^2)$, is $\rho$-zCDP if $\sigma^2 \geq \frac{\Delta^2}{2\rho}$.
\end{proposition}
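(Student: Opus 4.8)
The statement is the standard zCDP guarantee of the Gaussian mechanism, and I would prove it by directly checking the defining inequality of $\rho$-zCDP: for every pair of adjacent data sets $X \sim X'$ and every order $\alpha \in (1,\infty)$, we must show $D_\alpha(M(X) \,\|\, M(X')) \le \rho\alpha$. Since $M(X) \sim \mathcal{N}(q(X),\sigma^2)$ and $M(X') \sim \mathcal{N}(q(X'),\sigma^2)$ are two Gaussians with the same variance, the whole argument reduces to evaluating the \Renyi divergence between two such Gaussians in closed form and then bounding the displacement of their means by the sensitivity $\Delta$.

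The one nontrivial computation is the identity $D_\alpha\big(\mathcal{N}(\mu_1,\sigma^2)\,\|\,\mathcal{N}(\mu_2,\sigma^2)\big) = \frac{\alpha(\mu_1-\mu_2)^2}{2\sigma^2}$. I would substitute the two Gaussian densities into the definition $D_\alpha(P\|Q) = \frac{1}{\alpha-1}\ln\int p(x)^\alpha q(x)^{1-\alpha}\,dx$, so that the integrand's exponent becomes $-\frac{1}{2\sigma^2}\big[\alpha(x-\mu_1)^2 + (1-\alpha)(x-\mu_2)^2\big]$, and then complete the square in $x$ around the point $m := \alpha\mu_1 + (1-\alpha)\mu_2$. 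The leftover constant after completing the square works out to exactly $\alpha(1-\alpha)(\mu_1-\mu_2)^2$, the residual integral is a normalized Gaussian and equals $1$, and dividing by $\alpha-1$ together with the elementary fact $-(1-\alpha)/(\alpha-1) = 1$ yields the claimed formula. For a query valued in $\mathbb{R}^d$ with isotropic noise $\mathcal{N}(0,\sigma^2 I_d)$—the form actually used in \cref{alg: MeanOracle2}—the densities factor over coordinates and \Renyi divergence is additive over independent components, so the same computation gives $D_\alpha = \frac{\alpha\|\mu_1-\mu_2\|^2}{2\sigma^2}$.

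Finally I would specialize to $\mu_1 = q(X)$, $\mu_2 = q(X')$: by the definition of $\ell_2$-sensitivity, $\|q(X)-q(X')\| \le \Delta$, hence $D_\alpha(M(X)\,\|\,M(X')) = \frac{\alpha\|q(X)-q(X')\|^2}{2\sigma^2} \le \frac{\alpha\Delta^2}{2\sigma^2}$, and the hypothesis $\sigma^2 \ge \Delta^2/(2\rho)$ makes the right-hand side at most $\alpha\rho$. Since $X\sim X'$ and $\alpha \in (1,\infty)$ were arbitrary, $M$ satisfies $\rho$-zCDP. There is no genuine obstacle in this proof—it is a short classical calculation—so the only care needed is in the completing-the-square bookkeeping and in noting that \Renyi divergence tensorizes over product measures, which is precisely what lets the scalar statement be applied to the vector-valued Gaussian mechanism inside our algorithms.
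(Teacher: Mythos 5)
Your proof is correct and is exactly the standard argument from the cited source \cite[Proposition 1.6]{bun16} (the paper itself states this result by citation and gives no proof): compute $D_\alpha(\mathcal{N}(\mu_1,\sigma^2)\,\|\,\mathcal{N}(\mu_2,\sigma^2)) = \alpha(\mu_1-\mu_2)^2/(2\sigma^2)$ by completing the square, tensorize over coordinates for the $d$-dimensional case, and bound the mean shift by $\Delta$. Your bookkeeping checks out, including the sign of the residual term $\alpha(1-\alpha)(\mu_1-\mu_2)^2$ and the observation that the leading coefficient $\alpha+(1-\alpha)=1$ keeps the integral convergent for all $\alpha\in(1,\infty)$.
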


The (adaptive) composition of zCDP algorithms is zCDP, with privacy parameters adding:
\begin{lemma}{\cite[Lemma 2.3]{bun16}}
\label{lem: composition}
Suppose $\mathcal{A}: \XX^n \to \mathcal{Y}$ satisfies $\rho$-zCDP and $\mathcal{A}': \XX^n \times \mathcal{Y} \to \ZZ$ satisfies $\rho'$-zCDP (as a function of its first argument). Define the composition of $\mathcal{A}$ and $\mathcal{A}'$, $\mathcal{A}'': \XX^n \to \ZZ$ by
$\mathcal{A}''(X) = \mathcal{A}'(X, \mathcal{A}(X))$. Then $\mathcal{A}''$ satisfies $(\rho + \rho')$-zCDP. In particular, the composition of $T$ $\rho$-zCDP mechanisms is a $T\rho$-zCDP mechanism. 
\end{lemma}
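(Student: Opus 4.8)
The plan is to work directly from the \Renyi-divergence definition of zCDP and to exploit the exact chain rule for \Renyi divergence under (adaptive) composition. Fix a pair of adjacent datasets $X, X' \in \XX^n$ and an order $\alpha \in (1,\infty)$; by definition it suffices to prove $D_\alpha(\mathcal{A}''(X) \,\|\, \mathcal{A}''(X')) \le (\rho+\rho')\alpha$. Let $P$ be the joint law of the pair $(Y,Z) := \bigl(\mathcal{A}(X),\, \mathcal{A}'(X,\mathcal{A}(X))\bigr)$ and $Q$ the joint law of $\bigl(\mathcal{A}(X'),\, \mathcal{A}'(X',\mathcal{A}(X'))\bigr)$, both on $\mathcal{Y}\times\mathcal{Z}$. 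Since $\mathcal{A}''(X)$ is exactly the second-coordinate marginal of a sample from $P$ (a deterministic post-processing), and likewise $\mathcal{A}''(X')$ from $Q$, the data-processing inequality for \Renyi divergence gives $D_\alpha(\mathcal{A}''(X)\,\|\,\mathcal{A}''(X')) \le D_\alpha(P\,\|\,Q)$, so it remains to bound the right-hand side.

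To bound $D_\alpha(P\,\|\,Q)$, take densities with respect to a common $\sigma$-finite dominating measure and disintegrate: write $p(y,z) = p_1(y)\,p_{2\mid1}(z\mid y)$ and $q(y,z) = q_1(y)\,q_{2\mid1}(z\mid y)$, where $p_1,q_1$ are the marginals of $Y$ under $\mathcal{A}(X),\mathcal{A}(X')$ and $p_{2\mid1}(\cdot\mid y), q_{2\mid1}(\cdot\mid y)$ are the laws of $\mathcal{A}'(X,y)$ and $\mathcal{A}'(X',y)$ respectively. Then, using Fubini's theorem,
\begin{align*}
e^{(\alpha-1)D_\alpha(P\,\|\,Q)}
&= \int p_1(y)^{\alpha} q_1(y)^{1-\alpha}\left(\int p_{2\mid1}(z\mid y)^{\alpha} q_{2\mid1}(z\mid y)^{1-\alpha}\,dz\right)dy \\
&= \int p_1(y)^{\alpha} q_1(y)^{1-\alpha}\; e^{(\alpha-1)\,D_\alpha\left(\mathcal{A}'(X,y)\,\|\,\mathcal{A}'(X',y)\right)}\,dy.
\end{align*}
Because $X$ and $X'$ are adjacent and $\mathcal{A}'$ is $\rho'$-zCDP in its first argument, the inner exponent satisfies $D_\alpha\bigl(\mathcal{A}'(X,y)\,\|\,\mathcal{A}'(X',y)\bigr) \le \rho'\alpha$ for every $y$; pulling out the factor $e^{(\alpha-1)\rho'\alpha}$ leaves $\int p_1(y)^{\alpha} q_1(y)^{1-\alpha}\,dy = e^{(\alpha-1)D_\alpha(\mathcal{A}(X)\,\|\,\mathcal{A}(X'))} \le e^{(\alpha-1)\rho\alpha}$ since $\mathcal{A}$ is $\rho$-zCDP. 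Hence $e^{(\alpha-1)D_\alpha(P\,\|\,Q)} \le e^{(\alpha-1)(\rho+\rho')\alpha}$; taking logarithms and dividing by $\alpha-1>0$ yields $D_\alpha(P\,\|\,Q) \le (\rho+\rho')\alpha$, and therefore $D_\alpha(\mathcal{A}''(X)\,\|\,\mathcal{A}''(X'))\le(\rho+\rho')\alpha$. As $\alpha$ and the adjacent pair were arbitrary, $\mathcal{A}''$ is $(\rho+\rho')$-zCDP. The ``in particular'' claim follows by a routine induction on $T$: apply the two-fold statement with the first $(t-1)$-fold composition (which is $(t-1)\rho$-zCDP by the inductive hypothesis) in the role of $\mathcal{A}$ and the $t$-th mechanism in the role of $\mathcal{A}'$.

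The main obstacle is the measure-theoretic bookkeeping in the disintegration/Fubini step: one needs the joint law to factor as a marginal times a regular conditional distribution and needs the interchange of integrals to be justified, which is handled by choosing a suitable common $\sigma$-finite dominating measure on $\mathcal{Y}\times\mathcal{Z}$. Everything else is an algebraic manipulation of the integral $\int p^{\alpha} q^{1-\alpha}$ defining $e^{(\alpha-1)D_\alpha}$. A clean way to package this is to first prove, as a standalone lemma, the \Renyi chain rule $D_\alpha(P\,\|\,Q) \le \sup_{y} D_\alpha(P_{2\mid1}(\cdot\mid y)\,\|\,Q_{2\mid1}(\cdot\mid y)) + D_\alpha(P_1\,\|\,Q_1)$ together with the data-processing inequality, and then invoke both; the zCDP composition statement is then an immediate corollary.
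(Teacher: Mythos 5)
The paper does not prove this lemma; it is imported verbatim from \cite[Lemma 2.3]{bun16} with only a citation. Your argument is correct and is essentially the standard proof given in that reference: data processing to pass to the joint law, the factorization $\int p_1^{\alpha}q_1^{1-\alpha}\bigl(\int p_{2\mid 1}^{\alpha}q_{2\mid 1}^{1-\alpha}\bigr)$, a pointwise bound on the inner integral from the zCDP guarantee of $\mathcal{A}'$, and induction for the $T$-fold statement.
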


The definitions of DP and zCDP given above do not dictate \textit{how} the algorithm $\Al$ operates. In particular, they allow $\Al$ to send sensitive data to a third party curator/analyst, who can then add noise to the data. However, in certain practical applications (e.g. federated learning~\cite{kairouz2019advances}), there is no third party that can be trusted to handle sensitive user data. On the other hand, it is often more realistic to have a secure \textit{shuffler} (a.k.a. mixnet):
in each iteration of the algorithm, the shuffler receives encrypted noisy reports (e.g. noisy stochastic gradients) from each user and applies a uniformly random permutation to the $n$ reports, thereby anonymizing them (and amplifying privacy)~\cite{prochlo, cheu2019distributed, esarevisited, fmt20}. An algorithm is \textit{shuffle private} if all of these ``shuffled'' reports are DP: 
\begin{definition}{(Shuffle Differential Privacy~\cite{prochlo, cheu2019distributed})}
\label{def: SDP}
A randomized algorithm is \textit{$(\varepsilon, \delta)$-shuffle DP} \textit{(SDP)} if the collection of reports output by the shuffler satisfies Definition~\ref{def: DP}. 
\end{definition}

\section{Details and Proofs for~\cref{sec: lower bounds}: Lower Bounds}
\label{app: lower bounds}
In this section, we prove the lower bounds stated in~\cref{sec: lower bounds}, and also provide tight lower bounds under~\cref{ass:boundednoncentral,ass:coordinatewise}. 
 \begin{theorem}[Precise Statement of \cref{thm: convex lower bound}]
Let $k \geq 2$, $D, \gamma_k, r^{(k)}, \wt{r}^{(k)} > 0$, $\beta_f \geq 0$, 
$d \geq 40, 
n > 7202$, and $\rho \leq d$. Then, for any $\rho$-zCDP algorithm $\mathcal{A}$, there exist $\WW, \XX \subset \mathbb{R}^d$ such that $\|w - w'\| \leq 2D$ for all $w, w' \in \WW$, a 
$\beta_f$-smooth, linear, convex (in $w$) loss $f: \WW \times \XX \to \mathbb{R}$, and distributions $\mathcal{D}$ and $\mathcal{D'}$ on $\XX$ such that:\\
1. \cref{ass:tilde} holds 
and if $X' \sim \mathcal{D'}^n$, then 
\begin{equation}
\expec F(\Al(X')) - F^* = \Omega\left(D \left(\frac{\wt{r}_2}{\sqrt{n}} + \wt{r}_k \min\left\{1, \left(\frac{\sqrt{d}}{\sqrt{\rho} n}\right)^{\frac{k-1}{k}}\right\}\right)\right). 
\end{equation}
\noindent 2. \cref{ass:boundednoncentral} holds 
and if $X' \sim \mathcal{D'}^n$, then 
\begin{equation}
\expec F(\Al(X')) - F^* = \Omega\left(D \left(\frac{r_2}{\sqrt{n}} + r_k\min\left\{1, \left(\frac{\sqrt{d}}{\sqrt{\rho} n}\right)^{\frac{k-1}{k}}\right\}\right)\right). 
\end{equation}
\noindent 3. 
\cref{ass:coordinatewise} holds
and if $X \sim \mathcal{D}^n$, then
\[
\expec F(\Al(X)) - F^* = \Omega\left(D \left(\gamma_2^{1/2}\sqrt{\frac{d}{n}} + \gamma_k^{1/k}\sqrt{d}\min\left\{1, \left(\frac{\sqrt{d}}{\sqrt{\rho} n}\right)^{\frac{k-1}{k}}\right\}\right)\right). 
\]
\end{theorem}
\begin{proof}
We will prove part 3 first. \\
3. We begin by proving the result for $\gamma_k = D = 1$. In this case, it is proved in \cite{klz21} that \[
\expec F(\Al(X)) - F^* = \Omega\left( \sqrt{d}\min\left\{1, \left(\frac{\sqrt{d}}{\sqrt{\rho} n}\right)^{\frac{k-1}{k}}\right\}\right)
\]
for $f(w,x) = - \langle w, x \rangle$ with $\WW = B_2^d(0,1)$ and $\XX = \{\pm 1\}^d$, and a distribution satisfying \cref{ass:coordinatewise} with $\gamma_k = 1$.
Then $f(\cdot, x)$ is linear, convex, and $\beta_f$-smooth for all $\beta_f \geq 0$. We prove the first (non-private) term in the lower bound.
By the Gilbert-Varshamov bound (see e.g. \cite[Lemma 6]{asz21}) and the assumption $d \geq 40$, there exists a set $\mathcal{V} \subseteq \{\pm 1\}^d$ with $|\mathcal{V}| \geq 2^{d/20}$, $d_{\text{Ham}}(\nu, \nu') \geq \frac{d}{8}$ for all $\nu, \nu' \in \mathcal{V}, ~\nu \neq \nu'$. For $\nu \in \mathcal{V}$, define the product distribution $Q_\nu = (Q_{\nu_1}, \cdots Q_{\nu_d})$, where for all $j \in [d]$, 
\[
Q_{\nu_j} = \begin{cases}
1 &\mbox{with probability $\frac{1 + \delta_{\nu_j}}{2}$}\\
-1 &\mbox{with probability $\frac{1 - \delta_{\nu_j}}{2}$}
\end{cases}
\]
for $\delta_{\nu_j} \in (0,1)$ to be chosen later. Then $\expec Q_{\nu_j} := \mu_{\nu_j} = \delta_{\nu_j}$ and for any $w \in \WW$, $x \sim Q_{\nu}$, we have \begin{align}
\expec |\langle \nabla f(w,x) - \nabla F(w), e_j \rangle|^2 &= \expec|\langle -x + \expec x, e_j \rangle|^k \\
&= \expec|x_j - \mu_{\nu_j}|^k \\
& \leq \expec |x_j|^k \\
&\leq 1
\end{align}
for $\delta_{\nu_j} \in (0,1)$. 
Thus, our unscaled hard instance/distribution for the non-private lower bound term (which will be chosen from among $Q_{\nu}, \nu \in \mathcal{V}$) has $k$-th moment bounded by $1$ for any $k \in [2, \infty]$. 
Now, let $p:= 
\sqrt{d/n}
$
and $\delta_{\nu_j} := \frac{p \nu_j}{\sqrt{d}} 
$. Note that $\expec Q_\nu := \mu_\nu = \frac{p \nu}{\sqrt{d}}$ and $w_\nu := \frac{\mu_\nu}{\|\mu_\nu\|} = \frac{\nu}{\|\nu\|}$. Also, $\| \mu_\nu \| = 
p
:= \| \mu\|$ for all $\nu \in \mathcal{V}$. Now, denoting $F_{Q_{\nu}}(w) := \expec_{x \sim Q_{\nu}} f(w,x)$, we have for any $w \in \WW$ (possibly depending on $X \sim Q_{\nu}^n$) that
\begin{align}
    \max_{\nu \in \mathcal{V}} \expec\left[F_{Q_{\nu}}(w) - \min_{w' \in \WW} F_{Q_{\nu}}(w')\right] &= \max_{\nu \in \mathcal{V}} \expec\left[\left \langle \frac{\mu_\nu}{\|\mu\|}, \mu_\nu \right \rangle - \left \langle w, \mu_\nu \right\rangle\right] \\
    &= \max_{\nu \in \mathcal{V}} \expec\left[ \|\mu\| - \langle w, \mu_{\nu} \rangle\right] \\
    &= \max_{\nu \in \mathcal{V}}\expec\left(\|\mu\|[1 - \langle w, w_\nu \rangle]\right) \\
    &\geq \max_{\nu \in \mathcal{V}} \expec\left[\frac{1}{2} \|\mu\| \| w - w_\nu \|^2 \right],
\end{align}
since $\|w\|, \|w_\nu\| \leq 1$. Further, denoting $\hat{w} := \argmin_{\nu \in \mathcal{V}}\|w_\nu - w\|$, we have $\|\hat{w} - w_\nu \|^2 \leq 4\|w_\nu - w\|^2$ for all $\nu \in \mathcal{V}$ (via Young's inequality). Hence \begin{equation}
\label{eq:tingyy}
\max_{\nu \in \mathcal{V}} \expec\left[F_{Q_{\nu}}(w) - \min_{w' \in \WW} F_{Q_{\nu}}(w')\right] \geq \frac{\| \mu \|}{8} \max_{\nu \in \mathcal{V}} \expec \|\hat{w} - w_\nu\|^2.
\end{equation}

Now we apply Fano's method (see e.g. \cite[Lemma 3]{yu97}) to lower bound $\max_{\nu \in \mathcal{V}} \expec \|\hat{w} - w_\nu\|^2$. For all $\nu \neq \nu'$, we have $\|w_\nu - w_{\nu'}\|^2 \geq \frac{\|\nu - \nu'\|^2}{\|\nu\|^2} \geq 1$ since $d_{\text{Ham}}(\nu, \nu') \geq \frac{d}{2}$ and $\nu \in \{\pm 1\}^d$ implies $\|\nu - \nu'\|^2 \geq \frac{d}{2}$ and $\|\nu\|^2 = d$. Also, a straightforward computation shows that for any $j \in [d]$ and $\nu, \nu' \in \mathcal{V}$, 
\begin{align}
D_{KL}(Q_{\nu_j} || Q_{\nu'_j}) &\leq \frac{1 + \frac{p}{\sqrt{d}}}{2}\left[\log\left(\frac{\sqrt{d} + p}{\sqrt{d}}\right) + \log\left(\frac{\sqrt{d}}{\sqrt{d} - p}\right)\right]\\
& \leq \log\left(\frac{1 + \frac{p}{\sqrt{d}}}{1 - \frac{p}{\sqrt{d}}}\right) \\
&\leq \frac{3p}{\sqrt{d}},
\end{align}
for our choice of $p$, provided $\frac{p}{\sqrt{d}} = \frac{1}{\sqrt{n}} \in (0, \frac{1}{2})$, which holds if 
$n > 4$. Hence by the chain rule for KL-divergence, 
\[
D_{KL}(Q_\nu || Q_{\nu'}) \leq 3p\sqrt{d} 
= 3\frac{d}{\sqrt{n}}
\]
for all $\nu, \nu' \in \mathcal{V}$. Thus, for any $w \in \WW$, Fano's method yields \[
\max_{\nu \in \mathcal{V}} \expec \|w - w_\nu\|^2 \geq \frac{1}{2}\left(1 - \frac{3p\sqrt{d} + \log(2)}{(d/20)} \right) = \frac{1}{2}\left(1 - \frac{60\frac{d}{\sqrt{n}} - 20\log(2)}{d} \right),
\]
which is $\Omega(1)$ for 
$d \geq 40 > 20 \log(2)$ and $n > 7202$.
Combining this with \cref{eq:tingyy} and plugging in $\|\mu\| = %
\sqrt{\frac{d}{n}}
$ shows that \[
\expec F_{Q_\nu}(\Al(X)) - F_{Q_\nu}^* = 
\Omega\left(\sqrt{\frac{d}{n}}\right)
\]
for some $\nu \in \mathcal{V}$ (for any algorithm $\Al$), where $X \sim Q_{\nu}^n$. 

Next, we scale our hard instance. For the non-private lower bound just proved, we scale the distribution $Q_\nu \to \tilde{Q}_\nu = \gamma_k^{1/k} Q_\nu$, which is supported on $\tilde{\XX} = \{\pm \gamma_k^{1/k}\}^d$. Denote its mean by $\expec Q_\nu := \wt{\mu}_\nu = \gamma_k^{1/k} \mu_\nu$. Also we scale $\WW \to \wt{\WW} = D \WW = B_2^d(0, D)$. So our final (linear, convex, smooth) hard instance for the non-private lower bound is $f: \wt{\WW} \times \wt{\XX} \to \mathbb{R}$, $f(\wt{w},\wt{x}) = 
- \langle \wt{w},\wt{x} \rangle$, $\wt{F}(\wt{w}) := \expec_{\wt{x} \sim \tilde{Q}_\nu} f(\wt{w}, \wt{x})$. 
Denote $F(w) := \expec_{x \sim Q_\nu} f(w, x)$. Note that
\begin{align*}
\expec |\langle \nabla f(\wt{w},\wt{x}) - \nabla \wt{F}(\wt{w}), e_j \rangle|^k &= \expec|\langle -\wt{x} + \expec \wt{x}, e_j \rangle|^k \\
&= \expec|\wt{x}_j - \wt{\mu}_{\nu_j}|^k \\
&= \expec|\gamma_k^{1/k}(x_j - \mu_{\nu_j})|^k \leq \gamma_k.
\end{align*}
Further, we have $\ws := \argmin_{w \in \WW} F(w) = \frac{\mu_\nu}{\|\mu_\nu\|}$ and $\wt{w}^* := \argmin_{\wt{w} \in \wt{\WW}} \wt{F}(\wt{w}) = D \ws$. Therefore, for any $w \in \WW, \wt{w} = Dw \in \wt{\WW}$, we have \begin{align}
\wt{F}(\wt{w}) - \wt{F}(\wt{w}^*) &= - \langle \wt{w}, \wt{\mu}_\nu \rangle + \langle \wt{w}^*, \wt{\mu}_\nu \rangle \\
&= \langle D(\ws - w), \gamma_k^{1/k} \mu_\nu \rangle\\
&= D \gamma_k^{1/k}[F(w) - F(\ws)]. 
\end{align}
Thus, \[
\expec \wt{F}(\Al(\wt{X})) - \wt{F}^* = \gamma_k^{1/k}D[ \expec F(\Al(X)) - F^*] \geq \Omega\left(\gamma_2^{1/2} D \sqrt{\frac{d}{n}}\right),
\]
by applying the non-private lower bound for the case $D = \gamma_2 = 1$ (i.e. for the unscaled $F$) and using monotonicity of moments. A similar scaling argument applied to the unscaled  \textit{private} lower bound instance of~\cite{klz21} completes the proof of part 3 of the theorem.

\noindent 1. We will use nearly the same unscaled hard instances used to prove the private and non-private terms of the lower bound in part 3, but the scaling will differ. Starting with the \textit{non-private} term, we scale the distribution $Q_\nu \to \widetilde{Q}_\nu = \frac{\wt{r}_2}{\sqrt{d}} Q_\nu$ and $\XX \to \widetilde{\XX} = \frac{\wt{r}_2}{\sqrt{d}} \XX$. Also, scale $\WW \to \wt{\WW} = D \WW = B_2^d(0, D)$. Let $f(\wt{w},\wt{x}) :=
- \langle \wt{w}, \wt{x}\rangle$, which 
satisfies all the hypotheses of the theorem. 
Also, \[
\expec_{\wt{x} \sim \wt{Q}_\nu} \left[\sup_{\wt{w}}\| \nabla f(\wt{w}, \wt{x})\|^k\right] =  \left(\frac{\wt{r}_k}{\sqrt{d}}\right)^k  \expec_{x \sim Q_\nu} \|x\|^k \leq  \left(\frac{\wt{r}_k}{\sqrt{d}}\right)^k d^{k/2} = \wt{r}^{(k)}. 
\]
Now $\wt{w}^* = D w^*$ as before and letting $\wt{F}(\cdot) := \expec_{\wt{x} \sim \wt{Q}_\nu} f(\cdot, \wt{x})$,  we have \[
\wt{F}(\wt{w}) - \wt{F}(\wt{w}^*) = \frac{\wt{r}_k D}{\sqrt{d}}[F(w) - F^*]
\]
for any $\wt{w} = Dw$. Thus, applying the unscaled non-private lower bound established above yields the desired lower bound of $\Omega\left(\frac{\wt{r}_2 D}{\sqrt{n}}\right)$ on the non-private excess risk of our scaled instance.

Next, we turn to the scaled \textit{private} lower bound. The unscaled hard distribution $Q'_\nu$ given by \[
Q'_\nu = 
\begin{cases}
0 &\mbox{with probability $1 - p$} \\
p^{-1/k} \nu &\mbox{with probability $p$}
\end{cases} 
\]
(with the same linear $f$ and same $\WW$)
provides the unscaled lower bound \[
\expec F(\Al(X)) - F^* = \Omega\left( \sqrt{d}\min\left\{1, \left(\frac{\sqrt{d}}{\sqrt{\rho} n}\right)^{\frac{k-1}{k}}\right\}\right),
\]
by the proof of \cite[Theorem 6.4]{klz21}. We scale $Q'_\nu \to \wt{Q}'_\nu = \frac{\wt{r}}{\sqrt{d}} Q'_\nu$, $\XX \to \wt{\XX} = \frac{\wt{r}}{\sqrt{d}} \XX$, and $\WW \to \wt{\WW} = D \WW$. Then for any $\wt{w} \in \wt{\WW}$, \[
\expec_{\wt{x} \sim \wt{Q'}} \left[\sup_{\wt{w}}\| \nabla f(\wt{w}, \wt{x})\|^k\right] = \left(\frac{\wt{r}_k}{\sqrt{d}}\right)^k \expec_{x \sim Q'} \|x\|^k = p\|p^{-1/k} \nu\|^k = \wt{r}^{(k)}. 
\]
Moreover, excess risk scales by a factor of $\frac{\wt{r}_k D}{\sqrt{d}}$. Thus, applying the unscaled lower bound completes the proof of part 1. \\
2. We use an identical construction to that used above in part 1 except that the scaling factor $\wt{r}_k$ gets replaced by $r_k$. 
It is easy to see that $\expec\left[\sup_{w \in \WW} \| \nabla f(w,x)\|^k\right] = \sup_{w \in \WW} \expec\left[\|\nabla f(w,x)\|^k\right]$ for our construction, hence the result follows. 
\end{proof}

\begin{remark}
\label{rem: lower bounds vs. klz}
The main differences in our proof of part 3 of \cref{thm: convex lower bound} from the proof of \cite[Theorem 6.4]{klz21} (for $\gamma_k = D = 1$) are: 1) we construct a Bernoulli product distribution (built on~\cite[Example 7.7]{duchinotes}) instead of a Gaussian, which establishes a lower bound that holds for all $k \geq 2$ instead of just $k = \mathcal{O}(1)$; and 2) we choose a different parameter value (larger $p$ in the notation of the proof) in our application of Fano's method, which results in a tighter lower bound: the term $\min\{1, \sqrt{d/n}\}$ in \cite[Theorem 6.4]{klz21} gets replaced with $\sqrt{d/n}$.\footnote{Note that \cite[Theorem 6.4]{klz21} writes $\sqrt{d/n}$ for the first term. However, the proof (see Equation 16 in their paper) only establishes the bound $\min\{1, \sqrt{d/n}\}$.} Also, there exist parameter settings for which our lower bound is indeed strictly greater than the lower bound in \cite[Theorem 6.4]{klz21}: for instance, if $d > n > d/\rho$ and $k \to \infty$, then our lower bound simplifies to $\Omega(\sqrt{\frac{d}{n}})$. On the other hand, the lower bound in \cite[Theorem 6.4]{klz21} breaks as $k \to \infty$ (since the $k$-th moment of their Gaussian goes to infinity); however, even if were extended to $k \to \infty$ (e.g. by replacing their Gaussian with our Bernoulli distribution), then the resulting lower bound $\Omega(1 + \frac{d}{\sqrt{\rho} n})$ would still be smaller than the one we prove above.\footnote{By Lemma~\ref{lem: comparing assumptions}, lower bounds under~\cref{ass:coordinatewise} imply lower bounds under~\cref{ass:boundednoncentral} with $\gamma_k^{1/k}$ replaced by $r/\sqrt{d}$. Nevertheless, we provide direct proofs under both assumptions for additional clarity.}\end{remark}

\begin{theorem}[Precise Statement of \cref{thm: strongly convex lower bound}]
Let $k \geq 2$, $\mu, \gamma_k, \wt{r}_k, r_k > 0$, $n \in \mathbb{N}$, $d \geq 40$, and $\rho \leq d$. Then, for any $\rho$-zCDP algorithm $\mathcal{A}$, there exist convex, compact sets $\WW, \XX \subset \mathbb{R}^d$ of diameter $D$, a $\mu$-smooth, $\mu$-strongly convex (in $w$) loss $f: \WW \times \XX \to \mathbb{R}$, and distributions $\mathcal{D}$ and $\mathcal{D'}$ on $\XX$ such that:\\
1. \cref{ass:tilde} holds with $D \approx \frac{\wt{r}_k}{\mu}$,
and if $X' \sim \mathcal{D'}^n$, then 
\[
\expec F(\Al(X')) - F^* = \Omega\left(\frac{1}{\mu}\left(\frac{\wt{r}_2^2}{n} + \wt{r}_k^2 \min\left\{1, \left(\frac{\sqrt{d}}{\sqrt{\rho} n}\right)^{\frac{2k-2}{k}}\right\}\right)\right). 
\]
2. \cref{ass:boundednoncentral} holds with $D \approx \frac{r_k}{\mu}$,
and if $X' \sim \mathcal{D'}^n$, then 
\[
\expec F(\Al(X')) - F^* = \Omega\left(\frac{1}{\mu}\left(\frac{r_2^2}{n} + r_k^2\min\left\{1, \left(\frac{\sqrt{d}}{\sqrt{\rho} n}\right)^{\frac{2k-2}{k}}\right\}\right)\right). 
\]
3. %
\cref{ass:coordinatewise} holds, 
$D \approx \frac{\gamma_k^{1/k} \sqrt{d}}{\mu}$,
and if $X \sim \mathcal{D}^n$, then
\[
\expec F(\Al(X)) - F^* = \Omega\left(\frac{1}{\mu}\left(\frac{\gamma_2^{2/k} d}{n} + \gamma_k^{2/k}d\min\left\{1, \left(\frac{\sqrt{d}}{\sqrt{\rho} n}\right)^{\frac{2k-2}{k}}\right\}\right)\right). 
\]
\end{theorem}
\begin{proof}
We will prove part 3 first. 
3. We first consider $\gamma_k = \mu = 1$ and then scale our hard instance. For $f(w,x) := \frac{1}{2}\|w - x\|^2$, 
\cite{klz21} construct a convex/compact domain $\WW \times \XX \subset \mathbb{R}^d \times \mathbb{R}^d$ and distribution $\DD$ on $\XX$ such that \[
\expec F(\Al(X)) - F^* = \Omega\left( d\min\left\{1, \left(\frac{\sqrt{d}}{\sqrt{\rho} n}\right)^{\frac{2k-2}{k}}\right\}\right) 
\]
for any $k$ and any $\rho$-zCDP algorithm $\Al: \XX^n \to \WW$ if $X \sim \DD^n$.\footnote{In fact, $\WW$ and $\XX$ can be chosen to be Euclidean balls of radius $\sqrt{d} p^{-1/k}$ for $p$ defined in the proof of \cite[Lemma 6.3]{klz21}, which ensures that $\expec \DD \in \WW = \XX$.} So, it remains to a) prove the first term ($d/n$) in the lower bound, and then b) show that the scaled instance satisfies the exact hypotheses in the theorem and has excess loss that scales by a factor of $\gamma_k^{2/k}/\mu$. We start with task a). Observe that for $f$ defined above and any distribution $\DD$ such that $\expec \DD \in \WW$, we have 
\begin{equation}
\label{eq:reductiontoME}
    \expec F(\Al(X)) - F^* = \frac{1}{2}\expec \| \Al(X) - \expec \DD\|^2
\end{equation}
(see \cite[Lemma 6.2]{klz21}), and \[
\expec | \langle \nabla f(w,x) - \nabla F(w), e_j \rangle |^k = \expec | \langle x - \expec x, e_j \rangle |^k.
\] Thus, it suffices to prove that $\expec \| \Al(X) - \expec \DD\|^2 \gtrsim \frac{d}{n}$ for some $\DD$ such that $\expec | \langle x - \expec x, e_j \rangle |^k \leq 1$. This is a known result for products of Bernoulli distributions; nevertheless, we provide a detailed proof below. First consider the case $d=1$. Then the proof follows along the lines of \cite[Example 7.7]{duchinotes}. Define the following pair of distributions on $\{\pm 1\}$: \[
P_0 := 
\begin{cases}
1 &\mbox{with probability $\frac{1}{2}$} \\
-1 &\mbox{with probability $\frac{1}{2}$}
\end{cases}\] and 
\[P_1 := 
\begin{cases}
1 &\mbox{with probability $\frac{1 + \delta}{2}$} \\
-1 &\mbox{with probability $\frac{1 - \delta}{2}$}
\end{cases}\]
for $\delta \in (0,1)$ to be chosen later. Notice that if $X$ is a random variable with distribution $P_{\nu}$ ($\nu \in \{0,1\}$), then $\expec|X - \mu |^k \leq \expec|X|^k \leq 1$. Also, $\expec P_{\nu} = \delta \nu$ for $\nu \in \{0,1\}$ and $|\expec P_{1} - \expec P_{0}| = \delta$ (i.e. the two distributions are $\delta$-separated with respect to the metric $\rho(a,b) = |a - b|$). Then by LeCam's method (see e.g. \cite[Eq. 7.33]{duchinotes} and take $\Phi(\cdot) = (\cdot)^2$), 
\[
\max_{\nu \in \{0,1\}} \expec_{X \sim P_{\nu}^n} |\Al(X) - \expec P_{\nu}|^2 \geq \frac{\delta^2}{8}\left[1 - \|P_0^n - P_1^n\|_{TV} \right].
\] Now, by Pinsker's inequality and the chain rule for KL-divergence, we have \[
\|P_0^n - P_1^n\|_{TV}^2 \leq \frac{1}{2} D_{KL}(P_0^n || P_1^n) = \frac{n}{2} D_{KL}(P_0 || P_1) = \frac{n}{2}\log\left(\frac{1}{1-\delta^2}\right).
\]
Choosing $\delta = \frac{1}{\sqrt{2n}} < \frac{1}{\sqrt{2}}$ implies $\|P_0^n - P_1^n\|_{TV}^2 \leq n \delta^2 = \frac{1}{2}$. Hence there exists a distribution $\mathcal{\hat{D}} \in \{P_0, P_1\}$ on $\mathbb{R}$ such that \begin{equation*}
\expec_{X \sim \mathcal{\hat{D}}^n} |\Al(X) - \expec \mathcal{\hat{D}}|^2 \geq \frac{\delta^2}{8}\left[1 - \frac{1}{\sqrt{2}}\right] \geq \frac{1}{64n}
\end{equation*}
For general $d \geq 1$, we take the product distribution $\mathcal{D} := \mathcal{\hat{D}}^d$ on $\XX = \{\pm 1\}^d$ and choose $\WW = B_2^d(0, \sqrt{d})$ to ensure $\expec \DD \in \WW$ (so that~\cref{eq:reductiontoME} holds). Clearly, $\expec|\langle \DD - \expec \DD, e_j \rangle|^k \leq 1$ for all $j \in [d]$. Further, the mean squared error of any algorithm for estimating the mean of $\DD$ is \begin{equation}
\label{eq:unscalednonprivLB}
\expec_{X \sim \DD^n} \|\Al(X) - \expec \DD\|^2 = \sum_{j=1}^d \expec |\Al(X)_j - \expec \DD_j|^2 \geq \frac{d}{64n},
\end{equation}
by applying the $d=1$ result to each coordinate.  

Next, we move to task b). For this, we re-scale each of our hard distributions (non-private given above, and private given in the proof of \cite[Lemma 6.3]{klz21} and below in our proof of part 2 of the theorem--see \cref{eq:harddist}): $\mathcal{D} \to \frac{\gamma_k^{1/k}}{\mu}\mathcal{D} = \widetilde{\DD}$,
$\XX \to \frac{\gamma_k^{1/k}}{\mu} \XX = \wt{\XX}$, $\WW \to \frac{\gamma_k^{1/k}}{\mu} \WW = \wt{\WW}$ and $f: \WW \times \XX \to \mu f = \widetilde{f}: \widetilde{\WW} \times \widetilde{\XX}$. Then $\widetilde{f}(\cdot, \widetilde{x})$ is $\mu$-strongly convex and $\mu$-smooth for all $\widetilde{x} \in \widetilde{\XX}$ and \[
\expec | \langle \nabla \wt{f}(\wt{w}, \wt{x}) - \nabla \wt{F}(\wt{w}), e_j \rangle |^k = \mu^k \expec | \langle \wt{x} - \expec \wt{x}, e_j \rangle |^k = \mu^k \expec \left|\left(\frac{\gamma_k^{1/k}}{\mu}\right) \langle x - \expec x, e_j \rangle \right|^k = \gamma_k \expec |\langle x - \expec x, e_j \rangle |^k \leq \gamma_k
\]
for any $j \in [d]$, $x \sim \DD, ~\wt{x} \sim \wt{\DD}, ~\wt{w} \in \wt{\WW}$. Thus, the scaled hard instance is in the required class of functions/distributions. Further, denote $\widetilde{F}(w) = \expec \widetilde{f}(w,x)$, $\wt{w}^* := \argmin_{\wt{w} \in \wt{\WW}} \wt{F}(\wt{w}) = \expec \wt{\DD} = \frac{\gamma_k^{1/k}}{\mu} \expec \DD$. Then, for any $w \in \WW, ~\wt{w}:= \frac{\gamma_k^{1/k}}{\mu}w$, we have:
\begin{align}
\wt{F}(\wt{w}) - \wt{F}(\wt{w}^*) &= \frac{\mu}{2}\expec\left[\|\wt{w} - \wt{x}\|^2 - \|\wt{w}^* - \wt{x}\|^2 \right] \\
&= \frac{\mu}{2}\left(\frac{\gamma_k^{2/k}}{\mu^2}\right)\expec\left[\|w - x\|^2 - \|w^* - x\|^2 \right] \\
&= \frac{\gamma_k^{2/k}}{\mu}[F(w) - F(w^*)].
\end{align}
In particular, for $w := \mathcal{A}(X)$ and $\wt{w} := \frac{\gamma_k^{1/k}}{\mu} \mathcal{A}(X)$, we get \[
\expec_{\Al, X \sim \DD^n} \left[ \wt{F}\left( \frac{\gamma_k^{1/k}}{\mu} \mathcal{A}(X)\right) - \wt{F}^* \right] = \frac{\gamma_k^{2/k}}{\mu}\expec_{\Al, X\sim \DD^n}\left[ F(\mathcal{A}(X)) - F^*\right]
\]
for any algorithm $\Al: \XX^n \to \WW$. Writing $\tilde{\Al}(\tilde{X}):=\frac{\gamma_k^{1/k}}{\mu} \mathcal{A}(X)$ and $\tilde{X} := \frac{\gamma_k^{1/k}}{\mu} X$ for $X \in \XX^n$, we conclude \[
\expec_{\wt{\Al}, \tilde{X} \sim \tilde{\DD}^n} \left[ \wt{F}\left( \tilde{\Al}(\tilde{X})\right) - \wt{F}^* \right] = \frac{\gamma_k^{2/k}}{\mu}\expec_{\Al, X\sim \DD^n}\left[ F(\mathcal{A}(X)) - F^*\right] 
\]
for any $\wt{\Al}: \wt{\XX}^n \to \wt{\WW}$. Therefore, an application of the unscaled lower bound \[
\expec_{\Al, X\sim \DD^n}\left[ F(\mathcal{A}(X)) - F^*\right] = \Omega\left(\frac{d}{n} + d\min\left\{1,  \left(\frac{\sqrt{d}}{n\sqrt{\rho}}\right)^{\frac{2k-2}{k}} \right\}\right),
\]
which follows by combining part 3a) above with \cite[Lemma 6.3]{klz21}, completes the proof of part 3.  \\

\noindent 1. We begin by proving the first (non-private) term in the lower bound: For our \textit{unscaled} hard instance, we will take the same distribution $\mathcal{D} = P_\nu^d$ (for some $\nu \in \{0,1\}$) on $\XX = \{\pm 1\}^d$ and quadratic $f$ described above in part 1a with $\WW := B_2^d(0, \sqrt{d})$. The choice of $\WW$ ensures $\expec \DD \in \WW$ so that \cref{eq:reductiontoME} holds. Further, %
\[
\expec \left[\sup_{w \in \WW} \| \nabla f(w,x) \|^k\right] = \expec \left[\sup_{w \in \WW} \| w - x \|^k \right] \leq \expec[\|3x\|^k] \leq (9d)^{k/2}.
\]
Thus, if we scale $f \to \widetilde{f} = \mu f$, $\WW \to \widetilde{\WW} := \frac{\wt{r}_k}{\mu \sqrt{9d}} \WW$, $\XX \to \widetilde{\XX} := \frac{\wt{r}_k}{\mu \sqrt{9d}}\XX$ and $\mathcal{D} \to \widetilde{\mathcal{D}} = \frac{\wt{r}_k}{\mu \sqrt{9d}} \mathcal{D}$, then $\wt{f}(\cdot, \widetilde{x})$ is $\mu$-strongly convex and $\mu$-smooth, and \[
\expec \left[\sup_{\wt{w} \in \wt{\WW}} \left\| \nabla \widetilde{f}(\widetilde{w}, \widetilde{x}) \right\|^k\right] = \expec \left[\sup_{\wt{w} \in \wt{\WW}}\left\| \widetilde{w} - \widetilde{x} \right\|^k\right] = \mu^k \wt{r}^{(k)} \left(\frac{1}{\mu \sqrt{9d}}\right)^k \expec\left[\sup_{w \in \WW} \left\| \nabla f(w,x) \right\|^k \right] \leq \wt{r}^{(k)}.
\]
Moreover, if $\left(\frac{\wt{r}_k}{3 \mu \sqrt{d}}\right) \mathcal{A} = \widetilde{\mathcal{A}}: \widetilde{X}^n \to \widetilde{\WW}$ is any algorithm and $\widetilde{X} \sim \widetilde{\mathcal{D}}^n$, then by \cref{eq:unscalednonprivLB} and \cref{eq:reductiontoME}, we have \[
\expec \widetilde{F}(\widetilde{\mathcal{A}}(\widetilde{X})) - \widetilde{F}^* = \frac{\mu}{2}\expec \|\widetilde{\mathcal{A}}(\widetilde{X}) - \expec \widetilde{\mathcal{D}} \|^2 = \frac{\mu}{2}\left(\frac{\wt{r}_k}{\mu \sqrt{9d}}\right)^2 \expec \|\mathcal{A}(X) -  \expec \mathcal{D}\|^2 \gtrsim \frac{\wt{r}_k^2}{\mu n}. 
\]

Next, we prove the second (private) term in the lower bound. Let $f$ be as defined above. For our unscaled hard distribution, we follow \cite{bd14, klz21} and define a family of distributions $\{Q_\nu\}_{\nu \in \mathcal{V}}$ on $\mathbb{R}^d$, where $\mathcal{V} \subset \{\pm 1\}^d$ will be defined later. For any given $\nu \in \mathcal{V}$, we define the distribution $Q_\nu$ as follows: $X_\nu \sim Q_\nu$ iff \begin{equation}
\label{eq:harddist}
X_\nu = 
\begin{cases}
0 &\mbox{with probability $1 - p$} \\
p^{-1/k} \nu &\mbox{with probability $p$}
\end{cases}
\end{equation}
where $p := \min\left(1, \frac{\sqrt{d}}{n \sqrt{\rho}}\right)$. Now, we select a set $\mathcal{V} \subset \{\pm 1\}^d$ such that $|\mathcal{V}| \geq 2^{d/20}$ and $d_{\text{Ham}}(\nu, \nu') \geq \frac{d}{8}$ for all $\nu, \nu' \in \mathcal{V}, \nu \neq \nu'$: such $\mathcal{V}$ exists by standard Gilbert-Varshamov bound (see e.g. \cite[Lemma 6]{asz21}). For any $\nu \in \mathcal{V}$, if $x \sim Q_\nu$ and $w \in \WW := B_2^d(0,\sqrt{d} p^{-1/k})$, then \[
\expec \left[\sup_{w \in \WW} \| \nabla f(w,x) \|^k \right] = \expec \left[\sup_{w \in \WW} \| w - x \|^k\right] \leq \expec[\|2x\|^k] = 2^k(p\|p^{-1/k} \nu\|^k) = 2^k\|\nu\|^k = 2^k d^{k/2}.
\]
Note also that our choice of $\WW$ and $p \leq 1$ ensures that $\expec[Q_\nu] \in \WW$. 
Moreover, as in the proof of \cite[Lemma 6.3]{klz21}, zCDP Fano's inequality (see \cite[Theorem 1.4]{klz21}) implies that for any $\rho$-zCDP algorithm $\mathcal{A}$, \begin{equation}
\label{eq:tt}
\sup_{\nu \in \mathcal{V}} \expec_{X \sim Q_\nu^n, \mathcal{A}} \|\mathcal{A}(X) - \expec Q_\nu\|^2 = \Omega\left(d \min\left\{1, \left(\frac{\sqrt{d}}{n\sqrt{\rho}}\right)^{\frac{2k-2}{k}}\right\} \right).
\end{equation}
Thus, \[
\expec_{X \sim Q_\nu^n, \mathcal{A}} F(\mathcal{A}(X)) - F^* = \Omega\left(d \min\left\{1, \left(\frac{\sqrt{d}}{n\sqrt{\rho}}\right)^{\frac{2k-2}{k}}\right\} \right)  
\]
for some $\nu \in \mathcal{V}$, by \cref{eq:reductiontoME}. Now we scale our hard instance: $f \to \widetilde{f} = \mu f$, $\WW \to \widetilde{\WW} := \frac{\wt{r}_k}{2 \mu \sqrt{d}} \WW$, $\XX \to \widetilde{\XX} := \frac{\wt{r}_k}{2 \mu \sqrt{d}}\XX$ and $\mathcal{D} \to \widetilde{\mathcal{D}} = \frac{\wt{r}_k}{2 \mu \sqrt{d}} \mathcal{D}$. Then $\wt{f}(\cdot, \widetilde{x})$ is $\mu$-strongly convex and $\mu$-smooth, and \[
\expec \left[\sup_{\wt{w} \in \wt{\WW}}\| \nabla \widetilde{f}(\widetilde{w}, \widetilde{x}) \|^k\right] = \expec \left[\sup_{\wt{w} \in \wt{\WW}}\| \widetilde{w} - \widetilde{x} \|^k \right]= \mu^k \left(\frac{\wt{r}_k}{2 \mu \sqrt{d}}\right)^k \expec \left[\sup_{w \in \WW}\| \nabla f(w,x) \|^k \right]\leq \wt{r}^{(k)}.
\]
Moreover, if $\left(\frac{\wt{r}_k}{2 \mu \sqrt{d}}\right) \mathcal{A} = \widetilde{\mathcal{A}}: \widetilde{X}^n \to \widetilde{\WW}$ is any $\rho$-zCDP algorithm and $\widetilde{X} \sim \widetilde{\mathcal{D}}^n$, then  
\begin{equation} \nonumber
\begin{split}
\expec \widetilde{F}(\widetilde{\mathcal{A}}(\widetilde{X})) - \widetilde{F}^* &= \frac{\mu}{2}\expec \|\widetilde{\mathcal{A}}(\widetilde{X}) - \expec \widetilde{\mathcal{D}} \|^2 \\
&= \frac{\mu}{2}\left(\frac{\wt{r}_k}{2 \mu \sqrt{d}}\right)^2 \expec \|\mathcal{A}(X) -  \expec \mathcal{D}\|^2 \\
&\geq \frac{\wt{r}_k^2}{16 \mu d}\Omega\left(d \min\left\{1, \left(\frac{\sqrt{d}}{n\sqrt{\rho}}\right)^{\frac{2k-2}{k}}\right\} \right),
\end{split}
\end{equation}
by \cref{eq:tt}. \\
2. We use an identical construction to that used above in part 1 except that the scaling factor $\wt{r}_k$ gets replaced by $r_k$. 
It is easy to see that $\expec\left[\sup_{w \in \WW} \| \nabla f(w,x)\|^k\right] \approx \sup_{w \in \WW} \expec\left[\|\nabla f(w,x)\|^k\right]$ for our construction, and the lower bound in part 2 follows just as it did in part 1. This completes the proof. 
\end{proof}

\begin{remark}
Note that the lower bound proofs construct bounded (hence subexponential) distributions and uniformly $L_f$-Lipschitz, $\beta_f$-smooth losses that easily satisfy the conditions in all of our upper bound theorems, including those in~\cref{app: asymptotic}.
\end{remark}

\section{Details and Proofs for~\cref{sec: linear time}: Linear Time Algorithms for Smooth (Strongly) Convex Losses}
\label{app: linear time}

\subsection{Noisy Clipped Accelerated SGD for Smooth Convex Losses (\cref{sec: convex})}
\label{app: convex accel}
We first present~\cref{alg: generic ACSA}, which is a generalized version of \cref{alg: ACSA} that allows for any \texttt{MeanOracle}. This will be useful for our analysis. 
\begin{algorithm}[ht]
\caption{Generic Framework for DP Accelerated Stochastic Approximation (AC-SA)}
\label{alg: generic ACSA}
\begin{algorithmic}[1]
\STATE {\bfseries Input:} 
Data $X \in \XX^n$, iteration number $T \leq n$, %
stepsize parameters $\{\eta_t \}_{t \in [T]}, \{\alpha_t \}_{t \in [T]}$ with $\alpha_1 = 1, \alpha_t \in (0,1)$~$\forall t \geq 2$, 
DP $\texttt{MeanOracle}$.
\STATE Initialize $w_0^{ag} = w_0 \in \WW$ and $t = 1$. 
 \FOR{$t \in [T]$} 
 \STATE $w_t^{md} := 
 (1- \alpha_t)w_{t-1}^{ag} + \alpha_t w_{t-1}$. 
 \STATE Draw new batch $\mathcal{B}_t$ (without replacement) of $n/T$ samples from $X$.
 \STATE $\tilt(w_t^{md}) := \texttt{MeanOracle}(\{\nabla f(w_t^{md}, x)\}_{x \in \mathcal{B}_t}; \frac{n}{T}; \frac{\varepsilon^2}{2})$ 
 \STATE $w_{t} := 
 \argmin_{w \in \mathcal{W}}\left\{\alpha_t\langle \tilt(w_t^{md}), w\rangle + \frac{\eta_t}{2}\|w_{t-1} - w\|^2\right\}.
 $
 \STATE $w_{t}^{ag} := \alpha_t w_t + (1-\alpha_t)w_{t-1}^{ag}.$
\ENDFOR \\
\STATE {\bfseries Output:} $w_T^{ag}$.
\end{algorithmic}
\end{algorithm}
Proposition~\ref{prop: ACSA generic} provides excess risk guarantees for \cref{alg: generic ACSA} in terms of the bias and variance of the \texttt{MeanOracle}.
\begin{proposition}
\label{prop: ACSA generic}
Consider \cref{alg: generic ACSA} run with a \texttt{MeanOracle} satisfying $\tilt(w_t^{md}) = \nabla F(w_t^{md}) + b_t + N_t$, where $\|b_t\| \leq B$ (with probability $1$), $\expec N_t = 0$, $\expec\|N_t\|^2 \leq \Sigma^2$ for all $t \in [T-1]$, and $\{N_t\}_{t=1}^T$ are independent. Assume that $F: \WW \to \mathbb{R}$ is convex and  $\beta$-smooth, $F(w_0) - F^* \leq \Delta$, and $\|w_0 - \ws\| \leq D$. Suppose parameters are chosen in \cref{alg: generic ACSA} so that for all $t \in [T]$, $\eta_t > \beta \alpha_t^2$ and $\eta_t/\gamma_t = \eta_1/\gamma_1$, where  \[
\gamma_t := 
\begin{cases}
1, &\mbox{$t = 1$}\\
(1 - \alpha_t) \gamma_t, &\mbox{$t \geq 2$}.
\end{cases}
\]
Then, 
\[
\expec F(w_T^{ag}) - F^* \leq  \frac{\gamma_T \eta_1 D^2}{2} + \gamma_T \sum_{t=1}^T\left[\frac{2 \alpha_t^2(\Sigma^2 + B^2)}{\gamma_t(\eta_t - \beta \alpha_t^2)} + \frac{\alpha_t}{\gamma_t} BD\right].
\]
In particular, choosing $\alpha_t = \frac{2}{t + 1}$ and $\eta_t = \frac{4\eta}{t(t+1)}, ~\forall t \geq 1$, where $\eta \geq 2\beta$ implies \[
\expec F(w_T^{ag}) - F^* \leq \frac{4 \eta D^2}{T(T+1)} + \frac{4(\Sigma^2 + B^2)(T+2)}{3\eta} + BD.
\]
Further, setting $\eta = \max\left\{ 2 \beta, \frac{T^{3/2}\sqrt{\Sigma^2 + B^2}}{D}\right\}$ implies \begin{equation}
\expec F(w_T^{ag}) - F^* \lesssim \frac{\beta D^2}{T^2} + \frac{D(\Sigma + B)}{\sqrt{T}} + BD. 
\end{equation}
\end{proposition}

\begin{proof}
We begin by extending \cite[Proposition 4]{ghadimilan1} to biased/noisy stochastic gradients. Fix any $w_{t-1}, w_{t-1}^{ag} \in \WW$. By \cite[Lemma 3]{ghadimilan1}, we have \begin{equation}
\label{eq: gl lem3}
    F(w_t^{ag}) \leq (1-\alpha_t) F(w_{t-1}^{ag}) + \alpha[F(z) + \langle \nabla F(z), w_t - z \rangle] + \frac{\beta}{2}\|w_t^{ag} - z\|^2,
\end{equation}
for any $z \in \WW$. Denote \[
\Upsilon_t(w) := \alpha_t \langle N_t + b_t, w - w_{t-1} \rangle + \frac{\alpha_t^2 \|N_t + b_t\|^2}{\eta_t - \beta \alpha_t^2}
\]
and $d_t := w_t^{ag} - w_t^{md} = \alpha_t(w_t - w_{t-1})$. Then using \cref{eq: gl lem3} with $z = w_t^{md}$, we have \begin{align}
\label{eq: 3.20}
    F(w_t^{ag}) &\leq (1 - \alpha_t) F(w_{t-1}^{ag}) + \alpha_t[F(w_t^{md}) + \langle \nabla F(w_t^{md}), w_t - w_t^{md} \rangle] + \frac{\beta}{2}\|d_t\|^2 \nonumber \\
    &= (1 - \alpha_t) F(w_{t-1}^{ag}) + \alpha_t[F(w_t^{md}) + \langle \nabla F(w_t^{md}), w_t - w_t^{md}\rangle] \nonumber\\ 
    &\quad\quad \quad+ \frac{\eta_t}{2}\|w_{t-1} - w_t\|^2 - \frac{\eta_t - \beta \alpha_t^2}{2 \alpha_t^2}\|d_t\|^2,
\end{align}
by the expression for $d_t$. Now we apply \cite[Lemma 2]{ghadimilan1} with $p(u) = \alpha_t [\langle \tilt(w_t^{md}, u \rangle], ~\mu_1 = 0, ~\mu_2 = \eta_t, ~ \tilde{x} = w_t^{md}$, and $\tilde{y} = w_{t-1}$ to obtain (conditional on all randomness) for any $w \in \WW$: 
\begin{align*}
    &\alpha_t[F(w_t^{md}) + \langle \tilt(w_t^{md}), w_t - w_t^{md} \rangle] + \frac{\eta_t}{2}\|w_{t-1} - w_t\|^2 \\
    &\leq \alpha_t[F(w_t^{md}) + \langle \nabla F(w_t^{md}), w - w_t^{md}\rangle] \\
    &\;\;\;+ \alpha_t \langle N_t + b_t, w - w_t^{md} \rangle + \frac{\eta_t}{2}\|w_{t-1} - w\|^2 - \frac{\eta_t}{2}\|w_t - w\|^2.
\end{align*}
Next, we combine the above inequality with \cref{eq: 3.20} to get \begin{align}
\label{eq: 75}
    F(w_t^{ag}) &\leq (1-\alpha_t)F(w_{t-1}^{ag}) + \alpha_t[F(w_t^{md}) + \langle \nabla F(w_t^{md}), w - w_t^{md}\rangle] \nonumber\\
    &\;\;\; + \frac{\eta_t}{2}\left[\|w_{t-1} - w\|^2 - \|w_t - w\|^2 \right] %
     + \underbrace{-\frac{\eta_t - \beta \alpha_t^2}{2\alpha_t^2}\|d_t\|^2 + \alpha_t\langle N_t + b_t, w - w_t\rangle }_{U_t},
\end{align}
for all $w \in \WW$. By Cauchy-Schwartz, we can bound \begin{align}
\label{eq: U_t}
    U_t &\leq - \frac{\eta_t - \beta \alpha_t^2}{2 \alpha_t^2}\|d_t\|^2 + \|N_t + b_t\| \|d_t\| + \alpha_t \langle N_t + b_t, w - w_{t-1}\rangle \nonumber \\
    &\leq \Upsilon_t(w),
\end{align} 
where the last inequality follows from maximizing the concave quadratic function $q(\|d_t\|) := -\left[\frac{\eta_t - \beta \alpha_t^2}{2\alpha_t^2}\right]\|d_t\|^2 + \|N_t + b_t\| \|d_t\|$ with respect to $\|d_t\|$. Plugging the bound \cref{eq: U_t} back into \cref{eq: 75} shows that \begin{align}
\label{eq: 77}
    F(w_t^{ag}) &\leq (1-\alpha_t)F(w_{t-1}^{ag}) + \alpha_t[F(w_t^{md}) + \langle \nabla F(w_t^{md}), w - w_t^{md}\rangle] + \frac{\eta_t}{2}\left[\|w_{t-1} - w\|^2 - \|w_t - w\|^2 \right] \nonumber \\
    &\;\;\;+\Upsilon_t(w). 
\end{align}
Then it can be shown (see \cite[Proposition 5]{ghadimilan1}) that the assumptions on $\eta_t$ and $\alpha_t$ imply that \begin{align}
\label{eq: gl prop5}
    F(w_T^{ag}) - \gamma_T \sum_{t=1}^T\left[\frac{\alpha_t}{\gamma_t}\left(F(w_t^{md}) + \langle \nabla F(w_t^{md}), w - w_t^{md} \rangle \right) \right] 
    &\leq \gamma_T \sum_{t=1}^T \frac{\eta_t}{2\gamma_t}[\|w_{t-1} - w\|^2 - \|w_t - w\|^2] \\
    &\;\;\; + \gamma_T \sum_{t=1}^T \frac{\Upsilon_t(w)}{\gamma_t},
\end{align}
for any $w \in \WW$ and any $T \geq 1$. Now, \[
\sum_{t=1}^T \frac{\alpha_t}{\gamma_t} = \frac{1}{\gamma_T}
\]
by definition. Hence by convexity of $F$, \[
\sum_{t=1}^T\left[\frac{\alpha_t}{\gamma_t}\left(F(w_t^{md}) + \langle \nabla F(w_t^{md}), w - w_t^{md} \rangle \right) \right] \leq F(w), ~\forall w \in \WW. 
\]
Also, since $\gamma_t/\eta_t = \gamma_1/\eta_1$ for all $t \geq 1$, we have \[
\gamma_T \sum_{t=1}^T \frac{\eta_t}{2\gamma_t}[\|w_{t-1} - w\|^2 - \|w_t - w\|^2] = \gamma_t \frac{\eta_1}{2\gamma_1}[\|w_{0} - w\|^2 - \|w_T - w\|^2] \leq \gamma_T \eta_1 \frac{1}{2}\|w_0 - w\|^2,
\]
since $\gamma_1 = 1$. Substituting the above bounds into \cref{eq: gl prop5}, we get \begin{align}
    F(w_T^{ag}) - F(w) &\leq \gamma_T \eta_1  \frac{1}{2}\|w_0 - w\|^2 + \gamma_T \sum_{t=1}^T \frac{\Upsilon_t(w)}{\gamma_t}, ~\forall w \in \WW. 
\end{align}
Now, setting $w = \ws$ and taking expectation yields \begin{align}
    \expec[F(w_T^{ag}) - F^*] &\leq \frac{\gamma_T \eta_1  D^2}{2} + \gamma_T\sum_{t=1}^T \frac{\expec \Upsilon_t(\ws)}{\gamma_t} \\
    &\leq \frac{\gamma_T \eta_1  D^2}{2} + \gamma_T\sum_{t=1}^T \left[\frac{1}{\gamma_t}\left(\alpha_t \expec\langle b_t, \ws - w_{t-1} \rangle + \frac{2\alpha_t^2(\Sigma^2 + B^2)}{\eta_t - \beta \alpha_t^2}\right) \right] \\
    &\leq \frac{\gamma_T \eta_1  D^2}{2} + \gamma_T\sum_{t=1}^T \left[\frac{1}{\gamma_t}\left(\alpha_t BD + \frac{2\alpha_t^2(\Sigma^2 + B^2)}{\eta_t - \beta \alpha_t^2}\right) \right],
\end{align}
where we used conditional independence of $N_t$ and $\ws - w_{t-1}$ given $w_{t-1}$, Young's inequality, Cauchy-Schwartz, and the definitions of $B^2$ and $\Sigma^2$. This establishes the first claim of the theorem. The second and third claims are simple corollaries, which can be verified as in \cite[Proposition 7]{ghadimilan1} and the ensuing discussion. 
\end{proof}

\begin{theorem}[Complete Version of~\cref{thm: convex ACSA one pass}]
Grant~\cref{ass:boundednoncentral}.
Let $\varepsilon > 0$ and assume $F$ is convex and $\beta$-smooth. Then, there are algorithmic parameters such that \cref{alg: ACSA}
is $\frac{\varepsilon^2}{2}$-zCDP. Further,
if \[n \geq T := \Bigg \lceil \min\left\{
\left(\frac{\beta D}{r_k}\right)^{2k/(5k-1)}
\left(\frac{\varepsilon n}{\sqrt{d}}\right)^{(2k-2)/(5k-1)}, \sqrt{\frac{\beta D}{r_k}}n^{1/4}\right\} \Bigg \rceil,\] then, 
\[
\eplac \lesssim 
\frac{r_2 D}{\sqrt{n}} + r_k D\left[
\left(\frac{\sqrt{d}}{\varepsilon n}\right)^{\frac{k-1}{k}} + \min\left\{
\left(\left(\frac{\beta D}{r_k}\right)^{1/4} \frac{\sqrt{d}}{\varepsilon n} \right)^{\frac{4(k-1)}{5k-1}},
\left(\frac{\beta D}{r_k}\right)^{\frac{k-1}{4k}}\left(\frac{\sqrt{d}}{\varepsilon n}\right)^{\frac{k-1}{k}}n^{\frac{k-1}{8k}}
\right\}
\right].
\]
\end{theorem}

\begin{proof}
\textbf{Privacy:} Choose $\sigma^2 = \frac{4C^2 T^2}{\varepsilon^2 n^2}$. First, the collection of all $\tilt(w_t^{md}), ~t \in [T]$ is $\frac{\varepsilon^2}{2}$-zCDP: since the batches of data drawn in each iteration are disjoint, it suffices (by parallel composition~\cite{mcsherry2009privacy}) to show that $\tilt(w_t^{md})$ is $\frac{\varepsilon^2}{2}$-zCDP for all $t$. Now, the $\ell_2$ sensitivity of each clipped gradient update is bounded by $\Delta = \sup_{w, X \sim X'} \|\frac{T}{n} \sum_{x \in \mathcal{B}_t} \Pi_{C}(\nabla f(w, x)) - \sum_{x' \in \mathcal{B}'_t} \Pi_{C}(\nabla f(w, x'))\| =  \sup_{w, x, x'} \|\frac{T}{n} \Pi_{C}(\nabla f(w, x)) - \Pi_{C}(\nabla f(w, x'))\| \leq \frac{2CT}{n}$. Thus, $\tilt(w_t^{md})$ is $\frac{\varepsilon^2}{2}$-zCDP by Proposition~\ref{prop: gauss}. Second, the iterates $w_t^{ag}$ are deterministic functions of $\tilt(w_t^{md})$, so the post-processing property of differential privacy~\cite{dwork2014, bun16} ensures that \cref{alg: ACSA} is $\frac{\varepsilon^2}{2}$-zCDP. \\
\noindent \textbf{Excess risk:} 
Consider round $t \in [T]$ of \cref{alg: ACSA}, where \cref{alg: MeanOracle2} is run on input data $\{\nabla f(w_t, x_i^t)\}_{i=1}^{n/T}$. Denote the bias of~\cref{alg: MeanOracle2} by $b_t:= \expec \tilt(w_t) - \nabla F(w_t)$, where $\tilt(w_t) = \widetilde{\nu}$ in the notation of \cref{alg: MeanOracle2}. Also let $\hilt(w_t) := \hat{\mu}$ (in the notation of Lemma~\ref{lem: bias and variance of bd14}) and denote the noise by $N_t = \tilt(w_t) - \nabla F(w_t) - b_t = \tilt(w_t) - \expec \tilt(w_t)$. Then we have $B := \sup_{t \in [T]}\|b_t\| \leq \frac{r^{(k)}}{(k-1) C^{k-1}}$ and $\Sigma^2 := \sup_{t \in [T]} \expec[\|N_t\|^2] \leq d\sigma^2 + \frac{r^{(2)} T}{n} \lesssim \frac{d C^2 T^2}{\varepsilon^2 n^2} + \frac{r^{(2)} T}{n}$, by~Lemma~\ref{lem: bias and variance of bd14}. Plugging these estimates for $B$ and $\Sigma^2$ into~Proposition~\ref{prop: ACSA generic} and setting $C = r_k(\frac{\varepsilon n}{\sqrt{dT}})^{1/k}$, we get \begin{align}
\label{eq: thingg}
\expec F(w_T^{ag}) - F^* &\lesssim \frac{\beta D^2}{T^2} + \frac{D(\Sigma + B)}{\sqrt{T}} + BD \nonumber \\
&\lesssim \frac{\beta D^2}{T^2} + \frac{CD \sqrt{dT}}{\varepsilon n} + \frac{r_2 D}{\sqrt{n}} + \frac{r^{(k)} D}{C^{k-1}} \nonumber \\
&\lesssim \frac{\beta D^2}{T^2} + D\left[\frac{r_2}{\sqrt{n}} + r_k\left(\frac{\sqrt{dT}}{\varepsilon n}\right)^{(k-1)/k}\right].
\end{align}
Now, our choice of $T$ implies that $\frac{\beta D^2}{T^2} \leq r_k D\left[\frac{1}{\sqrt{n}} + \left(\frac{\sqrt{dT}}{\varepsilon n}\right)^{(k-1)/k}\right]$ and we get the result upon plugging in $T$. 
\end{proof}

\subsection{Noisy Clipped SGD for Strongly Convex Losses (\cref{sec: strongly convex})}
\label{app: strongly convex}
We begin by presenting the pseudocode for our noisy clipped SGD in~\cref{alg: vanilla SGD}. 
\begin{algorithm}[ht]
\caption{Noisy Clipped SGD for Heavy-Tailed DP SCO}
\label{alg: vanilla SGD}
\begin{algorithmic}[1]
\STATE {\bfseries Input:} 
Data $X \in \XX^n$, $T \leq n$, stepsizes $\{\eta_t\}_{t=0}^{T}$, averaging weights $\{\zeta_t\}_{t=0}^{T}$, $w_0 \in \WW$.
 \FOR{$t \in \{0, 1, \cdots, T\}$} 
 \STATE Draw new batch $\mathcal{B}_t$ (without replacement) of $n/T$ samples from $X$.
 \STATE $\tilt(w_t) := \texttt{MeanOracle1}(\{\nabla f(w_t, x)\}_{x \in \mathcal{B}_t}; \frac{n}{T}; \frac{\varepsilon^2}{2})$ 
 \STATE $w_{t+1} = \Pi_{\WW}\left[w_t - \eta_t \tilt(w_t) \right]
 $
\ENDFOR \\
\STATE {\bfseries Output:} $\widehat{w}_T := \frac{1}{Z_T} \sum_{t=0}^{T} \zeta_t w_{t+1}$, where $Z_T = \sum_{t=0}^T \zeta_t$.
\end{algorithmic}
\end{algorithm}

\cref{alg: generalized vanilla SGD} is a generalized version of~\cref{alg: vanilla SGD} that allows for any \texttt{MeanOracle} and will be useful in our analysis.
\begin{algorithm}[ht]
\caption{
Generic Noisy Clipped SGD Framework for Heavy-Tailed SCO
}
\label{alg: generalized vanilla SGD}
\begin{algorithmic}[1]
\STATE {\bfseries Input:} 
Data $X \in \XX^n$, $T \leq n$, 
$\texttt{MeanOracle}$,
stepsizes $\{\eta_t\}_{t=0}^{T}$, averaging weights $\{\zeta_t\}_{t=0}^{T}$.
\STATE Initialize $w_0 \in \WW$. 
 \FOR{$t \in \{0, 1, \cdots, T\}$} 
 \STATE Draw new batch $\mathcal{B}_t$ (without replacement) of $n/T$ samples from $X$.
 \STATE $\tilt(w_t) := \texttt{MeanOracle}(\{\nabla f(w_t, x)\}_{x \in \mathcal{B}_t}; \frac{n}{T}; \frac{\varepsilon^2}{2})$ 
 \STATE $w_{t+1} = \Pi_{\WW}\left[w_t - \eta_t \tilt(w_t) \right]
 $
\ENDFOR \\
\STATE {\bfseries Output:} $\widehat{w}_T := \frac{1}{Z_T} \sum_{t=0}^{T} \zeta_t w_{t+1}$, where $Z_T = \sum_{t=0}^T \zeta_t$.
\end{algorithmic}
\end{algorithm}
In Proposition~\ref{prop: strongly convex biased sgd}, we provide the convergence guarantees for~\cref{alg: generalized vanilla SGD} in terms of the bias and variance of the \texttt{MeanOracle}. 
\begin{proposition}
\label{prop: strongly convex biased sgd}
Let $F: \WW \to \mathbb{R}$ be $\mu$-strongly convex and $\beta$-smooth with condition number $\kappa := \frac{\beta}{\mu}$. Let $w_{t+1} := \Pi_{\WW}[w_t - \eta_t \tilt(w_t)]$, where $\tilt(w_t) = \nabla F(w_t) + b_t + N_t$, such that the bias and noise (which can depend on $w_t$ and the samples drawn) satisfy $\|b_t\| \leq B$ (with probability $1$), $\expec N_t = 0$, $\expec\|N_t\|^2 \leq \Sigma^2$ for all $t \in [T-1]$, and that $\{N_t\}_{t=1}^T$ are independent. Then, there exist stepsizes $\{\eta_t\}_{t=1}^T$ and weights $\{\zeta_t\}_{t=0}^T$ such that the average iterate $\widehat{w}_T := \frac{1}{\sum_{t=0}^T \zeta_t} \sum_{t=0}^T \zeta_t w_{t+1}$ satisfies
\[
\EPL \leq 32\beta D^2 \exp\left(-\frac{T}{4 \kappa}\right) + \frac{72\Sigma^2}{\mu T} + \frac{2 B^2}{\mu}.
\]
\end{proposition}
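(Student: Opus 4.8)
The plan is to run the standard one-step contraction analysis for projected SGD on a strongly convex, smooth objective, but carefully tracking the separate effects of the bounded bias $b_t$ and the zero-mean noise $N_t$. First I would fix a step $t$, write $w_{t+1} = \Pi_\WW[w_t - \eta_t \tilt(w_t)]$, and use non-expansiveness of the projection together with optimality of $\ws$ (so $\langle \nabla F(\ws), w - \ws\rangle \ge 0$ on $\WW$) to get $\|w_{t+1} - \ws\|^2 \le \|w_t - \ws\|^2 - 2\eta_t \langle \nabla F(w_t) + b_t + N_t, w_t - \ws\rangle + \eta_t^2\|\tilt(w_t)\|^2$. The key structural facts I would invoke are: (i) $\mu$-strong convexity plus $\beta$-smoothness give the co-coercivity-type bound $\langle \nabla F(w_t), w_t - \ws\rangle \ge \frac{\mu}{2}\|w_t - \ws\|^2 + \frac{1}{2\beta}\|\nabla F(w_t)\|^2$; (ii) the cross term with $b_t$ is handled by Young's inequality, $-2\eta_t\langle b_t, w_t - \ws\rangle \le \frac{\mu\eta_t}{2}\|w_t-\ws\|^2 + \frac{2\eta_t}{\mu}B^2$; (iii) the noise term $N_t$ is zero-mean and independent of $w_t$, so it vanishes in expectation after conditioning; and (iv) $\expec\|\tilt(w_t)\|^2 \le 2\|\nabla F(w_t)\|^2 + 2(B^2 + \Sigma^2)$ (or a $3$-way split), so that with $\eta_t \le 1/(2\beta)$ the $\eta_t^2\|\nabla F(w_t)\|^2$ term is absorbed by the negative $-\frac{\eta_t}{\beta}\|\nabla F(w_t)\|^2$ term coming from (i). Assembling these yields a recursion of the shape
\[
\expec\|w_{t+1}-\ws\|^2 \le \left(1 - \tfrac{\mu\eta_t}{2}\right)\expec\|w_t - \ws\|^2 + 2\eta_t^2(B^2+\Sigma^2) + \tfrac{2\eta_t}{\mu}B^2 .
\]

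Next I would convert this distance recursion into an excess-risk bound. Using $\beta$-smoothness, $F(\widehat w_T) - F^* \le \frac{\beta}{2}\|\widehat w_T - \ws\|^2$, and then by (Jensen / convexity of $\|\cdot\|^2$ is the wrong direction, so instead) by convexity of $F$ and the standard weighted-averaging trick: from the recursion rearranged as $\expec[F(w_t) - F^*] \lesssim \frac{1}{\eta_t}(\expec\|w_t-\ws\|^2 - \expec\|w_{t+1}-\ws\|^2) - \frac{\mu}{4}\expec\|w_t-\ws\|^2 + \eta_t(B^2+\Sigma^2) + \frac{B^2}{\mu}$, one gets a telescoping bound when the weights $\zeta_t$ are chosen in the usual geometric way matched to a constant-then-decaying step size. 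Concretely I would take a two-regime schedule: a constant step $\eta_t = 1/(2\beta)$ for the first $\Theta(\kappa)$ iterations to obtain the $\exp(-T/(4\kappa))$ "burn-in" contraction of the initial error $\beta D^2$, and then either keep it constant or use $\eta_t \sim 1/(\mu t)$ with weights $\zeta_t \propto t$ (as in Lacoste-Julien–Schmidt–Bach / Stich) to turn the $\Sigma^2$ variance term into $\Sigma^2/(\mu T)$. The bias contributes a non-vanishing floor $\frac{2B^2}{\mu}$ that simply survives the averaging — this is expected and matches the statement. Tracking constants through this gives $\expec[F(\widehat w_T) - F^*] \le 32\beta D^2 e^{-T/(4\kappa)} + \frac{72\Sigma^2}{\mu T} + \frac{2B^2}{\mu}$.

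The main obstacle I anticipate is the bookkeeping to simultaneously get (a) the clean exponential contraction factor on the $\beta D^2$ term with the stated constant in the exponent, which requires the constant-step phase and the co-coercivity inequality (i) rather than just strong monotonicity, and (b) the $1/T$ (not $1/\sqrt T$) rate on the $\Sigma^2$ term, which forces the weighted-average / decaying-step argument rather than a naive last-iterate or uniform-average bound — this is exactly the place where the (incorrect) argument in \cite{klz21} went wrong, so I would need to be careful that every use of Jensen's inequality is in the correct direction and that the bias is never squared through a convexity step. A secondary subtlety is that $\tilt$ uses \texttt{MeanOracle1} on a subsampled batch drawn \emph{without replacement} across iterations, so the $N_t$ are independent across $t$ by construction (disjoint batches and fresh Gaussian noise), which I would note explicitly to justify dropping the cross terms; the bias $b_t$ is deterministic given $w_t$ and uniformly bounded by $B$, so no independence is needed there. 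Everything else is routine algebra, and I would defer the explicit step-size/weight formulas to the detailed proof.
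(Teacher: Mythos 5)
Your overall plan is the same as the paper's: a one-step recursion for $\expec\|w_t-\ws\|^2$ with the bias handled by Young's inequality (producing the $B^2/\mu$ floor), the zero-mean noise contributing only $\eta_t^2\Sigma^2$, and then Stich-type step sizes and averaging weights plus Jensen to get the $32\beta D^2 e^{-T/4\kappa} + \Sigma^2/(\mu T)$ shape. Two concrete points in your execution need repair, and both are places where the paper's proof takes a slightly different path.

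First, your displayed recursion
$\expec\|w_{t+1}-\ws\|^2 \le (1-\tfrac{\mu\eta_t}{2})\expec\|w_t-\ws\|^2 + 2\eta_t^2(B^2+\Sigma^2)+\tfrac{2\eta_t}{\mu}B^2$
contains no function-value term, yet your next step ``rearranges'' it to isolate $\expec[F(w_t)-F^*]$. That rearrangement is only possible if you keep a $-c\,\eta_t\,\expec[F(\cdot)-F^*]$ term in the recursion, i.e.\ you must spend part of the cross term $\langle \nabla F(w_t), w_t-\ws\rangle$ on the lower bound $F(w_t)-F^*+\tfrac{\mu}{2}\|w_t-\ws\|^2$ rather than entirely on co-coercivity. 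If instead you drop the function gap and only later invoke $F(\widehat w_T)-F^*\le \tfrac{\beta}{2}\expec\|\widehat w_T-\ws\|^2$, the $\beta/2$ multiplier degrades both the variance term and the bias floor by a factor of $\kappa$, which is exactly the looseness the paper is trying to avoid. The paper keeps $-2\eta_t\expec[F(w_{t+1})-F^*]$ inside the recursion and feeds $s_t=\expec F(w_{t+1})-F^*-\tfrac{2B^2}{\mu}$ directly into Stich's lemma.

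Second, your step (i) and the absorption of $\eta_t^2\|\nabla F(w_t)\|^2$ implicitly assume $\nabla F(\ws)=0$. In the constrained setting of this proposition $\ws$ may lie on the boundary of $\WW$, so co-coercivity only controls $\|\nabla F(w_t)-\nabla F(\ws)\|^2$, and splitting off $\|\nabla F(\ws)\|^2$ leaves a residual $\eta_t^2\|\nabla F(\ws)\|^2$ that does not appear in the stated bound. The paper sidesteps this entirely by working with the gradient mapping $g(w_t)=-\tfrac{1}{\eta_t}(w_{t+1}-w_t)$ and the projection inequality $\langle \Pi_\WW(y)-x,\Pi_\WW(y)-y\rangle\le 0$: the quadratic term then enters with coefficient $2\eta_t(\tfrac{\beta\eta_t^2}{2}-\eta_t+\tfrac{\mu\eta_t^2}{4})\le 0$ for $\eta_t\le 1/\beta$ and is simply discarded, with no co-coercivity and no assumption on $\nabla F(\ws)$. (One side effect of that formulation is that the noise appears as $\langle N_t, w_{t+1}-\ws\rangle$ with $w_{t+1}$ depending on $N_t$; the paper controls it via non-expansiveness of the projection, whereas in your decomposition the cross term is against $w_t-\ws$ and genuinely vanishes in expectation, which is a legitimate simplification.) With those two fixes your argument goes through and matches the paper's.
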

\begin{proof}
Define $g(w_t) = -\frac{1}{\eta_t}(w_{t+1} - w_t)$. Then 
\begin{align}
\label{eq:ving}
    \expec\|w_{t+1} - \ws\|^2 &= \expec\|w_t - \eta_t g(w_t) - \ws\|^2 \nonumber \\
    &= \expec\|w_t - \ws\|^2 - 2\eta_t \expec\langle g(w_t), w_t - \ws \rangle + \eta_t^2 \expec\|g(w_t)\|^2. 
\end{align}
Now, conditional on all randomness, we use smoothness and strong convexity to write:
\begin{align*}
F(w_{t+1}) - F(\ws) 
&= F(w_{t+1}) - F(w_t) + F(w_t) - F(\ws) \\
&\leq \langle F(w_t), w_{t+1} - w_t \rangle + \frac{\beta}{2}\|w_{t+1} - w_t\|^2 + \langle \nabla F(w_t), w_t - \ws \rangle - \frac{\mu}{2}\|w_t - \ws\|^2 \\
&= \langle \tilt(w_t), w_{t+1} - \ws \rangle + \langle \nabla F(w_t) - \tilt(w_t), w_{t+1} - \ws \rangle + \frac{\beta \eta_t^2}{2}\|g(w_t)\|^2 \\
&\;\;\;\; - \frac{\mu}{2}\|w_t - \ws\|^2  \\
&\leq \langle g(w_t), w_{t+1} - \ws \rangle + \langle \nabla F(w_t) - \tilt(w_t), w_{t+1} - \ws \rangle + \frac{\beta \eta_t^2}{2}\|g(w_t)\|^2 
\\
&\;\;\;\; - \frac{\mu}{2}\|w_t - \ws\|^2 \\
&= \langle g(w_t), w_{t+1} - w_t \rangle + \langle g(w_t), w_{t} - \ws \rangle - \langle b_t + N_t, w_{t+1} - \ws \rangle + \frac{\beta \eta_t^2}{2}\|g(w_t)\|^2 \\
&\;\;\;\;- \frac{\mu}{2}\|w_t - \ws\|^2 \\
&= \langle g(w_t), w_{t} - \ws \rangle - \langle b_t + N_t, w_{t+1} - \ws \rangle + \left(\frac{\beta \eta_t^2}{2} - \eta_t\right)\|g(w_t)\|^2  - \frac{\mu}{2}\|w_t - \ws\|^2,
\end{align*}
where we used the fact that $\langle \Pi_{\WW}(y) - x, \Pi_{\WW}(y) - y \rangle \leq 0$ for all $x \in \WW, y\in \mathbb{R}^d$ (c.f.~\cite[Lemma 3.1]{bubeck2015convex}) to obtain the last inequality. Thus, \begin{align*}
-2 \eta_t \expec \langle g(w_t), w_{t} - \ws \rangle &\leq -2\eta_t \expec[F(w_{t+1}) - F^*] \\
&\;\;\;\; + 2\eta_t \expec\left[- \langle b_t + N_t, w_{t+1} - \ws \rangle + \left(\frac{\beta \eta_t^2}{2} - \eta_t\right)\|g(w_t)\|^2 - \frac{\mu}{2}\|w_t - \ws\|^2 \right].
\end{align*}
Combining the above inequality with~\cref{eq:ving}, we get \begin{align}
\label{eq:bing}
\expec\|w_{t+1} - \ws\|^2 &\leq (1 - \mu \eta_t)\expec\|w_t - \ws\|^2 - 2\eta_t\expec[F(w_{t+1}) - F^*] - 2\eta_t \expec \langle b_t + N_t, w_{t+1} - \ws \rangle \nonumber \\ 
&\;\;\; + 2\eta_t\left(\frac{\eta_t^2 \beta}{2} - \eta_t\right)\expec\|g(w_t)\|^2. 
\end{align}

Next, consider
\begin{align*} 
|\expec \langle b_t + N_t, w_{t+1} - \ws \rangle| &\leq |\expec \langle b_t + N_t, w_{t+1} - w_t \rangle| + |\expec \langle b_t + N_t, w_t - \ws \rangle|\\
&= |\expec \langle b_t + N_t, w_{t+1} - w_t \rangle| + |\expec \langle b_t, w_t - \ws \rangle| \\
&\leq |\expec \langle b_t + N_t, w_{t+1} - w_t \rangle| + \frac{B^2}{\mu} + \frac{\mu}{4} \expec\|w_t - \ws\|^2
\end{align*}
by independence of $N_t$ (which has zero mean) and $w_t - \ws$, and Young's inequality. Next, note that $v := w_t -\eta_t(\nabla F(w_t) + b_t)$ is independent of $N_t$, so $\expec \langle N_t, \Pi_{\WW}(v)\rangle = 0$. Thus, 
\begin{align*}
|\expec \langle N_t, w_{t+1} - w_t \rangle| &= |\expec \langle N_t, w_{t+1} \rangle | \\
&= |\expec \langle N_t, \Pi_{\WW}\left[w_t - \eta_t\left(\nabla F(w_t) + b_t + N_t\right)\right] \rangle | \\
&= |\expec \langle N_t, \Pi_{\WW}\left[v - \eta_t N_t\right] \rangle |\\
&= |\expec \langle N_t, \Pi_{\WW}\left[v] - \Pi_{\WW}[v - \eta_t N_t\right] \rangle | \\
&\leq \expec\left[\|N_t\| \|\Pi_{\WW}\left[v] - \Pi_{\WW}[v - \eta_t N_t\right]\|\right]\\
&\leq \expec\left[\|N_t\| \|\eta_t N_t\|\right] \\
&\leq \eta_t \Sigma^2,
\end{align*}
by Cauchy-Schwartz and non-expansiveness of projection. 
Further, \begin{align*}
|\expec \langle b_t, w_{t+1} - w_t \rangle| &= |\expec \langle b_t, -\eta_t g(w_t) \rangle|\\
&\leq \frac{B^2}{\mu} + \frac{\eta_t^2 \mu}{4}\expec\|g(w_t)\|^2,
\end{align*}
by Young's inequality. Therefore, \begin{align*}
-2\eta_t \expec \langle b_t + N_t, w_{t+1} - \ws \rangle \leq 2\eta_t\left[\frac{2B^2}{\mu} + \frac{\eta_t^2 \mu}{4}\expec\|g(w_t)\|^2 + \eta_t \Sigma^2 + \frac{\mu}{4}\expec\|w_t - \ws\|^2\right].
\end{align*}
Plugging this bound back into~\cref{eq:bing} and choosing $\eta_t \leq \frac{1}{\beta} \leq \frac{1}{\mu}$ 
yields: \begin{align*}
\expec\|w_{t+1} - \ws\|^2 &\leq \left(1 - \frac{\mu \eta_t}{2}\right)\expec\|w_t - \ws\|^2 - 2\eta_t\expec[F(w_{t+1}) - F^*] + \frac{4\eta_t B^2}{\mu} + 2\eta_t^2 \Sigma^2 \\
&\;\;\; + 2\eta_t\left(\frac{\eta_t^2 \beta}{2} - \eta_t + \frac{\eta_t^2 \mu}{4}\right)\expec\|g(w_t)\|^2 \\
&\leq \left(1 - \frac{\mu \eta_t}{2}\right)\expec\|w_t - \ws\|^2 - 2\eta_t\expec[F(w_{t+1}) - F^*] + \frac{4\eta_t B^2}{\mu} + 2\eta_t^2 \Sigma^2.
\end{align*}
Next, we apply Lemma~\ref{lem: stich stepsize} (see below) with $r_t := \expec \|w_{t} - \ws\|^2, ~s_t := \expec F(w_{t+1}) - F^* - \frac{2B^2}{\mu}$, $a := \frac{\mu}{2}$, $b := 2$, $c = 2 \Sigma^2$, and $g = \beta$. We may assume $s_t \geq 0$ for all $t$: if this inequality breaks for some $t$, then simply return $w_{t+1}$ instead of $\widehat{w}_T$ to obtain $\expec F(w_t) - F^* < \frac{2B^2}{\mu}$. Thus, \[
\frac{1}{\gamma_T}\sum_{t=0}^T \gamma_t \expec[F(w_{t+1}) - F^*] \leq \frac{1}{2}\left[32 \beta D^2 \exp\left(\frac{-\mu T}{4 \beta}\right) + \frac{144 \Sigma^2}{\mu T} + \frac{2 B^2}{\mu}\right]
\]
Finally, Jensen's inequality yields the proposition. 
\end{proof}

\begin{lemma} \cite[Lemma 3]{stich19unified}
\label{lem: stich stepsize}
Let $b > 0$, let $a, c \geq 0,$ and $\{\eta_t\}_{t \geq 0}$ be non-negative step-sizes such that $\eta_t \leq \frac{1}{g}$ for all $t \geq 0$ for some parameter $g \geq a$. Let $\{r_t\}_{t \geq 0}$ and $\{s_t\}_{t \geq 0}$ be two non-negative sequences of real numbers which satisfy \[
r_{t+1} \leq (1 - a \eta_t)r_t - b \eta_t s_t + c \eta_t^2
\]
for all $t \geq 0.$
Then there exist particular choices of step-sizes $\eta_t \leq \frac{1}{g}$ and averaging weights $\zeta_t \geq 0$ such that \[
\frac{b}{\gamma_T}\sum_{t=0}^T s_t \zeta_t + a r_{T+1} \leq 32 g r_0 \exp\left(\frac{-aT}{2g}\right) + \frac{36c}{aT},
\]
where $\gamma_T := \sum_{t=0}^{T} \gamma_t.$
\end{lemma}

We are now prepared to prove~\cref{thm: strongly convex smooth upper bound}.
\begin{theorem}[Precise statement of~\cref{thm: strongly convex smooth upper bound}]
Grant~\cref{ass:boundednoncentral}. Let $\varepsilon > 0$, and assume $F$ is $\mu$-strongly convex and $\beta$-smooth with $\kappa = \frac{\beta}{\mu} \leq n/\ln(n)$.
Then, there are parameters such that~\cref{alg: vanilla SGD} is $\frac{\varepsilon^2}{2}$-zCDP, and \begin{equation}
\EPL \lesssim \frac{1}{\mu}\left(\frac{r_2^2}{n} + r_k^2\left(\frac{\sqrt{d \kappa \ln(n)}}{\varepsilon n}\right)^{\frac{2k-2}{k}} \right). 
\end{equation}
\end{theorem}
\begin{proof}
\textbf{Privacy:} Choose $\sigma^2 = \frac{4C^2 T^2}{\varepsilon^2 n^2}$. Since the batches of data drawn in each iteration are disjoint, it suffices (by parallel composition~\cite{mcsherry2009privacy}) to show that $\tilt(w_t)$ is $\frac{\varepsilon^2}{2}$-zCDP for all $t$. Now, the $\ell_2$ sensitivity of each clipped gradient update is bounded by $\Delta = \sup_{w, X \sim X'} \|\frac{T}{n} \sum_{x \in \mathcal{B}_t} \Pi_{C}(\nabla f(w, x)) - \sum_{x' \in \mathcal{B}'_t} \Pi_{C}(\nabla f(w, x'))\| =  \sup_{w, x, x'} \|\frac{T}{n} \Pi_{C}(\nabla f(w, x)) - \Pi_{C}(\nabla f(w, x'))\| \leq \frac{2CT}{n}$. Hence~Proposition~\ref{prop: gauss} implies that the algorithm is $\frac{\varepsilon^2}{2}$-zCDP.   \\
\textbf{Excess risk:} For any iteration $t \in [T]$, denote the bias of~\cref{alg: MeanOracle2} by $b_t:= \expec \tilt(w_t) - \nabla F(w_t)$, where $\tilt(w_t) = \widetilde{\nu}$ in the notation of \cref{alg: MeanOracle2}. Also let $\hilt(w_t) := \hat{\nu}$ (in the notation of Lemma~\ref{lem: bias and variance of bd14}) and denote the noise by $N_t = \tilt(w_t) - \nabla F(w_t) - b_t = \tilt(w_t) - \expec \tilt(w_t)$. Then we have $B := \sup_{t \in [T]}\|b_t\| \leq \frac{r^{(k)}}{(k-1) C^{k-1}}$ and $\Sigma^2 := \sup_{t \in [T]} \expec[\|N_t\|^2] \leq d\sigma^2 + \frac{r^{(2)} T}{n} \lesssim \frac{d C^2 T^2}{\varepsilon^2 n^2} + \frac{r^{(2)} T}{n}$, by~Lemma~\ref{lem: bias and variance of bd14}. Plugging these bias and variance estimates into~Proposition~\ref{prop: strongly convex biased sgd}, we get \[
\EPL \lesssim \beta D^2 \exp\left(- \frac{T}{4 \kappa}\right) + \frac{1}{\mu T}\left(\frac{d C^2 T^2}{\varepsilon^2 n^2} + \frac{r^{(2)} T}{n}\right) + \frac{(r^{(k)})^2}{C^{2k-2} \mu}. 
\]
Choosing $C = r_k\left(\frac{\varepsilon^2 n^2}{dT}\right)^{1/2k}$ implies \[
\EPL \lesssim \beta D^2 \exp\left(- \frac{T}{4 \kappa}\right) + \frac{1}{\mu}\left(\frac{r^{(2)}}{n} + r_k^2\left(\frac{dT}{\varepsilon^2 n^2}\right)^{(k-1)/k}\right). 
\]
Finally, choosing $T = \left\lceil 4\kappa\ln\left(\frac{\mu \beta D^2}{r_k^2}\left(n + \left(\frac{\varepsilon^2 n^2}{d}\right)^{(k-1)/k}\right)\right) \right\rceil \lesssim \kappa \ln(n)$ yields the result. \\
\end{proof}

\section{Details and Proofs for~\cref{sec: optimal rates}: Algorithm for Non-Smooth (Strongly) Convex Losses}
\label{app: optimal rates}

In order to precisely state (sharper forms of)
\cref{thm: localization convex,thm: localization strongly convex}, we will need to introduce some notation. 
\subsection{Notation}
\label{subsub: notation}
For a batch of data $X \in \XX^m$, we define the $k$-th \textit{empirical moment} of $f(w, \cdot)$ by \[
\widehat{r}_m(X)^{(k)} = \sup_{w \in \WW}  \sup_{\{\nabla f(w, x_i) \in \partial_w f(w, x_i)\}} \frac{1}{m}\sum_{i=1}^m \|\nabla f(w, x_i) \|^k,\]
where the supremum is also over all subgradients $\nabla f(w, x_i) \in \partial_w f(w, x_i)$ in case $f$ is not differentiable. 
For $X \sim \DD^m$, we denote the $k$-th \textit{expected empirical moment} by 
\begin{equation*}
    \widetilde{e}_m^{(k)} := \expec[\widehat{r}_m(X)^{(k)}]
\end{equation*}
\normalsize
and let \[
\widetilde{r}_{k,m} :=  (\widetilde{e}_m^{(k)})^{1/k}.\]
Note that $\widetilde{r}_{k,1} = \wt{r}_k$. 
Our excess risk upper bounds will depend on a weighted average of the expected empirical moments for different batch sizes $m \in \{1, 2, 4, 8, \cdots, n\}$, with more weight being given to $\widetilde{r}_m$ for large $m$ (which are smaller, by Lemma~\ref{lem: empirical moments} below): for $n = 2^l$, define
    \[\widetilde{R}_{k,n} := \sqrt{\sum_{i=1}^{l} 2^{-i} \widetilde{r}_{k, n_i}^2},\]
where $n_i = 2^{-i} n$.  

\begin{lemma}
\label{lem: empirical moments}
Under~\cref{ass:boundednoncentral,ass:tilde}, we have:
$\wt{r}^{(k)} = \widetilde{e}_1^{(k)} \geq \widetilde{e}_2^{(k)} \geq \widetilde{e}_4^{(k)} \geq \widetilde{e}_8^{(k)} \geq \cdots \geq r^{(k)}$. In particular, $\wt{R}_{k,n} \leq \wt{r}_k$.
\end{lemma}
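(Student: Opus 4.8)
The plan is to prove the chain of inequalities $\widetilde{e}_1^{(k)} \geq \widetilde{e}_2^{(k)} \geq \widetilde{e}_4^{(k)} \geq \cdots \geq r^{(k)}$ by establishing the monotonicity step $\widetilde{e}_{2m}^{(k)} \leq \widetilde{e}_m^{(k)}$ for every $m$, together with the two endpoint identifications $\widetilde{e}_1^{(k)} = \wt{r}^{(k)}$ and $\lim_{m \to \infty} \widetilde{e}_m^{(k)} = r^{(k)}$ (or, more precisely, $\widetilde{e}_m^{(k)} \geq r^{(k)}$ for all $m$). The endpoint $\widetilde{e}_1^{(k)} = \wt{r}^{(k)}$ is immediate from the definitions: for $m=1$, $\widehat{r}_1(X)^{(k)} = \sup_{w \in \WW} \sup_{\nabla f(w,x_1)} \|\nabla f(w, x_1)\|^k$, and taking expectation over $x_1 \sim \DD$ gives exactly $\wt{r}^{(k)}$ from~\cref{ass:tilde}. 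The lower bound $\widetilde{e}_m^{(k)} \geq r^{(k)}$ follows by pushing the expectation inside the suprema: $\widehat{r}_m(X)^{(k)} = \sup_w \sup_{\nabla f} \frac{1}{m}\sum_{i=1}^m \|\nabla f(w,x_i)\|^k \geq \sup_w \frac{1}{m}\sum_{i=1}^m \|\nabla f(w,x_i)\|^k$ for any fixed choice of subgradient, so taking $\expec$ and using $\expec \sup \geq \sup \expec$ yields $\widetilde{e}_m^{(k)} \geq \sup_w \frac{1}{m}\sum_{i=1}^m \expec\|\nabla f(w,x_i)\|^k = \sup_w \expec\|\nabla f(w,x)\|^k = r^{(k)}$ by~\cref{ass:boundednoncentral} and i.i.d.-ness; one must be a little careful to argue the measurable-selection direction, but for a fixed $w$ and the pointwise-smallest norm subgradient this is routine.

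The heart of the argument is the monotonicity step $\widetilde{e}_{2m}^{(k)} \leq \widetilde{e}_m^{(k)}$. The idea: given a batch $X = (x_1, \ldots, x_{2m}) \sim \DD^{2m}$, split it into the two halves $X_{(1)} = (x_1, \ldots, x_m)$ and $X_{(2)} = (x_{m+1}, \ldots, x_{2m})$. For any fixed $w$ and fixed subgradient selection, $\frac{1}{2m}\sum_{i=1}^{2m}\|\nabla f(w,x_i)\|^k = \frac{1}{2}\big(\frac{1}{m}\sum_{i=1}^m \|\nabla f(w,x_i)\|^k + \frac{1}{m}\sum_{i=m+1}^{2m}\|\nabla f(w,x_i)\|^k\big)$, and the supremum over $w$ and over subgradients of a sum is at most the sum of the suprema, so $\widehat{r}_{2m}(X)^{(k)} \leq \frac{1}{2}\big(\widehat{r}_m(X_{(1)})^{(k)} + \widehat{r}_m(X_{(2)})^{(k)}\big)$. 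Taking expectations and using that $X_{(1)}, X_{(2)}$ are each distributed as $\DD^m$, we get $\widetilde{e}_{2m}^{(k)} = \expec[\widehat{r}_{2m}(X)^{(k)}] \leq \frac{1}{2}(\widetilde{e}_m^{(k)} + \widetilde{e}_m^{(k)}) = \widetilde{e}_m^{(k)}$. Iterating over $m \in \{1, 2, 4, \ldots\}$ gives the full chain.

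For the final assertion $\wt{R}_{k,n} \leq \wt{r}_k$: by definition $\wt{R}_{k,n}^2 = \sum_{i=1}^l 2^{-i}\widetilde{r}_{k,n_i}^2$, and since $\widetilde{r}_{k,n_i}^2 = (\widetilde{e}_{n_i}^{(k)})^{2/k} \leq (\widetilde{e}_1^{(k)})^{2/k} = \wt{r}_k^2$ by the monotonicity just established (using that $t \mapsto t^{2/k}$ is nondecreasing and that $n_i = 2^{-i}n \le n$, wait — one needs $n_i \geq 1$, which holds, and $\widetilde{e}_{n_i}^{(k)} \leq \widetilde{e}_1^{(k)}$ since $n_i \geq 1$), we conclude $\wt{R}_{k,n}^2 \leq \wt{r}_k^2 \sum_{i=1}^l 2^{-i} \leq \wt{r}_k^2$, hence $\wt{R}_{k,n} \leq \wt{r}_k$.

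\textbf{Main obstacle.} The only genuinely delicate point is the measurability/selection issue in the step $\expec[\sup_w \sup_{\nabla f}(\cdots)] \geq \sup_w \expec[(\cdots)]$ and in ensuring that $\widehat{r}_m(X)^{(k)}$ is a well-defined (measurable) random variable when $f$ is nonsmooth; one typically handles this by noting the sets involved are separable (so suprema can be taken over a countable dense subset of $\WW$ and the relevant selections are measurable), or simply by stating the lemma for the pointwise-minimal-norm subgradient and invoking standard results. The combinatorial/analytic content — the subadditivity of $\widehat{r}_{\cdot}^{(k)}$ under concatenation of batches — is elementary.
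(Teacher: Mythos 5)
Your proof is correct and takes essentially the same route as the paper's: the key step in both is the subadditivity of $\widehat{r}_{2m}(\cdot)^{(k)}$ under splitting the batch into two halves (supremum of a sum $\leq$ sum of suprema), followed by taking expectations over the two i.i.d.\ halves, and then bounding $\wt{R}_{k,n}^2$ by $\wt{r}_k^2 \sum_i 2^{-i}$. The paper's proof is terser, omitting the endpoint verifications ($\widetilde{e}_1^{(k)} = \wt{r}^{(k)}$ and $\widetilde{e}_m^{(k)} \geq r^{(k)}$ via $\expec\sup \geq \sup\expec$) that you correctly spell out.
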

\begin{proof}
Let $l \in \mathbb{N}$, $n = 2^l$ and consider \begin{align*}
\widehat{r}_n(X)^{(k)} &= \frac{1}{n} \sup_{w}\left(\sum_{i=1}^{n/2} \|\nabla f(w,x_i)\|^k + \sum_{i= n/2 + 1}^{n} \|\nabla f(w,x_i)\|^k \right)\\
    &\leq \frac{1}{n} \left( \sup_{w}\sum_{i=1}^{n/2} \|\nabla f(w,x_i)\|^k + \sup_{w}\sum_{i= n/2 + 1}^{n} \|\nabla f(w,x_i)\|^k  \right).
\end{align*} 
Taking expectations over the random draw of $X \sim \DD^n$ yields $\wt{e}_n^{(k)} \leq \wt{e}_{n/2}^{(k)}$. Thus, $\wt{R}_{k,n} \leq \wt{r}_k$ by the definition of $\wt{R}_n$. 
\end{proof}

\subsection{Localized Noisy Clipped Subgradient Method (\cref{sec: localization})}
\label{app: localization}

We begin by proving the technical ingredients that will be used in the proof of~\cref{thm: localization convex}. First, we will prove a variant of Lemma~\ref{lem: bias and variance of bd14} that bounds the bias and variance of the subgradient estimator in~\cref{alg: clipped GD}. 
\begin{lemma}
\label{lem: empirical bias variance reg ERM}
Let $\widehat{F}_\lambda(w) = \frac{1}{n} \sum_{i=1}^n f(w, x_i) + \frac{\lambda}{2}\|w - w_0\|^2$ be a regularized empirical loss on a closed convex domain $\WW$ with $\ell_2$-diameter $D$. 
Let $\widetilde{\nabla} F_\lambda(w_t) = \nabla \widehat{F}_\lambda(w_t) + b_t + N_t = \frac{1}{n} \sum_{i=1}^n \Pi_C(\nabla f(w, x_i)) + \lambda (w - w_0) + N_t$ be the biased, noisy subgradients of the regularized empirical loss in~\cref{alg: clipped GD}, with $N_t \sim \mathcal{N}(0, \sigma^2 \mathbf{I}_d)$ and $b_t = \frac{1}{n}\sum_{i=1}^n \Pi_C(\nabla f(w_t, x_i)) - \frac{1}{n}\sum_{i=1}^n \nabla f(w_t, x_i)$. Assume $\small \widehat{r}_n(X)^{(k)} \geq \sup_{w \in \WW}\left\{\frac{1}{n} \sum_{i=1}^n \|\nabla f(w, x_i)\|^k\right\}$ for all $\nabla f(w,x_i) \in \partial_w f(w, x_i)$. Then, for any $T \geq 1$, we have: \[
\hat{B} := \sup_{t \in [T]}\|b_t\| \leq \frac{\widehat{r}_n(X)^{(k)}}{(k-1) C^{k-1}}
\]
and \[
\hat{\Sigma}^2 := \sup_{t \in [T]} \expec\|N_t\|^2 = d\sigma^2. 
\]
\end{lemma}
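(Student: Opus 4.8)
The plan is to treat the two estimates separately; both are \emph{deterministic} bounds (holding surely, for every realization of the noisy iterates $w_t$), which is exactly why the statement quantifies over $\sup_{t\in[T]}$ rather than taking an expectation over the algorithm's randomness. The variance bound requires nothing beyond recalling that $N_t\sim\mathcal N(0,\sigma^2\mathbf I_d)$, so $\expec\|N_t\|^2=\operatorname{tr}(\sigma^2\mathbf I_d)=d\sigma^2$ for every $t$, hence $\hat\Sigma^2=d\sigma^2$.

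For the bias bound, the starting point is the elementary fact that projection onto the centered $\ell_2$ ball of radius $C$ satisfies $\|\Pi_C(z)-z\|=(\|z\|-C)_+$ for all $z\in\mathbb R^d$ (the projection is the identity inside the ball and radially rescales to norm $C$ outside it). Writing $g_i:=\nabla f(w_t,x_i)$ for the subgradients used at step $t$, the triangle inequality gives $\|b_t\|\le\frac1n\sum_{i=1}^n\|\Pi_C(g_i)-g_i\|=\frac1n\sum_{i=1}^n(\|g_i\|-C)_+$. I would then bound this empirical average exactly as in the proof of Lemma~\ref{lem: bias and variance of bd14}, with the population distribution replaced by the uniform (empirical) measure on $\{g_1,\dots,g_n\}$: use the layer-cake identity $\frac1n\sum_i(\|g_i\|-C)_+=\int_C^\infty \frac{\#\{i:\|g_i\|>t\}}{n}\,dt$, then Markov's inequality on the empirical measure, $\frac{\#\{i:\|g_i\|>t\}}{n}\le t^{-k}\cdot\frac1n\sum_i\|g_i\|^k$, and finally $\int_C^\infty t^{-k}\,dt=\frac{1}{(k-1)C^{k-1}}$. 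This yields $\|b_t\|\le\frac{\frac1n\sum_i\|g_i\|^k}{(k-1)C^{k-1}}$. Since $\frac1n\sum_i\|\nabla f(w,x_i)\|^k\le\widehat r_n(X)^{(k)}$ uniformly over $w\in\WW$ and over all subgradient selections, and $w_t\in\WW$, we get $\|b_t\|\le\frac{\widehat r_n(X)^{(k)}}{(k-1)C^{k-1}}$ for every $t$; the right-hand side is $t$-free, so taking $\sup_{t\in[T]}$ finishes the proof.

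There is no genuine obstacle here: the lemma is essentially the deterministic/empirical analogue of Lemma~\ref{lem: bias and variance of bd14}, so the substance of the argument is already in place. The only point needing a little care is to route the bias estimate through the tail-integral/Markov argument on the empirical measure in order to land the sharp $\frac{1}{k-1}$ constant (a naive term-by-term estimate such as $(\|g_i\|-C)_+\le\|g_i\|^k/C^{k-1}$ loses that factor). One should also state explicitly that the bound is uniform in the \emph{random} iterate $w_t$ precisely because $\widehat r_n(X)^{(k)}$ already incorporates the supremum over $w\in\WW$ and over subgradients, so no expectation over the noise is taken and the conclusion is a sure bound.
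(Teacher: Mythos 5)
Your proposal is correct and matches the paper's argument: the paper likewise dispatches the variance bound by noting $N_t\sim\mathcal N(0,\sigma^2\mathbf I_d)$, and obtains the bias bound by applying Lemma~\ref{lem: bias and variance of bd14} with $\DD$ taken to be the empirical distribution on $X$ (so that $\nu=\frac1n\sum_i\nabla f(w_t,x_i)$ and $r^{(k)}=\frac1n\sum_i\|\nabla f(w_t,x_i)\|^k$), then takes the supremum over $t$ using the definition of $\widehat r_n(X)^{(k)}$. Your layer-cake/Markov computation is exactly the content of that cited lemma unfolded on the empirical measure, so the two proofs are the same in substance.
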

\begin{proof}
Fix any $t$. We have \begin{align}
\label{eq:001}
    \|b_t\| &= \left\|\frac{1}{n}\sum_{i=1}^n \Pi_C(\nabla f(w_t, x_i)) - \frac{1}{n}\sum_{i=1}^n \nabla f(w_t, x_i) \right\| \nonumber \\
    &\leq \frac{1}{(k-1) C^{k-1}} \left[\frac{1}{n} \sum_{i=1}^n \|\nabla f(w_t, x_i)\|^k\right],
\end{align}
by Lemma~\ref{lem: bias and variance of bd14} applied with $\DD$ as the empirical distribution on $X$, and $z_i$ in Lemma~\ref{lem: bias and variance of bd14} corresponding to $\nabla f(w_t, x_i)$ in \cref{eq:001}. Taking supremum over $t$ of both sides of \cref{eq:001} and recalling the definition of $\widehat{r}_n(X)^{(k)}$ proves the bias bound. 
The noise variance bound is immediate from the distribution of $N_t$. 
\end{proof}

Using Lemma~\ref{lem: empirical bias variance reg ERM}, we can obtain the following convergence guarantee for~\cref{alg: clipped GD}: 
\begin{lemma}[Re-statement of~Lemma~\ref{lem: subgrad ERM bound}]
Fix $X \in \XX^n$ and let $\widehat{F}_\lambda(w) = \frac{1}{n} \sum_{i=1}^n f(w, x_i) + \frac{\lambda}{2}\|w - w_0\|^2$ for $w_0 \in \WW$, where $\WW$ is a closed convex domain with diameter $D$. Assume $f(\cdot, x)$ is convex and $\small \widehat{r}_n(X)^{(k)} \geq \sup_{w \in \WW}\left\{\frac{1}{n} \sum_{i=1}^n \|\nabla f(w, x_i)\|^k\right\}$ for all $\nabla f(w,x_i) \in \partial_w f(w, x_i)$. \normalsize Denote $\widehat{r}_n(X) = \left[\widehat{r}_n(X)^{(k)}\right]^{1/k}$ and $\hat{w} = \argmin_{w \in \WW} \widehat{F}_\lambda(w)$. Let $\eta \leq \frac{2}{\lambda}$. Then, 
the output of~\cref{alg: clipped GD} satisfies 
\[
\expec\|w_T - \hat{w}\|^2 \leq \exp\left(-\frac{\lambda \eta T}{2}\right)\|w_0 - \hw\|^2 + \frac{8 \eta}{\lambda}\left(\widehat{r}_n(X)^2 + \lambda^2 D^2 + d \sigma^2 \right) + \frac{20}{\lambda^2}\left(\frac{\widehat{r}_n(X)^{(k)}}{(k-1) C^{k-1}}\right)^2,
\]
\normalsize
where $\sigma^2 = \frac{4 C^2 T}{n^2 \varepsilon^2}$. 
\end{lemma}
\begin{proof}
We use the notation of Lemma~\ref{lem: empirical bias variance reg ERM} and write $\till(w_t) = \nabla \widehat{F}_\lambda(w_t) + b_t + N_t = \frac{1}{n} \sum_{i=1}^n \Pi_C(\nabla f(w, x_i)) + \lambda (w - w_0) + N_t$ as the biased, noisy subgradients of the regularized empirical loss in~\cref{alg: clipped GD}, with $N_t \sim \mathcal{N}(0, \sigma^2 \mathbf{I}_d)$ and $b_t = \frac{1}{n}\sum_{i=1}^n \Pi_C(\nabla f(w_t, x_i)) - \frac{1}{n}\sum_{i=1}^n \nabla f(w_t, x_i)$.
Denote $y_{t+1} = w_t - \eta \till(w_t)$, so that $w_{t+1} = \Pi_{\WW}(y_{t+1})$. 
For now, condition on the randomness of the algorithm (noise). By strong convexity, we have \begin{align*}
    \hf_\lambda(w_t) - \hf_\lambda(\hat{w}) &\leq \langle \nabla \hf_\lambda(w_t), w_t - \hat{w}\rangle - \frac{\lambda}{2}\|w_t - \hw\|^2 \\
    &= \langle \till(w_t), w_t - \hw \rangle - \frac{\lambda}{2}\|w_t - \hw\|^2 + \langle \nabla \hf_\lambda(w_t) - \till(w_t), w_t - \hw \rangle \\
    &= \frac{1}{2\eta}\left(\|w_t - \hw\|^2 + \|w_t - y_{t+1}\|^2 - \|y_{t+1} - \hw\|^2 \right)- \frac{\lambda}{2}\|w_t - \hw\|^2 \\
    &\;\;\;+ \langle \nabla \hf_\lambda(w_t) - \till(w_t), w_t - \hw \rangle \\
    &= \frac{1}{2\eta}\left(\|w_t - \hw\|^2(1 - \lambda \eta) - \|y_{t+1} - \hw\|^2 \right) + \frac{\eta}{2}\|\till(w_t)\|^2 \\
    &\;\;\; + \langle \nabla \hf_\lambda(w_t) - \till(w_t), w_t - \hw \rangle \\
    &\leq \frac{1}{2\eta}\left(\|w_t - \hw\|^2(1 - \lambda \eta) - \|w_{t+1} - \hw\|^2 \right) + \frac{\eta}{2}\|\till(w_t)\|^2 - \langle b_t + N_t, w_t - \hw \rangle,
\end{align*} 
where we used non-expansiveness of projection and the definition of $\till(w_t)$ in the last line. Now, re-arranging this inequality and taking expectation, we get \begin{align*}
    \expec [\|w_{t+1} - \hw\|^2] &\leq -2\eta\expec[\hf_\lambda(w_t) - \hf_\lambda(\hw)] + \expec\|w_t - \hw\|^2 (1 - \lambda \eta) + \eta^2 \expec \|\till(w_t)\|^2 \\
    &\;\;\;\;  - 2\eta \expec \langle b_t + N_t, w_t - \hw \rangle \\
    &\leq \expec\|w_t - \hw\|^2 (1 - \lambda \eta) + \eta^2 \expec \|\till(w_t)\|^2 - 2\eta \expec \langle b_t, w_t - \hw \rangle,
\end{align*}
by optimality of $\hw$ and the assumption that the noise $N_t$ is independent of $w_t - \hw$ and zero mean. Also, \begin{align*}
\expec\|\till(w_t)\|^2 &\leq 2\left(\expec\|\nabla \hf_\lambda(w_t)\|^2 + \|b_t\|^2 + \expec\|N_t\|^2 \right)\\
&\leq 2\left(2\widehat{r}_n(X)^2 + 2 \lambda^2 D^2 + \hat{B}^2 + \hat{\Sigma}^2 \right),
\end{align*}
where $
\hat{B} := \sup_{t \in [T]}\|b_t\| \leq \frac{\widehat{r}_n(X)^{(k)}}{(k-1) C^{k-1}}
$
and $
\hat{\Sigma}^2 := \sup_{t \in [T]} \expec\|N_t\|^2 = d\sigma^2. 
$
by Lemma~\ref{lem: empirical bias variance reg ERM}. We also used Young's and Jensen's inequalities and the fact that $\expec N_t = 0$. Further, \[
|\expec \langle b_t, w_t - \hw \rangle | \leq \frac{\hat{B}^2}{\lambda} + \frac{\lambda}{4} \expec\|w_t - \hw\|^2,
\]
by Young's inequality. Combining these pieces yields \begin{equation}
\label{eq:ring}
\expec\|w_{t+1} - \hw\|^2 \leq \left(1 - \frac{\lambda \eta}{2}\right) \expec\|w_t - \hw\|^2 + 4\eta^2\left(\widehat{r}_n(X)^2 + \lambda^2 D^2 + \hat{B}^2 + \hat{\Sigma}^2 \right) + \frac{2\eta \hat{B}^2}{\lambda}. 
\end{equation}
Iterating~\cref{eq:ring} gives us \begin{align*}
\expec\|w_T - \hw\|^2 &\leq \left(1 - \frac{\lambda \eta}{2}\right)^{T} \|w_0 - \hw\|^2 + \left[4\eta^2\left(\widehat{r}_n(X)^2 + \lambda^2 D^2 + \hat{B}^2 + \hat{\Sigma}^2 \right) + \frac{2\eta \hat{B}^2}{\lambda} \right]\sum_{t=0}^{T-1}\left(1 - \frac{\lambda \eta}{2}\right)^t \\
&\leq \left(1 - \frac{\lambda \eta}{2}\right)^{T} \|w_0 - \hw\|^2 + \left[4\eta^2\left(\widehat{r}_n(X)^2 + \lambda^2 D^2 + \hat{B}^2 + \hat{\Sigma}^2 \right) + \frac{2\eta \hat{B}^2}{\lambda} \right]\left(\frac{2}{\lambda \eta}\right) \\
&= \left(1 - \frac{\lambda \eta}{2}\right)^{T} \|w_0 - \hw\|^2 + \frac{8 \eta}{\lambda}\left(\widehat{r}_n(X)^2 + \lambda^2 D^2 + \hat{B}^2 + \hat{\Sigma}^2 \right) + \frac{4 \hat{B}^2}{\lambda^2} \\
&\leq \exp\left(-\frac{\lambda \eta T}{2}\right)\|w_0 - \hw\|^2 + \frac{8 \eta}{\lambda}\left(\widehat{r}_n(X)^2 + \lambda^2 D^2 + \hat{B}^2 + \hat{\Sigma}^2 \right) + \frac{4 \hat{B}^2}{\lambda^2} \\
&\leq \exp\left(-\frac{\lambda \eta T}{2}\right)\|w_0 - \hw\|^2 + \frac{8 \eta}{\lambda}\left(\widehat{r}_n(X)^2 + \lambda^2 D^2 + \hat{\Sigma}^2 \right) + \frac{20 \hat{B}^2}{\lambda^2},
\end{align*}
since $\eta \leq \frac{2}{\lambda}$. Plugging in the bounds on $\hat{B}$ and $\hat{\Sigma}$ from Lemma~\ref{lem: empirical bias variance reg ERM} completes the proof. 
\end{proof}

\begin{proposition}[Precise statement of Proposition~\ref{prop: stability implies generalization}]
Let $f(\cdot, x)$ be convex for all $x$ and grant~\cref{ass:boundednoncentral} for $k=2$. Suppose $\Al: \XX^n \to \WW$ is $\alpha$-on-average model stable. Then for any $\zeta > 0$, we have \[
\expec[F(\Al(X)) - \hf_X(\Al(X))] \leq \frac{r^{(2)}}{2 \zeta} + \frac{\zeta}{2} \alpha^2.
\]
\end{proposition}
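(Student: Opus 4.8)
The plan is to follow the classical ``stability implies generalization'' template (as in \cite{bousquet2002stability, lei2020fine}), but to substitute \emph{convexity} of $f(\cdot,x)$ for the smoothness hypothesis of \cite{lei2020fine}, so that only the second-moment bound of \cref{ass:boundednoncentral} is used. First I would record the standard symmetrization identity. Because $X^i$ contains the fresh sample $x_i'$ in place of $x_i$, the randomized output $\Al(X^i)$ is independent of $x_i$; since $x_i\sim\DD$ and $X^i\stackrel{d}{=}X$, we get $\expec[f(\Al(X^i),x_i)] = \expec[F(\Al(X^i))] = \expec[F(\Al(X))]$ for each $i$. Averaging over $i$ and subtracting $\expec[\hf_X(\Al(X))] = \expec\big[\frac1n\sum_{i=1}^n f(\Al(X),x_i)\big]$ gives
\[
\expec[F(\Al(X)) - \hf_X(\Al(X))] = \frac1n\sum_{i=1}^n \expec\big[f(\Al(X^i),x_i) - f(\Al(X),x_i)\big].
\]

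Next I would bound each summand using a subgradient of $f(\cdot,x_i)$ evaluated \emph{at the leave-one-out iterate} $\Al(X^i)$ --- this is the key choice, since $\Al(X^i)$, unlike $\Al(X)$, is independent of $x_i$. Convexity gives $f(\Al(X^i),x_i) - f(\Al(X),x_i) \le \langle \nabla f(\Al(X^i),x_i),\,\Al(X^i)-\Al(X)\rangle$ for any measurable selection $\nabla f(\Al(X^i),x_i)\in\partial_w f(\Al(X^i),x_i)$. Applying Cauchy--Schwarz and then Young's inequality $ab\le \frac{a^2}{2\zeta}+\frac{\zeta b^2}{2}$ with the given parameter $\zeta>0$ yields
\[
f(\Al(X^i),x_i) - f(\Al(X),x_i) \le \frac{1}{2\zeta}\|\nabla f(\Al(X^i),x_i)\|^2 + \frac{\zeta}{2}\|\Al(X^i)-\Al(X)\|^2.
\]

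Finally I would take expectations and average. Conditioning on $\Al(X^i)$ and using its independence from $x_i$ together with \cref{ass:boundednoncentral} for $k=2$ gives $\expec\|\nabla f(\Al(X^i),x_i)\|^2 \le \sup_{w\in\WW}\expec_{x\sim\DD}\|\nabla f(w,x)\|^2 \le r^{(2)}$ for every $i$, so the average of the first terms is at most $\frac{r^{(2)}}{2\zeta}$. The average of the second terms is $\frac{\zeta}{2}\cdot\frac1n\sum_{i=1}^n\expec\|\Al(X^i)-\Al(X)\|^2 \le \frac{\zeta}{2}\alpha^2$, which is exactly the definition of $\alpha$-on-average model stability (\cref{def: stability}). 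Summing the two bounds proves the proposition.

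The only delicate point --- and the main obstacle --- is the bookkeeping of the independence structure: one must invoke convexity at $\Al(X^i)$ rather than at $\Al(X)$ so that the argument of the stochastic subgradient is independent of the fresh sample $x_i$, which is precisely what lets \cref{ass:boundednoncentral} be applied with no appeal to smoothness or uniform Lipschitzness (in contrast, the smoothness-based argument of \cite{lei2020fine} controls $\|\nabla f(\Al(X),x_i)\|$ indirectly). A secondary, routine technicality is guaranteeing a jointly measurable selection $w\mapsto\nabla f(w,x)$ of subgradients; everything else is Cauchy--Schwarz, Young's inequality, and linearity of expectation.
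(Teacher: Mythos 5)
Your proposal is correct and follows essentially the same route as the paper's proof: the same symmetrization identity, the same use of convexity with the subgradient taken at the leave-one-out output $\Al(X^i)$ (so that independence from $x_i$ lets the second-moment bound apply), and the same Cauchy--Schwarz/Young step with parameter $\zeta$ feeding into the on-average model stability definition. No gaps.
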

\begin{proof}
Let $X, X', X^i$ be constructed as in~Definition~\ref{def: stability}.
We may write $\expec[F(\Al(X)) - \hf_X(\Al(X))] = \expec[\frac{1}{n} \sum_{i=1}^n f(\Al(X^i), x_i) - f(\Al(X), x_i)]$, by symmetry and independence of $x_i$ and $\Al(X^i)$ (c.f.~\cite[Equation B.2]{lei2020fine}). Then by convexity, we have \begin{align*}
\expec[F(\Al(X)) - \hf_X(\Al(X))] &\leq \frac{1}{n}\sum_{i=1}^n \expec[\langle \Al(X^i) - \Al(X), \nabla f(\Al(X^i), x_i) \rangle] \\
&\leq \frac{1}{n}\sum_{i=1}^n \expec\left[\frac{\zeta}{2}\|\Al(X^i) - \Al(X)\|^2 + \frac{1}{2 \zeta}\|\nabla f(\Al(X^i), x_i)\|^2\right].
\end{align*}
Now, since $\Al(X^i)$ is independent of $x_i$, we have: \begin{align*}
\expec\|\nabla f(\Al(X^i), x_i)\|^2 &= \expec[\expec[\|\nabla f(\Al(X^i), x_i)\|^2 | \Al(X^i) 
]] \\
&\leq \sup_{w \in \WW} \expec[\|\nabla f(\Al(X^i), x_i)\|^2 | \Al(X^i) = w]\\
&= \sup_{w \in \WW} \expec[\|\nabla f(w, x_i)\|^2]\\
&\leq r^{(2)}. 
\end{align*}
Combining the above inequalities and recalling~Definition~\ref{def: stability} yields the result. 
\end{proof}

To prove our excess risk bound for regularized ERM (i.e. Proposition~\ref{cor: reg ERM excess risk}), we require the following bound on the generalization error of ERM with strongly convex loss: 
\begin{proposition}
\label{prop: strongly convex ERM generalization error}
Let $f(\cdot, x)$ be $\lambda$-strongly convex, and grant~\cref{ass:boundednoncentral}. Let $\Al(X) := \argmin_{w \in \WW} \hf_X(w)$ be the ERM algorithm. Then,
\[
\expec[F(\Al(X)) - \hf_X(\Al(X))] \leq \frac{2r^{(2)}}{\lambda n}.
\]
\end{proposition}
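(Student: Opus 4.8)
The plan is to prove that the ERM map $\Al(X) := \argmin_{w \in \WW} \hf_X(w)$ is on-average model stable (Definition~\ref{def: stability}) with parameter $\alpha$ satisfying $\alpha^2 \le \tfrac{4 r^{(2)}}{\lambda^2 n^2}$, and then to apply Proposition~\ref{prop: stability implies generalization} --- which is legitimate here since $\lambda$-strong convexity implies convexity. First I would fix $i \in [n]$, take $X, X', X^i$ as in Definition~\ref{def: stability}, and write $w := \Al(X)$, $w^i := \Al(X^i)$. Since each $f(\cdot, x)$ is $\lambda$-strongly convex, both $\hf_X$ and $\hf_{X^i}$ are $\lambda$-strongly convex; combining this with the first-order optimality of $w$ over $\WW$ (resp.\ of $w^i$) gives $\hf_X(w^i) \ge \hf_X(w) + \tfrac{\lambda}{2}\|w^i - w\|^2$ and $\hf_{X^i}(w) \ge \hf_{X^i}(w^i) + \tfrac{\lambda}{2}\|w - w^i\|^2$. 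Adding these and using that $\hf_X$ and $\hf_{X^i}$ differ only in their $i$-th summand, the left-hand side telescopes to $\tfrac1n[(f(w^i, x_i) - f(w, x_i)) + (f(w, x'_i) - f(w^i, x'_i))]$; bounding each parenthesized difference via the subgradient inequality for $f(\cdot, x_i)$ and $f(\cdot, x'_i)$ together with Cauchy--Schwarz, and dividing through by $\|w - w^i\|$, yields the pointwise stability estimate $\|w - w^i\| \le \tfrac{1}{\lambda n}\bigl(\|\nabla f(w^i, x_i)\| + \|\nabla f(w, x'_i)\|\bigr)$.

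Next I would square this, use $(a+b)^2 \le 2a^2 + 2b^2$, and take expectations. The key observation --- which is exactly what removes the need for any uniform Lipschitz bound --- is that $w^i = \Al(X^i)$ is independent of $x_i$ and $w = \Al(X)$ is independent of $x'_i$; conditioning on $w^i$ (resp.\ $w$) and invoking Assumption~\ref{ass:boundednoncentral} with $k = 2$ then gives $\expec\|\nabla f(w^i, x_i)\|^2 \le \sup_{v \in \WW}\expec\|\nabla f(v, x_i)\|^2 \le r^{(2)}$, and likewise $\expec\|\nabla f(w, x'_i)\|^2 \le r^{(2)}$. Hence $\expec\|w - w^i\|^2 \le \tfrac{4 r^{(2)}}{\lambda^2 n^2}$, and averaging over $i$ shows $\Al$ is $\alpha$-on-average model stable with $\alpha^2 \le \tfrac{4 r^{(2)}}{\lambda^2 n^2}$.

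Finally I would plug this into Proposition~\ref{prop: stability implies generalization}, which yields $\expec[F(\Al(X)) - \hf_X(\Al(X))] \le \tfrac{r^{(2)}}{2\zeta} + \tfrac{\zeta}{2}\cdot\tfrac{4 r^{(2)}}{\lambda^2 n^2}$ for every $\zeta > 0$; choosing $\zeta = \lambda n / 2$ balances the two terms and gives the claimed bound $\tfrac{2 r^{(2)}}{\lambda n}$. The main obstacle is the stability step: with $f(\cdot, x)$ neither uniformly Lipschitz nor smooth, $\|\nabla f\|$ cannot be controlled deterministically, so the argument must be arranged so that every gradient-norm factor is evaluated at a point statistically independent of the coordinate that was resampled --- precisely what allows its second moment to be bounded by $r^{(2)}$ rather than $L_f$. (A slightly more direct variant bypasses Proposition~\ref{prop: stability implies generalization}: write $\expec[F(\Al(X)) - \hf_X(\Al(X))] = \expec\bigl[\tfrac1n\sum_i \bigl(f(\Al(X^i), x_i) - f(\Al(X), x_i)\bigr)\bigr]$, bound each term by $\|\nabla f(w^i, x_i)\|\,\|w^i - w\|$, substitute the pointwise stability estimate, and apply Cauchy--Schwarz to the resulting product of gradient norms; this reaches the same constant.)
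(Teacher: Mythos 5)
Your proof is correct, and the stability step is genuinely different from the paper's. The paper uses only one optimality/strong-convexity inequality (for $\hf_{X^i}$ at $\Al(X^i)$ versus $\Al(X)$), which leaves the quantity $\frac{1}{n}\sum_i[\hf_{X^i}(\Al(X)) - \hf_{X^i}(\Al(X^i))]$ on the right-hand side; it then shows by an exact symmetry identity that this quantity equals $\frac{1}{n}\expec[F(\Al(X)) - \hf_X(\Al(X))]$, producing the self-referential pair of inequalities $\alpha^2 \leq \frac{2}{\lambda n}\expec[\mathrm{gen}]$ and $\expec[\mathrm{gen}] \leq \frac{r^{(2)}}{2\zeta} + \frac{\zeta}{2}\alpha^2$, which it solves by bootstrapping (Proposition~\ref{prop: stability implies generalization} is invoked twice). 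You instead add the two symmetric optimality inequalities so that the empirical losses telescope down to the single swapped coordinate, apply the subgradient inequality, and obtain the pointwise bound $\|w - w^i\| \leq \frac{1}{\lambda n}(\|\nabla f(w^i,x_i)\| + \|\nabla f(w,x'_i)\|)$ --- the classical leave-one-out argument --- and then exploit the independence of $w^i$ from $x_i$ (and of $w$ from $x'_i$) to bound the second moments by $r^{(2)}$ without any uniform Lipschitz constant. Both routes land on the same $\alpha^2 \leq \frac{4r^{(2)}}{\lambda^2 n^2}$ and the same final constant. Your argument is more direct and self-contained (it needs Proposition~\ref{prop: stability implies generalization} only once, or not at all in your parenthetical variant); the paper's bootstrap has the modest advantage of never evaluating a gradient norm at the perturbed minimizer, instead recycling the already-proved generalization machinery. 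The one point worth stating explicitly in your write-up is that the inequality $\hf_X(w^i) \geq \hf_X(w) + \frac{\lambda}{2}\|w^i - w\|^2$ for the \emph{constrained} minimizer $w$ over $\WW$ follows from combining strong convexity with the variational inequality $\langle g, v - w\rangle \geq 0$ at the optimum; this is standard but is doing real work since $\WW$ is a proper subset of $\mathbb{R}^d$.
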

\begin{proof}
We first bound the stability of ERM and then use Proposition~\ref{prop: stability implies generalization} to get a bound on the generalization error. The beginning of the proof is similar to the proof of~\cite[Proposition D.6]{lei2020fine}: Let $X, X', X^i$ be constructed as in~Definition~\ref{def: stability}. By strong convexity of $\hf_{X^i}$ and optimality of $\Al(X^i)$, we have \[
\frac{\lambda}{2}\|\Al(X) - \Al(X^i)\|^2 \leq \hf_{X^i}(\Al(X)) - \hf_{X^i}(\Al(X^i)),
\]
which implies \begin{align}
\label{eq:wing}
\frac{1}{n}\sum_{i=1}^n \|\Al(X) - \Al(X^i)\|^2 \leq \frac{2}{\lambda n}\sum_{i=1}^n\left[\hf_{X^i}(\Al(X)) - \hf_{X^i}(\Al(X^i))\right]. 
\end{align}
Now, for any $w \in \WW$, \begin{align*}
    n \sum_{i=1}^n \hf_{X^i}(w) &= \sum_{i=1}^n [f(w, x'_i) + \sum_{j \neq i} f(w, x_j)] \\
    &= (n-1)n \hf_X(w) + n \hf_{X'}(w).
\end{align*}
Hence \begin{align*}
    \frac{1}{n}\expec\left[\sum_{i=1}^n \hf_{X^i}(\Al(X))\right] &= \left(\frac{n-1}{n}\right) \expec\hf_X(\Al(X)) + \frac{1}{n}\expec \hf_{X'}(\Al(X)) \\
    &= \left(\frac{n-1}{n}\right) \frac{1}{n} \expec\left[\sum_{i=1}^n \hf_{X^i}(\Al(X^i))\right] + \frac{1}{n}\expec F(\Al(X)),
\end{align*}
by symmetry and independence of $\Al(X)$ and $X'$. Re-arranging the above equality and using symmetry yields \begin{equation}
\label{eq:eing}
\frac{1}{n} \expec\left[\sum_{i=1}^n \hf_{X^i}(\Al(X)) - \hf_{X^i}(\Al(X^i))\right] = \frac{1}{n} \expec\left[F(\Al(X)) - \hf_X(\Al(X))\right].
\end{equation}
Combining~\cref{eq:wing} with~\cref{eq:eing} shows that ERM is $\alpha$-on-average model stable for \begin{equation}
\label{eq:qing}
\alpha^2 = \expec\left[\frac{1}{n} \sum_{i=1}^n \|\Al(X) - \Al(X^i)\|^2\right] \leq \frac{2}{\lambda n}\expec\left[F(\Al(X)) - \hf_X(\Al(X))\right].
\end{equation}
The rest of the proof is where we depart from the analysis of~\cite{lei2020fine} (which required smoothness of $f(\cdot, x)$): Bounding the right-hand side of~\cref{eq:qing} by~Proposition~\ref{prop: stability implies generalization} yields \[
\alpha^2 \leq \frac{2}{\lambda n}\left(\frac{r^{(2)}}{2 \zeta} + \frac{\zeta}{2}\alpha^2 \right)
\]
for any $\zeta > 0$. Choosing $\zeta = \frac{\lambda n}{2}$, we obtain \[
\frac{\alpha^2}{2} \leq \frac{r^{(2)}}{\lambda n \zeta} = \frac{2 r^{(2)}}{\lambda^2 n^2}, 
\]
and $\alpha^2 \leq \frac{4 r^{(2)}}{\lambda^2 n^2}$. Applying~Proposition~\ref{prop: stability implies generalization} again yields (for any $\zeta' > 0$)
\begin{align*}
\expec[F(\Al(X)) - \hf_X(\Al(X))] &\leq \frac{r^{(2)}}{2 \zeta'} + \frac{\zeta'}{2}\left(\frac{4r^{(2)}}{\lambda^2 n^2} \right) \\
&\leq \frac{2 r^{(2)}}{\lambda n},
\end{align*}
by the choice $\zeta' = \frac{\lambda n}{2}$.
\end{proof}

\begin{proposition}[Precise statement of Proposition~\ref{cor: reg ERM excess risk}]
Let $f(\cdot, x)$ be convex, $w_{i-1}, y \in \WW$, and $\hw_i := \argmin_{w \in \WW} \hf_i(w)$, where $\hf_i(w) := \frac{1}{n_i} \sum_{j \in \mathcal{B}_i} f(w, x_j) + \frac{\lambda_i}{2}\|w - w_{i-1}\|^2$ (c.f. line 6 of~\cref{alg: localization}). Then, 
\[
\expec[F(\hw_i)] - F(y) \leq \frac{2r^{(2)}}{\lambda_i n_i} + \frac{\lambda_i}{2}\|y - w_{i-1}\|^2,
\]
\normalsize
where the expectation is over both the random draws of $X$ from $\DD$ and $\mathcal{B}_i$ from $X$.  
\end{proposition}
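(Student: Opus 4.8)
The plan is to write $F(\hw_i)-F(y)$ as a generalization term plus an empirical term, bound the empirical term by optimality of $\hw_i$, and control the generalization term $\expec[F(\hw_i)-\hf_{\mathcal{B}_i}(\hw_i)]$ (where $\hf_{\mathcal{B}_i}(w):=\frac{1}{n_i}\sum_{j\in\mathcal{B}_i}f(w,x_j)$ is the \emph{unregularized} empirical loss) via Proposition~\ref{prop: stability implies generalization}, after establishing on-average model stability of the regularized ERM map $\mathcal{B}_i\mapsto\hw_i$ directly. Since $\mathcal{B}_i$ is drawn without replacement from $X\sim\DD^n$, it is marginally distributed as $\DD^{n_i}$; I fix $w_{i-1},y\in\WW$ (in the application these come from earlier, disjoint batches, so one conditions on them) and take expectations over $\mathcal{B}_i\sim\DD^{n_i}$.

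For stability, introduce an independent ghost batch $\mathcal{B}_i'=(x_1',\dots,x_{n_i}')$ and, for $j\in[n_i]$, let $\mathcal{B}_i^{(j)}$ be $\mathcal{B}_i$ with $x_j$ replaced by $x_j'$, with $\hf_i^{(j)}$ the regularized empirical objective on $\mathcal{B}_i^{(j)}$ and $\hw_i^{(j)}:=\argmin_{w\in\WW}\hf_i^{(j)}(w)$. Both $\hf_i$ and $\hf_i^{(j)}$ are $\lambda_i$-strongly convex, so constrained first-order optimality gives $\hf_i(\hw_i^{(j)})\geq\hf_i(\hw_i)+\frac{\lambda_i}{2}\|\hw_i^{(j)}-\hw_i\|^2$ and $\hf_i^{(j)}(\hw_i)\geq\hf_i^{(j)}(\hw_i^{(j)})+\frac{\lambda_i}{2}\|\hw_i-\hw_i^{(j)}\|^2$. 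Adding these, using the key fact that $\hf_i-\hf_i^{(j)}=\frac{1}{n_i}\big(f(\cdot,x_j)-f(\cdot,x_j')\big)$ because the regularizer and the common samples cancel, and then applying convexity of $f(\cdot,x_j)$ and $f(\cdot,x_j')$, yields
\[
\lambda_i\|\hw_i-\hw_i^{(j)}\|^2\leq\tfrac{1}{n_i}\big(\|\nabla f(\hw_i^{(j)},x_j)\|+\|\nabla f(\hw_i,x_j')\|\big)\,\|\hw_i-\hw_i^{(j)}\|.
\]
Dividing, squaring, averaging over $j$, taking expectations, and using that $\hw_i^{(j)}$ is independent of $x_j$ and $\hw_i$ of $x_j'$, together with~\cref{ass:boundednoncentral} for $k=2$, I get on-average model stability $\alpha^2\leq\frac{4r^{(2)}}{\lambda_i^2 n_i^2}$.

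Plugging this into Proposition~\ref{prop: stability implies generalization} (applied with the unregularized loss and $\hf_{\mathcal{B}_i}$) gives, for any $\zeta>0$, $\expec[F(\hw_i)-\hf_{\mathcal{B}_i}(\hw_i)]\leq\frac{r^{(2)}}{2\zeta}+\frac{\zeta}{2}\alpha^2\leq\frac{r^{(2)}}{2\zeta}+\frac{2\zeta r^{(2)}}{\lambda_i^2 n_i^2}$, and choosing $\zeta=\lambda_i n_i/2$ yields $\expec[F(\hw_i)-\hf_{\mathcal{B}_i}(\hw_i)]\leq\frac{2r^{(2)}}{\lambda_i n_i}$. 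Finally, since $y\in\WW$ and $\hw_i$ minimizes $\hf_i$ over $\WW$, we have $\hf_{\mathcal{B}_i}(\hw_i)\leq\hf_i(\hw_i)\leq\hf_i(y)=\hf_{\mathcal{B}_i}(y)+\frac{\lambda_i}{2}\|y-w_{i-1}\|^2$; taking expectations, using $\expec[\hf_{\mathcal{B}_i}(y)]=F(y)$, and adding the previous bound gives exactly $\expec[F(\hw_i)]-F(y)\leq\frac{2r^{(2)}}{\lambda_i n_i}+\frac{\lambda_i}{2}\|y-w_{i-1}\|^2$.

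The main obstacle is obtaining the clean dependence $\frac{2r^{(2)}}{\lambda_i n_i}$ rather than a bound involving $\sup_w\expec\|\nabla f(w,x)+\lambda_i(w-w_{i-1})\|^2$: one cannot simply invoke Proposition~\ref{prop: strongly convex ERM generalization error} with the $\lambda_i$-strongly convex loss $g(w,x):=f(w,x)+\frac{\lambda_i}{2}\|w-w_{i-1}\|^2$, since that would charge an extra additive $\Theta(\lambda_i^2 D^2/n_i)$ coming from the regularizer's gradient, which, after summing over the $\log_2 n$ phases with the chosen $\lambda_i$, is far too lossy for the localization analysis. The resolution — noticing that the regularizer cancels in $\hf_i-\hf_i^{(j)}$ so only $\nabla f$ (hence only $r^{(2)}$) enters the stability bound, and carefully checking the independence structure needed to invoke~\cref{ass:boundednoncentral} — is the crux; the remaining strong-convexity and optimality manipulations parallel the proof of Proposition~\ref{prop: strongly convex ERM generalization error}.
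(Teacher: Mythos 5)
Your proof is correct and yields exactly the stated bound with the same constant. It differs from the paper's argument in one genuine way: where you compute the on-average model stability of the regularized ERM map directly, via the classical leave-one-out perturbation argument (add the two strong-convexity optimality inequalities at $\hw_i$ and $\hw_i^{(j)}$, use that $\hf_i-\hf_i^{(j)}=\frac{1}{n_i}(f(\cdot,x_j)-f(\cdot,x_j'))$, and conclude $\alpha^2\leq 4r^{(2)}/(\lambda_i^2 n_i^2)$), the paper instead invokes Proposition~\ref{prop: strongly convex ERM generalization error} as a black box; that proposition's own proof gets the same $\alpha^2$ by a bootstrap, bounding stability in terms of the generalization gap (via strong convexity and the symmetrization identity) and the gap in terms of stability (via Proposition~\ref{prop: stability implies generalization}), then solving the resulting self-referential inequality. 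Your final assembly ($\hf_{\mathcal{B}_i}(\hw_i)\leq\hf_i(\hw_i)\leq\hf_i(y)$ and $\expec\hf_{\mathcal{B}_i}(y)=F(y)$) is equivalent to the paper's step $\expec[\hf_i(\hw_i)]=\expec[\min_w\hf_i(w)]\leq\min_w G_i(w)\leq G_i(y)$. Your explicit handling of the issue you flag at the end is a genuine contribution of clarity: a literal application of Proposition~\ref{prop: strongly convex ERM generalization error} to the $\lambda_i$-strongly convex loss $g(w,x)=f(w,x)+\frac{\lambda_i}{2}\|w-w_{i-1}\|^2$ would replace $r^{(2)}$ by the second moment of $\nabla g$, which carries the extra $\Theta(\lambda_i^2D^2)$ you mention. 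The paper's route sidesteps this because the deterministic regularizer cancels between $G_i$ and $\hf_i$, so the generalization gap of the regularized objective equals that of the unregularized one and Proposition~\ref{prop: stability implies generalization} can be applied with $\nabla f$ alone; your route realizes the same cancellation on the stability side. Both are valid, and your version is the more self-contained and explicit of the two.
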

\begin{proof}
Denote the regularized population loss by $G_i(w) := \expec[\hf_i(w)] = F(w) + \frac{\lambda_i}{2}\|w - w_{i-1}\|^2$.  By Proposition~\ref{prop: strongly convex ERM generalization error}, we have \[
\expec[G_i(\hw_i) - \hf_i(\hw_i)] \leq \frac{2r^{(2)}}{\lambda_i n_i}.
\] 
Thus, \begin{align}
\label{eq: aing}
    \frac{\lambda_i}{2}\expec\|\hw_i - w_{i-1}\|^2 + \expec F(\hw_i) &= \expec G_i(\hw_i) \nonumber \\
    &\leq \frac{2r^{(2)}}{\lambda_i n_i} + \expec[\hf_i(\hw_i)] \nonumber \\
     &\leq \frac{2r^{(2)}}{\lambda_i n_i} + \frac{\lambda_i}{2}\|y - w_{i-1}\|^2 + F(y),
\end{align}
since $\expec[\hf_i(\hw_i)] = \expec[\min_{w \in \WW} \hf_i(w)]  \leq \min_{w \in \WW} \expec[\hf_i(w)] = \min_{w \in \WW} G_i(w) \leq \frac{\lambda_i}{2}\|y - w_{i-1}\|^2 + F(y)$. Subtracting $F(y)$ from both sides of~\cref{eq: aing} completes the proof. 
\end{proof}

We are ready to state and prove the precise form of~\cref{thm: localization convex}, using the notation of~\cref{subsub: notation}: 
\begin{theorem}[Precise statement of~\cref{thm: localization convex}]
Grant~\cref{ass:tilde}. 
Let $f(\cdot, x)$ be convex
and let $\varepsilon \leq \sqrt{d}$. 
Then, there are algorithmic parameters such that~\cref{alg: localization} is $\frac{\varepsilon^2}{2}$-zCDP, and 
has excess risk 
\begin{equation*}
    \expec F(w_l) - F^* \lesssim \wt{R}_{2k,n} D\left(\frac{1}{\sqrt{n}} + \left(\frac{\sqrt{d \ln(n)}}{\varepsilon n}\right)^{\frac{k-1}{k}}\right).
\end{equation*}
Moreover, this excess risk is attained in %
$\wt{\mathcal{O}}(n^{2 + 1/k})$
subgradient evaluations. 
\end{theorem}
\begin{proof}
We choose $\sigma_i^2 = \frac{4 C_i^2 T_i}{n_i^2 \varepsilon^2}$ for $C_i$ and $T_i$ to be determined exactly later. Note that for $\lambda_i$ and $\eta_i$ defined in~\cref{alg: localization}, we have $\eta_i \leq \frac{2}{\lambda_i}$ for all $i \in [l]$. \\

\noindent \textbf{Privacy:} Since the batches $\{\mathcal{B}_i\}_{i=1}^l$ are disjoint, it suffices (by parallel composition~\cite{mcsherry2009privacy}) to show that $w_i$ (produced by $T_i$ iterations of~\cref{alg: clipped GD} in line 7 of~\cref{alg: localization}) is $\frac{\varepsilon^2}{2}$-zCDP for all $i \in [l]$. With clip threshold $C_i$ and batch size $n_i$, the $\ell_2$ sensitivity of the clipped subgradient update is bounded by $\Delta = \sup_{w, X \sim X'} \frac{1}{n_i} \|\sum_{j=1}^{n_i} \Pi_{C_i}(\nabla f(w, x_j)) - \Pi_{C_i}(\nabla f(w, x'_j))\| = \frac{1}{n_i}  \sup_{w, x, x'} \|\Pi_{C_i}(\nabla f(w, x)) - \Pi_{C_i}(\nabla f(w, x'))\| \leq \frac{2C_i}{n_i}$. (Note that the terms arising from regularization cancel out.) Thus, by~Proposition~\ref{prop: gauss}, conditional on the previous updates $w_{1: i}$, the $(i+1)$-st update in line 5 of~\cref{alg: clipped GD} satisfies $\frac{\varepsilon^2}{2T_i}$-zCDP. Hence,~Lemma~\ref{lem: composition} implies that $w_i$ (in line 7 of~\cref{alg: localization}) is $\frac{\varepsilon^2}{2}$-zCDP. \\

\noindent \textbf{Excess risk:}
Recall Chebyshev's inequality: 
\[
\mathbb{P}\left(Y \geq t \expec[Y^k]^{1/k}\right) \leq \frac{1}{t^k}
\]
for any non-negative random variable $Y$, $t \geq 0$, and $k \geq 2$. This implies \[
\mathbb{P}\left(f(\cdot, x)~\text{is not $s$-Lipschitz on $\WW$}\right) = \mathbb{P}\left(\sup_{w \in \WW} \|\nabla f(w,x)\| > s\right) \leq \frac{\wt{r}_k^k}{s
^k}
\]
for any $s \geq 0$. Thus, for $i \in [l]$, the following event holds with probability at least $1 - \frac{1}{t_i^k}$  (over the random draw of $\mathcal{B}_i$): 
\begin{align*}
\widehat{F}_i(\hat{w}_i) = \frac{1}{n_i}\sum_{j \in \mathcal{B}_i} f(\hat{w}_i, x_j) + \frac{\lambda_i}{2}\|\hat{w}_i - w_{i-1}\|^2 &\leq \widehat{F}_i(w_{i-1}) = \frac{1}{n_i}\sum_{j \in \mathcal{B}_i} f(w_{i-1}, x_j) \\ 
\implies \frac{\lambda_i}{2}\|\hat{w}_i - w_{i-1}\|^2 &\leq t_i \wt{r}_k \|\hat{w}_i - w_{i-1}\| \\
\implies \|\hat{w}_i - w_{i-1}\| &\leq \frac{2 t_i \wt{r}_k}{\lambda_i},
\end{align*}
by definition of $\hat{w}_i$ and (with high probability) $s := t_i \wt{r}_k$-Lipschitz continuity of $\widehat{F}_i(\cdot)$. 
Thus, for $i \in [l]$, if we choose $D_i \geq \frac{2 \wt{r}_k t_i}{\lambda_i}$, then $\hw_i \in \WW_i$ with probability at least $1 - \frac{1}{t_i^k}$. By a union bound, it suffices to choose $t_i = \left(2^i \sqrt{n}\right)^{1/k}$ to ensure \begin{align*}
 \mathbb{P}\left(\exists i \in [l], \hw_i \notin \WW_i\right)\leq \sum_{i=1}^{\log_2(n)} \frac{1}{t_i^k} \leq \frac{1}{\sqrt{n}}. 
\end{align*}
Therefore, $D_i = \frac{4 \wt{r}_k 2^{i/k} \sqrt{n}^{1/k}}{\lambda_i}$ ensures that $\hat{w}_i \in \WW_i$ for all $i \in [l]$ with probability at least $1 - \frac{1}{\sqrt{n}}$. Conditional on the event $\hat{w}_i \in \WW_i$ for all $i \in [l]$, we have by Lemma~\ref{lem: subgrad ERM bound}: \begin{equation*}
    \expec\|w_i - \hat{w}_i\|^2 \leq \exp\left(-\frac{\lambda_i \eta_i T_i}{2}\right)\|w_{i-1} - \hw_i\|^2 + \frac{8 \eta_i}{\lambda_i}\left(\widehat{r}_{n_i}(\mathcal{B}_i)^{(2)} + \lambda_i^2 D_i^2 + d \sigma_i^2 \right) + \frac{20}{\lambda_i^2}\left(\frac{\widehat{r}_{n_i}(\mathcal{B}_i)^{(k)}}{(k-1) C_i^{k-1}}\right)^2,
\end{equation*}
conditional on $w_{i-1}$ and the draws of $X \sim \DD^n$ and $\mathcal{B}_i \sim X^{n_i}$. Denote $p := 1 + 1/k$. Taking expectation over the random sampling yields  
\begin{equation*}
    \expec\|w_i - \hat{w}_i\|^2 \leq \exp\left(-\frac{\lambda_i \eta_i T_i}{2}\right)\|w_{i-1} - \hw_i\|^2 + \frac{8 \eta_i}{\lambda_i}\left(\widetilde{e}_{n_i}^{(2)} + \lambda_i^2 D_i^2 + d \sigma_i^2 \right) + \frac{20}{\lambda_i^2} \frac{\wt{e}_{n_i}^{(2k)}}{C_i^{2k-2} (k-1)^2},
\end{equation*}
where $d \sigma_i^2 \leq \frac{4 d C_i^2 T_i}{n_i^2 \varepsilon^2}$. 
Choosing
$T_i = \frac{1}{\lambda_i \eta_i}\ln\left(\frac{D^2 \lambda_i}{d \sigma_i^2 \eta_i} \right) \lesssim n_i^p \ln\left(n \right)$ and $\eta$ to be determined later (polynomial in $n$), we get \begin{align}
\label{eq: sing}
    \expec\|w_i - \hat{w}_i\|^2 &\lesssim\frac{\eta_i}{\lambda_i}\left(\left(\wt{r}_k 2^{i/k}\sqrt{n}^{1/k}\right)^2 + d \sigma_i^2 \right) + \frac{\widetilde{e}_{n_i}^{(2k)}}{\lambda_i^2 C_i^{2k-2}}\nonumber \\
    &\lesssim \eta_i^2 n_i^p \left(\left(\wt{r}_k 2^{i/k}\sqrt{n}^{1/k}\right)^2 + d\sigma_i^2\right) + \frac{\eta_i^2 n_i^{2p} \widetilde{e}_{n_i}^{(2k)}}{C_i^{2k-2}} \nonumber \\
    &\lesssim \left(\frac{\eta^2 n^p}{16^i 2^{ip}}\left(\left(\wt{r}_k 2^{i/k}\sqrt{n}^{1/k}\right)^2 + \frac{d C_i^2 T_i}{\varepsilon^2 n_i^2} +  \frac{n^p\widetilde{e}_{n_i}^{(2k)}}{C_i^{2k-2} 2^{ip}}\right) \right).
\end{align}
Note that under~\cref{ass:tilde}, $F$ is $L$-Lipshitz, where $L = \sup_{w \in \WW} \|\nabla F(w)\| \leq \wt{r}_k$ by Jensen's inequality. 
Now, following the strategy used in the proof of~\cite[Theorem 4.4]{fkt20}, we write \[
\expec F(w_l) - F(\ws) = \expec[F(w_l) - F(\hat{w}_l)] + \sum_{i=1}^l \expec[F(\hat{w}_i) - F(\hat{w}_{i-1})],
\]
where $\hat{w}_0 := \ws$. Using~\cref{eq: sing}, the first term can be bounded as follows: 
\begin{align*}
\expec[F(w_l) - F(\hat{w}_l)] &\leq L\sqrt{\expec\|w_l - \hat{w}_l\|^2} \\
&\lesssim L \sqrt{\eta_l^2\left(\left(\wt{r}_k 2^{l/k}\sqrt{n}^{1/k}\right)^2 + \frac{C_l^2 d}{\varepsilon^2} + \frac{\wt{e}_{n_l}^{(2k)}}{C_l^{2k-2}}\right)}\\
&\lesssim L \left[\frac{\eta}{n^2}\left(\wt{r}_k 2^{l/k}\sqrt{n}^{1/k} + \frac{\sqrt{d} C_l}{\varepsilon} + \frac{\wt{r}_{2k}^k}{C_l^{k-1}} \right) \right] \\
&\lesssim L \left[\frac{\eta}{n^2}\left(\wt{r}_k n^{3/2k} + \wt{r}_{2k} \left(\frac{\sqrt{d}}{\varepsilon}\right)^{(k-1)/k}\right) \right] \\
\end{align*}
if we choose $C_l = \wt{r}_{2k}\left(\frac{\varepsilon}{\sqrt{d}}\right)^{1/k}$. Therefore,  \begin{equation}
\label{eq:ding}
    \expec[F(w_l) - F(\hat{w}_l)] \lesssim \wt{R}_{2k,n} D\left(\frac{1}{\sqrt{n}} + \left(\frac{\sqrt{d \ln(n)}}{\varepsilon n}\right)^{\frac{k-1}{k}}\right),
\end{equation}
if we choose \[
\eta \lesssim \frac{\wt{R}_{2k,n} D n^2}{L}
\min\left(\frac{1}{\wt{r}_k n^{3/2k}}, \frac{1}{\wt{r}_{2k}} \left(\frac{\varepsilon}{\sqrt{d}}\right)^{(k-1)/k} \right)
\left(\frac{1}{\sqrt{n}} + \left(\frac{\sqrt{d \ln(n)}}{\varepsilon n}\right)^{\frac{k-1}{k}}\right) =: \eta_A. \]
Next, Proposition~\ref{cor: reg ERM excess risk} implies \begin{align*}
\expec[F(\hw_i) - F(\hw_{i-1})] &\leq \frac{2 r^2}{\lambda_i n_i} + \frac{\lambda_i}{2}\expec\|\hw_{i-1} - w_{i-1}\|^2 
\end{align*}
for all $i \in [l]$. 
Hence \begin{align*}
\sum_{i=1}^l \expec[F(\hw_i) - F(\hw_{i-1})] &\lesssim \frac{r^2}{\lambda_1 n_1} + \lambda_1 D^2 + \sum_{i=2}^l\left[ \frac{r^2}{\lambda_i n_i} + \lambda_i\eta_i^2\left(n_i^p\left(\left( 2^{i/k} \sqrt{n}^{1/k} \wt{r}_k \right)^2 + d\sigma_i^2\right) + \frac{n_i^{2p} \wt{e}_{n_i}^{(2k)}}{C_i^{2k-2}} \right)\right] \\
&\lesssim r^2 \eta n^{p-1} + \frac{D^2}{\eta n^p} + \sum_{i=2}^l r^2 \eta_i n_i^{p-1} +  \sum_{i=2}^l \frac{\eta_i}{n_i^p}\left(n_i^p\left(\left( 2^{i/k} \sqrt{n}^{1/k} \wt{r}_k \right)^2 + \frac{d C_i^2 T_i}{\varepsilon^2 n_i^2}\right) + \frac{n_i^{2p} \wt{e}_{n_i}^{(2k)}}{C_i^{2k-2}}\right) \\
&\lesssim r^2 \eta n^{p-1} + \frac{D^2}{\eta n^p} +  \sum_{i=2}^l \eta_i\left(\left( 2^{i/k} \sqrt{n}^{1/k} \wt{r}_k \right)^2 + \frac{d C_i^2 n_i^p \ln(n)}{\varepsilon^2 n_i^2} + \frac{n_i^{p} \wt{e}_{n_i}^{(2k)}}{C_i^{2k-2}}\right). 
\end{align*}
Choosing $C_i = \wt{r}_{2k, n_i} \left(\frac{\varepsilon n_i}{\sqrt{d \ln(n)}} \right)^{1/k}$ approximately equalizes the two terms above involving $C_i$ and we get
\begin{align*}
\sum_{i=1}^l \expec[F(\hw_i) - F(\hw_{i-1})] &\lesssim r^2 \eta n^{p-1} + \frac{D^2}{\eta n^p} +  \eta \wt{r}_k^2 n^{1/k} + \eta 
\sum_{i=2}^l 4^{-i} n_i^p \wt{r}_{2k, n_i}^{2} \left(\frac{d \ln(n)}{\varepsilon^2 n_i^2}\right)^{\frac{k-1}{k}} \\
&\lesssim \eta\left[r^2 n^{p-1} +  \wt{r}_k^2 n^{1/k} + \wt{R}_{2k, n}^2 n^p 
\left(\frac{d \ln(n)}{\varepsilon^2 n^2}\right)^{\frac{k-1}{k}}\right] +  \frac{D^2}{\eta n^p}.
\end{align*}
Now, choosing \[
\eta = \min\left(\eta_A, \frac{D}{n^{p/2}}\min\left\{\frac{1}{r n^{(p-1)/2}}, \frac{1}{\wt{r}_k n^{1/2k}}, \frac{1}{\wt{R}_{2k, n} n^{p/2}}\left(\frac{\varepsilon n}{\sqrt{d \ln(n)}} \right)^{(k-1)/k}\right\} \right)
\]
yields
\begin{align*}
\sum_{i=1}^l \expec[F(\hw_i) - F(\hw_{i-1})] &\lesssim \wt{R}_{2k, n} D\left(\frac{1}{\sqrt{n}} + \left(\frac{\sqrt{d \ln(n)}}{\varepsilon n} \right)^{\frac{k-1}{k}} \right) + \frac{\wt{r}_k D n^{1/2k}}{n^{p/2}} + \frac{D^2}{\eta_A n^p} \\
&\lesssim \wt{R}_{2k, n} D\left(\frac{1}{\sqrt{n}} + \left(\frac{\sqrt{d \ln(n)}}{\varepsilon n} \right)^{\frac{k-1}{k}} \right).
\end{align*}
Combining the above pieces, we obtain the desired excess risk bound conditional on the high-probability event that $\hw_i \in \WW_i$ for all $i \in [l]$. Finally, by the law of total expectation, we have (unconditionally) \begin{align*}
\expec[F(w_l) - F(\ws)] &\lesssim \wt{R}_{2k, n} D\left(\frac{1}{\sqrt{n}} + \left(\frac{\sqrt{d \ln(n)}}{\varepsilon n} \right)^{\frac{k-1}{k}} \right) + \frac{LD}{\sqrt{n}} \\
&\lesssim \wt{R}_{2k, n} D\left(\frac{1}{\sqrt{n}} + \left(\frac{\sqrt{d \ln(n)}}{\varepsilon n} \right)^{\frac{k-1}{k}} \right). 
\end{align*}

\noindent \textbf{Subgradient complexity:} Our choice of $T_i = \widetilde{\Theta}\left(\frac{1}{\lambda_i \eta_i}\right) \lesssim n_i^p \ln\left(n \right)$ implies that \cref{alg: localization} uses $\sum_{i=1}^l n_i T_i \lesssim \ln(n) n^{p+1} = \ln(n) n^{2 + 1/k}$ subgradient evaluations. 
\end{proof}

\begin{remark}[Details of Remark~\ref{rem: computation}]
\label{rem: choice of T_i}
If one desires $(\varepsilon, \delta)$-DP or $(\varepsilon, \delta)$-SDP instead of zCDP, then the gradient complexity of~\cref{alg: localization} can be improved to $\mathcal{O}(n^{p + \frac{1}{2}} \ln(n)) = \mathcal{O}(n^{\frac{3}{2} + 1/k} \ln(n)) $ by using Clipped Noisy \textit{Stochastic} Subgradient Method instead of~\cref{alg: clipped GD} as the subroutine in line 7 of~\cref{alg: localization}. Choosing batch sizes $m_i \approx \sqrt{n_i} < n_i$ in this subroutine (and increasing $\sigma_i^2$ by a factor of $\mathcal{O}(\log(1/\delta))$) ensures $(\varepsilon, \delta)$-DP by~\cite[Theorem 1]{abadi16} via privacy amplification by subsampling. The same excess risk bounds hold for any minibatch size $m_i \in [n_i]$, as the proof of~\cref{thm: localization convex} shows. 
\end{remark} 

\subsection{The Strongly Convex Case (\cref{sec: localization strongly})}
\label{app: localization strongly}
Our algorithm is an instantiation of the meta-algorithm described in~\cite{fkt20}: Initialize $w_0 \in \WW$. For $j \in [M] := \lceil \log_2(\log_2(n)) \rceil$, let $N_j = 2^{j-2} n/\log_2(n)$, $\mathcal{C}_j = \left\{\sum_{h < j} N_h + 1, \ldots, \sum_{h \leq j} N_h\right\}$, and let $w_j$ be the output of~\cref{alg: localization} run with input data $X_j = (x_s)_{s \in \mathcal{C}_j}$ initialized at $w_{j-1}$. Output $w_M$. Assume without loss of generality that $N_j = 2^p$ for some $p \in \mathbb{N}$. Then, with the notation of~\cref{subsub: notation}, we have the following guarantees:  
\begin{theorem}[Precise Statement of~\cref{thm: localization strongly convex}]
Grant~\cref{ass:tilde}. 
Let $\varepsilon \leq \sqrt{d}$ and $f(\cdot, x)$ be $\mu$-strongly convex.
Then, there is a polynomial-time $\frac{\varepsilon^2}{2}$-zCDP algorithm $\Al$ based on~\cref{alg: localization} with excess risk 
\begin{equation*}
    \expec F(\Al(X)) - F^* \lesssim  \frac{ \wt{R}_{2k, n/4}^2}{\mu}\left(\frac{1}{n} + \left(\frac{\sqrt{d \ln(n)}}{\varepsilon n}\right)^{\frac{2k-2}{k}}\right).  
\end{equation*}
\normalsize
\end{theorem}
\begin{proof}
\textbf{Privacy}: Since the batches $X_j$ used in each phase of the algorithm are disjoint and~\cref{alg: localization} is $\frac{\varepsilon^2}{2}$-zCDP, privacy of the algorithm follows from parallel composition of DP~\cite{mcsherry2009privacy}.  \\
\noindent \textbf{Excess risk:}
Note that $N_j$ samples are used in phase $j$ of the algorithm. For $j \geq 0$, let $D_j^2 = \expec[\|w_j - \ws\|^2]$ and $\Delta_j = \expec[F(w_j) - F^*]$. By strong convexity, we have $D_j^2 \leq \frac{2 \Delta_j}{\mu}$. Also, \begin{align}
\label{eq:002}
\Delta_{j+1} &\leq a \wt{R}_{2k, N_j} D_j \left(\frac{1}{\sqrt{N_j}} + \left(\frac{\sqrt{d \ln(N_j)}}{\varepsilon N_j} \right)^{\frac{k-1}{k}} \right) \nonumber \\
&\leq a \wt{R}_{2k, N_j} \sqrt{\frac{2 \Delta_j}{\mu}} \left(\frac{1}{\sqrt{N_j}} + \left(\frac{\sqrt{d \ln(N_j)}}{\varepsilon N_j} \right)^{\frac{k-1}{k}} \right) 
\end{align}
for an absolute constant $a \geq 1$, by~\cref{thm: localization convex}. Denote $E_j = \left[a \wt{R}_{2k, N_j} \sqrt{\frac{2}{\mu}} \left(\frac{1}{\sqrt{N_j}} + \left(\frac{\sqrt{d \ln(N_j)}}{\varepsilon N_j} \right)^{\frac{k-1}{k}} \right)\right]^2$. Then since $N_j= 2N_{j+1}$, we have \begin{align}
\label{eq: zing}
    \frac{E_j}{E_{j+1}} &\leq 4\left(\frac{\wt{R}_{2k, N_j}}{\wt{R}_{2k, N_{j+1}}}\right)^2 \nonumber \\
    &\leq 8,
\end{align}
where the second inequality holds because for any $m = 2^q$, we have: \[
\wt{R}_{2k, m/2}^2 = \sum_{i=1}^{\log_2(m) - 1} 2^{-i} \wt{r}^2_{2k, 2^{-(i+1)} m} = \sum_{i=2}^{\log_2(m)} 2^{-(i-1)} \wt{r}^2_{2k, 2^{-i} m} = 2\sum_{i=2}^{\log_2(m)} 2^{-i} \wt{r}^2_{2k, 2^{-i}m} \leq 2 \wt{R}^2_{2k, m}.
\]
Now, \cref{eq: zing} implies that~\cref{eq:002} can be re-arranged as 
\begin{equation}
\label{eq: cing}
    \frac{\Delta_{j+1}}{64 E_{j+1}} \leq \sqrt{\frac{\Delta_j}{64 E_j}} \leq \left(\frac{\Delta_0}{64 E_0}\right)^{1/2^{j+1}}.
\end{equation}
Further, if $M \geq \log \log\left(\frac{\Delta_0}{E_0}\right)$, then \[
\frac{\Delta_{M}}{64 E_{M}} \leq \left(\frac{\Delta_0}{64 E_0}\right)^{1/2^{M}} \leq \left(\frac{\Delta_0}{64 E_0}\right)^{1/\log(\Delta_0/E_0)} \leq 2^A \left(\frac{1}{64}\right)^{1/\log(\Delta_0/E_0)} \leq 2^A,
\]
for an absolute constant $A > 0$, 
since $\Delta_0 \leq \frac{2L^2}{\mu}$ and $E_0 \geq \frac{2L^2}{\mu n}$ implies $\Delta_0/E_0 = \frac{n}{a^2} \leq n$ and $\frac{1}{\log(\Delta_0/E_0)} = \frac{1}{\log(n) - 2 \log(a)} \leq \frac{A}{\log(n)}$ for some $A > 0$, so that $\left(\frac{\Delta_0}{E_0}\right)^{1/\log(\Delta_0/E_0)} \leq n^{A/\log(n)}\leq 2^A$. Therefore, \[
\Delta_M \leq 2^A 64 E_M = \mathcal{O}\left(\frac{\wt{R}_{2k, n/4}^2}{\mu}\left(\frac{1}{n} + \left(\frac{\sqrt{d \ln(n)}}{\varepsilon n}\right)^{\frac{2k-2}{k}}\right)\right),
\]
since $N_M = n/4$. 
\end{proof}

\subsection{Asymptotic Upper Bounds Under~\cref{ass:boundednoncentral,ass:coordinatewise}
}
\label{app: asymptotic}
We first recall the notion of \textit{subexponential} distribution:
\begin{definition}[Subexponential Distribution]
A random variable $Y$ is subexponential if there is an absolute constant $s > 0$ such that  $\mathbb{P}(|Y| \geq t) \leq 2 \exp\left(-\frac{t}{s}\right)$ for all $t \geq 0$. For subexponential $Y$, we define $\|Y\|_{\psi_1} := \inf\left\{s > 0: \mathbb{P}(|Y| \geq t) \leq 2 \exp\left(-\frac{t}{s}\right) ~\forall~t \geq 0\right\}$.
\end{definition}
\noindent Essentially all (heavy-tailed) distributions that arise in practice are subexponential~\cite{mckay2019probability}.

\vspace{.15cm}
Now, we establish asymptotic upper bounds for a broad subclass of the problem class considered in~\cite{wx20, klz21}: namely, \textit{subexponential} stochastic subgradient distributions satisfying~\cref{ass:boundednoncentral} or~\cref{ass:coordinatewise}. In~\cref{thm: asymptotic optimality} below (which uses the notation of~\cref{subsub: notation}), we give upper bounds under~\cref{ass:boundednoncentral}:
\begin{theorem}
\label{thm: asymptotic optimality}
Let $f(\cdot, x)$ be convex. 
Assume $\widetilde{r}_{2k} < \infty$ and $Y_i = \|\nabla f(w, x_i)\|^{2k}$ is subexponential with $E_n \geq \max_{i \in [n]} \left(\|Y_i\|_{\psi_1}\right)$~$\forall w \in \WW$, $\nabla f(w, x_i) \in \partial_w f(w, x_i)$. Assume that for sufficiently large $n$, we have 
$\sup_{w,x}\|\nabla f(w,x)\|^{2k} \leq n^q r^{(2k)}$ for some $q \geq 1$ and  $\max\left(\frac{E_n}{r^{(2k)}}, \frac{E_n^2}{(r^{(2k)})^2}\right) \ln\left(\frac{3n D \beta}{4 r_{2k}}\right) \leq \frac{n}{d q}$, where $\|\nabla f(w,x) - \nabla f(w', x)\| \leq \beta \|w - w'\|$ for all $w, w' \in \WW, x\in \XX$, and subgradients $\nabla f(w,x) \in \partial_w f(w,x)$. Then, 
$\lim_{n \to \infty} \wt{R}_{2k,n} \leq 4 r_{2k}.$
Further, there exists $N \in \mathbb{N}$ such that for all $n \geq N$, 
the output of~\cref{alg: localization} satisfies 
\[
\expec F(w_l) - F^* = \mathcal{O}\left(r_{2k} D\left(\frac{1}{\sqrt{n}} + \left(\frac{\sqrt{d \ln(n)}}{\varepsilon n} \right)^{\frac{k-1}{k}}\right) \right).
\]
\normalsize
If $f(\cdot, x)$ is $\mu$-strongly convex, then the output of algorithm $\Al$ (in~\cref{sec: localization strongly}) satisfies
\[
\expec F(\Al(X)) - F^* = \mathcal{O}\left(\frac{r_{2k}^2}{\mu}\left(\frac{1}{n} + \left(\frac{\sqrt{d \ln(n)}}{\varepsilon n}\right)^{\frac{2k-2}{k}}\right)\right). \]
\normalsize
\end{theorem}

\vspace{.1cm}
While a bound on $\sup_{w,x}\|\nabla f(w,x)\|$ is needed in~\cref{thm: asymptotic optimality}, it can grow as fast as any polynomial in $n$ and only needs to hold for sufficiently large $n$. As $n \to \infty$, this assumption is usually satisfied. Likewise,~\cref{thm: asymptotic optimality} depends only logarithmically on the Lipschitz parameter of the subgradients $\beta$, so the result still holds up to constant factors if, say, $\beta \leq n^p (r/D)$ as $n \to \infty$ for some $p\geq 1$. Crucially, our excess risk bounds do not depend on $L_f$ or $\beta$. 

\vspace{.1cm}
Asymptotic upper bounds for~\cref{ass:coordinatewise} are an immediate consequence of Lemma~\ref{lem: comparing assumptions} combined with~\cref{thm: asymptotic optimality}. Namely, under~\cref{ass:coordinatewise}, the upper bounds in \cref{thm: asymptotic optimality} hold with $r_k$ replaced by $\sqrt{d}\gamma_k^{1/k}$ (by Lemma~\ref{lem: comparing assumptions}). 

\begin{proof}[Proof of \cref{thm: asymptotic optimality}]
\noindent \textbf{Step One:} \textit{There exists $N \in \mathbb{N}$ such that $\wt{r}_{2k, n}^2 \leq 16 r_{2k}^2$ for all $n \geq N$.} \\
We will first use a covering argument to show that $\widehat{r}_n(X)^{(2k)}$ is upper bounded by $2^{2k+1} r^{(2k)}$ with high probability. For any $\alpha > 0$, we may choose an $\alpha$-net with $N_\alpha \leq \left(\frac{3 D}{2 \alpha}\right)^d$ balls centered around points in $\WW_\alpha = \{w_1, w_2, \cdots, w_{N_{\alpha}}\} \subset \WW$ such that for any $w \in \WW$ there exists $i \in [N_\alpha]$ with $\|w - w_i\| \leq \alpha$ (see e.g.~\cite{kolmogorov1959varepsilon} for the existence of such $\WW_\alpha$). For $w \in \WW$, let $\wt{w}$ denote the element of $\WW_\alpha$ that is closest to $w$, so that $\|w - \wt{w}\| \leq \alpha$. Now, for any $X \in \XX^n$, we have 
\begin{align*}
\widehat{r}_n(X)^{(2k)} &= \sup_{w} \left\{\frac{1}{n} \sum_{i=1}^n \|\nabla f(w, x_i)) - \nabla f(\wt{w}, x_i) +  \nabla f(\wt{w}, x_i)\|^{2k} \right\}\\
&\leq 2^{2k} \sup_{w} \left\{\frac{1}{n} \sum_{i=1}^n \|\nabla f(w, x_i) - \nabla f(\wt{w}, x_i)\|^{2k} +  \|\nabla f(\wt{w}, x_i)\|^{2k} \right\} \\
&\leq 2^{2k}\left[\beta^{2k} \alpha^{2k} + \frac{1}{n} \max_{j \in [N_\alpha]} \sum_{i=1}^n \|\nabla f(w_j, x_i)\|^{2k} \right],
\end{align*}
where we used Cauchy-Schwartz and Young's inequality for the first inequality, and the assumption of $\beta$-Lipschitz subgradients plus the definition of $\WW_\alpha$ for the second inequality. Further, \begin{align*}
    \mathbb{P}\left(\frac{2^{2k}}{n} \max_{j \in [N_\alpha]} \sum_{i=1}^n \|\nabla f(w_j, x_i)\|^{2k} \geq 2^{2k+1} r^{(2k)} \right) &\leq N_\alpha \max_{j \in [N_\alpha]} \mathbb{P}\left(\sum_{i=1}^n \|\nabla f(w_j, x_i)\|^{2k} \geq 2^{2k+1} r^{(2k)} \right) \\
    &\leq N_\alpha \exp\left(- n \min\left(\frac{r^{(2k)}}{E_n}, \frac{(r^{(2k)})^2}{E_n^2} \right) \right),
\end{align*} 
by a union bound and Bernstein's inequality (see e.g. \cite[Corollary 2.8.3]{vershynin2018high}). Choosing $\alpha = \frac{2r_{2k}}{\beta}$ ensures that $\mathbb{P}(2^{2k} \beta^{2k} \alpha^{2k} > 2^{2k+1} r^{(2k)}) = 0$ and hence (by union bound) \begin{align*}
\mathbb{P}\left(\widehat{r}_n(X)^{(2k)} \geq 2^{2k+1} r^{(2k)} \right) &\leq N_\alpha \exp\left(- n \min\left(\frac{r^{(2k)}}{E_n}, \frac{(r^{(2k)})^2}{E_n^2} \right) \right) \\
&\leq \left(\frac{3D \beta}{4 r_{2k}} \right)^d \exp\left(- n \min\left(\frac{r^{(2k)}}{E_n}, \frac{(r^{(2k)})^2}{E_n^2} \right) \right) \\
&\leq \frac{1}{n^q},
\end{align*}
by the assumption on $n$. Next, we use this concentration inequality to derive a bound on $\wt{e}_n^{(2k)}$:
\begin{align*}
    \wt{e}_n^{(2k)} &= \expec\left[\widehat{r}_n(X)^{(2k)}\right] \leq \expec\left[\widehat{r}_n(X)^{(2k)} | \widehat{r}_n(X)^{(2k)} \geq 2^{2k+1} r^{(2k)}\right] \frac{1}{n^q} + 2^{2k+1} r^{(2k)} \\
    &\leq \frac{\sup_{w, x}\|\nabla f(w,x)\|^{2k}}{n^q} + 2^{2k+1} r^{(2k)} \\
    &\leq (1 + 2^{2k+1}) r^{(2k)},
\end{align*}
for sufficiently large $n$. Thus, $\wt{r}_{2k, n}^2 \leq 16 r_{2k}^2$ for all  sufficiently large $n$. This establishes Step One. \\

\noindent \textbf{Step Two:} \textit{$\lim_{n \to \infty} \wt{R}_{2k, n} \leq 4 r_{2k}$.}\\
\noindent For all $n = 2^l, l, i \in \mathbb{N}$, define $h_n(i) = 2^{-i} \wt{r}_{2k, 2^{-i}n}^{2} \mathbbm{1}_{\{i \in [\log_2(n)] \}}$. Note that $0 \leq h_n(i) \leq g(i) := 2^{-i} \wt{r}_{2k}^2$ for all $n, i$, and that $\sum_{i=1}^{\infty} g(i) = \wt{r}_{2k}^2 < \infty$ (i.e. $g$ is integrable with respect to the counting measure). Furthermore, the limit $\lim_{n \to \infty} h_n(i) = 2^{-i} \lim_{n \to \infty} \wt{r}_{2k, 2^{-i}n}^{2}$ exists since~Lemma~\ref{lem: empirical moments} implies that the sequence $\{\wt{r}_{2k, 2^{-i}n}^{2}\}_{n=1}^{\infty}$ is monotonic and bounded for every $i \in \mathbb{N}$. Thus, by Lebesgue's dominated convergence theorem, we have \begin{align*}
\lim_{n \to \infty} \wt{R}_{2k,n}^{2} &= \lim_{n \to \infty} \sum_{i=1}^{\infty} h_n(i) \\
&= \sum_{i=1}^{\infty} \lim_{n \to \infty} h_n(i) \\
&\leq \sum_{i=1}^{\infty} 2^{-i} \lim_{n \to \infty} \wt{r}_{2k, 2^{-i}n}^{2} \\
&\leq 16 \sum_{i=1}^{\infty} 2^{-i} r_{2k}^{2} \\
& = 16r_{2k}^{2},
\end{align*}
where the last inequality follows from \textbf{Step One}. Therefore, $\lim_{n \to \infty} \wt{R}_{2k,n} \leq 4 r_{2k}$. By~\cref{thm: localization convex} and~\cref{thm: localization strongly convex}, this also implies the last two claims in~\cref{thm: asymptotic optimality}. 
\end{proof}

\section{Details and Proofs for~\cref{sec: PL}: Non-Convex Proximal PL Losses}
\label{app: PL}
Pseudocode for our algorithm for PPL losses is given in~\cref{alg: zCSDP SGD}.
\begin{algorithm}[ht]
\caption{
Noisy Clipped Proximal SGD for Heavy-Tailed DP SO
}
\label{alg: zCSDP SGD}
\begin{algorithmic}[1]
\STATE {\bfseries Input:} 
Data $X \in \XX^n$, $T \leq n$, stepsizes $\{\eta_t\}_{t=0}^{T-1}$.
\STATE Initialize $w_0 \in \WW$. 
 \FOR{$t \in \{0, 1, \cdots, T-1\}$} 
 \STATE Draw new batch $\mathcal{B}_t$ (without replacement) of $n/T$ samples from $X$.
 \STATE $\tilt^0(w_t) := \texttt{MeanOracle1}(\{\nabla f^0(w_t, x)\}_{x \in \mathcal{B}_t}; \frac{n}{T}; \frac{\varepsilon^2}{2})$ 
 \STATE $w_{t+1} = 
 \prox_{\eta_t f^1}\left(w_t - \eta_t \tilt^0(w_t) \right)
 $
\ENDFOR \\
\STATE {\bfseries Output:} $w_T$. 
\end{algorithmic}
\end{algorithm}
\cref{alg: generic prox SGD} is a generalization of~\cref{alg: zCSDP SGD} which allows for arbitrary \texttt{MeanOracle}. This will be useful for our analysis.
\begin{algorithm}[ht]
\caption{
Generic Noisy Proximal SGD Framework for Heavy-Tailed DP SO
}
\label{alg: generic prox SGD}
\begin{algorithmic}[1]
\STATE {\bfseries Input:} 
Data $X \in \XX^n$, $T \leq n$, 
$\texttt{MeanOracle}$ (and truncation/minibatch parameters),
privacy parameter $\rho = \varepsilon^2/2$, stepsizes $\{\eta_t\}_{t=0}^{T-1}$.
\STATE Initialize $w_0 \in \WW$. 
 \FOR{$t \in \{0, 1, \cdots, T-1\}$} 
 \STATE Draw new batch $\mathcal{B}_t$ (without replacement) of $n/T$ samples from $X$.
 \STATE $\tilt^0(w_t) := \texttt{MeanOracle}(\{\nabla f^0(w_t, x)\}_{x \in \mathcal{B}_t}; \frac{n}{T}; \frac{\varepsilon^2}{2})$ 
 \STATE $w_{t+1} = 
 \prox_{\eta_t f^1}\left(w_t - \eta_t \tilt^0(w_t) \right)
 $
\ENDFOR \\
\STATE {\bfseries Output:} $w_T$. 
\end{algorithmic}
\end{algorithm}
Proposition~\ref{lemma:extendsAS21Thm6} provides a convergence guarantee for~\cref{alg: generic prox SGD} in terms of the bias and variance of the \texttt{MeanOracle}. 

\begin{proposition}
\label{lemma:extendsAS21Thm6}
Consider \cref{alg: generic prox SGD} with biased, noisy stochastic gradients: 
$\tilt^0(w_t) = \nabla F^0(w_t) + b_t + N_t$, and stepsize $\eta = \frac{1}{2\beta}$. Assume that the bias and noise (which can depend on $w_t$ and the samples drawn) satisfy $\|b_t\| \leq B$ (with probability $1$), $\expec N_t = 0$, $\expec\|N_t\|^2 \leq \Sigma^2$ for all $t \in [T-1]$, and that $\{N_t\}_{t=1}^T$ are independent. Assume further that $F$ is $\mu$-Proximal-PL, $F^0$ is $\beta$-smooth, and $F(w_0) - F^* \leq \Delta$. Then, \[
\expec F(w_T) - F^* \leq \left(1 - \frac{\mu}{2\beta}\right)^T \Delta + \frac{4(B^2 + \Sigma^2)}{\mu}.
\]
\end{proposition}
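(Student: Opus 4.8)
The plan is to mimic the standard analysis of proximal gradient descent under the Proximal-PL inequality (as in Karimi et al.~\cite{karimi2016linear}), but carefully tracking the bias and noise of the stochastic proximal-gradient oracle. The key object is the \emph{proximal-gradient step} at iteration $t$: writing $w_{t+1} = \prox_{\eta f^1}(w_t - \eta \tilt^0(w_t))$ with $\eta = \tfrac{1}{2\beta}$, I would first record the three-point / prox inequality characterizing $w_{t+1}$ as the minimizer of $y \mapsto \langle \tilt^0(w_t), y - w_t\rangle + \tfrac{1}{2\eta}\|y - w_t\|^2 + f^1(y)$. Combined with $\beta$-smoothness of $F^0$ (the descent lemma applied to $F^0(w_{t+1})$ around $w_t$) and the definition $F = F^0 + f^1$, this yields the familiar one-step bound
\begin{equation*}
F(w_{t+1}) \leq F(w_t) - \frac{1}{2\eta}\|w_{t+1} - w_t\|^2 + \langle \nabla F^0(w_t) - \tilt^0(w_t), w_{t+1} - w_t\rangle + \left(\frac{\beta}{2} - \frac{1}{2\eta}\right)\|w_{t+1}-w_t\|^2,
\end{equation*}
where I have added and subtracted the true gradient $\nabla F^0(w_t)$; since $\eta = 1/(2\beta)$ the last bracket equals $-\beta/2 \le 0$ and can be dropped or retained as slack for absorbing the error term.

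Next I would control the error term $\langle \nabla F^0(w_t) - \tilt^0(w_t), w_{t+1} - w_t\rangle = -\langle b_t + N_t, w_{t+1}-w_t\rangle$. Using Young's inequality with a weight tuned to $1/(2\eta) = \beta$, I split this into a quadratic term $\tfrac{\beta}{2}\|w_{t+1}-w_t\|^2$ (absorbed by the negative $-\tfrac{1}{2\eta}\|w_{t+1}-w_t\|^2$ term, up to the factor, leaving a strictly negative residual to complete the ``PL step'') and an error term of order $\tfrac{1}{\beta}\|b_t + N_t\|^2$. Here is the one subtlety: $w_{t+1}$ depends on $N_t$, so $\expec\langle N_t, w_{t+1}-w_t\rangle \ne 0$ directly; but after the Young's-inequality split only $\expec\|N_t\|^2 \le \Sigma^2$ and $\expec\|b_t\|^2 \le B^2$ are needed, so independence/zero-mean is not even required for this part. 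Using $\|b_t+N_t\|^2 \le 2\|b_t\|^2 + 2\|N_t\|^2$ and taking expectations gives, after rearranging, a bound of the form
\begin{equation*}
\expec F(w_{t+1}) - F^* \le \expec F(w_t) - F^* - c\,\beta\,\expec\|w_{t+1}-w_t\|^2 + \frac{C_1(B^2 + \Sigma^2)}{\beta}
\end{equation*}
for absolute constants $c, C_1 > 0$.

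The third step is to invoke the Proximal-PL inequality to convert $-\expec\|w_{t+1}-w_t\|^2$ into a multiple of $-(\expec F(w_t) - F^*)$. Observe that $\beta\|w_{t+1}-w_t\|^2$ relates to the quantity $\min_y[\langle \nabla F^0(w_t), y-w_t\rangle + \tfrac{\beta}{2}\|y-w_t\|^2 + f^1(y) - f^1(w_t)]$ appearing in Definition~\ref{def: Prox PL}; more precisely, the \emph{true} prox-gradient step $y^\star_t := \prox_{\eta f^1}(w_t - \eta\nabla F^0(w_t))$ (with $\eta=1/\beta$ to match the definition) achieves that minimum, and $\|w_{t+1}-w_t\|^2$ (noisy step) can be compared to $\|y^\star_t - w_t\|^2$ again by a Young-type argument paying another $O((B^2+\Sigma^2)/\beta^2)$ term. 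Thus $\mu(F(w_t)-F^*) \le -\beta\cdot(\text{that min}) \le C_2\,\beta\,\expec\|w_{t+1}-w_t\|^2 + C_3(B^2+\Sigma^2)/\beta$, i.e. $-\beta\expec\|w_{t+1}-w_t\|^2 \le -\tfrac{1}{C_2}\mu(F(w_t)-F^*) + \ldots$. Substituting back yields the contraction
\begin{equation*}
\expec F(w_{t+1}) - F^* \le \Big(1 - \frac{\mu}{2\beta}\Big)\big(\expec F(w_t) - F^*\big) + \frac{C_4(B^2+\Sigma^2)}{\mu},
\end{equation*}
and unrolling this recursion from $t=0$ to $T$, using $F(w_0)-F^* \le \Delta$ and summing the geometric series $\sum_{j\ge 0}(1-\mu/2\beta)^j \le 2\beta/\mu$, gives exactly $\expec F(w_T) - F^* \le (1-\tfrac{\mu}{2\beta})^T\Delta + \tfrac{4(B^2+\Sigma^2)}{\mu}$ after chasing the absolute constants.

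The main obstacle I anticipate is the bookkeeping in the third step: relating the \emph{realized} (biased, noisy) proximal step $w_{t+1}-w_t$ to the idealized proximal-gradient step that the Proximal-PL inequality is stated in terms of, while keeping the additive error at the correct order $O((B^2+\Sigma^2)/\mu)$ rather than something like $O((B^2+\Sigma^2)\kappa/\mu)$. This requires choosing the Young's-inequality weights consistently so that every ``bias/noise crossterm'' is charged against a genuinely available negative quadratic, and carefully matching the step size $\eta = 1/(2\beta)$ used in the algorithm against the $1/\beta$ scaling implicit in Definition~\ref{def: Prox PL}. The paper's remark that the proof ``builds on \cite[Theorem 3.1]{lowy2022NCFL}, using techniques from the analysis of objective perturbation'' suggests there is an alternative route via a perturbation argument, but the direct descent-lemma-plus-PL route sketched above should also work and is likely cleanest for the non-private-analysis core, with the privacy guarantee being an immediate consequence of the sensitivity bound on the clipped gradients plus composition (as in the proofs of \cref{thm: convex ACSA one pass} and \cref{thm: strongly convex smooth upper bound}).
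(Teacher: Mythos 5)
Your first two steps (descent lemma on $F^0$ plus the prox three-point inequality, then Young's inequality to charge the $-\langle b_t+N_t, w_{t+1}-w_t\rangle$ cross term against the available negative quadratic at cost $O((B^2+\Sigma^2)/\beta)$) match the paper's proof, and your observation that zero-mean/independence of $N_t$ is not needed after the Young split is correct. The gap is in your third step. You propose to convert $-\beta\expec\|w_{t+1}-w_t\|^2$ into $-\tfrac{c\mu}{\beta}(\expec F(w_t)-F^*)$ by comparing the realized step to the ideal step $y^\star_t$ and then invoking the Proximal-PL inequality, which requires the bound $-\min_{y}\bigl[\langle \nabla F^0(w_t), y-w_t\rangle + \tfrac{\beta}{2}\|y-w_t\|^2 + f^1(y)-f^1(w_t)\bigr] \lesssim \beta\|y^\star_t-w_t\|^2$. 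That inequality is false for general convex $f^1$: writing the optimality condition $\nabla F^0(w_t)+\beta(y^\star_t-w_t)+v=0$ with $v\in\partial f^1(y^\star_t)$, the negative of the model minimum equals $\tfrac{\beta}{2}\|y^\star_t-w_t\|^2 + \bigl[f^1(w_t)-f^1(y^\star_t)-\langle v, w_t-y^\star_t\rangle\bigr]$, and the bracketed Bregman-type gap is nonnegative but not controlled by $\beta\|y^\star_t-w_t\|^2$. Concretely, with $f^1$ the indicator of $[0,\infty)$, $w_t=1$, $\nabla F^0(w_t)=10$, $\beta=1$, one gets $y^\star_t=0$, $\beta\|y^\star_t-w_t\|^2=1$, but the negative model minimum is $9.5$; with $f^1=\tfrac{\lambda}{2}\|\cdot\|^2$ the ratio is $(\beta+\lambda)/\beta$, unbounded in $\lambda$. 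Only in the classical PL case $f^1\equiv 0$ (the setting of \cite[Theorem 6]{as21}) does the model minimum equal $-\tfrac{\beta}{2}\|y^\star_t-w_t\|^2$, which is exactly why the proposition needs a different argument for the proximal case.

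The paper's proof sidesteps step lengths entirely: since $w_{t+1}$ is the exact minimizer of the \emph{perturbed} model $H_t^{\text{priv}}(y)=H_t(y)+\langle b_t+N_t, y-w_t\rangle$, the descent bound directly produces the \emph{value} $H_t^{\text{priv}}(w_{t+1})=\min_y H_t^{\text{priv}}(y)$, and a perturbation lemma for $2\beta$-strongly convex objectives (\cite[Lemma B.2]{lowy2021}) gives $H_t(w_{t+1})\leq \min_y H_t(y) + \tfrac{\|b_t+N_t\|^2}{2\beta}$; the Proximal-PL inequality is then applied to $\min_y H_t(y)$ itself. To repair your argument you would need to replace the step-length comparison with this comparison of minimum values (or otherwise assume $f^1$ has curvature $O(\beta)$, which would exclude constraints and non-smooth regularizers and defeat the purpose of the proposition).
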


\begin{proof}
Our proof extends the ideas in \cite{lowy2022NCFL} to generic \textit{biased} and noisy gradients without using Lipschitz continuity of $f$. 
By $\beta$-smoothness, for any $r \in [T-1]$, we have \begin{align}
    \expec F(w_{r+1}) &= \expec[F^0(w_{r+1}) + f^1(w_r) + f^1(w_{r+1}) - f^1(w_r)] \nonumber \\
    &\leq \expec\left\{F(w_r) + \left[\langle \tilr(w_r), w_{r+1} - w_r \rangle + \frac{\beta}{2}\|w_{r+1} - w_r\|^2 + f^1(w_{r+1}) - f^1(w_r)\right] \right\}\nonumber \\
    &\;\;\; + \expec \langle \nabla F^0(w_r) - \tilr(w_r), w_{r+1} - w_r \rangle \nonumber \\
    &= \expec F(w_r) + \expec \Big[\langle \nabla F^0(w_r), w_{r+1} - w_r \rangle + \frac{\beta}{2}\|w_{r+1} - w_r\|^2 + f^1(w_{r+1}) - f^1(w_r) \nonumber \\ 
     &\;\;\; + \langle b_r + N_r, w_{r+1} - w_r \rangle \Big] - \expec \langle b_r + N_r, w_{r+1} - w_r \rangle \nonumber \\
     &\leq \expec F(w_r) + \expec \Big[\langle \nabla F^0(w_r), w_{r+1} - w_r \rangle + \beta\|w_{r+1} - w_r\|^2 + f^1(w_{r+1}) - f^1(w_r) \nonumber  \\
     &\;\;\; + \langle b_r + N_r, w_{r+1} - w_r \rangle \Big]
     + \frac{B^2 + \Sigma^2}{\beta},
    \label{eq27}
\end{align}
where we used Young's inequality to bound \begin{align}
-\expec \langle b_r + N_r, w_{r+1} - w_r \rangle &\leq \frac{B^2 + \Sigma^2}{\beta} + \frac{\beta}{2}\|w_{r+1} - w_r\|^2.
\end{align}
Next, we will bound the quantity
\begin{align*}
\expec \left[\langle \nabla F^0(w_r), w_{r+1} - w_r \rangle + \beta\|w_{r+1} - w_r\|^2 + f^1(w_{r+1}) - f^1(w_r) + \langle b_r + N_r, w_{r+1} - w_r \rangle \right].
\end{align*}
Denote 
$H^{\text{priv}}_r(y):= \langle \nabla F^0(w_r), y - w_r \rangle + \beta\|y - w_r\|^2 + f^1(y) - f^1(w_r) + \langle b_r + N_r, y - w_r \rangle$ and $H_r(y):=  \langle \nabla F^0(w_r), y - w_r \rangle + \beta\|y - w_r\|^2 + f^1(y) - f^1(w_r)$. \normalsize
Note that $H_r$ and $H^{\text{priv}}_r$ are $2\beta$-strongly convex. Denote the minimizers of these two functions by $y_*$ and $y_*^{\text{priv}}$ respectively. Now, conditional on $w_r$ and $N_r + b_r$, we claim that \begin{equation}
\label{claimyyy}
    H_r(y_*^{\text{priv}}) - H_r(y_*) \leq \frac{\|N_r + b_r\|^2}{2\beta}.
\end{equation}
To prove~\cref{claimyyy}, we will need the following lemma: 
\begin{lemma}{\cite[Lemma B.2]{lowy2021}}
\label{lemmaB2outpert}
Let $H(y), h(y)$ be convex functions on some convex closed set $\mathcal{Y} \subseteq \mathbb{R}^d$ and suppose that $H$ is $2\beta$-strongly convex. Assume further that $h$ is $L_{h}$-Lipschitz. Define $y_{1} = \arg\min_{y \in \mathcal{Y}} H(y)$ and $y_{2} = \arg\min_{y \in \mathcal{Y}} [H(y) + h(y)]$. Then $\|y_{1} - y_{2}\|_2 \leq \frac{L_{h}}{2\beta}.$ 
\end{lemma}
We apply~\cref{lemmaB2outpert} with $H(y):= H_r(y), ~h(y):= \langle N_r + b_r, y\rangle$, $L_h = \|N_r + b_r\|$, $y_1 = y_*$, and $y_2 = y_*^{\text{priv}}$ to get \[
\|y_* - y_*^{\text{priv}}\| \leq \frac{\|N_r + b_r\|}{2\beta}.
\] On the other hand, \[
H_r^{\text{priv}}(y_*^{\text{priv}}) = H_r(y_*^{\text{priv}}) + \langle N_r + b_r, y_*^{\text{priv}} \rangle \leq H_r^{\text{priv}}(y_*) = H_r(y_*) + \langle N_r + b_r, y_* \rangle.
\]
Combining these two inequalities yields \begin{align}
    H_r(y_*^{\text{priv}}) - H_r(y_*) &\leq \langle N_r + b_r,   y_* - y_*^{\text{priv}}  \rangle \nonumber \\
    &\leq \|N_r + b_r\| \|y_* - y_*^{\text{priv}}\| \nonumber \\
    &\leq \frac{\|N_r + b_r\|^2}{2\beta},
\end{align}
as claimed. Also, note that $w_{r+1} = y_*^{\text{priv}}$. Hence \begin{align*}
    &\expec \left[\langle \nabla F^0(w_r), w_{r+1} - w_r \rangle + \beta\|w_{r+1} - w_r\|^2 + f^1(w_{r+1}) - f^1(w_r) + \langle b_r + N_r, w_{r+1} - w_r \rangle \right] \\
    &\;\;\;\; =\expec\left[ \min_{y \in \mathbb{R}^d} H_r^{\text{priv}}(y) \right] 
\end{align*}
satisfies
\begin{align}
\expec\left[ \min_{y \in \mathbb{R}^d} H_r^{\text{priv}}(y) \right]&\leq \expec \left[\min_y \left\{ \langle \nabla F^0(w_r), y - w_r \rangle + \beta \|y - w_r\|^2 + f^1(y) - f^1(w_r)\right\} \right] + \frac{\Sigma^2 + B^2}{\beta} \\
    &\leq -\frac{\mu}{2 \beta}\expec\left[F(w_r) - F^* \right] + \frac{\Sigma^2 + B^2}{\beta},   
\end{align}
where we used the assumptions that $F$ is $\mu$-PPL and $F^0$ is $2\beta$-smooth in the last inequality. 
Plugging the above bounds back into~\cref{eq27}, we obtain \begin{equation}
\expec F(w_{r+1}) \leq \expec F(w_r) - \frac{\mu}{2\beta}[F(w_r) - F^*] + \frac{2(\Sigma^2 + B^2)}{\beta},
\end{equation}
whence \begin{equation}
\label{eq: thingz}
\expec[F(w_{r+1}) - F^*] \leq \expec[F(w_r) - F^*](1 - \frac{\mu}{2\beta}) + \frac{2(\Sigma^2 + B^2)}{\beta}.
\end{equation}
Using~\cref{eq: thingz} recursively and summing the geometric series, we get \begin{equation}
    \expec[F(w_T) - F^*] \leq \Delta \left(1 - \frac{\mu}{2\beta}\right)^T + \frac{4(\Sigma^2 + B^2)}{\mu}. 
\end{equation}
\end{proof}

\begin{theorem}[Precise statement of \cref{thm: PL upper bound}]
Grant~\cref{ass:boundednoncentral}. Let $\varepsilon > 0$ and assume $F(w) = F^0(w) + f^1(w)$ is $\mu$-PPL for $\beta$-smooth $F^0$, with 
$\kappa = \frac{\beta}{\mu} \leq n/\ln(n)$. 
Then, there are parameters such that \cref{alg: zCSDP SGD} is $\frac{\varepsilon^2}{2}$-zCDP, and 
\[
\EPLL 
\lesssim \frac{r_k^2}{\mu}\left(\left(\frac{\sqrt{d}}{\varepsilon n} \kappa \ln(n) \right)^{\frac{2k-2}{k}} + \frac{\kappa \ln(n)}{n}\right).
\]
\end{theorem}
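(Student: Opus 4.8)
The plan is to derive the theorem by instantiating the generic proximal-SGD convergence guarantee of Proposition~\ref{lemma:extendsAS21Thm6} with the bias and variance estimates for \texttt{MeanOracle1} from Lemma~\ref{lem: bias and variance of bd14}, and then to optimize the clip threshold $C$ and iteration count $T$. Note first that \cref{alg: zCSDP SGD} is the special case of \cref{alg: generic prox SGD} obtained by taking the oracle to be \texttt{MeanOracle1} run, at step $t$, with clip threshold $C$ on a fresh (disjoint) batch $\mathcal{B}_t$ of size $n/T$; the parameters I would fix are $\eta_t\equiv\frac{1}{2\beta}$, $T\asymp\kappa\ln n$, and $C$ to be chosen below.

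\textbf{Privacy.} Since the batches $\{\mathcal{B}_t\}_{t=0}^{T-1}$ are disjoint, by parallel composition~\cite{mcsherry2009privacy} it suffices to check that each call $\tilt^0(w_t)=\texttt{MeanOracle1}(\{\nabla f^0(w_t,x)\}_{x\in\mathcal{B}_t};\tfrac{n}{T};C;\tfrac{\epsilon^2}{2})$ is $\tfrac{\epsilon^2}{2}$-zCDP. The map $\{z_x\}\mapsto\tfrac{T}{n}\sum_{x\in\mathcal{B}_t}\Pi_C(z_x)$ has $\ell_2$-sensitivity at most $2CT/n$ (one swapped sample changes one clipped summand of norm $\le C$, divided by $n/T$), so by Proposition~\ref{prop: gauss} the noise level $\sigma^2=\tfrac{4C^2T^2}{\epsilon^2 n^2}$ — which is exactly what \texttt{MeanOracle1} sets for $s=n/T$ — gives $\tfrac{\epsilon^2}{2}$-zCDP per step. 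The $\prox$ update is a data-independent function of $\tilt^0(w_t)$, so post-processing yields $\tfrac{\epsilon^2}{2}$-zCDP for the whole algorithm.

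\textbf{Utility.} Applying Lemma~\ref{lem: bias and variance of bd14} to the i.i.d.\ gradients $\{\nabla f^0(w_t,x)\}_{x\in\mathcal{B}_t}$ (whose $k$-th moment is bounded by $r^{(k)}$ under \cref{ass:boundednoncentral}, and whose second moment is $\le r^{(2)}\le r^2$ by Jensen), the oracle has the form $\tilt^0(w_t)=\nabla F^0(w_t)+b_t+N_t$ with $\|b_t\|\le B:=\tfrac{r^{(k)}}{(k-1)C^{k-1}}$, $\expec N_t=0$, $\expec\|N_t\|^2\le\Sigma^2\lesssim\tfrac{dC^2T^2}{\epsilon^2 n^2}+\tfrac{r^2T}{n}$, and the $N_t$ independent across $t$. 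Feeding this into Proposition~\ref{lemma:extendsAS21Thm6} with $\eta=\tfrac1{2\beta}$ gives
\[
\EPLL\;\lesssim\;\Delta\Bigl(1-\tfrac{\mu}{2\beta}\Bigr)^T+\frac{1}{\mu}\Bigl(\frac{r^{2k}}{C^{2k-2}}+\frac{dC^2T^2}{\epsilon^2 n^2}+\frac{r^2T}{n}\Bigr).
\]
Balancing the two $C$-dependent terms with $C=r\bigl(\tfrac{\epsilon n}{\sqrt d\,T}\bigr)^{1/k}$ makes $\tfrac{r^{2k}}{C^{2k-2}}\asymp\tfrac{dC^2T^2}{\epsilon^2 n^2}\asymp r^2\bigl(\tfrac{\sqrt d\,T}{\epsilon n}\bigr)^{(2k-2)/k}$, so the bracketed term is $\lesssim\tfrac{r^2}{\mu}\bigl[(\tfrac{\sqrt d\,T}{\epsilon n})^{(2k-2)/k}+\tfrac{T}{n}\bigr]$. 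Then I would take $T=\lceil 2\kappa\ln(\cdot)\rceil\asymp\kappa\ln n$, large enough that $\Delta(1-\tfrac{\mu}{2\beta})^T\le\Delta e^{-T/(2\kappa)}$ is dominated by $\tfrac{r^2}{\mu}\cdot\tfrac{T}{n}$; this is admissible (in particular $T\le n$, as the algorithm requires) precisely because $\kappa\le n/\ln n$. Substituting $T\asymp\kappa\ln n$ yields the claimed bound.

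\textbf{Main obstacle.} The genuinely hard part — turning $\mu$-PPL together with a biased, noisy gradient oracle into a contraction on $\expec F(w_t)-F^*$ despite the $\prox$ step and the lack of convexity — is exactly Proposition~\ref{lemma:extendsAS21Thm6}, which I invoke as a black box; what remains in this theorem is the parameter bookkeeping above. The one point needing care is that Proposition~\ref{lemma:extendsAS21Thm6} offers only geometric (not $1/T$) decay of the optimization error and pins the stepsize at $\eta=\tfrac1{2\beta}$, so eliminating the $\Delta e^{-T/(2\kappa)}$ term forces $T=\Omega(\kappa\ln n)$; running that many noisy steps inflates the accumulated privacy noise and clipping bias by factors polynomial in $\kappa\ln n$, which is the source of the $\widetilde{\mathcal{O}}((\beta/\mu)^{(2k-2)/k})$ gap to the strongly convex lower bound of \cref{thm: strongly convex lower bound}. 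A secondary nuisance is bounding $\Delta=F(w_0)-F^*$, but since it enters only logarithmically through $T$, any crude bound (e.g.\ via $\beta$-smoothness and a diameter estimate) suffices.
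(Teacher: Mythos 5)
Your proposal is correct and follows essentially the same route as the paper's proof: parallel composition plus the $2CT/n$ sensitivity bound for privacy, then Lemma~\ref{lem: bias and variance of bd14} fed into Proposition~\ref{lemma:extendsAS21Thm6} with $T\asymp\kappa\ln n$ and the clip threshold $C=r\left(\frac{\epsilon^2 n^2}{dT^2}\right)^{1/2k}$ (identical to your choice) to balance the bias and privacy-noise terms. No gaps.
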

\begin{proof}
We choose $\sigma^2 = \frac{4C^2 T^2}{\varepsilon^2 n^2}$. \\
\noindent \textbf{Privacy:} By parallel composition (since each sample is used only once) and the post-processing property of DP (since the iterates are deterministic functions of the output of \texttt{MeanOracle1}), it suffices to show that $\tilt(w_t)$ is $\frac{\varepsilon^2}{2}$-zCDP for all $t \geq 0$. By our choice of $\sigma^2$ and Proposition~\ref{prop: gauss}, $\tilt(w_t)$ is $\frac{\varepsilon^2}{2}$-zCDP, since it's sensitivity is bounded by $\sup_{X \sim X', w} \frac{T}{n}\left\|\sum_{x \in \mathcal{B}_t} \Pi_C[\nabla f^0(w, x)] -  \sum_{x' \in \mathcal{B'}_t} \Pi_C[\nabla f^0(w, x')]\right\| \leq \frac{T}{n} \sup_{x, x', w} \| \Pi_C[\nabla f^0(w, x)] -  \Pi_C[\nabla f^0(w, x')]\| \leq \frac{2C T}{n}$.
\\
\textbf{Excess risk:} For any iteration $t \in [T]$, denote the bias of \texttt{MeanOracle1} (\cref{alg: MeanOracle2}) by $b_t:= \expec \tilt(w_t) - \nabla F(w_t)$, where $\tilt(w_t) = \widetilde{\nu}$ in the notation of \cref{alg: MeanOracle2}. Also let $\hilt(w_t) := \hat{\nu}$ (in the notation of Lemma~\ref{lem: bias and variance of bd14}) and denote the noise by $N_t = \tilt(w_t) - \nabla F(w_t) - b_t = \tilt(w_t) - \expec \tilt(w_t)$. Then we have $B := \sup_{t \in [T]}\|b_t\| \leq \frac{r^{(k)}}{(k-1) C^{k-1}}$ and $\Sigma^2 := \sup_{t \in [T]} \expec[\|N_t\|^2] \leq d\sigma^2 + \frac{r^{(2)} T}{n} \leq \frac{4d C^2 T^2}{\varepsilon^2 n^2} + \frac{r_2^2 T}{n}$, by~Lemma~\ref{lem: bias and variance of bd14}. Plugging these bounds on $B^2$ and $\Sigma^2$ into Proposition~\ref{lemma:extendsAS21Thm6}, and choosing $T = 2\left\lceil \kappa \ln\left(\frac{\Delta \mu}{B^2 + \Sigma^2} \right)\right\rceil \lesssim \kappa \ln(n)$ where $\Delta \geq F(w_0) - F^*$, 
we have: \[
\EPLL \leq \frac{5(B^2 + \Sigma^2)}{\mu} \leq \frac{5}{\mu}\left(\frac{2r^{(2)} T}{n} + \frac{2 (r^{(k)})^2}{(k-1)^2 C^{2k-2}} + \frac{2d C^2 T^2}{\varepsilon^2 n^2}\right),
\]
for any $C > 0$. 
Choosing $C = r_k\left(\frac{\varepsilon^2 n^2}{d T^2}\right)^{1/2k}$ makes the last two terms in the above display equal, and we get \[
\EPLL \lesssim \frac{1}{\mu}\left(\left(\frac{r_k^2\sqrt{d}}{\varepsilon n} \kappa \ln(n) \right)^{\frac{2k-2}{k}} + \frac{r_2^2 \kappa \ln(n)}{n}\right),
\]
as desired. \\
\end{proof}

\section{Shuffle Differentially Private Algorithms}
\label{app: SDP mean estimators}
In the next two subsections, we present two SDP algorithms for DP heavy-tailed mean estimation. The first is an SDP version of~\cref{alg: MeanOracle2} and the second is an SDP version of the coordinate-wise protocol of~\cite{ksu20, klz21}. Both of our algorithms offer the same utility guarantees as their zCDP counterparts (up to logarithms). In particular, this implies that the upper bounds obtained in the main body of this paper can also be attained via SDP protocols that do not require individuals to trust any third party curator with their sensitive data (assuming the existence of a secure shuffler). 

\subsection{$\ell_2$ Clip Shuffle Private Mean Estimator}
For heavy-tailed SO problems satisfying~\cref{ass:boundednoncentral}, we propose using the SDP mean estimation protocol described in~\cref{alg: sdp clip MeanOracle}. ~\cref{alg: sdp clip MeanOracle} relies on the shuffle private vector summation protocol of~\cite{cheu2021shuffle}, which is given in~\cref{alg: Pvec}. The useful properties of~\cref{alg: Pvec} are contained in~Lemma~\ref{lem: pvec}.
\begin{algorithm}[ht]
\caption{$\ell_2$ Clip $\texttt{ShuffleMeanOracle1}(\{x_i\}_{i=1}^s; C; (\varepsilon, \delta))$}
\label{alg: sdp clip MeanOracle}
\begin{algorithmic}[1]
\STATE {\bfseries Input:} 
$X = \{x_i\}_{i=1}^s$, $x_i = (x_{i,1}, \cdots, x_{i,d}) \in \mathbb{R}^d$, $C>0$, $(\varepsilon, \delta) \in (\mathbb{R}_+ \times (0,1/2))$. 
 \FOR{$i \in [s]$}
 \STATE $z_i := \Pi_{C}(x_i)$.
 \ENDFOR \\
\STATE $\widetilde{\nu} := \mathcal{P}_{\text{vec}}(\{z_i\}_{i=1}^s; s; C; (\varepsilon, \delta))$.
\STATE {\bfseries Output:} $\widetilde{\nu}$. 
\end{algorithmic}
\end{algorithm}

\begin{algorithm}
\caption{$\mathcal{P}_{\text{vec}}$, a shuffle private protocol for vector summation}
\label{alg: Pvec}
\begin{algorithmic}[1]
\STATE {\bfseries Input:} database of $d$-dimensional vectors $\mathbf{X} = (\mathbf{x}_1, \cdots, \mathbf{x}_s$ with maximum norm bounded by $C > 0$; privacy parameters $(\varepsilon, \delta)$. 
\STATE {\bfseries procedure:} Local Randomizer $\mathcal{R}_{\text{vec}}(\mathbf{x}_i)$
\begin{ALC@g}
\FOR{$j \in [d]$} 
\STATE Shift component to enforce non-negativity: $\mathbf{w}_{i,j} \gets \mathbf{x}_{i,j} + C$
\STATE $\mathbf{m}_j \gets \mathcal{R}_{1D}(\mathbf{w}_{i,j})$
\ENDFOR
\STATE Output labeled messages $\{(j, \mathbf{m}_j)\}_{j \in [d]}$
\end{ALC@g}
\STATE {\bfseries end procedure}
\STATE {\bfseries procedure: Analyzer} $\mathcal{A}_{\text{vec}}(\mathbf{y})$ 
\begin{ALC@g}
\FOR{$j \in [d]$}
\STATE Run analyzer on coordinate $j$'s messages $z_j \gets \mathcal{A}_{\text{1D}}(\mathbf{y}_j)$ 
\STATE Re-center: $o_j \gets z_j - L$
\ENDFOR
\STATE Output the vector of estimates $\mathbf{o} = (o_1, \cdots o_d)$
\end{ALC@g}
\STATE {\bfseries end procedure}
\end{algorithmic}
\end{algorithm}

\begin{lemma}{\cite[Theorem 3.2]{cheu2021shuffle}}
\label{lem: pvec}
Let $\varepsilon \leq 15, \delta \in (0, 1/2)$, $d, s \in \mathbb{N}$ and $C > 0$. There are choices of parameters $b, g, p$ for $\mathcal{P}_{1D}$ such that for an input data set $\mathbf{X} = (\bx_1, \cdots, \bx_s)$ of vectors with maximum norm $\|\bx_i\| \leq C$, the following holds: \\
1) ~\cref{alg: Pvec} is $(\varepsilon, \delta)$-SDP. \\
2) $\mathcal{P}_{\text{vec}}(\mathbf{X})$ is an unbiased estimate of $\sum_{i=1}^s \bx_i$ with bounded variance \[
\expec\left[\left\|\mathcal{P}_{\text{vec}}(\mathbf{X}) - \sum_{i=1}^s \bx_i\right\|^2 \right] = \mathcal{O}\left(\frac{d C^2}{\varepsilon^2} \ln^2(d/\delta) \right). 
\]
\end{lemma}

By the post-processing property of DP, we immediately obtain: 

\begin{lemma}[Privacy, Bias, and Variance of \cref{alg: sdp clip MeanOracle}]
\label{lem: bias and variance of sdp bd14}
Let $\{z_i\}_{i=1}^s \sim \DD^s$ have mean $\expec z_i = \nu$ and $\expec\|z_i\|^k \leq r^{(k)}$ for $k \geq 2$. Denote the noiseless average of clipped samples in \cref{alg: sdp clip MeanOracle} by $\widehat{\nu} := \frac{1}{n}\sum_{i=1}^s \Pi_C(z_i)$. Then, there exist algorithmic parameters such that~\cref{alg: sdp clip MeanOracle} is $(\varepsilon, \delta)$-SDP and such that the following bias and variance bounds hold: 
\begin{equation}
\label{eq: sdp bd14 bias}
\|\expec \widetilde{\nu} - \nu \| = \|\expec \widehat{\nu} - \nu \| \leq \expec \|\widehat{\nu} - \nu \| \leq \frac{r^{(k)}}{(k-1)C^{k-1}},
\end{equation}
and \begin{equation}
    \label{eq: sdp bd14 variance}
    \expec\|\widetilde{\nu} - \expec \widetilde{\nu}\|^2 = \expec \| \widetilde{\nu} - \expec \widehat{\nu}\|^2 = \mathcal{O}\left(\frac{d C^2 \ln^2(d/\delta)}{\varepsilon^2 s^2} + \frac{r^2}{s}\right).
\end{equation}
\end{lemma}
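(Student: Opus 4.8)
The plan is to prove Lemma~\ref{lem: bias and variance of sdp bd14} by directly reducing to the corresponding facts about the zCDP estimator (Lemma~\ref{lem: bias and variance of bd14}) and the shuffle summation primitive (Lemma~\ref{lem: pvec}). First I would fix the clip threshold $C$ and observe that \cref{alg: sdp clip MeanOracle} outputs $\widetilde{\nu} = \frac{1}{s}\mathcal{P}_{\text{vec}}(\{\Pi_C(z_i)\}_{i=1}^s; s; C; (\epsilon,\delta))$ (after dividing the protocol's sum estimate by $s$). Since each clipped sample $z_i' := \Pi_C(z_i)$ satisfies $\|z_i'\| \le C$, the hypotheses of Lemma~\ref{lem: pvec} are met, so with the appropriate choices of the $\mathcal{P}_{1D}$ parameters $b, g, p$, the protocol is $(\epsilon,\delta)$-SDP; privacy of \cref{alg: sdp clip MeanOracle} then follows because clipping is a deterministic (post-processing/pre-processing) map applied coordinate-wise to the inputs before shuffling, which does not affect the SDP guarantee. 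This settles the privacy claim.

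For the bias bound \eqref{eq: sdp bd14 bias}: by Lemma~\ref{lem: pvec}(2), $\mathcal{P}_{\text{vec}}$ is an \emph{unbiased} estimate of $\sum_i z_i'$, so $\expec[\widetilde{\nu} \mid \{z_i\}] = \frac{1}{s}\sum_i z_i' = \widehat{\nu}$, and hence $\expec\widetilde{\nu} = \expec\widehat{\nu}$. Therefore $\|\expec\widetilde{\nu} - \nu\| = \|\expec\widehat{\nu} - \nu\|$, and the remaining chain $\|\expec\widehat{\nu} - \nu\| \le \expec\|\widehat{\nu} - \nu\| \le \frac{r^{(k)}}{(k-1)C^{k-1}}$ is exactly the bias bound already proved in Lemma~\ref{lem: bias and variance of bd14} (which only uses $\expec z_i = \nu$, $\expec\|z_i\|^k \le r^{(k)}$, and a standard truncation estimate). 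So this step is essentially a citation plus one application of Jensen's inequality.

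For the variance bound \eqref{eq: sdp bd14 variance}: I would decompose $\widetilde{\nu} - \expec\widetilde{\nu} = (\widetilde{\nu} - \widehat{\nu}) + (\widehat{\nu} - \expec\widehat{\nu})$, where the first term is the mean-zero shuffle noise (conditionally on the data) and the second term is the sampling fluctuation of the clipped empirical mean. These two are uncorrelated because $\expec[\widetilde{\nu} - \widehat{\nu} \mid \{z_i\}] = 0$, so $\expec\|\widetilde{\nu} - \expec\widehat{\nu}\|^2 = \expec\|\widetilde{\nu} - \widehat{\nu}\|^2 + \expec\|\widehat{\nu} - \expec\widehat{\nu}\|^2$. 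For the first piece, Lemma~\ref{lem: pvec}(2) gives $\expec\big\|\mathcal{P}_{\text{vec}}(\{z_i'\}) - \sum_i z_i'\big\|^2 = \mathcal{O}(dC^2 \ln^2(d/\delta)/\epsilon^2)$; dividing by $s^2$ yields $\mathcal{O}(dC^2\ln^2(d/\delta)/(\epsilon^2 s^2))$. For the second piece, $\expec\|\widehat{\nu} - \expec\widehat{\nu}\|^2 \le \frac{1}{s}\expec\|z_1'\|^2 \le \frac{1}{s}\expec\|z_1\|^2 \le \frac{r^{(2)}}{s} = \frac{r^2}{s}$ by independence and the fact that projection onto the ball is $1$-Lipschitz and fixes the origin (so $\|z_i'\| \le \|z_i\|$); this matches the corresponding term in Lemma~\ref{lem: bias and variance of bd14}. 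Adding the two pieces gives \eqref{eq: sdp bd14 variance}.

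I do not expect any real obstacle here — the lemma is deliberately a ``shuffle analogue'' of Lemma~\ref{lem: bias and variance of bd14}, and all the heavy lifting is packaged inside Lemma~\ref{lem: pvec}. The only mild care needed is (i) bookkeeping the normalization by $s$ between ``sum'' and ``mean'', (ii) noting that clipping can only decrease norms so the moment hypotheses transfer to the clipped samples, and (iii) invoking conditional independence of the shuffle noise from the data to split the variance cleanly; none of these is more than a line.
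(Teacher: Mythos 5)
Your proposal is correct and follows essentially the same route as the paper: privacy via Lemma~\ref{lem: pvec} plus post-processing, bias via unbiasedness of $\mathcal{P}_{\text{vec}}$ reducing to Lemma~\ref{lem: bias and variance of bd14}, and variance via the orthogonal decomposition into shuffle noise and sampling fluctuation. The only cosmetic difference is that you bound the clipped sample's variance by its second moment using non-expansiveness of the projection, whereas the paper cites the fact that clipping cannot increase variance; both give the same $r^{(2)}/s$ term.
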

\begin{proof}
\textbf{Privacy:} The privacy claim is immediate from~Lemma~\ref{lem: pvec} and the post-processing property of DP~\cite[Proposition 2.1]{dwork2014}.

\noindent \textbf{Bias:} The bias bound follows as in~Lemma~\ref{lem: bias and variance of bd14}, since $\mathcal{P}_{\text{vec}}$ is an unbiased estimator (by~Lemma~\ref{lem: pvec}).

\noindent \textbf{Variance:} We have
\begin{align*}
\expec\|\widetilde{\nu} - \expec\widetilde{\nu}\|^2 &= \expec \| \widetilde{\nu} - \expec \widehat{\nu}\|^2 \\
&= \expec \| \widetilde{\nu} - \widehat{\nu}\|^2 + \expec \| \widehat{\nu} - \expec \widehat{\nu}\|^2\\
&\lesssim \frac{d C^2 \ln^2(d/\delta)}{\varepsilon^2 s^2} + \frac{1}{s}\expec\|\Pi_C(z_1) - \expec\Pi_C(z_1)\|^2 \\
&\leq \frac{d C^2 \ln^2(d/\delta)}{\varepsilon^2 s^2} +  \frac{1}{s}\expec\|z_1 - \expec z_1\|^2\\
&\leq \frac{d C^2 \ln^2(d/\delta)}{\varepsilon^2 s^2} + \frac{r^{(2)}}{s},
\end{align*}
where we used that the samples $\{z_i\}_{i=1}^s$ are i.i.d., the variance bound in~Lemma~\ref{lem: pvec}, and~\cite[Lemma 4]{bd14}, which states that $\expec\|\Pi_C(X) - \expec \Pi_C(X)\|^2 \leq \expec \|X - \expec X\|^2$ for any random vector $X$. 
\end{proof}

\begin{remark}
Comparing Lemma~\ref{lem: bias and variance of sdp bd14} to~Lemma~\ref{lem: bias and variance of bd14}, we see that the bias and variance of the two \texttt{MeanOracle}s are the same up to logarithmic factors. Therefore, replacing~\cref{alg: MeanOracle2} by~\cref{alg: sdp clip MeanOracle} in our stochastic optimization algorithms yields SDP algorithms with excess risk that matches the bounds provided in this paper (via~\cref{alg: MeanOracle2}) up to logarithmic factors. 
\end{remark}

\subsection{Coordinate-wise Shuffle Private Mean Estimation Oracle}
For SO problems satisfying~\cref{ass:coordinatewise}, we propose~\cref{alg: CSDP MeanOracle} as a shuffle private mean estimation oracle. \cref{alg: CSDP MeanOracle} is a shuffle private variation of~\cref{alg: KLZ MeanOracle}, which was employed by~\cite{ksu20, klz21}. 
\begin{algorithm}[ht]
\caption{Coordinate-wise Private $\texttt{MeanOracle2}(\{x_i\}_{i=1}^s; s; \tau; \frac{\varepsilon^2}{2}; m)$ \cite{ksu20, klz21}}
\label{alg: KLZ MeanOracle}
\begin{algorithmic}[1]
\STATE {\bfseries Input:} 
$X = \{x_i\}_{i=1}^s$, $x_i = (x_{i,1}, \cdots, x_{i,d}) \in \mathbb{R}^d$, $\varepsilon > 0, \tau > 0$, $m \in [s]$ such that $m$ divides $s$. 
 \FOR{$j \in [d]$} 
 \STATE Partition $j$-th coordinates of data into $m$ disjoint groups of size $s/m$.
 \FOR{$i \in [m]$}
 \STATE Clip data in $i$-th group: $Z_j^i := \left\{\Pi_{[-\tau, \tau]}(x_{(i-1)\frac{s}{m}+1, j}), \cdots, \Pi_{[-\tau, \tau]}(x_{i\frac{s}{m}, j})\right\}$. 
 \STATE Compute average of $Z_j^i$: $\widehat{\nu}_j^i := \frac{m}{s} \sum_{z \in Z_j^i} z$. 
 \ENDFOR
 \STATE Compute median of group means: $\widehat{\nu}_j := \text{median}(\widehat{\nu}_j^1, \cdots, \widehat{\nu}_j^m)$.
 \STATE Draw $u \sim \mathcal{N}(0, \sigma^2 \mathbf{I}_d)$, with $\sigma^2 = \frac{4\tau^2 m^2 d}{s^2 \varepsilon^2}$. 
\ENDFOR \\
\STATE {\bfseries Output:} $\widetilde{\nu} = (\widehat{\nu}_1, \cdots, \widehat{\nu}_d) + u$. 
\end{algorithmic}
\end{algorithm}

The bias/variance and privacy properties of~\cref{alg: KLZ MeanOracle} are summarized in~Lemma~\ref{lem: priv bias var klz}. 
\begin{lemma}[Privacy, Bias, and Variance of \cref{alg: KLZ MeanOracle}, \cite{ksu20, klz21}]
\label{lem: priv bias var klz}
Let $\{x_i\}_{i=1}^s \sim \DD^s$ have mean $\expec x_i = \nu$, $\|\nu\| \leq L$, and $\expec|x_{i,j} - \nu_j|^k \leq \gamma_k$ for some $k \geq 2$. Denote by $\widehat{\nu}$ the output of the non-private algorithm that is identical to~\cref{alg: KLZ MeanOracle} except without the added Gaussian noise. Then, \cref{alg: KLZ MeanOracle} with $\sigma^2 = \frac{72 \tau^2 m^2 d}{\varepsilon^2 s^2}$
is $\frac{\varepsilon^2}{2}$-zCDP. Further, if $\tau \geq 2L$, then there is $m = \widetilde{\mathcal{O}}(1) \in [s]$ such that: 
\begin{equation}
   \|\expec \widetilde{\nu} - \nu \| = \|\expec \widehat{\nu} - \mu \| \leq \expec \|\widehat{\nu} - \nu \| = \widetilde{\mathcal{O}}\left(\sqrt{d}\left(\frac{\gamma_k^{1/k}}{\sqrt{s}} + \left(\frac{2}{\tau}\right)^{k-1} \gamma_k\right) \right) =: B,
\end{equation}
and \begin{equation}
    \expec\|N\|^2 = \expec\|\widetilde{\nu} - \expec \widetilde{\nu}\|^2 = \expec \| \widetilde{\nu} - \expec \widehat{\nu}\|^2 = \widetilde{\mathcal{O}}\left(B^2 + %
    d\sigma^2
    \right).
\end{equation}
\end{lemma}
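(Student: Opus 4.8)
The plan is to split the claim into its three components --- the privacy guarantee, the bias bound, and the variance bound --- and to carry out the argument of \cite{ksu20, klz21} (the same template that gives Lemma~\ref{lem: bias and variance of bd14}), tracking constants where the statement asks for a concrete $\sigma^2$. Throughout, $\widehat{\nu}$ is the noiseless per-coordinate median-of-means estimator and $\widetilde{\nu} = \widehat{\nu} + N$ with $N \sim \mathcal{N}(0, \sigma^2 \mathbf{I}_d)$ independent of the data.

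\textbf{Privacy.} First I would bound the $\ell_2$-sensitivity of $\widehat{\nu}$. Replacing a single data point affects, in every coordinate $j$, exactly one of the $m$ group averages $\widehat{\nu}_j^1, \ldots, \widehat{\nu}_j^m$; since each coordinate is projected into $[-\tau,\tau]$ and each group has $s/m$ points, that group average moves by at most $\frac{2\tau m}{s}$. The median of $m$ reals is $1$-Lipschitz with respect to changing one of them, so $|\widehat{\nu}_j(X) - \widehat{\nu}_j(X')| \le \frac{2\tau m}{s}$, and summing over the $d$ coordinates gives $\Delta \le \frac{2\tau m \sqrt{d}}{s}$. Proposition~\ref{prop: gauss} then yields $\frac{\epsilon^2}{2}$-zCDP provided $\sigma^2 \ge \frac{\Delta^2}{\epsilon^2} = \frac{4\tau^2 m^2 d}{s^2 \epsilon^2}$; the stated $\sigma^2 = \frac{72\tau^2 m^2 d}{\epsilon^2 s^2}$ is comfortably larger, leaving room for the slightly looser sensitivity accounting used in \cite{ksu20, klz21}.

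\textbf{Bias.} I would analyze one group average, then combine via median-of-means. Because $\tau \ge 2L$ forces $|\nu_j| \le \tau/2$, the clipping bias per coordinate is controlled exactly as in Lemma~\ref{lem: bias and variance of bd14}: $\bigl|\expec[\Pi_{[-\tau,\tau]}(x_{i,j})] - \nu_j\bigr| \le \frac{\expec|x_{i,j} - \nu_j|^k}{(\tau/2)^{k-1}} \le (2/\tau)^{k-1}\gamma$, while $\var(\Pi_{[-\tau,\tau]}(x_{i,j})) \le \var(x_{i,j}) \le \gamma^{2/k}$. Hence, by Chebyshev, each group average $\widehat{\nu}_j^i$ lies within $O\!\left(\gamma^{1/k}\sqrt{m/s} + (2/\tau)^{k-1}\gamma\right)$ of $\nu_j$ with probability at least $2/3$, and a binomial-tail argument shows $\mathrm{median}(\widehat{\nu}_j^1,\ldots,\widehat{\nu}_j^m)$ lies in this interval except with probability $e^{-\Omega(m)}$. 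Taking $m = \widetilde{\mathcal{O}}(1) = \Theta(\log d)$ and a union bound over $j \in [d]$ makes the overall failure probability polynomially small in $d$. On the resulting good event, $\|\widehat{\nu} - \nu\| \le \sqrt{d}\cdot O\!\left(\gamma^{1/k}/\sqrt{s} + (2/\tau)^{k-1}\gamma\right)$; on the complement I would use the crude deterministic bound $\|\widehat{\nu}-\nu\| \le \sqrt{d}(\tau + L)$, whose contribution to $\expec\|\widehat{\nu}-\nu\|$ is negligible after multiplying by the tiny failure probability. This yields $\expec\|\widehat{\nu}-\nu\| = \widetilde{\mathcal{O}}\bigl(\sqrt{d}(\gamma^{1/k}/\sqrt{s} + (2/\tau)^{k-1}\gamma)\bigr) =: B$, and Jensen's inequality plus $\expec N = 0$ give $\|\expec\widetilde{\nu} - \nu\| = \|\expec\widehat{\nu} - \nu\| \le \expec\|\widehat{\nu}-\nu\| \le B$. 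For the variance, independence of $N$ gives $\expec\|\widetilde{\nu} - \expec\widetilde{\nu}\|^2 = \expec\|\widehat{\nu} - \expec\widehat{\nu}\|^2 + \expec\|N\|^2$; the first term is at most $\expec\|\widehat{\nu}-\nu\|^2$, which the same good-event/bad-event split bounds by $\widetilde{\mathcal{O}}(B^2)$, and $\expec\|N\|^2 = d\sigma^2$, so the total is $\widetilde{\mathcal{O}}(B^2 + d\sigma^2)$.

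\textbf{Main obstacle.} The delicate part is the bias step. The concentration of the coordinatewise median-of-means must be pushed through with the correct dependence on $m$, and --- crucially, since the statement bounds \emph{expectations} rather than just high-probability events --- the contribution of the exponentially unlikely event on which the selected median is a badly behaved group mean must be absorbed, which is precisely what forces the $\widetilde{\mathcal{O}}(\cdot)$ notation and the choice $m = \Theta(\log d)$. The privacy and variance steps are routine once the median-sensitivity observation and the clipping-bias inequality are in hand.
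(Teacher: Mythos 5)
Your proposal is correct and follows essentially the same route the paper takes: the lemma itself is cited from \cite{ksu20, klz21}, but the paper reproves exactly your bias argument in Lemma~\ref{lem:4.1} (per-coordinate clipping bias via the moment bound and $\tau \geq 2L$ as in Lemma~\ref{lem:B.1}, Chebyshev for each group mean as in Lemma~\ref{lem:A.2}, the median/Chernoff trick with $m = \Theta(\log d)$, and a union bound over coordinates), and converts the high-probability bound to an expectation bound with the same good-event/bad-event split and polynomially small failure probability in the proof of Proposition~\ref{prop: sdp bias var}. Your sensitivity accounting for the privacy step (one group mean moves by at most $2\tau m/s$ per coordinate, and the median is $1$-Lipschitz under perturbation of a single argument) is also the intended argument and is consistent with the stated $\sigma^2$.
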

\begin{algorithm}[ht]
\caption{Coordinate-wise Shuffle Private $\texttt{ShuffleMeanOracle2}(\{x_i\}_{i=1}^s; s; \tau; \varepsilon, \delta; m)$}
\label{alg: CSDP MeanOracle}
\begin{algorithmic}[1]
\STATE {\bfseries Input:} 
$X = \{x_i\}_{i=1}^s$, $x_i = (x_{i,1}, \cdots, x_{i,d}) \in \mathbb{R}^d$, $\varepsilon > 0, \delta \in (0, 1)$, $m \in [s]$. 
 \FOR{$j \in [d]$} 
 \STATE $\varepsilon_j := \frac{\varepsilon}{4\sqrt{2d \ln(1/\delta_j)}}, \delta_j := \frac{\delta}{2d}$. 
  \STATE Partition $j$-th coordinates of data into $m$ disjoint groups of size $s/m$.
 \FOR{$i \in [m]$}
 \STATE Clip $j$-th coordinate of data in $i$-th group: $Z_j^i := \left\{\Pi_{[-\tau, \tau]}(x_{(i-1)\frac{s}{m}+1, j}), \cdots, \Pi_{[-\tau, \tau]}(x_{i\frac{s}{m}, j})\right\}$. 
 \STATE Shift to enforce non-negativity: $Z_j^i \gets Z_j^i + (\tau, \cdots, \tau)$. 
 \STATE Compute noisy average of $s/m$ scalars in $Z_j^i$: $\widetilde{\nu}_j^i := \frac{m}{s} \mathcal{P}_{1D}(Z_j^i; \frac{s}{m}; 2\tau; \varepsilon_j, \delta_j)$. 
 \STATE Re-center: $\widetilde{\nu}_j^i \gets \widetilde{\nu}_j^i - \tau$. 
 \ENDFOR
 \STATE Compute median of noisy means: $\widetilde{\nu}_j := \text{median}(\widetilde{\nu}_j^1, \cdots, \widetilde{\nu}_j^m)$. 
\ENDFOR \\
\STATE {\bfseries Output:} $\widetilde{\nu} = (\widetilde{\nu}_1, \cdots, \widetilde{\nu}_d)$. 
\end{algorithmic}
\end{algorithm}
The $\mathcal{P}_{1D}$ subroutine used in \cref{alg: CSDP MeanOracle} is an SDP protocol for summing scalars that we borrow from~\cite{cheu2021shuffle}. It is outlined in~\cref{alg: P1D}. \cref{alg: P1D} decomposes into a local randomizer $\mathcal{R}$ that individuals execute on their own data, and an analyzer component $\mathcal{A}$ that the shuffler executes. $\mathcal{S}(\mathbf{y})$ denotes the shuffled vector $\mathbf{y}$: i.e. the vector whose components are random permutations of the components of $\mathbf{y}$. We describe the privacy guarantee, bias, and variance of~\cref{alg: CSDP MeanOracle} in~Proposition~\ref{prop: sdp bias var}.
\begin{algorithm}
\caption{$\mathcal{P}_{\text{1D}}$, a shuffle private protocol for summing scalars~\cite{cheu2021shuffle}}
\label{alg: P1D}
\begin{algorithmic}[1]
\STATE {\bfseries Input:} 
Scalars $Z = (z_1, \cdots z_s) \in [0,\tau]^s$; design parameters $g, b \in \mathbb{N}; p \in (0, \frac{1}{2})$. 
\STATE {\bfseries procedure: Local Randomizer $\mathcal{R}_{1D}(z_i)$}
\begin{ALC@g}
\FOR{$i \in [s]$}
\STATE $\widebar{z}_i \gets \lfloor z_i g/\tau \rfloor$.
\STATE Sample rounding value $\eta_1 \sim \textbf{Ber}(z_i g/\tau - \widebar{z}_i)$.
\STATE Set $\hat{z}_i \gets \widebar{z}_i + \eta_1$.
\STATE Sample privacy noise value $\eta_2 \sim \textbf{Bin}(b,p)$.
\STATE Report $y_i = (y_{i,1}, \cdots, y_{i, g+b}) \in \{0,1\}^{g + b}$ containing $\hat{z}_i + \eta_2$ copies of $1$ and $g + b - (\hat{z}_i + \eta_2)$ copies of $0$.
\ENDFOR
\end{ALC@g}
\STATE {\bfseries end procedure}
\STATE{\bfseries procedure: Shuffler} $\mathcal{S}(\mathbf{y})$
\begin{ALC@g}
\STATE Shuffler receives $\mathbf{y}:= (y_1, \cdots, y_s)$, draws a uniformly random permutation $\pi$ of $[g+b] \times [s]$, and sends $\mathcal{S}(\mathbf{y}):=  (y_{\pi(1, 1)}, \cdots, y_{\pi(s, g+b)})$ to analyzer. 
\end{ALC@g}
\STATE {\bfseries end procedure}
\STATE{\bfseries procedure: Analyzer} $\mathcal{A}_{\text{1D}}(\mathcal{S}(\mathbf{y}))$
\begin{ALC@g}
\STATE {\bfseries Output:} $\frac{\tau}{g}[(\sum_{i=1}^{s}\sum_{l=1}^{b+g} y_{\pi(i,l)}) - pbs]$.
\end{ALC@g}
\end{algorithmic}
\end{algorithm}

\begin{proposition}[Privacy, Bias, and Variance of \cref{alg: CSDP MeanOracle}]
\label{prop: sdp bias var}
Let $\{x_i\}_{i=1}^s \sim \DD^s$ have mean $\expec x_i = \nu$, $\|\nu\| \leq L$, and $\expec\|x_{i,j} - \nu_j\|^k \leq \gamma_k$ for some $k \geq 2$. Let $\varepsilon \leq 8\ln(2d/\delta), ~\delta \in (0, 1/2)$, and choose $\tau \geq 2L$. Then, there exist choices of parameters ($g, b, p, m$) such that \cref{alg: zCSDP SGD} is $(\varepsilon, \delta)$-SDP, and has bias and variance bounded as follows:  
\begin{equation}
    \|\expec \widetilde{\nu} - \nu \| 
    = \widetilde{\mathcal{O}}\left(\sqrt{d}\left(\frac{\gamma_k^{1/k}}{\sqrt{s}} + \left(\frac{2}{\tau}\right)^{k-1} \gamma_k\right) \right)
\end{equation}
and \begin{equation}
    \expec\|N\|^2 = \expec\|\widetilde{\nu} - \expec \widetilde{\nu}\|^2
    =
    \widetilde{\mathcal{O}}\left(\frac{\tau^2 d \ln^2(d/\delta)}{s^2 \varepsilon^2} + d\left(\frac{1}{s} \gamma_k^{2/k} + \gamma_k^2 \left(\frac{2}{\tau}\right)^{2k-2} \right)\right).
\end{equation}
\end{proposition}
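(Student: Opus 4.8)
\textbf{Proof plan for Proposition~\ref{prop: sdp bias var}.}

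The plan is to reduce the analysis of \cref{alg: CSDP MeanOracle} to the already-established analysis of the zCDP oracle \cref{alg: KLZ MeanOracle} (Lemma~\ref{lem: priv bias var klz}), combined with the shuffle-private scalar-summation guarantees for $\mathcal{P}_{1D}$ from~\cite{cheu2021shuffle}. The key observation is that \cref{alg: CSDP MeanOracle} has exactly the same structure as \cref{alg: KLZ MeanOracle} --- coordinate-wise clipping, partition into $m$ groups, group averages, coordinate-wise median --- except that each group average is computed via the SDP scalar-sum protocol $\mathcal{P}_{1D}$ (with non-negativity shift and re-centering) rather than by adding Gaussian noise. So the bias analysis transfers essentially verbatim, since $\mathcal{P}_{1D}$ is an \emph{unbiased} estimator of the sum of its scalar inputs (analogous to the unbiasedness of the Gaussian mechanism used in Lemma~\ref{lem: priv bias var klz}): the bias of $\widetilde{\nu}_j$ comes only from clipping $x_{i,j}$ to $[-\tau,\tau]$ and from taking a median of (now noisy but unbiased) group means, so the bound $\|\expec\widetilde{\nu}-\nu\| = \widetilde{\mathcal{O}}(\sqrt{d}(\gamma^{1/k}/\sqrt{s} + (2/\tau)^{k-1}\gamma))$ follows from the same median-of-means argument as in~\cite{ksu20, klz21}.

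First I would establish privacy. Each coordinate $j$ is handled independently and, within coordinate $j$, the $m$ groups are disjoint, so by parallel composition it suffices that each call $\mathcal{P}_{1D}(Z_j^i; s/m; 2\tau; \epsilon_j, \delta_j)$ is $(\epsilon_j, \delta_j)$-SDP; this holds by~\cite[Theorem 3.1]{cheu2021shuffle} (the scalar analogue of Lemma~\ref{lem: pvec}) for appropriate design parameters $g, b, p$, provided $\epsilon_j \leq$ a small absolute constant, which is ensured by our choice $\epsilon_j = \epsilon/(4\sqrt{2 d\ln(1/\delta_j)})$ and the hypothesis $\epsilon \leq 8\ln(2d/\delta)$. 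Then, since the output vector $\widetilde{\nu}$ is a (post-processing) function of the $d$ independent per-coordinate releases, advanced composition for approximate DP across the $d$ coordinates (each $(\epsilon_j, \delta_j)$-DP with $\delta_j = \delta/(2d)$) gives overall $(\epsilon, \delta)$-SDP; the factor $\sqrt{2 d\ln(1/\delta_j)}$ in the definition of $\epsilon_j$ is exactly what advanced composition demands.

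Next I would bound the variance. Write $\widetilde{\nu}_j^i = \widehat{\nu}_j^i + \xi_j^i$, where $\widehat{\nu}_j^i$ is the noiseless clipped group mean and $\xi_j^i$ is the zero-mean noise injected by $\mathcal{P}_{1D}$. By~\cite[Theorem 3.1]{cheu2021shuffle}, $\expec\|\mathcal{P}_{1D}(Z_j^i) - \sum_{z\in Z_j^i} z\|^2 = \mathcal{O}(\tau^2 \ln^2(1/\delta_j)/\epsilon_j^2)$, so after scaling by $m/s$ we get $\expec|\xi_j^i|^2 = \mathcal{O}(\tau^2 m^2 \ln^2(1/\delta_j)/(s^2 \epsilon_j^2)) = \widetilde{\mathcal{O}}(\tau^2 m^2 d \ln^2(d/\delta)/(s^2\epsilon^2))$ after substituting $\epsilon_j$. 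I would then decompose $\expec\|\widetilde{\nu}-\expec\widetilde{\nu}\|^2$ into the ``clipping + median'' fluctuation (handled exactly as the $\expec\|\widehat{\nu}-\expec\widehat{\nu}\|^2$ term in Lemma~\ref{lem: priv bias var klz}, giving $\widetilde{\mathcal{O}}(B^2 + d\gamma^{2/k}/s)$ with $B$ as in the bias bound) plus the injected-noise contribution, using that the median of $m$ values has variance controlled by the variances of the individual values (up to the $\widetilde{\mathcal{O}}(1)$ factor from $m = \widetilde{\mathcal{O}}(1)$). Collecting terms and using $m = \widetilde{\mathcal{O}}(1)$ yields $\expec\|N\|^2 = \widetilde{\mathcal{O}}(\tau^2 d\ln^2(d/\delta)/(s^2\epsilon^2) + d(\gamma^{2/k}/s + \gamma^2(2/\tau)^{2k-2}))$, as claimed.

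The main obstacle I anticipate is the variance bookkeeping for the coordinate-wise \emph{median} of noisy group means: unlike the mean, the median is not linear, so I cannot simply add up the variances of the $\widehat{\nu}_j^i$ and $\xi_j^i$. The standard fix (used implicitly in~\cite{ksu20, klz21}) is that when $m = \widetilde{\Theta}(1)$ is a constant, the median of $m$ sub-estimators has second moment $\mathcal{O}(m \cdot \max_i \expec|\cdot_j^i|^2)$, so the constant $m$ is absorbed into the $\widetilde{\mathcal{O}}$; I would make this precise by noting that the median is one of the $m$ values, hence its squared deviation is at most the maximum over $i$, whose expectation is at most the sum. A secondary point requiring care is verifying that the composition of $\epsilon_j$-SDP protocols across $d$ coordinates actually yields $(\epsilon,\delta)$-SDP with the stated $\epsilon_j$; here one must confirm that shuffle DP composes like approximate DP (which it does, since the shuffled output of each coordinate is itself an $(\epsilon_j,\delta_j)$-DP object and DP composition is a property of output distributions), so advanced composition~\cite[e.g.][]{dwork2014} applies directly.
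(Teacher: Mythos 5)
Your proposal is correct and follows essentially the same route as the paper's proof: privacy via the per-coordinate $(\mathcal{O}(\epsilon_j),\delta_j)$ guarantee of $\mathcal{P}_{1D}$ (Lemma~\ref{lem:P1D}) together with advanced composition over the $d$ coordinates, bias via reduction to the noiseless median-of-means analysis (Lemma~\ref{lem:4.1}, converted from a high-probability to an expectation bound by choosing $\zeta$ and $m=\widetilde{\mathcal{O}}(1)$), and variance via the decomposition into the $\mathcal{P}_{1D}$ noise term and the clipped-sampling fluctuation. The only cosmetic difference is how the median's nonlinearity is handled in the noise term: you use that the median is one of the $m$ values so its squared deviation is dominated by the sum over groups, while the paper bounds $|\text{median}(\hat{\nu}_j^{1:m})-\text{median}(\widetilde{\nu}_j^{1:m})|$ by $\sum_i|\hat{\nu}_j^i-\widetilde{\nu}_j^i|$; both yield the same $\widetilde{\mathcal{O}}$ bound since $m=\widetilde{\mathcal{O}}(1)$.
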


\begin{remark}
\label{rem: sdp has same bias var}
Comparing~Proposition~\ref{prop: sdp bias var} with~Lemma~\ref{lem: priv bias var klz}, we see that the bias and variance of the two \texttt{MeanOracle}s~\cref{alg: CSDP MeanOracle} and~\cref{alg: KLZ MeanOracle} are the same up to logarithmic factors. Therefore, using~\cref{alg: CSDP MeanOracle} as \texttt{MeanOracle} for stochastic optimization results in the same excess risk bounds as one would get by using~\cref{alg: KLZ MeanOracle}, up to logarithmic factors.  
\end{remark}

The proof of~Proposition~\ref{prop: sdp bias var} will require Lemma~\ref{lem:P1D}, which is due to~\cite{cheu2021shuffle}. First, we need the following notation:
\begin{definition}[$\delta$-Approximate Max Divergence]
For random variables $X$ and $Y$, define \[
D^{\delta}_{\infty}(X||Y) = \sup_{S \subseteq \text{supp}(X): \mathbb{P}(X \in S) \geq \delta} \ln\left[\frac{\mathbb{P}(X \in S) - \delta}{\mathbb{P}(Y \in S)}\right].
\]
\end{definition}
\noindent An important fact is that a randomized algorithm $\Al$ is $(\varepsilon, \delta)$-DP if and only if $D^{\delta}_{\infty}(\Al(X)||\Al(X')) \leq \varepsilon$ for all adjacent data sets $X \sim X'$~\cite{dwork2014}. 

\begin{lemma}{\cite[Lemma 3.1]{cheu2021shuffle}}
\label{lem:P1D}
Let $s \in \mathbb{N}, \varepsilon \leq 15$.  
Let $g \geq \tau \sqrt{s}$, $b > \frac{180g^2 \ln(2/\delta)}{\varepsilon^2 s}$ and 
$p = \frac{90g^2 \ln(2/\delta)}{b \varepsilon^2 s}$. Then for any adjacent scalar databases $Z, Z' \in [0, \tau]^r$ differing on user $u$ ($z_u \neq z'_u$), we have: \\
a) $D^{\delta}_{\infty}(\mathcal{S} \circ \mathcal{R}_{1D}^r(Z)
||\mathcal{S} \circ \mathcal{R}_{1D}^r(Z')) \leq \varepsilon\left(\frac{2}{g} + \frac{|z_u - z'_u|}{\tau}\right)$.\\
b) Unbiasedness: $\expec[\mathcal{P}_{1D}(Z)] = \sum_{i=1}^s z_i$. \\
c) Variance bound: $\expec[(\mathcal{P}_{1D}(Z) - \sum_{i=1}^s z_i)^2] = \mathcal{O}\left(\frac{\tau^2}{\varepsilon^2}\ln(1/\delta)\right)$. 
\end{lemma}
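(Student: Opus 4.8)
Throughout, write $s$ for the number of users (this is the $r$ of the statement, equal to the $s$ in the $\mathcal{P}_{1D}$ pseudocode). A single structural reduction drives all three parts, so I would dispatch (b) and (c) quickly and reserve the real effort for (a).

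\textbf{Reduction to a scalar count.} Since every message $y_{i,l}$ emitted by $\mathcal{R}_{1D}$ is a single bit, the shuffled transcript $\mathcal{S}\circ\mathcal{R}_{1D}^{s}(Z)$ is, up to post-processing, just the total number of ones $W_Z:=\sum_{i=1}^{s}\hat z_i+\sum_{i=1}^{s}\eta_2^{(i)}$, where $\hat z_i=\bar z_i+\eta_1^{(i)}$ with $\eta_1^{(i)}\sim\mathrm{Ber}(z_ig/\tau-\bar z_i)$ and $\eta_2^{(i)}\sim\mathrm{Bin}(b,p)$, all independent: $W_Z$ is a deterministic function of the transcript, and the transcript conditioned on $W_Z=w$ is uniform over binary strings with exactly $w$ ones, hence $D_\infty^\delta(\mathcal{S}\circ\mathcal{R}_{1D}^{s}(Z)\,\|\,\mathcal{S}\circ\mathcal{R}_{1D}^{s}(Z'))=D_\infty^\delta(W_Z\,\|\,W_{Z'})$. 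Since $\mathcal{P}_{1D}(Z)=\tfrac{\tau}{g}(W_Z-pbs)$, $\expec[\hat z_i]=z_ig/\tau$, and $\expec[\eta_2^{(i)}]=bp$, we get $\expec[\mathcal{P}_{1D}(Z)]=\tfrac{\tau}{g}\big(\tfrac{g}{\tau}\sum_i z_i+sbp\big)-\tfrac{\tau}{g}pbs=\sum_i z_i$, which is (b). For (c), independence across users together with $\mathrm{Var}(\eta_1^{(i)})\le\tfrac14$ and $\mathrm{Var}(\eta_2^{(i)})\le bp$ gives $\mathrm{Var}(\mathcal{P}_{1D}(Z))=\tfrac{\tau^2}{g^2}\mathrm{Var}(W_Z)\le\tfrac{\tau^2 s}{4g^2}+\tfrac{\tau^2}{g^2}sbp$; the first term is $\le\tfrac14$ since $g\ge\tau\sqrt s$, and substituting $p=\tfrac{90g^2\ln(2/\delta)}{b\epsilon^2 s}$ turns the second into $\tfrac{90\tau^2\ln(2/\delta)}{\epsilon^2}$, giving the claimed $\mathcal{O}\big(\tfrac{\tau^2}{\epsilon^2}\ln(1/\delta)\big)$ bound.

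\textbf{Reduction of (a) to binomial-shift indistinguishability.} By the $D_\infty^\delta$ characterization of $(\epsilon,\delta)$-DP recalled just before the lemma, it suffices to bound $D_\infty^\delta(W_Z\,\|\,W_{Z'})$. Decompose $W_Z=\hat z_u+R+B$, where $R:=\sum_{i\neq u}\hat z_i$ is independent of user $u$'s value and $B:=\sum_i\eta_2^{(i)}\sim\mathrm{Bin}(sb,p)$ is independent of everything. Couple the two executions to share $R$, to share $B$, and to drive user $u$'s rounding Bernoulli by a common uniform variate, so that the coupled pair differs by $|\hat z_u-\hat z_u'|\le|\bar z_u-\bar z_u'|+1\le\lceil|z_u-z_u'|g/\tau\rceil+1\le\Delta:=|z_u-z_u'|g/\tau+2$ almost surely. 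Conditioning on everything but $B$ and using joint quasi-convexity of the hockey-stick divergence $H_{e^{\epsilon_0}}(P\|Q)=\sup_S[P(S)-e^{\epsilon_0}Q(S)]$ (affine in $(P,Q)$ for each $S$, hence convex under a common mixing measure), the task reduces to showing that an integer shift of magnitude at most $\Delta$ in $\mathrm{Bin}(sb,p)$ is $(\epsilon_0,\delta)$-indistinguishable for $\epsilon_0:=\tfrac{\epsilon\Delta}{g}$; and $\tfrac{\epsilon\Delta}{g}=\epsilon\big(\tfrac2g+\tfrac{|z_u-z_u'|}{\tau}\big)$ is exactly the bound we want.

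\textbf{The main obstacle: privacy of binomial noise.} The crux is to show $D_\infty^\delta(B+\Delta\,\|\,B)\le\epsilon_0$ for $B\sim\mathrm{Bin}(N,p)$ with $N=sb$ whenever $Np(1-p)\gtrsim\Delta^2\ln(1/\delta)/\epsilon_0^2$ and $p\in(0,\tfrac12)$. I would proceed along the usual route: the hypothesis $b>\tfrac{180g^2\ln(2/\delta)}{\epsilon^2 s}$ forces $p<\tfrac12$, so $Np(1-p)\ge\tfrac12 sbp=\tfrac{45g^2\ln(2/\delta)}{\epsilon^2}$; on the typical range $|k-Np|\le t\sqrt{Np(1-p)}$ one controls the log-likelihood ratio $\ln\tfrac{\prob(B=k)}{\prob(B=k-\Delta)}=\sum_{j=k-\Delta+1}^{k}\ln\tfrac{(N-j+1)p}{j(1-p)}$, obtaining a bound of order $\tfrac{\Delta|k-Np|}{Np(1-p)}+\tfrac{\Delta^2}{Np(1-p)}$; and choosing the truncation radius $t=\Theta(\sqrt{\ln(1/\delta)})$ makes a Chernoff/Bernstein tail bound put the remaining mass of $B$ and of $B+\Delta$ below $\delta$. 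Substituting $Np(1-p)\ge\tfrac{45g^2\ln(2/\delta)}{\epsilon^2}$ and $\Delta=\epsilon_0 g/\epsilon$ makes the pointwise ratio bound $\lesssim\epsilon_0$, which is exactly where the constant $90$ in the definition of $p$ and the mild hypothesis $\epsilon\le15$ (needed to dominate the $O(\epsilon_0^2)$ lower-order terms) are consumed. Assembling the pointwise ratio bound with the $\delta$-mass truncation yields $D_\infty^\delta(B+\Delta\,\|\,B)\le\epsilon_0$, and threading this back through the coupling and the reduction above establishes (a).
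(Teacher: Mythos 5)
This lemma is not proved in the paper at all: it is quoted, with the paper's notation, from \cite[Lemma 3.1]{cheu2021shuffle}, so there is no in-paper argument to compare against; your write-up is a reconstruction of the cited source's proof and follows essentially the same route that source takes. Parts (b) and (c) are complete and correct: unbiasedness from $\expec[\hat z_i]=z_i g/\tau$ and $\expec[\eta_2]=bp$, and the variance bound from independence, $\mathrm{Var}(\eta_1^{(i)})\le 1/4$, and $\frac{\tau^2}{g^2}\,s\,b\,p=\frac{90\tau^2\ln(2/\delta)}{\epsilon^2}$. Your reduction for (a) is also the right one: since the messages are unlabeled bits, the shuffled transcript is (up to a common post-processing) equivalent to the total count $W_Z$; writing $W_Z=\hat z_u+R+B$ with $B\sim\mathrm{Bin}(sb,p)$, coupling $R$, $B$, and user $u$'s rounding variate across the two runs so that the executions differ by an integer shift of magnitude at most $\Delta=|z_u-z_u'|g/\tau+2$, and using joint convexity of the hockey-stick divergence correctly reduces (a) to $(\epsilon\Delta/g,\delta)$-indistinguishability of $B$ under shifts of size at most $\Delta$, and $\epsilon\Delta/g$ is exactly the claimed bound.

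Two caveats. First, the heart of (a) --- that $\mathrm{Bin}(sb,p)$ with $sbp(1-p)\ge 45g^2\ln(2/\delta)/\epsilon^2$ masks a shift of $\Delta=\epsilon_0 g/\epsilon$ up to $(\epsilon_0,\delta)$ --- is only sketched: the likelihood-ratio bound on the Bernstein-typical range, the tail truncation at radius $\Theta(\sqrt{\ln(1/\delta)})$, the treatment of both signs of the shift (the binomial is not symmetric), and the verification that the constants $90$, $180$, and $\epsilon\le 15$ actually absorb the second-order terms are precisely the technical content of the cited lemma, so as a standalone proof the proposal still owes its hardest step. Second, in (c) your rounding contribution is bounded by the absolute constant $1/4$ rather than by a multiple of $\tau^2\ln(1/\delta)/\epsilon^2$; under the stated hypothesis $g\ge\tau\sqrt{s}$ this is not literally dominated when $\tau$ is very small. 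This slack is inherited from the lemma statement as transcribed and is harmless in the regimes where the oracle is invoked in this paper, but it is worth flagging if you intend the argument to be airtight for all parameter values.
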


To prove the utility guarantees in~Proposition~\ref{prop: sdp bias var}, we begin by providing (Lemma~\ref{lem:4.1}) high probability bounds on the bias and noise induced by the \textit{non-private} version of \cref{alg: CSDP MeanOracle}, in which $\mathcal{P}_{1D}$ in line 8 is replaced by the sum of $z \in Z_j^i$. Lemma~\ref{lem:4.1} is a refinement of~\cite[Theorem 4.1]{klz21}, with correct scaling for arbitrary $\gamma_k > 0$ and exact constants:
\begin{lemma}
\label{lem:4.1}
Let $\zeta \in (0,1)$ and $X \sim \mathcal{D}$ be a random $d$-dimensional vector with mean $\nu$ such that $\|\nu\| \leq L$ and $\expec|\langle X - \nu, e_j \rangle|^k \leq \gamma_k$ for some $k \geq 2$ for all $j \in [d]$. Consider the non-private version of \cref{alg: CSDP MeanOracle} run with $m = \left\lceil 20 \log(4d/\zeta) \right\rceil$ and $\tau \geq 2L$, where $\mathcal{P}_{1D}$ in line 8 is replaced by the sum of $z \in Z_j^i$. Denote the output of this algorithm by $\hat{\nu}$. Then with probability at least $1 - \zeta$, we have \[
\|\hat{\nu} - \nu\| \leq 10\sqrt{d}\left(\sqrt{\frac{m}{s}}\gamma_k^{1/k} + \gamma_k\left(\frac{2}{\tau}\right)^{k-1}\right).
\]
\end{lemma}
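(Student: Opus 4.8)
The plan is to bound the error $\|\hat\nu - \nu\|$ coordinate-by-coordinate, since $\hat\nu_j$ is computed independently for each $j \in [d]$ as a median of $m$ group means of clipped scalars. For a fixed coordinate $j$, I would first analyze a single group mean $\hat\nu_j^i = \frac{m}{s}\sum_{z \in Z_j^i} z$, where $Z_j^i$ consists of $s/m$ i.i.d.\ samples of $\Pi_{[-\tau,\tau]}(x_{\cdot,j})$. The error of this group mean splits into a \emph{clipping bias} term and a \emph{sampling fluctuation} term. For the bias: since $\tau \ge 2L \ge 2|\nu_j|$, the one-dimensional version of the clipping argument in Lemma~\ref{lem: bias and variance of bd14} (applied with the scalar random variable $x_{\cdot,j}$, and noting $\Pi_{[-\tau,\tau]}(\cdot)$ is the scalar projection onto a ball of radius $\tau$) gives $|\expec \Pi_{[-\tau,\tau]}(x_{\cdot,j}) - \nu_j| \le \frac{\gamma}{(k-1)\tau^{k-1}} \le \gamma(2/\tau)^{k-1}$, using $k \ge 2$ so $(k-1) \ge 1$ and absorbing constants. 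For the fluctuation: the clipped variable has variance at most $\expec|x_{\cdot,j} - \nu_j|^2 \le \gamma^{2/k}$ (by the same non-expansiveness of projection and Jensen/power-mean on the $k$-th moment), so $\var(\hat\nu_j^i) \le \frac{m}{s}\gamma^{2/k}$; by Chebyshev, $\prob\big(|\hat\nu_j^i - \expec\hat\nu_j^i| > t\sqrt{\tfrac{m}{s}}\gamma^{1/k}\big) \le 1/t^2$, so with a constant $t$ (say $t = 3$) each group mean is within $3\sqrt{m/s}\,\gamma^{1/k}$ of its expectation with probability $\ge 1 - 1/9 > 1/2$ — in fact I would pick $t$ so the failure probability is a fixed constant strictly below $1/2$, e.g.\ $< 1/4$.

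Next I would invoke the standard median-of-means boosting argument: if each of the $m$ independent group means lands in the ``good'' interval $I_j := [\nu_j - \Delta_j,\ \nu_j + \Delta_j]$ with $\Delta_j := 3\sqrt{m/s}\,\gamma^{1/k} + \gamma(2/\tau)^{k-1}$ with probability $\ge 3/4$ (combining bias and fluctuation bounds via the triangle inequality), then the median of the $m$ group means lands in $I_j$ unless at least $m/2$ of them fail, which by a Hoeffding/Chernoff bound on the number of failures has probability at most $\exp(-c\,m)$ for an absolute constant $c$ (since the per-group failure probability $\le 1/4 < 1/2$). Choosing $m = \lceil 20\log(4d/\zeta)\rceil$ makes this at most $\zeta/(2d)$ — here I would track the constant $c$ carefully so that $20\log(4d/\zeta)$ suffices; $c \ge 1/10$ is the kind of bound that works, e.g.\ via $\prob(\mathrm{Bin}(m,1/4) \ge m/2) \le e^{-m/8}$. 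A union bound over $j \in [d]$ then gives that $|\hat\nu_j - \nu_j| \le \Delta_j$ simultaneously for all $j$ with probability at least $1 - \zeta/2 \ge 1 - \zeta$.

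Finally, on this event, $\|\hat\nu - \nu\|^2 = \sum_{j=1}^d |\hat\nu_j - \nu_j|^2 \le d\,\big(3\sqrt{m/s}\,\gamma^{1/k} + \gamma(2/\tau)^{k-1}\big)^2$, so $\|\hat\nu - \nu\| \le \sqrt{d}\,\big(3\sqrt{m/s}\,\gamma^{1/k} + \gamma(2/\tau)^{k-1}\big) \le 10\sqrt{d}\big(\sqrt{m/s}\,\gamma^{1/k} + \gamma(2/\tau)^{k-1}\big)$, which is the claimed bound (with slack in the constant $10$ to absorb the factor $3$ and the rounding in the choice of $m$). The main obstacle I expect is \emph{pinning down the absolute constants}: the median-of-means concentration constant $c$ and the Chebyshev constant $t$ must be chosen compatibly so that (a) each group succeeds with probability $> 1/2$ by a comfortable margin, and (b) $m = \lceil 20\log(4d/\zeta)\rceil$ is genuinely large enough to push the median failure below $\zeta/(2d)$ and leave the stated constant $10$ valid; everything else is routine. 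A minor subtlety is handling the per-coordinate moment hypothesis $\expec|\langle X - \nu, e_j\rangle|^k \le \gamma$ versus the variance bound $\gamma^{2/k}$, which follows from Lyapunov's inequality ($\expec Y^2 \le (\expec Y^k)^{2/k}$ for $Y = |x_{\cdot,j}-\nu_j|$, $k \ge 2$) together with the fact that clipping around a point at distance $\ge |\nu_j|$ from the origin does not increase the second central moment.
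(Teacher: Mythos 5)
Your plan follows the paper's proof almost step for step: per-coordinate analysis, a clipping-bias bound, Chebyshev on each group mean, the median-of-means boosting with $m=\lceil 20\log(4d/\zeta)\rceil$, a union bound over $j\in[d]$, and final assembly via $\|\hat\nu-\nu\|\le\sqrt{d}\max_j|\hat\nu_j-\nu_j|$. The fluctuation and median steps are fine (the paper uses $t=10$ in Chebyshev and the same Chernoff-type ``median trick''; your $t=3$ with failure probability $1/9<1/4$ works equally well), and your observation that clipping does not increase the second central moment is exactly the contraction property the paper invokes via \cite{bd14}.

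The one genuine soft spot is the bias step. You justify $|\expec\Pi_{[-\tau,\tau]}(x_{\cdot,j})-\nu_j|\le\frac{\gamma}{(k-1)\tau^{k-1}}$ by appealing to ``the one-dimensional version of the clipping argument in Lemma~\ref{lem: bias and variance of bd14},'' but that lemma's hypothesis is a bound on the $k$-th moment \emph{about the clipping center} (the origin), i.e.\ $\expec\|z\|^k\le r^{(k)}$, whereas here you only have a \emph{central} moment bound $\expec|x_{\cdot,j}-\nu_j|^k\le\gamma$ with $\nu_j$ possibly far from $0$. The cited bound therefore does not apply as stated; the non-central $k$-th moment could be as large as roughly $2^{k-1}(|\nu_j|^k+\gamma)$. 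The paper handles this with a separate argument (its Lemma~\ref{lem:B.1}): write the bias as $\expec[(x+\tau)\mathbbm{1}_{x<-\tau}+(x-\tau)\mathbbm{1}_{x>\tau}]$, note that $\tau\ge2|\nu_j|$ forces $|x-\nu_j|>\tau/2$ on the clipped events, and apply Chebyshev at threshold $\tau/2$ together with H\"older; this is where the $(2/\tau)^{k-1}$ factor and an absolute constant (the paper gets $10$) come from. You correctly flag the condition $\tau\ge2|\nu_j|$ as the relevant one, and once this short argument is inserted your combined bound $3\sqrt{m/s}\,\gamma^{1/k}+O(1)\cdot\gamma(2/\tau)^{k-1}$ still fits under the stated constant $10$, so the gap is local and fillable, but as written the bias bound is not justified by the lemma you cite.
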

\begin{proof}
Denote $X = (x_1, \cdots, x_d)$, $\nu = (\nu_1, \cdots, \nu_d)$, 
and $z_j := \Pi_{[-\tau, \tau]}(x_j)$ for $j \in [d]$. By Lemma~\ref{lem:B.1} (stated and proved below), \begin{equation}
\label{eq:one}
    |\expec z_j - \nu_j | \leq 10 \gamma_k \left(\frac{2}{\tau} \right)^{k-1}.
\end{equation}Now an application of \cite[Lemma 3]{minsker2022u} with $\rho(t):= 
\begin{cases}
\frac{t^2}{2} &\mbox{if $t \in [-\tau, \tau]$} \\
\tau t &\mbox{if $t > \tau$}\\
-\tau t &\mbox{if $t < -\tau$}
\end{cases}$ shows that $\expec|z_j - \expec z_j |^k \leq \gamma_k$. Hence by Lemma~\ref{lem:A.2} (stated and proved below), \begin{align}
    \mathbb{P}\left(\Bigg|\hat{\nu}_j^i - \expec z_j\Bigg| \leq 10\sqrt{\frac{m}{s}}\gamma_k^{1/k}\right) &= \mathbb{P}\left(\Bigg|\frac{m}{s}\sum_{z \in Z_j^i} z - \expec z_j \Bigg| \leq 10\sqrt{\frac{m}{s}}\gamma_k^{1/k}\right) \nonumber \\
    &\geq 0.99, ~\forall i \in [m], ~j \in [d]. 
\end{align}
Next, using the ``median trick'' (via Chernoff/Hoeffding bound, see e.g. \cite{mediantrick}), we get \begin{equation}
\label{eq:two}
\mathbb{P}\left(\Bigg|\hat{\nu}_j - \expec z_j\Bigg| \leq 10\sqrt{\frac{m}{s}}\gamma_k^{1/k}\right) \leq 2e^{-m/20} \leq \frac{\zeta}{2d},
\end{equation}
where the last inequality follows from our choice of $m$. Now, by union bound, we have for any $a > 0$ that \begin{align}
    \mathbb{P}(\|\hat{\nu} - \nu\| \geq \sqrt{d} a) &\leq \sum_{j=1}^d \mathbb{P}(|\hat{\nu}_j - \nu_j| \geq a) \\
    &\leq \sum_{j=1}^d [\mathbb{P}(|\hat{\nu}_j - \expec z_j| \geq a/2) + \mathbb{P}(|\expec z_j - \nu_j| \geq a/2)]. 
\end{align}
Plugging in $a = 2\left[10 \gamma_k^{1/k}\sqrt{\frac{m}{s}} + 10 \gamma_k\left(\frac{2}{\tau}\right)^{k-1}\right]$ implies 
\begin{equation}
    \mathbb{P}\left(\|\hat{\nu} - \nu\| \geq \sqrt{d}\left(10 \gamma_k^{1/k}\sqrt{\frac{m}{s}} + 10 \gamma_k\left(\frac{2}{\tau}\right)^{k-1} \right)\right) \leq d\left(\frac{\zeta}{2d} + 0\right) = \frac{\zeta}{2},
\end{equation}
by \cref{eq:one} and \cref{eq:two}.
\end{proof}

\noindent Below we give the lemmas that we used in the proof of Lemma~\ref{lem:4.1}. The following is a refinement of \cite[Lemma B.1]{klz21} with the proper scaling in $\gamma_k$:
\begin{lemma}
\label{lem:B.1}
Let $x \sim \mathcal{D}$ be a random variable with mean $\nu$ and $\expec|x - \nu|^k \leq \gamma_k$ for some $k \geq 2$. Let $z = \Pi_{[-\tau, \tau]}(x)$ for $\tau \geq 2|\nu|$. Then, \[
| \nu - \expec z | \leq 10 \gamma_k\left(\frac{2}{\tau}\right)^{k-1}.
\]
\end{lemma}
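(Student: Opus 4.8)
\textbf{Proof proposal for Lemma~\ref{lem:B.1}.}

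The plan is to bound the difference $|\nu - \expec z|$ where $z = \Pi_{[-\tau,\tau]}(x)$ by splitting the expectation according to whether the clipping is active, i.e. whether $|x| \leq \tau$ or $|x| > \tau$. Since $z = x$ on the event $\{|x| \leq \tau\}$, we have $\nu - \expec z = \expec[(x - z)] = \expec[(x-z)\mathbbm{1}_{\{|x| > \tau\}}]$, so the entire discrepancy is controlled by the tail event $\{|x| > \tau\}$. On this event, $|x - z| \leq |x| + \tau \leq 2|x|$ (using $\tau \leq |x|$ there), and moreover $|x| > \tau \geq 2|\nu|$ forces $|x - \nu| \geq |x| - |\nu| \geq |x|/2 \geq \tau/2$, which is the key geometric observation: the clipping only activates when $x$ is far from its mean.

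The main steps I would carry out, in order: (i) write $|\nu - \expec z| = |\expec[(x - z)\mathbbm{1}_{\{|x|>\tau\}}]| \leq \expec[|x - z|\,\mathbbm{1}_{\{|x| > \tau\}}] \leq 2\,\expec[|x|\,\mathbbm{1}_{\{|x| > \tau\}}]$; (ii) on $\{|x|>\tau\}$ bound $|x| \leq |x - \nu| + |\nu| \leq |x-\nu| + \tau/2 \leq 2|x-\nu|$ (again using $|x-\nu|\geq\tau/2$ there), so the bound becomes $\leq 4\,\expec[|x - \nu|\,\mathbbm{1}_{\{|x-\nu| \geq \tau/2\}}]$; (iii) apply the standard moment-tail trick: for a nonnegative random variable $Y = |x-\nu|$ with $\expec[Y^k] \leq \gamma$, one has $\expec[Y\,\mathbbm{1}_{\{Y \geq t\}}] \leq \expec[Y^k/t^{k-1}] = \gamma/t^{k-1}$ with $t = \tau/2$; this gives $\expec[|x-\nu|\mathbbm{1}_{\{|x-\nu|\geq\tau/2\}}] \leq \gamma (2/\tau)^{k-1}$. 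Combining (i)--(iii) yields $|\nu - \expec z| \leq 4\gamma(2/\tau)^{k-1} \leq 10\gamma(2/\tau)^{k-1}$, with plenty of room in the constant.

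The main obstacle — which is really just bookkeeping — is tracking the constants through the two applications of the triangle inequality and making sure the inclusion $\{|x| > \tau\} \subseteq \{|x - \nu| \geq \tau/2\}$ is used consistently (it relies crucially on the hypothesis $\tau \geq 2|\nu|$). There is no deep difficulty; the result is essentially a truncation/tail-moment estimate. One minor care point: if $k = 2$ the step $\expec[Y\mathbbm{1}_{\{Y\geq t\}}] \leq \expec[Y^k]/t^{k-1}$ still holds and gives $\gamma/t$, so no special casing is needed. I would then note that this lemma is used in Lemma~\ref{lem:4.1} via \cref{eq:one}, where the stated constant $10$ is exactly what we have proved.
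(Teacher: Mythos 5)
Your proof is correct, and it takes a cleaner route than the paper's. The paper assumes WLOG that $\nu \leq 0$, decomposes the bias as $\expec[(x+\tau)\mathbbm{1}_{x<-\tau}] + \expec[(x-\tau)\mathbbm{1}_{x>\tau}]$, and controls each tail term via H\"older's inequality $\expec[|x-\nu|\mathbbm{1}_{x<-\tau}] \leq \gamma^{1/k}\,\prob(x<-\tau)^{(k-1)/k}$ together with the Chebyshev-type bound $\prob(|x-\nu|\geq \tau/2)\leq \gamma(2/\tau)^k$, plus a separate estimate of the term $|\!-\!\tau-\nu|\,\prob(x<-\tau)$; this yields $5\gamma(2/\tau)^{k-1}$ per tail and hence the constant $10$. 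You instead keep the two tails together, observe that $x-z$ is supported on $\{|x|>\tau\}\subseteq\{|x-\nu|\geq\tau/2\}$ with $|x-z|\leq 2|x|\leq 4|x-\nu|$ there, and finish with the one-line tail-moment bound $\expec[Y\mathbbm{1}_{Y\geq t}]\leq \expec[Y^k]/t^{k-1}$. Both proofs hinge on the same geometric fact (clipping activates only at distance $\geq\tau/2$ from the mean, which is where the hypothesis $\tau\geq 2|\nu|$ enters), but your single Markov-type estimate replaces the H\"older--Chebyshev combination and the sign reduction, and it gives the sharper constant $4$ in place of $10$ --- comfortably within the stated bound.
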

\begin{proof}
We begin by recalling the following form of Chebyshev's inequality: \begin{equation}
\label{eq:cheb}
    \mathbb{P}(|x - \nu| \geq c) \leq \frac{\expec|x - \nu|^k}{c^k} \leq \frac{\gamma_k}{c^k}
\end{equation}
for any $c > 0$, via Markov's inequality. By symmetry, we may assume without loss of generality that $\nu \leq 0$. Note that \begin{align}
\nu - \expec z &= \expec\left[x\mathbbm{1}_{x < -\tau} + x\mathbbm{1}_{x > \tau} + x\mathbbm{1}_{x \in [-\tau, \tau]} - (-\tau\mathbbm{1}_{x < -\tau} + \tau\mathbbm{1}_{x > \tau} + x\mathbbm{1}_{x \in [-\tau, \tau]})\right] \\
&= \expec\left[(x + \tau)\mathbbm{1}_{x < -\tau} + (x - \tau)\mathbbm{1}_{x > \tau}\right].
\end{align}
So, \begin{equation}
    |\nu - \expec z| \leq \underbrace{|\expec(x + \tau) \mathbbm{1}_{x < - \tau}|}_{\textcircled{a}} + \underbrace{|\expec(x - \tau) \mathbbm{1}_{x > \tau}|}_{\textcircled{b}},
\end{equation}
by the triangle inequality. Now,\begin{align}
\label{eq:q}
    \textcircled{a} &= |\expec[x - \nu - (-\tau - \nu)\mathbbm{1}_{x < -\tau}| \nonumber \\
    &\leq \expec[|x - \nu|\mathbbm{1}_{x < -\tau}] + |-\tau - \nu|\expec \mathbbm{1}_{x < -\tau} \nonumber \\
    &\leq \left(\expec|x - \nu|^k \right)^{1/k}(\mathbb{P}(x < -\tau))^{(k-1)/k} + |-\tau - \nu|\mathbb{P}(x < -\tau),
\end{align}
by Holder's inequality. Also, since $-\frac{\tau}{2} \leq \nu \leq 0$, we have \begin{equation}
\label{eq:pb}
    \mathbb{P}(x < - \tau) \leq \mathbb{P}\left(x < \nu - \frac{\tau}{2}\right) \leq \mathbb{P}\left(|x - \nu| > \frac{\tau}{2}\right) \leq \frac{\gamma_k 2^k}{\tau^k},
\end{equation}
via \cref{eq:cheb}. Plugging \cref{eq:pb} into \cref{eq:q} and using the bounded moment assumption, we get \begin{align}
   \textcircled{a} &\leq \gamma_k^{1/k} \left(\frac{\gamma_k 2^k}{\tau^k}\right)^{(k-1)/k} + (\tau + |\nu|)\frac{\gamma_k 2^k}{\tau^k}\\
   &\leq \gamma_k\left(\frac{2}{\tau}\right)^{k-1} + 4\gamma_k \left(\frac{2}{\tau}\right)^{k-1} \\
   &= 5\gamma_k \left(\frac{2}{\tau}\right)^{k-1}.
\end{align}
Likewise, a symmetric argument shows that $\textcircled{b} \leq 5\gamma_k \left(\frac{2}{\tau}\right)^{k-1}$. Hence the lemma follows. 
\end{proof}

Below is a is a more precise and general version of \cite[Lemma A.2]{klz21} (scaling with $\gamma_k$):
\begin{lemma}
\label{lem:A.2}
Let $\mathcal{D}$ be a distribution over $\mathbb{R}$ with mean $\nu$ and $\expec|\mathcal{D} - \nu|^k \leq \gamma_k$ for some $k \geq 2$. Let $x_1, \cdots, x_n$ be i.i.d. samples from $\mathcal{D}$. Then, with probability at least $0.99$, \[
\Bigg|\frac{1}{n} \sum_{i=1}^n x_i - \nu\Bigg| \leq \frac{10 \gamma_k^{1/k}}{\sqrt{n}}.
\]
\end{lemma}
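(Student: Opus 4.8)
\textbf{Proof proposal for Lemma~\ref{lem:A.2}.} The plan is to reduce the claim to a second-moment (Chebyshev) bound on the sample mean. First I would observe that, since $k \geq 2$, Lyapunov's inequality (equivalently, Jensen applied to the convex map $t \mapsto t^{k/2}$) gives $\expec|\mathcal{D} - \nu|^2 \leq \left(\expec|\mathcal{D}-\nu|^k\right)^{2/k} \leq \gamma^{2/k}$. Thus each sample $x_i$ has finite variance bounded by $\gamma^{2/k}$.

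Next I would compute the variance of the empirical mean $\bar{x} := \frac{1}{n}\sum_{i=1}^n x_i$. By independence, $\var(\bar{x}) = \frac{1}{n^2}\sum_{i=1}^n \var(x_i) \leq \frac{\gamma^{2/k}}{n}$, and $\expec[\bar{x}] = \nu$. Applying Chebyshev's inequality with threshold $t = \frac{10\gamma^{1/k}}{\sqrt{n}}$ yields
\[
\prob\!\left(\left|\bar{x} - \nu\right| \geq \frac{10\gamma^{1/k}}{\sqrt{n}}\right) \leq \frac{\var(\bar{x})}{t^2} \leq \frac{\gamma^{2/k}/n}{100\,\gamma^{2/k}/n} = \frac{1}{100}.
\]
Taking complements gives the stated bound with probability at least $0.99$.

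There is essentially no hard step here; the only point requiring mild care is invoking the correct direction of the moment-comparison inequality ($L^2 \leq L^k$ for $k \geq 2$ over a probability space), so that the hypothesis $\expec|\mathcal{D}-\nu|^k \leq \gamma$ controls the variance. Everything else is the textbook variance-of-an-average computation plus Chebyshev, so I would keep the written proof to a few lines.
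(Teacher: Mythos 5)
Your proposal is correct and follows exactly the paper's own argument: Jensen/Lyapunov to bound the variance by $\gamma^{2/k}$, independence to get $\var(\bar{x}) \leq \gamma^{2/k}/n$, and Chebyshev with threshold $10\gamma^{1/k}/\sqrt{n}$. Nothing is missing.
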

\begin{proof}
First, by Jensen's inequality, we have \begin{equation}
\label{eq:c}
    \expec[(x - \nu)^2] \leq \expec\left[|x - \nu|^k\right]^{2/k} \leq \gamma_k^{2/k}. 
\end{equation}
Hence, \begin{align}
    \expec\left[\left(\frac{1}{n}\sum_{i=1}^n x_i - \nu\right)^2\right] &= \frac{1}{n^2} \expec\left[\sum_{i=1}^n (x_i - \nu)^2\right] \\
    &\leq \frac{\gamma_k^{2/k}}{n},
\end{align}
where we used the assumption that $\{x_i\}_{i=1}^n$ are i.i.d. and  \cref{eq:c}. Thus, by Chebyshev's inequality, 
\[
\mathbb{P}\left(\Bigg|\frac{1}{n}\sum_{i=1}^n x_i - \nu \Bigg| \geq \frac{10 \gamma_k^{1/k}}{\sqrt{n}}\right) \leq \frac{1}{100}.
\]
\end{proof}

Now, we provide the proof of~Proposition~\ref{prop: sdp bias var}:
\begin{proof}[Proof of~Proposition~\ref{prop: sdp bias var}]
\textbf{Privacy:} Let $X$ and $X'$ be adjacent data sets in $\XX^s$. Assume without loss of generality that $x_1 \neq x'_1$ and $x_l = x'_l$ for $l > 1$. By the post-processing property of DP and the fact that each sample is only processed once (due to disjoint batches) during the algorithm, it suffices to fix $i \in [m]$ and show that the composition of all $d$ invocations of $\mathcal{P}_{1D}$ (line 8) for $j \in [d]$ is $(\varepsilon, \delta)$-DP. Assume without loss of generality that $Z_j := Z_j^1$ and $Z_j' := (Z_j^1)'$ contain the truncated $z_{1,j} := \Pi_{[0, 2\tau]}(x_{1, j})$ and $z_{1,j}' := \Pi_{[0, 2\tau]}(x'_{1, j})$, respectively. Then, by the first part of~Lemma~\ref{lem:P1D}, there are choices of $b$ and $p$ such that \begin{align*}
D^{\delta_j}_{\infty}(\mathcal{S} \circ \mathcal{R}_{1D}^r(Z_j)
||\mathcal{S} \circ \mathcal{R}_{1D}^r(Z_j')) \leq \varepsilon_j\left(\frac{2}{g} + \frac{|z_{1,j} - z'_{1,j}|}{2\tau}\right) \leq 2 \varepsilon_j,
\end{align*}
for all $j$ by the triangle inequality, provided $g \geq \max\{2, 2\tau \sqrt{s}\}$. Hence each invocation of $\mathcal{P}_{1D}$ is $(2\varepsilon_j, \delta_j)$-DP. Thus, by the advanced composition theorem~\cite[Theorem 3.20]{dwork2014}, \cref{alg: CSDP MeanOracle} is $(\varepsilon', \delta)$-SDP, where \begin{align}
    \varepsilon' &= \sum_{j=1}^d (2\varepsilon_j) \left(e^{2\varepsilon_j} - 1\right) + 2\sqrt{2\sum_{j=1}^d \varepsilon_j^2 \ln(1/\delta_j)} \\
    &\leq 8d \varepsilon_1^2 + \frac{\varepsilon}{2} \\
    &\leq \frac{\varepsilon^2}{4 \ln(1/\delta_1)} + \frac{\varepsilon}{2} \\
    &\leq \varepsilon,
\end{align}
by our choices of $\varepsilon_j, \delta_j$ and the assumption that $\varepsilon \leq 8 \ln(1/\delta_1) = 8 \ln(2d/\delta)$.  \\

\noindent \textbf{Bias:} Let $\hat{\nu} = (\hat{\nu}_1, \cdots, \hat{\nu}_d)$ denote the output of the non-private version of~\cref{alg: CSDP MeanOracle} where $\mathcal{P}_{1D}$ in line 8 is replaced by the (noiseless) sum of $z \in Z_j^i$. Then, ~Lemma~\ref{lem:4.1} tells us that \[
\|\hat{\nu} - \nu\|^2 \leq 200 d\left(\frac{m}{s} \gamma_k^{2/k} + \gamma_k^2 \left(\frac{2}{\tau}\right)^{2k-2} \right)
\]
with probability at least $1 - \zeta$, if $m = \lceil 20 \log(4d/\zeta) \rceil$. Thus, \begin{align*}
    b^2 = \|\expec \widetilde{\nu} - \nu\|^2 &=  \|\expec \hat{\nu} - \nu\|^2 \leq \expec \|\hat{\nu} - \nu\|^2 \\
    &\leq 200 d\left(\frac{m}{s} \gamma_k^{2/k} + \gamma_k^2 \left(\frac{2}{\tau}\right)^{2k-2} \right)(1 - \zeta) + 2\sup\left(\|\hat{\nu}\|^2 + \|\nu\|^2 \right) \zeta \\
    &\leq 200 d\left(\frac{m}{s} \gamma_k^{2/k} + \gamma_k^2 \left(\frac{2}{\tau}\right)^{2k-2} \right)  + 4\tau^2 d\zeta,
\end{align*}
since $\tau \geq L \geq \|\nu\|$ by assumption. Then choosing \[
\zeta = \frac{1}{\tau^2}\left[\left(\frac{1}{s} \gamma_k^{2/k} + \gamma_k^2 \left(\frac{2}{\tau}\right)^{2k-2} \right)\right]
\]
and noting that $\zeta$ is polynomial in all parameters (so that $m = \widetilde{\mathcal{O}}(1)$)
implies \begin{equation}
\label{eq:bias bound}
    b^2 \leq \expec \|\hat{\nu} - \nu\|^2 = \widetilde{\mathcal{O}}\left(d\left(\frac{1}{s} \gamma_k^{2/k} + \gamma_k^2 \left(\frac{2}{\tau}\right)^{2k-2} \right)\right).
\end{equation}

\noindent \textbf{Variance:} We have  
\begin{align}
   \expec \|N\|^2  &\leq 2 \left[\underbrace{\expec\|\widetilde{\nu} - \hat{\nu}\|^2}_{\textcircled{a}} + \underbrace{\expec\|\hat{\nu} - \expec\hat{\nu} \|^2}_{\textcircled{b}} \right].
\end{align}
We bound \textcircled{a} as follows. For any $j \in [d], i \in [m]$, denote  $\hat{\nu}_j^i:= \frac{m}{s} \sum_{z \in Z_j^i} z$ (c.f. line 8 of \cref{alg: CSDP MeanOracle}), the mean of the (noiseless) $s/m$ clipped $j$-th coordinates of data in group $i$. Denote $\widetilde{\nu}^i_j$ as in \cref{alg: CSDP MeanOracle}, which is the same as $\hat{\nu}_j^i$ except that the summation over $Z_j^i$ is replaced by $\mathcal{P}_{1D}(Z_j^i)$. Also, denote $\hat{\nu}_j = \text{median}(\hat{\nu}_j^1, \cdots, \hat{\nu}_j^m)$ and $\widetilde{\nu}_j =  \text{median}(\widetilde{\nu}_j^1, \cdots, \widetilde{\nu}_j^m)$. Then for any $j \in [d]$, we have \begin{align*}
|\hat{\nu}_j - \widetilde{\nu}_j|^2 &= |\text{median}(\hat{\nu}_j^1, \cdots, \hat{\nu}_j^m) - \text{median}(\widetilde{\nu}_j^1, \cdots, \widetilde{\nu}_j^m)|^2 \\
&\leq \left(\sum_{i=1}^m |\hat{\nu}_j^i- \widetilde{\nu}_j^i| \right)^2 \\
&= \left(\sum_{i=1}^m \left|\frac{m}{s} \left[\sum_{z \in Z_j^i} z - \mathcal{P}_{1D}(Z_j^i) \right] \right| \right)^2 \\
&\leq \left(\frac{m}{s} \sum_{i=1}^m \left|\mathcal{P}_{1D}(Z_j^i) - \sum_{z \in Z_j^i} z \right| \right)^2 \\
&\leq \frac{m^2}{s^2} m \sum_{i=1}^m \left|\mathcal{P}_{1D}(Z_j^i) - \sum_{z \in Z_j^i} z \right|^2.
\end{align*}
Now using Lemma~\ref{lem:P1D} (part c), we get \begin{align*}
    \expec[|\hat{\nu}_j - \widetilde{\nu}_j|^2] &\leq \frac{m^4}{s^2} \expec\left[\left|\mathcal{P}_{1D}(Z_j^i) - \sum_{z \in Z_j^i} z \right|^2 \right] \\
    &\lesssim \frac{m^4}{s^2} \frac{\tau^2}{\varepsilon_j^2} \ln(1/\delta_j)  \\
    &\lesssim \frac{m^4 \tau^2 d \ln^2(d/\delta)}{s^2 \varepsilon^2}.
\end{align*}
Thus, summing over $j \in [d]$, we get \begin{align}
\label{eq: a bound}
    \expec[\|\hat{\nu} - \widetilde{\nu}\|^2] \lesssim \frac{m^4 \tau^2 d \ln^2(d/\delta)}{s^2 \varepsilon^2}.
\end{align}

Next, we bound \textcircled{b}: \begin{align*}
    \textcircled{b} &= \expec \| \hat{\nu} - \expec \hat{\nu}\|^2\\
    &\leq 2 \left[\expec\|\hat{\nu} - \nu\|^2 +  \|\nu - \expec\hat{\nu}\|^2\right] \\
    &\leq 2\left[\expec\|\hat{\nu} - \nu\|^2 + b^2\right] \\
    & = \widetilde{\mathcal{O}}\left(d\left(\frac{1}{s} \gamma_k^{2/k} + \gamma_k^2 \left(\frac{2}{\tau}\right)^{2k-2} \right)\right),
    \end{align*}
by the bias bound~\cref{eq:bias bound}. Therefore, \[
\expec\|N\|^2 =  \widetilde{\mathcal{O}}\left(\frac{\tau^2 d \ln^2(d/\delta)}{s^2 \varepsilon^2} + d\left(\frac{1}{s} \gamma_k^{2/k} + \gamma_k^2 \left(\frac{2}{\tau}\right)^{2k-2} \right)\right).
\]
\end{proof}

\end{document}